\documentclass[a4paper,reqno, 10pt]{amsart}

%%% Package inclusions %%%
\usepackage{amsfonts,amsmath, amssymb, amsthm}
\usepackage[foot]{amsaddr}
\usepackage{graphicx}
\usepackage{subcaption}
\usepackage{color}
\usepackage[hyperindex,breaklinks]{hyperref}
\usepackage{xspace}
\usepackage{framed}
\usepackage[ruled,noline]{algorithm2e}
\usepackage[utf8]{inputenc}
\usepackage{multirow}
\usepackage[a4paper]{geometry}
\usepackage[mathscr]{euscript}
\usepackage{tablefootnote}
\usepackage{setspace}
\usepackage{enumitem}  
%%% Page orientation %%%
\setstretch{1.1}
%\onehalfspacing
%\doublespacing
\setcounter{MaxMatrixCols}{10}

%%% Theorem like environments %%%
\newtheorem{theorem}{Theorem}[section]
\newtheorem{assumption}{Assumption}[section]
\newtheorem{definition}{Definition}[section]
\newtheorem{example}{Example}[section]
\newtheorem{lemma}{Lemma}[section]
\newtheorem{proposition}{Proposition}[section]
\newtheorem{corollary}{Corollary}[section]
\newtheorem{remark}{Remark}[section]

\usepackage{subcaption}
\usepackage{graphicx}
\usepackage{color}
\usepackage[hyperindex,breaklinks]{hyperref}
\usepackage{xspace}
\usepackage{framed}
\usepackage[ruled,noline]{algorithm2e}
\usepackage[utf8]{inputenc}
\usepackage{multirow}
\usepackage[mathscr]{euscript}
\usepackage{tablefootnote}
\usepackage{setspace}
\usepackage{enumitem}  
%%% OPRE uses endnotes
\usepackage{endnotes}
\let\footnote=\endnote

\hypersetup{colorlinks = true, urlcolor = green, linkcolor = red,
  citecolor = blue }

% Private macros here (check that there is no clash with the style)

% Natbib setup for author-year style
\usepackage{natbib}
 \bibpunct[, ]{(}{)}{,}{a}{}{,}%
 %
 %
 %
 %
 %

%% Setup of the equation numbering system. Outcomment only one.
%% Preferred default is the first option.   % Default: (1), (2), ...
%\EquationsNumberedBySection % (1.1), (1.2), ...
\usepackage{natbib}
 \bibpunct[, ]{(}{)}{,}{a}{}{,}%
\usepackage{amsfonts,amsmath, amssymb, amsthm}
\usepackage{graphicx}
\usepackage{subcaption}
\usepackage{color}
\usepackage[hyperindex,breaklinks]{hyperref}
\usepackage{xspace}
\usepackage{framed}
\usepackage[ruled,noline]{algorithm2e}
\usepackage[utf8]{inputenc}
\usepackage{multirow}
 \usepackage[a4paper]{geometry}
\usepackage[mathscr]{euscript}
\usepackage{tablefootnote}
\usepackage{enumitem}  
%%% Page orientation %%%
% \setstretch{1.1}
%\onehalfspacing
%\doublespacing
\setcounter{MaxMatrixCols}{10}
\geometry{top=1.5in, bottom=1.2in,
  left=1in, right=1in}
\setlength{\parskip}{0.25em}
\usepackage{natbib}
 \bibpunct[, ]{(}{)}{,}{a}{}{,}%
 %
 %
 %
 %
 %
%%% Theorem like environments %%%
% \newtheorem{theorem}{Theorem}[section]
% \newtheorem{assumption}{Assumption}
% \newtheorem{definition}[theorem]{Definition}
% \newtheorem{example}[theorem]{Example}
% \newtheorem{lemma}[theorem]{Lemma}
% \newtheorem{proposition}[theorem]{Proposition}
% \newtheorem{corollary}[theorem]{Corollary}
% \newtheorem{remark}[theorem]{Remark}

%%% Other style-related matters %%%
%  \hypersetup{colorlinks = true, urlcolor = blue, linkcolor = blue,
%   citecolor = blue }
%\numberwithin{equation}{section}

%%%  New command definitions introduced for convenience %%%
\newcommand{\xmath}[1]{\ensuremath{#1}\xspace}
\newcommand{\Lev}{\xmath{\textnormal{lev}}}
\newcommand{\RV}{\xmath{\mathscr{RV}}}
\newcommand{\MRV}{\xmath{\mathscr{MRV}}}
\newcommand{\R}{\xmath{\mathbb{R}}}
\newcommand{\mv}[1]{\boldsymbol{#1}}

\newcommand{\qq}{\boldsymbol{q}}

\newcommand{\xx}{\boldsymbol{x}}
\newcommand{\XX}{\boldsymbol{X}}
\newcommand{\YY}{\boldsymbol{Y}}
\newcommand{\ZZ}{\boldsymbol{Z}}
\newcommand{\secmom}{\textsc{sec-mom}}
\newcommand{\yy}{\boldsymbol{y}}
\newcommand{\ttheta}{\boldsymbol{\theta}}
\newcommand{\zz}{\boldsymbol{z}}
\newcommand{\aalpha}{\boldsymbol{1/\alpha}}

\newcommand{\pp}{\boldsymbol{p}}
\newcommand{\e}{\mathrm{e}}
\newcommand{\Fbar}{\bar{F}}
\renewcommand{\det}[1]{\textnormal{det}({#1})} 
\newcommand{\Real}{\mathbb{R}}
\newcommand{\Prob}{P}
\newcommand{\Expc}{\mathbb{E}}

\newcommand{\fLD}{f_{\textnormal{LD}}}
\newcommand{\supp}{\textnormal{supp}}
\newcommand{\Lcr}{\xmath{L_{\textsc{cr}}}}

\newcommand{\Icr}{\xmath{I_{\textsc{cr}}}}
\newcommand{\YYu}{\xmath{\mv{Y}_{\hspace{-3pt}{u}}}}
\newcommand{\fldh}{\bar{f}_{\textnormal{LD}}}

\newcommand{\TTu}{\xmath{\mv{T}_{\hspace{-2pt}u}}}

%%% Other style-related matters %%%
\hypersetup{colorlinks = true, urlcolor = blue, linkcolor = blue,
  citecolor = blue }
\title[]{
   \hspace{5pt} A\MakeLowercase{chieving} E\MakeLowercase{fficiency} \MakeLowercase{in} B\MakeLowercase{lack-box} S\MakeLowercase{imulation} \MakeLowercase{of}
  D\MakeLowercase{istribution} T\MakeLowercase{ails} \MakeLowercase{with} S\MakeLowercase{elf}-s\MakeLowercase{tructuring} I\MakeLowercase{mportance}
  S\MakeLowercase{amplers}}
\author{{\large A\MakeLowercase{nand}
    D\MakeLowercase{eo}}}  \author{{\large
    K\MakeLowercase{arthyek} M\MakeLowercase{urthy}}}
  \address{Indian Institute of Management, Bannerghatta Road, Bangalore 560076} \email{anand.deo@iimb.ac.in}
  \address{Singapore University of Technology and Design, Singapore
  487372} \email{karthyek\_murthy@sutd.edu.sg}   
\begin{document}
\begin{abstract}
  % \normalsize  
  % Motivated by the challenges arising in risk management practice with
  % the wider adoption of models offering greater automation and
  % expressivity,
  This paper presents a novel Importance Sampling (IS)  scheme for estimating  distribution tails of performance measures modeled with a rich set of tools such as  linear programs,  integer linear programs, piecewise linear/quadratic objectives, feature maps specified with deep neural networks, etc. %Conventional efficient IS approaches suffer from feasibility and scalability concerns beyond highly stylized models, 
  {The conventional approach of explicitly identifying efficient changes of measure} suffers from feasibility and scalability concerns beyond highly stylized models, due to their need to be tailored intricately to the objective and the underlying probability distribution. {This bottleneck is overcome in the proposed scheme with an elementary transformation which is capable of implicitly inducing an effective IS distribution in a variety of models}  by  replicating the concentration properties observed in less rare samples.  This novel approach is guided by developing a  large deviations principle that brings out the phenomenon of
  self-similarity of optimal IS distributions.
  %in considerable generality.
  The proposed sampler is the first to attain asymptotically optimal
  variance reduction across a spectrum of multivariate distributions
  despite being oblivious to the {specifics of the underlying model.} Its applicability is illustrated  with    contextual shortest path and portfolio credit risk  models informed by neural networks.

  \noindent {\textbf{Keywords}: Variance reduction; tail
  risks; rare event simulation; importance sampling; contextual models; portfolio credit risk; Value-at-Risk; conditional Value-at-Risk; log-efficient }
\end{abstract}
\maketitle

\section{Introduction}
\label{sec:Introduction}
In addition to being an integral part of quantitative risk management,
the need to estimate and control tail risks is inherent in managing
operations requiring high levels of service or reliability
guarantees. The variety of contexts for which chance-constrained and
risk-averse optimization formulations are employed serve as a
testimony to the importance of tail risk management in operations
research. Naturally, this significance is retained in the numerous
operations and quantitative risk management models which are being enriched with
the use of algorithmic feature-mapping tools (such as neural networks,
kernels, etc.,) employed to facilitate a greater degree of automation
and expressivity in mapping data to
decisions. % in risk management and decision making.
Recent literature on modeling mortgage risk with deep neural networks
\citep[eg.,][]{sirignano2018deep} and models incorporating contextual side information
into decision making \citep[eg.,][]{VahnRudin, elmachtoub2020smart} serve as
illustrative examples.
With the increasing adoption of these expressive models,  it is imperative that the risk management practice seeks to measure
and manage the tail risks associated with their use.  In a similar vein, considerations of certifying safety, fairness, and
robustness have led to a number of
applications seeking to measure tail risks in avenues extending beyond
operations and  risk management as well.  Assessing the
safety of automation in cyber-physical systems
get naturally cast in terms of evaluating expectations restricted to
distribution tails
\citep[eg.,][]{7534875,NamkoongAV,uesato2018rigorous},
as is the case with evaluating severity of algorithmic biases on
minority sub-populations
\citep{FairnessRiskMeasures,jeong2020robust}. % see
% also
% \cite{chow2017risk,webb2018statistical,pmlr-v97-weng19a,bai2020rareevent,levy2020large}
% and references therein.

Motivated by the importance and the challenges in measuring tail risks in a broad variety of such applications,
we consider the estimation of distribution tails of a rich class of performance functionals specified with {enabling tools, such as linear programs, mixed integer linear programs, piecewise linear and quadratic objectives, feature maps/decision rules specified in terms of  deep neural networks, etc., which serve as key modeling ingredients in these applications.}

To describe the challenges in the tail estimation tasks we consider, suppose that $L(\xx)$ denotes the loss (or cost) incurred when the uncertain variables affecting the system, modeled by a random vector $\XX,$ realize the value $\xx \in \Real^d.$ For example, $L(\XX)$ may denote the losses associated with a portfolio exposed to risk factors
$\XX,$ or may capture the {optimal value of the cumulative generation and  transmission costs which could be incurred in meeting demand $\XX$ in the consumer nodes of a power distribution network.}
% or may denote the quality of service (cumulative delays) experienced
% by customers for whom goods are to be delivered/collected within
% specified time windows by operating a set of vehicle routes in the
% presence of travel times $\XX.$
In general, we take $L(\cdot)$ to be
modeling a suitable performance measure of interest.
% While $L(\cdot)$ may be expressed as a linear or a quadratic
% combination of the uncertain variables in the portfolio risk
% example, The inherent nature of managing operations under resource
% constraints often results in $L(\cdot)$ being expressed as the value
% of an optimization formulation, as in the vehicle routing example
% (see, for eg., \cite{MelvynRouting}).
The distribution of $L(\XX)$ is not analytically tractable even in
elementary models, and related measures such as its mean, or tail risk
measures such as Value-at-Risk, Conditional Value-at-Risk, etc., are
typically estimated from samples. Estimation and
subsequent optimization of tail risks via simulation becomes
computationally expensive however, as mere sample averaging requires
about $p_u^{-1}\varepsilon^{-2}\delta^{-1}$ samples to achieve a
relative precision of $\varepsilon$ in tasks requiring estimation
of $p_u := \Prob(L(\XX) \geq u)$ with $1-\delta$ confidence. This
prohibitively large requirement points to the need for samplers whose complexity do not grow as severely with $p_u$
decreasing to zero.

% For instance, if a vehicle
% routing service provider is required to control the frequency of
% excessive delay occurrences to be within $\alpha\%,$ straightforward
% simulation approaches incur sample and computational complexities
% which are about $1/\alpha \times 100\%$ larger than the nominal
% effort required to manage with mean or median delay metrics. Such a
% computational requirement would be prohibitive in many settings,
% particularly more so in nonstationary environments in which several
% operational strategies need to be tested and subsequent decisions
% have to be made within limited time windows based on the evaluated
% tail risks.

Application contexts where tail risk measurement is of central
importance, such as those arising in financial engineering, actuarial
risk, system availability, etc., have facilitated the development of
variance reduction techniques which aim to tackle this
difficulty. Prominent examples include the use of importance sampling
and splitting, potentially in combination with other variance
reduction tools such as control variates, stratification, conditional
Monte Carlo, etc; see \cite{Gbook, AGbook,rubino2009rare} for an overview. In particular, Importance
Sampling (IS) is seen as the primary method for combating rarity of
relevant samples in diverse scientific disciplines and is shown to
offer remarkable variance reduction in financial engineering models \citep[see][]{GHS2000,BJZ2006, GKS2008,Liu2015}, actuarial risk models \citep[see eg.,][]{AGbook,collamore2002}, and various queueing and reliability models \citep[see][and references therein]{Heidelberger,JUNEJA2006291,blanchet2009rare}. 
The idea behind IS is to accelerate the occurrences of the
target risk event in simulation by sampling from an alternate
distribution which places a much greater emphasis on the risk
scenarios of interest. Observed samples are then suitably reweighed to
eliminate the bias introduced. We shall refer to this alternate
sampling distribution as \textit{IS distribution} hereafter.  

\subsection{{Conventional approach towards efficient IS and key challenges} \label{sec:challenges}}  
%\subsection{\textcolor{blue}{Key challenges with the conventional approach of explicitly identifying efficient IS distributions} \label{sec:challenges}}  
Effective use of IS in \textit{all} the above instances rely, however, on carefully leveraging the specific model structure {to explicitly identify a suitable IS distribution}. 
{Typically this involves an initial Step (i) seeking to select a parametric distribution family $\mathcal{P}$ with the following desirable properties: the family $\mathcal{P}$ should include distributions which place significantly more probability on the target rare event, while also being rich enough to mirror the \textit{large deviations} behaviour of the theoretically optimal IS distribution. This variance minimizing optimal IS distribution  is merely the conditional law of $\XX$ given the tail event of interest, and is also referred to as the  zero-variance distribution \citep[see eg.,][Chapter 5]{AGbook}.  Identifying the best IS distribution within $\mathcal{P}$ is then accomplished in a subsequent Step (ii) devoted to solving an optimization problem {\tt{(OPT)}} 
%whose objective and constraints are 
formulated suitably in terms of the proposed family $\mathcal{P},$ the distribution of $\XX,$ and level sets of $L(\cdot).$}

Such reliance on large deviations characterizations to
{explicitly identify an effective IS distribution}, while a source of strength, also helps showcase the challenges one may face in our estimation tasks
{which seek to go significantly beyond the piecewise linear assumption on $L(\cdot)$ and independence/normality assumptions on $\XX$ featuring largely in the literature. Even under these narrow assumptions, execution of Step (i) above requires a case-by-case large deviations analysis leading to distinctly different choices for $\mathcal{P}$ based on the model-at-hand. These include $\mathcal{P}$ obtained via exponentially twisting the probability density \citep[eg.,][]{siegmund1976, GHS2000}, twisting the hazard rate \citep[]{JS2002}, mixture families based on the so-called dominating points \citep[]{SadowskyBucklew,honnappa2018dominating,bai2020rareevent}, or mixtures featuring one/many big jumps \citep{AK2006,DLW2007,chen_rhee_zwart_2018}.  Heuristic application of well-known techniques not accompanied by appropriate large deviations have been shown to be ineffective or counterproductive even in instances involving relatively simpler choices of $L(\cdot)$  \citep[see eg.,][]{glasserman_wang,juneja2007asymptotics}}. 

We next highlight that it is often impractical to invoke the specific form of the functional $L(\cdot)$ or that of the distribution of $\XX,$ in their entirety, as is typically required
{for formulating and solving the optimization problem {\tt{(OPT)}} in Step (ii) above. In special cases where $\XX$ is multivariate normal and $L(\cdot)$ is additive and explicitly known, {\tt{(OPT)}} can be written as a quadratic program  \citep[see eg.,][]{GHS2000, GLi2005}; it may additionally possess a combinatorial structure as in \cite{GKS2008}, or may be written as a mixed integer quadratic program as in  \cite{bai2020rareevent}. For some non-Gaussian $\XX,$ {\tt{(OPT)}} could get formulated as a dynamic control problem  even in instances as elementary as independent sums \citep[eg.,][]{DLW2007}. Such variedly nuanced formulation of  the Step (ii) optimization problem {\tt{(OPT)}} could be impractical for more  involved objectives $L(\cdot)$ or non-Gaussian $\XX$. %where $\XX$ is non-Gaussian (or) $L(\cdot)$ is specified, say, in terms of a mixed integer program with numerous constraints.
%Besides this formidable difficulty of arriving at an explicit formulation of {\tt{(OPT)}} for more involved objectives/distributions, 
% Even if one can explicitly formulate this problem {\tt{(OPT)}} in Step (ii)  for choosing the IS distribution, 
Besides this formidable challenge, % in explicit formulation of {\tt{(OPT)}}, 
there is no reason to expect the resulting {\tt{(OPT)}}  to be convex or solvable in general, and the identified IS distribution to be easy to sample from.}% if $L(\cdot)$ is non-convex (or) the distribution of $\XX$ is not log-concave.} 

\subsection{{Novelty of the proposed approach and main contributions}} In order to overcome the above challenges, 
{we  recast the search for effective IS distributions instead as follows: ``Can we find a transformation $\mv{T}(\cdot)$ whose respective push-forward measure (i.e., the law of  $\mv{T}(\XX)$) readily induces an effective IS distribution when deployed across a large class of models?". This reframed pursuit seeking to induce an effective IS distribution implicitly via a map $\mv{T}(\cdot)$ is a radical departure from the existing prominent approaches (which, as described in Section \ref{sec:challenges}, seek to explicitly identify an efficient IS distribution). While this reframed enquiry is spurred by the bottlenecks highlighted in Section \ref{sec:challenges}, it is not clear apriori if such widely applicable transformations $\mv{T}(\cdot)$ should exist.} 
{In turn, a primary contribution in this paper is to exhibit a fixed family of transformations which, despite being oblivious to the loss $L(\cdot)$ and the underlying distribution, is proved to offer asymptotically optimal variance reduction for a wide variety of models requiring only a mild nonparametric structure.}

{For a high-level view of why such  transformations capable of inducing effective IS distributions could exist, we point to} the following ubiquitous yet unexploited phenomenon of the  \textit{self-similarity} of the optimal IS distributions. This notion serves as a key ingredient in our approach and is explained as follows: For any $u > 0,$ suppose that $P_u^\ast$ denotes
the theoretically optimal IS distribution for estimating
$P(L(\XX) \geq u);$ in other words, $P_u^\ast$ is just the law of $\XX$
conditioned on $\{L(\XX) \geq u\}$ \citep[see][Chapter 5]{AGbook}.  As $u \rightarrow \infty,$ we show that suitably scaled versions of distributions $P_u^\ast$ and $P_l^\ast$ share similar large deviations behaviour and concentrate their mass on identical sets even if the level $l > 0$ is only a fraction of the level $u.$ %(see Lemma~\ref{lem:zv-ldp}). 
Figure \ref{fig:self-similarity} below offers a graphical illustration of this self-similarity holding across three different distributions for $\XX.$ 
% \begin{figure}
% \caption{Illustration of the notion of self-similarity of
%           optimal IS distributions: Samples from the distributions
%           $P_l^\ast,P_u^\ast$ (displayed in blue and red respectively) reveal
%           that they share similar concentration properties for three
%           distribution choices of $\XX$ informed by a Gaussian copula
%           with correlation $\rho.$ The levels $l,u$ are such that the
%           probabilities of $L(\XX)$ exceeding these levels are
%           approximately $10^{-3}$ and $10^{-5.5}.$ The contours (drawn in green) represent level sets of
%           $L(\xx)=1^\intercal (A\xx - \mv{b})^+$ derived from a ReLU
%           neural network with weights given by the matrix $A$ with
%           rows $(0.3,1), (1,0.3), (0,1.1), (1.1,0)$ and vector
%           $\mv{b} = \mv{0}.$  Left panel: Normal marginals,$\,\rho = 0.5$. Middle panel: Weibull marginals,$\,\rho = 0.3$, Right panel: Exponential marginals,$\,\rho = 0.5$  \label{fig:self-similarity}}
%   {\includegraphics[width=\textwidth]{}
%  % \includegraphics[width=0.32\textwidth]{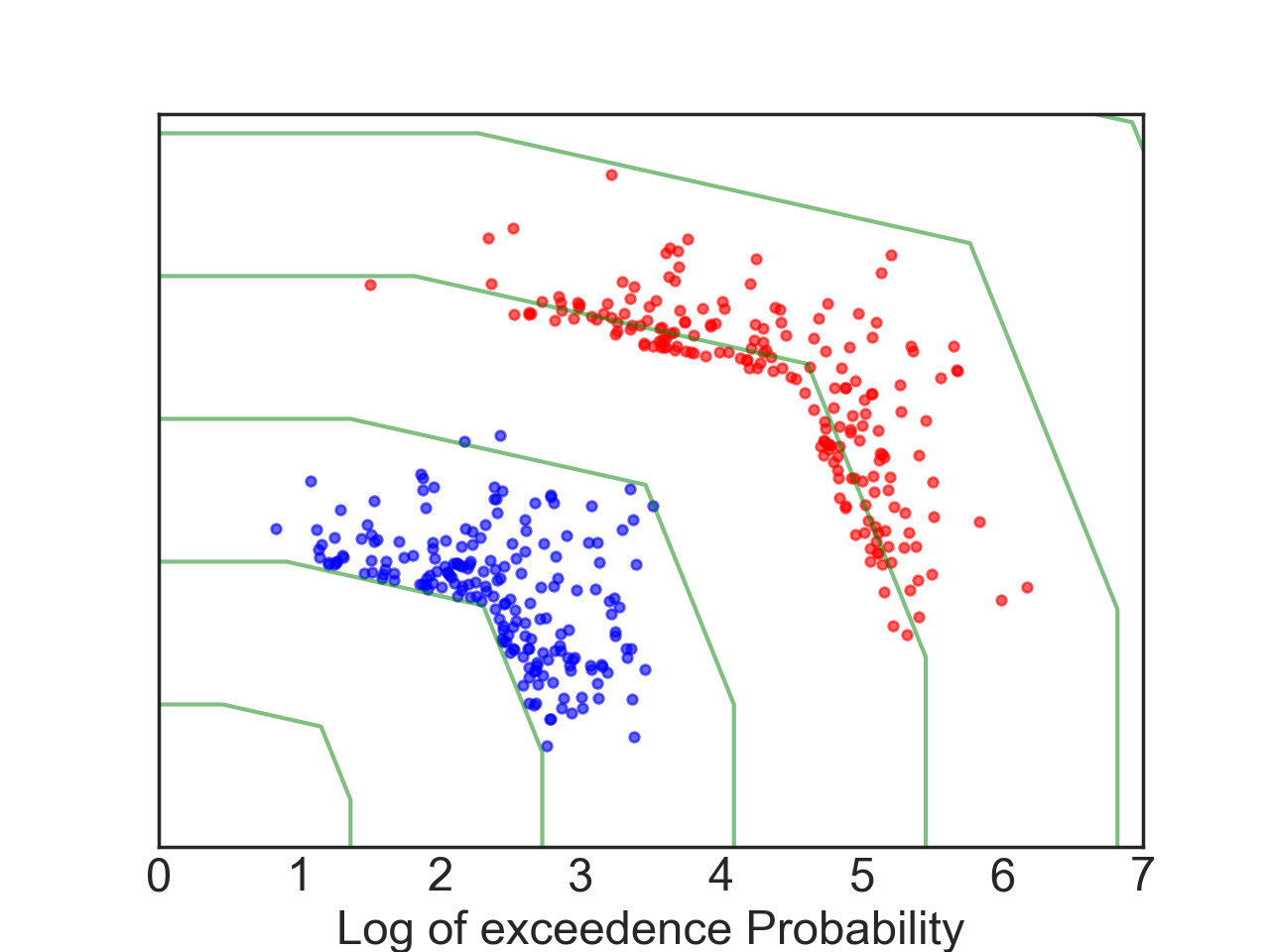}
%  % \includegraphics[width=0.32\textwidth]{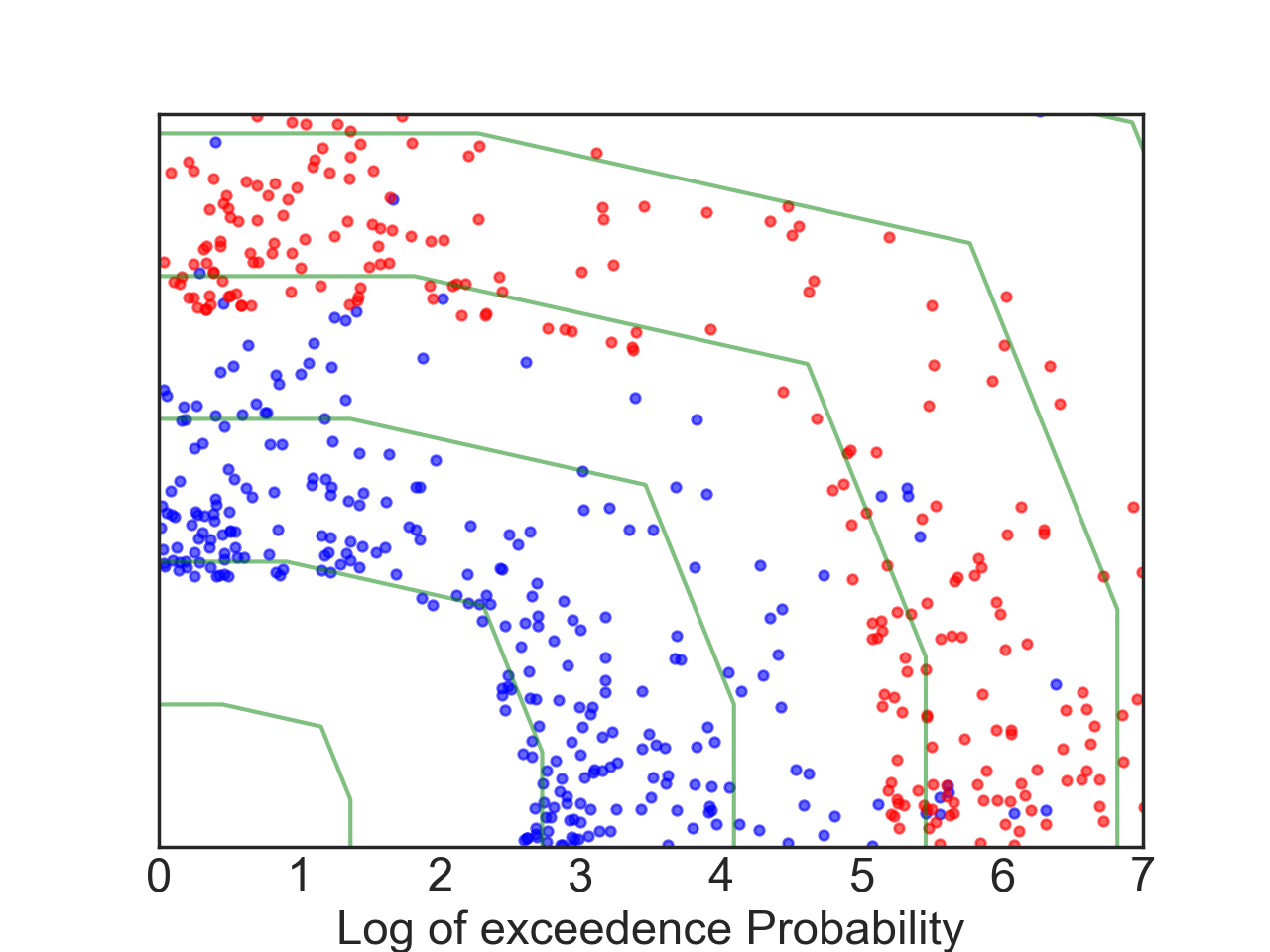}
%  % \includegraphics[width=0.32\textwidth]{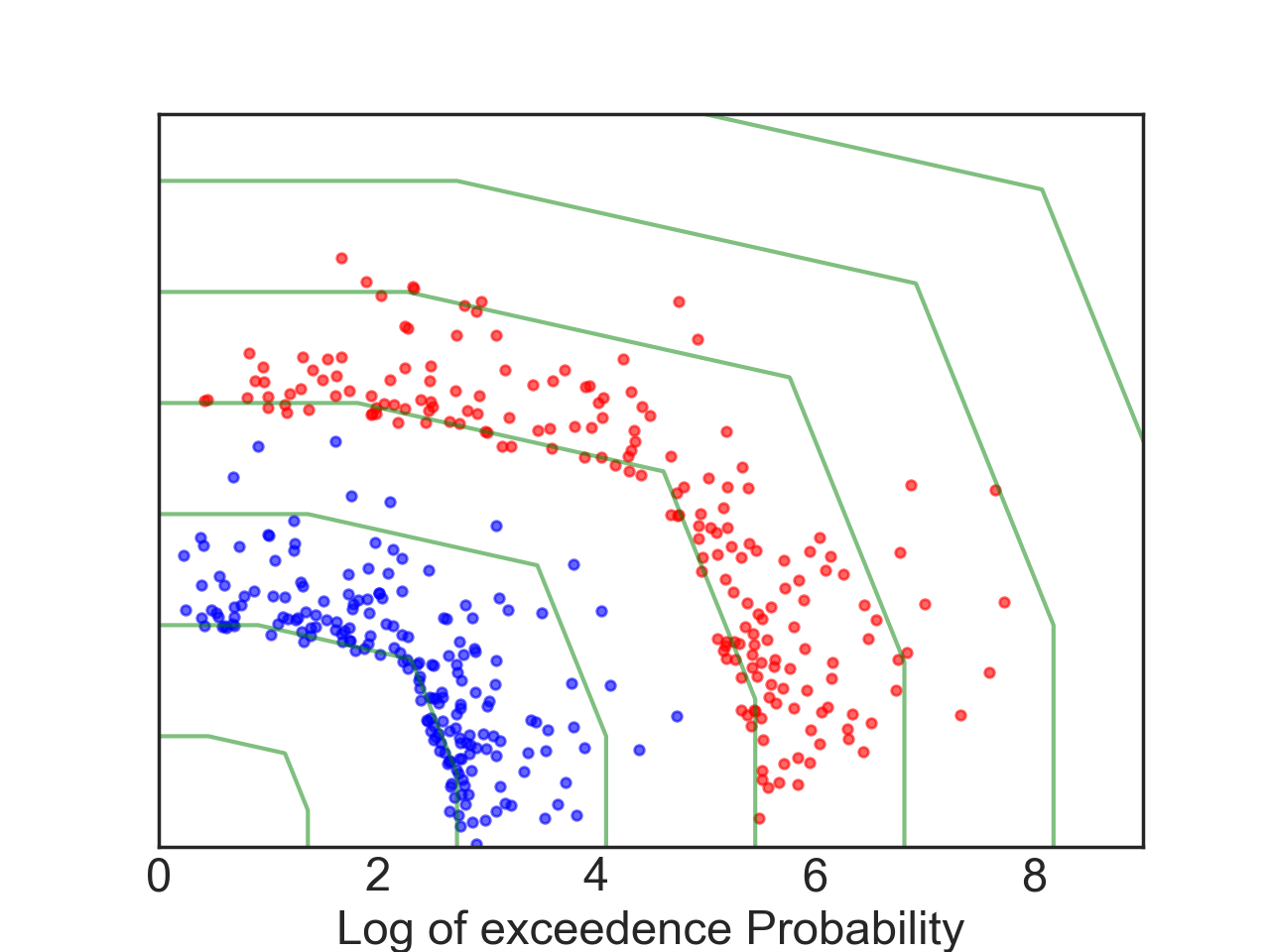}
%  }

% \end{figure}

\begin{figure}[t]


 \begin{center} 
\caption{{Illustration of the notion of self-similarity of
          optimal IS distributions: Samples from the distributions
          $P_l^\ast,P_u^\ast$ (displayed in blue and red respectively) reveal
          that they share similar concentration properties for three
          distribution choices of $\XX$ informed by a Gaussian copula
          with correlation $\rho.$ The levels $l,u$ are such that the
          probabilities of $L(\XX)$ exceeding these levels are
          approximately $10^{-3}$ and $10^{-5.5}.$ The contours (drawn in green) represent level sets of
          $L(\xx)=1^\intercal (A\xx - \mv{b})^+$ derived from a ReLU
          neural network with weights given by the matrix $A$ with
          rows $(0.3,1), (1,0.3), (0,1.1), (1.1,0)$ and vector
          $\mv{b} = \mv{0}.$
        }}\label{fig:self-similarity}
        \begin{subfigure}{0.32\textwidth}
          \includegraphics[width=1\textwidth]{NN-SupExp.png}
          \caption{\small{Normal$\,$ marginals,$\,\rho = 0.5$}}
        \end{subfigure}
        \begin{subfigure}{0.32\textwidth}
          \includegraphics[width=1\textwidth]{NN-SubExp.png}
          \caption{\small{Weibull$\,$marginals,$\,\rho = 0.3$}}
      \end{subfigure}
      \begin{subfigure}{0.33\textwidth}
          \includegraphics[width=0.97\textwidth]{NN-Exp.png}
         \caption{\small{Exponential$\,$marginals,$\,\rho = 0.5$}}
        \end{subfigure}
      \end{center}
    \end{figure}

{To leverage this remarkable similarity in how the samples of $P_u^\ast$ and $P_l^\ast$ concentrate, we seek transformations which automatically replicate the large deviations concentration properties of $P_u^\ast$ from how the much more frequently occurring samples of $P_l^\ast$ manifest.  As a product of this entirely novel approach, % towards arriving at an effective IS distribution, 
we are able to make the following main contributions in this paper.}

% \subsection{\textcolor{blue}{A summary of main contributions}}
%\textcolor{blue}{The paper builds on the above novel premise offered by the  self-similarity of optimal IS distributions to make the following main contributions:} 

\noindent
\noindent
\textbf{{1) Tail modeling framework characterizing self-similarity in optimal IS distributions:}} {Building on the tail modeling approach introduced in \cite{de2016approximation}, we identify  a general class of models for which the above  self-similarity in optimal IS distributions can be made precise in terms of large deviations principles (Proposition \ref{prop:zv-ldp}). This self-similarity phenomenon, being nonparametric in nature, is not limited to objectives/distributions fitting within specific parametric assumptions. As a result, our framework becomes the first to feature a rich set of models with
\begin{itemize}
    \item[a)]  objectives $L(\cdot)$ including, but not limited to, those specified in terms of tools such as linear programs, mixed-integer linear programs, piecewise linear and quadratic objectives, feature-maps/decision-rules informed by neural networks, etc., (see Assumption \ref{assume:V}); and
    \item[b)] a wide variety of light and heavy-tailed multivariate distributions for $\XX$ (see Assumptions \ref{assump-marginals}, \ref{assump:joint-Y}, \ref{assume:marginals-HT} and the examples in Tables \ref{tab:marginals-light} - \ref{tab:multivariate}). 
\end{itemize}
%Besides allowing this characterization, the large deviations based approach provides the technical infrastructure for understanding asymptotic for distribution tails of $L(\XX)$ and variance reduction properties of IS schemes. 
}

\noindent 
{\textbf{2) Novel approach to IS:} We exhibit a fixed family of transformations (see \eqref{eqn:alt-T}) which, despite being oblivious to the loss $L(\cdot)$ and the underlying distribution, is able to induce IS distributions with desirable properties in the considered generality: In particular, the target event $\{L(\XX) \geq u\}$ is shown to occur exponentially more frequently under the induced IS distributions, while also ensuring that the resulting conditional excess loss samples mirror the  large deviations properties of the theoretically optimal $P_u^\ast.$ The need to explicitly formulate a good IS density family $\mathcal{P}$ and the optimization problem {\tt (OPT)} (as described in Section \ref{sec:challenges}) 
gets obviated with the radical discovery of these transformations, thereby rendering the selection of IS distributions entirely algorithmic.}

%offers asymptotically optimal variance reduction for a wide variety of $L(\cdot)$ and distributions of $\XX$ restricted only a mild nonparametric structure.}

\noindent 
{\textbf{3) An efficient IS algorithm with wider applicability:} The use of the IS transformation in \eqref{eqn:alt-T} results in a novel IS algorithm whose execution requires only oracle access to the evaluations of loss $L(\cdot)$ and the probability density of $\XX$  (see Algorithm \ref{algo:IS}).
%By leveraging the technical infrastructure offered by our large-deviations based tail modeling framework,
We derive large deviations asymptotic for the distribution tails of $L(\XX)$ and show that the proposed sampler offers asymptotically optimal variance reduction in the considered generality (Theorems \ref{thm:Tail-asymp} - \ref{thm:Var-Red-HT}).

This is to be contrasted with the efficient IS changes of measures available, largely on a case-by-case basis, for specific highly stylized objectives $L(\cdot)$ %(often the objectives $L(\cdot)$ considered have an embedded additive structure),
and typically under normal distribution assumptions, i.i.d assumptions, or specific copula assumptions in the literature. The proposed sampler joins the recent line of enquiry initiated in the last couple of years \citep[see][]{bai2020rareevent,arief2020deep}  striving to make IS amenable for more sophisticated objectives. The earlier works in this pursuit have restricted the focus to normal distributions and objectives which can be modeled or approximated by piecewise linear functions, with the complexity of the approach scaling less graciously in terms of the number of pieces involved.  A distinguishing feature of the proposed sampler is that it is the first in the literature to consider a spectrum of multivariate light and heavy-tailed distributions simultaneously  and achieve log-efficiency across this spectrum despite tackling several challenging and important objectives (such as value of linear programs, contextual optimization objectives, etc.) for which efficient IS algorithms are  unavailable even under Gaussian distributional assumptions.} %To the best of our knowledge, \cite{blanchet2019rare} develops the only sampler available for value of a  linear program, considering a very specific linear program under Gaussian assumptions.}

% Thus, besides being unifiedly efficient, the proposed sampler expands the scope of efficient IS to substantially wider collections of objectives $L(\cdot)$ and distributions for which efficient IS algorithms have been not known to exist. 
%Besides being \textit{unifiedly efficient} for various stylized models featuring in the literature on rare events and risk management, this is the \textit{first sampler} to offer asymptotically optimal variance reduction in the considered generality.}

We demonstrate the utility of the IS scheme in the evaluation of probabilities of (a) large losses in a portfolio credit risk setting, and (b) large delays in contextual routing.
%(Sections \ref{sec:CR} - \ref{sec:num-exp}).
The proposed sampler for portfolio credit risk also serves as an entirely novel addition that extends the scope of applicability of the line of research pursued in
\cite{GHS2000, GLi2005, BJZ2006, GKS2008, Liu2015} to credit risk models which employ diverse copula or algorithmic approaches such as neural
networks.
{ Following \cite{glynn1996importance} and \cite{HongReview}, a follow-up to this work \citep{deo2021efficient} demonstrates how the IS scheme proposed in this paper for estimation of distribution tails can be employed to gain efficient variance reduction in Value-at-Risk and conditional Value-at-Risk estimation. Its use in further  optimization tasks, such as minimizing conditional Value-at-Risk, is explored in Section \ref{sec:opt}.}

The rest of the paper is organized as follows. {Following a description of the problem, our novel IS procedure is introduced in Section \ref{sec:Desc-IS}}. The tail modeling framework introduced in Section \ref{sec:Mod-Framework} is used to establish the large deviations asymptotics and {the self-similarity property of optimal IS distributions in Section \ref{sec:LD-Tails}. Section \ref{sec:IS-VR} identifies  transformations capable of inducing efficient IS distributions} and presents the main results verifying the asymptotic optimal variance reduction properties. An application to the portfolio credit risk setting and results of numerical experiments are presented in Sections \ref{sec:CR} and \ref{sec:num-exp}. Proofs and additional useful examples are given in the accompanying supplementary material.
%\textbf{Related literature on importance sampling.}
% With the use of feature maps in risk
% models and feature-based decision rules being a relatively recent
% phenomenon, the understanding of large deviations and efficient tail
% risk estimation in this context is also in an early stage. While there
% have attempts to accelerate tail risk estimation [refs], BHLZ (2020)
% is the only work, to the best of our knowledge, which utilizes the
% dominating point machinery in large deviations of multivariate
% Gaussian distributions to develop a verifiably efficient importance
% sampler in safety-critical systems which employ piecewise linear
% prediction models such as random forests and neural networks.

% The scalable implementation and the accompanying
% variance reduction guarantees offer a substantial improvement over
% the state-of-the-art IS methods whose efficacy have been
% verifiable only in limited instances (such as multivariate
% normality, independence, etc.).

% The use of appropriate feature-maps to approximate and learn
% certain conditional expectations or other functions of interest
% has been the work-horse in modern machine learning and stochastic
% dynamic programming methods. Likewise, utilizing decision rules
% based on feature maps has gained much attention in data-driven
% decision making

\section{The problem considered and the proposed IS procedure}
\label{sec:Desc-IS}
Vectors are written in boldface to enable differentiation from
scalars. For any $\mv{a} = (a_1,\ldots,a_d) \in \Real^d,$
$\mv{b} = (b_1,\ldots,b_d) \in \Real^d$ and $c \in \Real,$ we have
$\vert \mv{a} \vert = (\vert a_1 \vert, \ldots, \vert a_d \vert),$
$\mv{a}\mv{b} = (a_1b_1,\ldots,a_d b_d),$
$\mv{a}/\mv{b} = (a_1/b_1,\ldots,a_d/b_d),$
$\mv{a} \vee \mv{b} = (\max\{a_1,b_1\},\ldots,\max\{a_d,b_d\}),$
$\mv{a}^{\mv{b}} = (a_1^{b_1},\ldots,a_d^{b_d}),$
\sloppy{$\mv{a}^{-1} = (1/a_1,\ldots,1/a_d),$}
$\log\mv{a} = (\log a_1,\ldots,\log a_d),$
$c^{\mv{a}} = (c^{a_1},\ldots,c^{a_d}),$
% $(\mv{a})^+ = (\max\{a_1,0\},\ldots,\max\{a_d,0\}),$
denoting the respective component-wise operations. Let
$\Real^d_+ = \{\xx \in \Real^d: \xx \geq \mv{0}\}$ denote the positive
orthant and $\Real^d_{++}$ 
%= \{\xx \in \Real^d: \xx > \mv{0}\}$ 
denote its interior.

\subsection{A description of the problem considered}
\label{sec:Desc}
Suppose that
$L(\xx)$ denotes the cost incurred when the uncertain variables
affecting the problem, modeled by a random vector
$\XX,$ realize the value $\xx \in
\mathbb{R}^d.$
% As mentioned earlier,
% $L(\XX)$ may denote the losses associated with a portfolio exposed
% to risk factors
% $\XX,$ or it may denote the quality of service (cumulative delays)
% experienced by customers for whom goods are to be
% delivered/collected within specified time windows by operating a set
% of vehicle routes in the presence of travel times $\XX.$
While the loss
$L(\cdot)$ may be expressed as a linear combination of uncertain
variables in some simple settings, the inherent nature of managing operations under resource constraints often results in $L(\cdot)$ expressed suitably as the value of an optimization formulation. %and/or may require evaluation of a feature map represented, for example, by a neural network.  
We consider the task
of estimating the probabilities or expectations associated with tail
risk events of the form $\{L(\XX) \geq u\},$ for a threshold
$u$ suitably large. The need for having a control over likelihoods of
these risk scenarios is inherently present in many operational
settings affected by uncertainty, due to the need to keep the costs
below a target risk level (or) to meet a service-level agreement which
ensures that a target quality of service is
met. % with high reliability.
In many applications, high losses are experienced when the random
vector $\XX$ takes undesirably high values in the positive orthant; for
example, large travel durations in vehicle routing instances leading
to large delays. Thus, without loss of generality, we take the set
specifying risk scenarios, $\{\xx \in \text{supp}(\XX): L(\xx) \geq
u\},$ to be a subset of the positive orthant
$\mathbb{R}^d_+;$ here
$\text{supp}(\XX)$ denotes the support of the distribution of $\XX.$ {Considering a rich class of loss functions $L(\cdot)$ satisfying Assumption \ref{assume:V} below, we aim to design efficient  IS schemes for estimating the distribution tails of $L(\XX).$}

%In many of the motivating contexts, it is impractical to invoke the specific form of the function $L(\xx)$ to facilitate efficient estimation of the distribution tails. Therefore, our interest is to design efficient estimation schemes which not only offer substantial variance reduction, but do so while requiring only black-box access to a function $L(\cdot)$ satisfying Assumption \ref{assume:V} below.

% We consider the Monte Carlo estimation of the tail risk probabilities
% $P(L(\XX) > u)$ for a broad class of losses satisfying Assumption
% \ref{assume:V}
% below.
% Throughout the paper, we take $L(\xx) = \mv{0},$
% for every $\xx$ outside the support of $\XX,$ as this does not affect
% the event of interest $\{L(\XX) > u\}.$

\begin{assumption}
  \textnormal{The function $L : \Real^d \rightarrow \Real$ satisfies
    the following conditions:
    \begin{itemize}
    \item[a)] the set $\{\xx \in \supp(\XX): L(\xx) \geq u\}$ is
      contained in $\Real^d_+$ for all sufficiently large $u;$ and
    \item[b)] for any sequence $\{\xx_n\}_{n \geq 1}$ of
      $\mathbb{R}^d_+$ satisfying $\xx_n \rightarrow \xx,$ we have 
  \begin{align*}
    \lim_{n \rightarrow \infty} \frac{L(n\xx_n)}{n^\rho}= L^\ast(\xx),
  \end{align*}
  where $\rho$ is a positive constant and the limiting function
  $L^\ast: \mathbb{R}^d_+ \rightarrow \mathbb{R}$ is such that the
  cone $\{\xx \in \R^d_+: L^\ast(\xx) > 0\}$ is non-empty.
 \end{itemize} 
}
  \label{assume:V}
\end{assumption}

 Assumption \ref{assume:V}b merely specifies asymptotic homogeneity,
which implies that larger the value of $u,$ farther is the target rare
set from the origin.  The set $\{\xx \in \R^d_+: L^\ast(\xx) > 0\}$ is
necessarily a cone because
$L^\ast(c\xx) = \lim_{n \rightarrow \infty}n^{-\rho}L(nc\xx ) =
c^{\rho}\lim_{n \rightarrow \infty} (cn)^{-\rho}L(cn\xx) = c^\rho
L^\ast(\xx),$ for any $c > 0, \xx \in \mathbb{R}^d_+.$  Examples~\ref{eg:affine-MILP} - \ref{eg:features-NN} below provide a non-exhaustive yet indicative list of objectives $L(\cdot)$ for which Assumption \ref{assume:V} is readily satisfied. {
 A discussion on verifying Assumption~\ref{assume:V}b when only oracle access to loss evaluations $L(\cdot)$ are available is presented in  Section~\ref{sec:Verify_Num}. As Assumption \ref{assume:V} does not require convexity,
the treatment in this paper is applicable even if $L(\cdot)$ is neither convex nor concave.}

\begin{example}[Piecewise affine functions, value of mixed integer
  linear programs]
  \textnormal{Suppose that $L(\cdot)$ can be written as
    \begin{align}
      L(\xx) = \sup_{\mv{\theta} \in \Theta} \{\mv{\theta}^\intercal
      \mv{x} + r(\mv{\theta}) \},
      \label{affine-MILP}
    \end{align}
    where $\Theta$ is a bounded subset of $\mathbb{R}^d$ and
    $r:\Theta \rightarrow \mathbb{R}$ is a bounded function which
    serves to capture terms, if any, which do not involve the random
    vector $\XX.$ In the case where $\Theta$ is a finite set,
    $L(\cdot)$ could represent a piecewise affine function as in,
    $L(\xx) = \max_{k = 1,\ldots,K} \{\mv{\theta}_k^\intercal \xx +
    r_k\},$ where $K$ is a positive integer, $\mv{\theta}_k \in \R^d$
    and $r_k \in \R,$ for $k =
    1,\ldots,K.$ % or that of the value of a
    % combinatorial optimization problem.
    If the set $\Theta$ is described by linear and/or integer
    constraints and if the function $r(\cdot)$ is affine, we have that
    \eqref{affine-MILP} is a linear (or) a mixed integer linear
    program. With the notation used in Assumption~\ref{assume:V}, we
    have $\rho = 1$ and
    $L^\ast(\xx) = \max_{\mv{\theta} \in \Theta}
    \mv{\theta}^\intercal\xx$ for the example $L(\cdot)$ in
    (\ref{affine-MILP}); see \cite[Proposition
    7.29]{rockafellar2009variational}.  The requirements in
    Assumption \ref{assume:V} are met, for example, if
    % the support of $\XX$ is bounded from below and
    at least one vector
    in the collection $\Theta$ lies outside the negative orthant
    $\Real^d_{-};$ or, in other words, if
    $\sup \{\theta_{k}: (\theta_1,\ldots,\theta_d) \in \Theta, k =
    1,\ldots,d \} > 0$.  Objectives in many planning problems, such as
    project evaluation and review networks, linear assignment or
    matching, traveling salesman problem, vehicle routing problem,
    max-flow, minimum cost flow, etc., satisfy this requirement either
    in the native formulation or in the respective dual formulation. \hfill$\Box$}
  \label{eg:affine-MILP}
\end{example}

\begin{example}[Piecewise quadratic functions]
  \textnormal{As an extension to Example \ref{eg:affine-MILP}, one may also consider
    piecewise quadratic functions of the form
  %  \begin{align}
      $L(\xx) = \max_{k=1,\ldots,K} \left\{\xx^T Q_k \xx +
      \mv{c}_k^T\xx \right\},$
%      \label{quadratic-obj}
%    \end{align}
    where $K$ is a positive integer, $\{Q_k: k=1,\ldots,K\}$ are
    $(d \times d)$-symmetric matrices, and $\mv{c}_k \in \Real^d,$ for
    $k=1,\ldots,K.$ As long as the matrices $Q_k$ are not all
    identically zero, we have $\rho = 2$ and
    $L^\ast(\xx) = \max \{\xx^T Q_k \xx: k=1,\ldots,K\}$ in this
    example.  When the support of $\XX$ is bounded from below, the
    requirements in Assumption \ref{assume:V} are automatically met
    if, for example, at least one of the eigen values of the matrices
    in the collection $\{Q_k: k =1,\ldots,K\}$ is positive. If $L(\cdot)$ is instead a piecewise-minimum as in $L(\xx) = \min_{k=1,\ldots,K} \left\{\xx^T Q_k \xx +
      \mv{c}_k^T\xx \right\},$ the requirements are readily checked to be
    satisfied with $\rho = 2$ if the collection $\{Q_k: k =1,\ldots,K\}$ is positive
    semidefinite. \hfill$\Box$}
  \label{eg:piecewise-quadratic}
\end{example}

\begin{example}[Models using contextual information via feature maps/decision rules]
  \textnormal{Suppose that $L(\cdot)$ is written as a composition of
    functions as in
%    \begin{align*}
  $L(\xx) = c \left(\mv{\theta}^\intercal \mv{\Phi}(\xx) + \theta_0 \right),$
 %   \end{align*}
    where $c: \mathbb{R} \rightarrow \mathbb{R}$ is an objective
    measuring the cost incurred by plugging in a specific decision
    rule or a function approximation based on the feature map
    $\mv{\Phi}:\mathbb{R}^d \rightarrow \mathbb{R}^m$ (see
    \cite{VahnRudin} for an example use of feature-based decision
    rules in newsvendor models). In simple settings, one may take the
    feature map to be merely $\mv{\Phi}(\xx) = \xx,$ or, may include
    cross-terms in the feature vector as in
    $\xx = (x_1,\ldots,x_d) \mapsto (x_i,x_ix_j: i,j = 1,\ldots,d).$
    Motivated by the proliferation of deep neural networks in learning
    expressive feature maps, one may
    consider the feature map $\mv{\Phi}$ to be specified in terms of
    several function compositions defined recursively as in,
    \begin{align}
      \mv{\Phi}(\xx) =  \mv{L}_K(\xx),  \quad 
      \mv{L}_k(\xx) = (A_k\mv{L}_{k-1}(\xx) - \mv{b}_k)^+, k =
      1,\ldots,K, \quad \text{ and } \quad
      L_0(\xx) = (A_0\xx - \mv{b}_0)^+.
      \label{ReLU-NN}
    \end{align}
    In the above, the operation $(\mv{a})^+ = \mv{a} \vee \mv{0},$ $K$
    is a positive integer and for each $k \leq K,$
    $A_k \in \Real^{n_k \times n_{k-1}},$ $\mv{b}_k \in \Real^{n_k}$
    are weight parameters in a neural network with $n_k \geq 1$
    rectified linear activation units (ReLU) in the $k$-th layer. We
    have $n_{K} =: m$ as the dimension of the resulting feature
    map. Refer \cite{sirignano2018deep} for a treatment of
    their utility in identifying relevant features in the context of
    modeling mortgage default risk. For the map $\mv{\Phi}(\cdot)$
    considered in (\ref{ReLU-NN}), we have
    \begin{equation*}
%      \label{eqn:ReLU}
      n^{-1}\mv{\Phi}(n\xx_n)  \rightarrow \big( A_K
      \big( \cdots A_1(A_0\xx)^+ \big)^+\big)^+,
      \text{ for every sequence $\{\xx_n\}_{n \geq 1}$
        of $\Real^d$ satisfying
    $\xx_n \rightarrow \xx.$ }  % \quad \text{ and }
      % \quad L^\ast(\mv{x}) = \mv{\theta}^\intercal \left( A_K
      %     \left( \cdots A_1(A_0\xx)^+ \right)^+\right)^+,
    \end{equation*}
    In general, suppose the feature map $\mv{\Phi}$ is such that
    $n^{-p}\mv{\Phi}(nx_n) \rightarrow \mv{\Phi}^\ast(\xx),$ for some
    $p > 0$ and every sequence $\{\xx_n\}_{n \geq 1}$ of $\Real^d_+$
    satisfying $\xx_n \rightarrow \xx.$ Then for the desired
    convergence in Assumption \ref{assume:V}b, we have,
    $L^\ast(\xx) =
    c_+\big[\big(\mv{\theta}^\intercal\mv{\Phi}^\ast(\xx)\big)^+\big]^q
    \ + \
    c_{-}\big[\big(\mv{\theta}^\intercal\mv{\Phi}^\ast(\xx)\big)^{-}\big]^q$
    and $\rho = pq$, if, for example, $c(\cdot)$ is such that
    $c(u)/u^q \rightarrow c_+$ as $u \rightarrow \infty,$
    $c(u)/\vert u \vert^q \rightarrow c_{-}$ as
    $u \rightarrow -\infty,$ with constants $q,c_+,c_{-}$ satisfying
    $q > 0, \min\{c_+, c_{-}\} > 0.$ One may include an additional
    composition to consider models of the form,
    \begin{align}
      L(\mv{s},\mv{\varepsilon}) = \min_{\mv{\theta} \in \mv{\Theta}}%  /
      % \max_{\mv{\theta} \in \mv{\Theta}}
      \ \mv{\theta}^\intercal \mv{c}\big(\mv{\Phi}(\mv{s}),
      \mv{\varepsilon}\big),
      \label{spo-eg}
    \end{align}
    where $\mv{s}$ is seen as contextual side information,
    $\mv{\Phi}(\cdot)$ is a feature map that models the dependence of
    cost vector $\mv{c}$ in terms of the side information $\mv{s}$ and
    additional uncertainty $\mv{\varepsilon},$ and $\mv{\Theta}$
    describes the constraints; see, for example,
    \cite{elmachtoub2020smart} for details and Section \ref{num:SPP}
    for a contextual shortest-path example. Here suppose that the
    feature map $\mv{\Phi}(\cdot)$ is as above and the cost mapping
    $\mv{c}$ is positive and satisfies
    $n^{-\rho}\mv{c}(n^{p}\mv{s}_n, n\mv{\varepsilon}_n) \to
    \mv{c}^\ast(\mv{s},\mv{\varepsilon}),$ for
    $\mv{s}_n \rightarrow \mv{s},$
    $\mv{\varepsilon}_n\to\mv{\varepsilon}$ and some $\rho,p > 0.$ If
    we let $\xx = (\mv{s}, \mv{\varepsilon}),$ we have
    Assumption~\ref{assume:V}(b) satisfied with
    $L^\ast(\mv{s},\mv{\varepsilon}) = \min_{\mv{\theta} \in
      \mv{\Theta}} \mv{\theta}^\intercal \mv{c}^\ast
    \big(\mv{\Phi}^*(\mv{s}),\mv{\varepsilon}\big).$ \hfill$\Box$
    % Then, it can be seen that if $L(\cdot)$ were expressed as a
    % function of $\xx$ and
    % $\mv{\varepsilon}$, it satisfies Assumption~\ref{assume:V} with
    % $\tilde{\rho} = q\rho$ and
    % $\tilde{L}^*(\xx,\varepsilon) =
    % L^*(\mv{\ell}^*(\mv{\Phi}^*(\xx),\mv{\varepsilon}))$. Such a
    % setting occurs for example in the contextual shortest path
    % problem
    % with $L(\mv{c}) = \inf_{\mv{w}\in K}\mv{w}^\intercal \mv{c}$,
    % where $K$ is the polytope corresponding to the shortest path
    % problem at hand, and $\mv{\Phi}(\cdot)$ and $\mv{\ell}(\cdot)$
    % are
    % typically chosen depending on the modelling approach; see for
    % example, \cite{elmachtoub2020smart}.  The requirements in
    % Assumption \ref{assume:V} are met if, for example, the support
    % of
    % $(\XX,\mv{\varepsilon})$ is bounded from below and the feature
    % map
    % is not degenerate (that is, the image
    % $\{\mv{\Phi}(\xx): \xx \in \Real^d_+\} \neq \{\mv{0}\}$).
  }
\label{eg:features-NN}
\end{example}

One can identify more functionals $L(\cdot)$ which
satisfy Assumption \ref{assume:V}b) by taking linear combinations  {(that is, if $L_1$ and $L_2$ satisfy Assumption~\ref{assume:V}b), so does $L_1+L_2$)} or
compositions suitably {(as in Example \ref{eg:features-NN})} based on  modeling needs. Further, the requirements in Assumption \ref{assume:V}b) can be recast
naturally if a particular application requires the set quantifying
risky scenarios, $\{\xx: L(\xx) \geq u\},$ to be a subset in a different orthant.  % , as is the
% case with examples above.

% Given a loss $L(\cdot)$ satisfying Assumption \ref{assume:V} above,
% our objective is to allow fast Monte Carlo evaluation of the
% probabilities of the risk event, $P(L(\XX) \geq u),$ or expectations
% restricted to this event, for a suitably large value of $u$
% appropriate for the reliability requirements in the application
% considered. We consider this estimation task supposing that the
% probability distribution of $\XX$ is a member of a distribution class
% with sufficiently regular tail behaviour, in a sense to be made
% precise in following Section \ref{sec:Mod-Framework}.  Many commonly
% used continuous distribution families belong to the considered class,
% as we demonstrate with examples in Tables \ref{tab:marginals-light} to
% \ref{tab:multivariate} in Appendix \ref{sec:app-eg:tails}.
% It is well-known that estimating such quantities related to
% distribution tails merely by computing sample averages would require
% about $p_u^{-1}\varepsilon^{-2}\delta^{-1}$ samples, where
% $p_u := P(L(\XX) \geq u)$ is the probability of the target risk event,
% $\varepsilon > 0$ is the relative precision required and $1-\delta$ is
% the target confidence level (see \cite[Chapter 6]{AGbook}).  Such a
% computational requirement is prohibitive when $p_u$ is small, which is
% often the case in settings with high service quality or reliability
% requirements.
% Hence the need for efficient estimation procedures whose
% complexity do not grow as severely with $p_u \rightarrow 0.$

\subsection{The proposed importance sampling method}
\label{sec:IS-prop}
The proposed importance sampling (IS) procedure for  fast evaluation of $p_u := P(L(\XX) > u),$
where $L(\cdot)$ is taken to satisfy Assumption \ref{assume:V}, is
presented in Algorithm \ref{algo:IS} below. {A key ingredient of Algorithm \ref{algo:IS} is a multiplicative transformation of the form,}
\begin{align}
  \mv{T}(\xx) = \xx \times \big( u/l \big)^{\mv{\kappa}(\xx)},
  \label{IS-transf}
\end{align}
{where $l \in [0,u]$ is a hyper-parameter choice and  $\mv{\kappa}: \mathbb{R}^d \rightarrow \mathbb{R}^d_+$ is a suitably defined vector-valued map. %As $u/l > 1$ and $\mv{\kappa}(\xx) \geq \mv{0}$ component-wise, we have $\mv{T}(\xx) \geq \xx$ . 
One may view the components of $\mv{T}(\xx) = (T_1(\xx),\ldots,T_d(\xx))$ as a multiplicative stretching of the components of $\xx = (x_1,\ldots,x_d)$ as in ${T}_i(\xx) = x_i \times (u/l)^{\kappa_i(\xx)} \geq x_i,$ with the extent of stretch of each component determined by the respective exponent $\kappa_i(\xx) \geq 0.$ 

\begin{algorithm}[h!]
  \caption{Self-structuring IS procedure for estimating
    $P(L(\XX) \geq u)$}
  % \DontPrintSemicolon
  \ \vspace{-2pt}\\
  \KwIn{Threshold $u,$ independent samples
    $\XX_1,\ldots,\XX_N$ of $\XX,$  hyperparameter $l,$ 
    {specify if the exponent $\mv{\kappa}(\cdot)$ is chosen from \eqref{defn:kappa} to be either $\mv{\kappa} = \mv{\kappa}^{(1)}$ (or)  $\mv{\kappa} = \mv{\kappa}^{(2)}$ }}
  \ \vspace{-16pt}\\
  \textbf{Procedure:}\\ 
  \textbf{1. Transform the samples:} For each sample
  $i=1,\ldots,N,$ compute the transformation,
  \begin{align}
    \ZZ_i = \mv{T}(\XX_i) := \XX_i (u/l)^{\mv{\kappa}(\XX_i)},
    \label{imp-transf}
   \end{align}
 %  where $\mv{\kappa}: \mathbb{R}^d \rightarrow \mathbb{R}^d_+$ is \textcolor{blue}{to be specified in Table \ref{tab:Jacobians} in Section \ref{sec:IS-VR}}.\\
   \ \vspace{-10pt}\\
   \textbf{2. Compute the associated likelihood:} For each transformed
   sample $\ZZ_i,$ compute the respective likelihood ratio as,
  \begin{align}
    \mathcal{L}_i := \frac{f_{\XX}(\ZZ_i)}{f_{\XX}(\XX_i)}  J(\XX_i), %\vert \
    % \mathbf{I}\left( L(\ZZ_i) > u\right),
    \qquad i = 1,\ldots,N, 
    \label{LLR}
  \end{align}
  where $f_{\XX}(\cdot)$ is the probability density of $\XX$ and $J:\mathbb{R}^d \rightarrow \R_+$ is the Jacobian of the transformation $T$ {(see Table \ref{tab:Jacobians} for expressions of $J(\cdot)$ along with prescribed choices of $\mv{\kappa}(\cdot)$).}

  % \begin{align}
  %   \label{eqn:Jac}
  %   J(\xx)  &:= \left[\prod_{i=1}^d \tilde{J}_i(\xx) \right]\times \frac{(u/l)^{\mv{1}^\intercal \mv{\kappa}(\xx)}}{\max_{i=1,\ldots,d} \tilde{J}_i(\xx)},\\
  %   \text{where } \tilde{J}_i(\xx)
  %           &:= 1+\frac{\rho^{-1}\log(u/l)}{\Vert\log(1+|\xx|)
  %             \Vert_\infty}\frac{|x_i|}{1+|x_i|}, \quad i = 1,\ldots,d.
  %             \nonumber
  % \end{align}
     \ \vspace{-10pt}\\
  \textbf{3. Return the output estimator:} Return the IS average
  computed as in,
  \begin{equation*}
    % \label{eqn:Prob-I.S.}
    \bar{\zeta}_{_N}(u) = \frac{1}{N}\sum_{i=1}^{N} \mathcal{L}_i
    \mathbf{I}\left( L(\ZZ_i) \geq u\right).
      \end{equation*}
      \label{algo:IS}
    \end{algorithm}

We provide efficient variance reduction guarantees for Algorithm \ref{algo:IS} when the transformation $\mv{T}(\cdot)$  is used in \eqref{imp-transf} with the exponent $\mv{\kappa}(\cdot)$ fixed to any one of the following two choices:
%We make use of one of the following two functions $\mv{\kappa}_1(\cdot),$ $\mv{\kappa}_2(\cdot)$ as a choice for the exponent $\mv{\kappa}(\cdot)$ in \eqref{IS-transf}:
\begin{align}
  \mv{\kappa}^{(1)}(\xx) := \frac{1}{\rho}  \frac{\log(1 + \vert \xx \vert)}
  {\Vert\log (1 + \vert \xx \vert) \Vert_\infty}, \qquad
    \mv{\kappa}^{(2)}(\xx) &:=  \frac{\log(1 + \vert \xx \vert)}
  {\log l}.
   \label{defn:kappa}
\end{align}
While the former relies on knowledge of the growth parameter $\rho$ in Assumption \ref{assume:V}, the latter is model-agnostic in the sense that it is free of any dependence on $L(\cdot)$ or the distribution of $\XX.$ As a result, $\mv{\kappa}^{(2)}(\xx)$ is applicable more broadly in distribution tail estimation tasks including those in which evaluations of $L(\cdot)$ is  available only via oracle queries. A discussion on how the choice $\mv{\kappa}^{(1)}(\xx)$ can be advantageous in preserving convexity of the objective if one is engaged in further optimization tasks (such as minimizing Conditional Value-at-Risk) is presented in Section \ref{sec:opt}.} %Besides providing a large-deviations based rationale for the transformations of the form \eqref{IS-transf}, we prove the effectiveness of the both $\mv{\kappa}_1(\cdot)$ and $\mv{\kappa}_2(\cdot)$ as choices for the exponent $\mv{\kappa}(\cdot)$ in this paper. 

In 
{Algorithm \ref{algo:IS},} the samples for the proposed IS procedure are taken
as
%\begin{align*}
${\ZZ}_i := \mv{T}(\XX_i), \quad i = 1,\ldots,N,$
%\end{align*}
{where $\XX_1,\ldots,\XX_N$ are independent and identically distributed as $\XX.$}
The bias resulting from counting the fraction of samples $\ZZ_i$ lying
in the target rare set, instead of that of $\XX_i,$ is adjusted by
multiplying with the respective likelihood ratio term $\mathcal{L}_i$
%in the estimator
as in,
\begin{equation}
  \label{eqn:Prob-I.S.}
    \bar{\zeta}_{_N}(u) = \frac{1}{N}\sum_{i=1}^{N} \mathcal{L}_i
        \mathbf{I}\left( L(\ZZ_i) \geq u\right),
\end{equation}
to obtain the estimator $\bar{\zeta}_{_N}(u)$ returned by Algorithm \ref{algo:IS}.
As with any IS procedure, the
likelihood ratio term $\mathcal{L}_i$ is taken to be the ratio between the
probability densities of $\XX$ and $\ZZ$ evaluated at $\ZZ_i$ \citep[see eg.,][Chapter 5]{AGbook}.  With a standard change of variables formula involving the Jacobian of the transformation, $\mathcal{L}_i$ can be written conveniently {as in Table \ref{tab:Jacobians} below. This involves plugging in the choice of $\mv{\kappa}$ in the Jacobian determinant 
\begin{align}\label{eqn:Jac_T_general}
      J(\xx)  :=  \text{det}\left( \frac{\partial \mv{T}(\xx)}{\partial \xx}\right) = (u/l)^{\mv{1}^\intercal \mv{\kappa}(\xx)} \text{det} \left( \mathbb{I}_d + \ln \left(u/l\right) \text{Diag}(\xx)  
      \frac{\partial \mv{\kappa}(\xx)}{\partial \xx} \right),
\end{align}
almost everywhere, to obtain the associated 
 likelihood ratio $\mathcal{L}_i = J(\XX_i) f_{\XX}(\ZZ_i)/f_{\XX}(\XX_i).$} Consequently, as verified in  Proposition \ref{prop:unbiased} below, the resulting estimator $\bar{\zeta}_{_N}(u)$ has no bias.
\begin{table}[h!]
\caption{Choice of the exponent $\mv{\kappa}(\cdot)$ in \eqref{IS-transf} and the respective Jacobian determinant}\label{tab:Jacobians}
% \TABLE{Choice of the exponent $\mv{\kappa}(\cdot)$ in \eqref{IS-transf} and the respective Jacobian determinant \label{tab:Jacobians}}
  {
  \begin{tabular}{|c|l|l|}
  \hline
      \vspace{-5pt} & & \\
  Choice of  exponent $\mv{\kappa}$ in  & Jacobian determinant  $J(\XX_i)$ in the respective & Additional remarks \\
    \vspace{-5pt} & & \\
   $\mv{T}(\XX_i) =  \XX_i \left( u/l\right)^{\mv{\kappa}(\XX_i)}$ &  
    likelihood ratio $\mathcal{L}_i =  J(\XX_i) \frac{f_{\XX}(Z_i)}{f_{\XX}(X_i)}$ & \\ 
  \vspace{-8pt} & & \\
  \hline 
  & & \\
    $\mv{\kappa(\xx)} = \mv{\kappa^{(1)}(\xx)}$ & 
    $J(\xx)  = \left[\prod_{k=1}^d \tilde{J}_k(\xx) \right]\times \frac{(u/l)^{\mv{1}^\intercal \mv{\kappa}(\xx)}}{\max_{k=1,\ldots,d} \tilde{J}_k(\xx)},$  & $\circ$ Relies on knowing $\rho$ \\
    & where & $\circ$ Advantageous in \\
    & $\quad \tilde{J}_k(\xx)
      := 1+\frac{\rho^{-1}\log(u/l)}{\Vert\log(1+|\xx|)
        \Vert_\infty}\frac{|x_k|}{1+|x_k|}, \ \ k = 1,\ldots,d$   & \ \ retaining convexity \\
     &   & \ \ in optimization tasks \\
\hline & &\\
$\mv{\kappa(\xx)} = \mv{\kappa^{(2)}(\xx)}$ & 
    $J(\xx)  = \left[\prod_{i=1}^d \bar{J}_{k}(\xx) \right]\times (u/l)^{\mv{1}^\intercal \mv{\kappa}(\xx)}$ & $\circ\ $resulting $\mv{T}$ in \eqref{IS-transf}\\
     & where & \ \ does not depend on\\
     & $\quad \bar{J}_{k}(\xx) =  1 +  \frac{\log (u/l)}{\log l}\frac{|x_k|}{1+|x_k|}, \quad k = 1,\ldots,d$ & \ \ $L(\cdot)$ or pdf of $\XX$\\
                 %\vspace{-5pt} 
            &     & \\
\hline
  \end{tabular}
  }
\end{table}

\begin{proposition}
  {Suppose that the transformation $\mv{T}$ in  \eqref{imp-transf} is employed with either $\mv{\kappa}(\XX_i) = \mv{\kappa}^{(1)}(\XX_i)$ (or) $\mv{\kappa}(\XX_i) = \mv{\kappa}^{(2)}(\XX_i),$ and the  likelihood ratio $\mathcal{L}_i$ in \eqref{LLR} is computed with the respective Jacobian determinant in  Table \ref{tab:Jacobians}}. Then for any $u > 0,$ the estimator
  $\bar{\zeta}_{_N}(u)$ is unbiased. In other words,
  $ E[\bar{\zeta}_{_N}(u)] = p_u.$ Moreover there exists a map
  $\mv{T}^{-1}:\Real^d \rightarrow \Real^d$ such that
  $\mv{T}\circ\mv{T}^{-1}(\xx) = \xx$ for almost every
  $\xx \in \Real^d.$

  % \begin{align*}
  %   E\left[ \hat{P}_N(u)\right] = p_u. 
  % \end{align*}
\label{prop:unbiased}
\end{proposition}

The choice of the transformation $\mv{T}(\cdot)$ in Algorithm \ref{algo:IS}, which implicitly specifies the IS density, is guided by the
self-similarity properties of the  distribution of $\XX$ to be 
made concrete with the large deviations framework in following 
Sections \ref{sec:Mod-Framework} - \ref{sec:LD-Tails}.
% and that of the loss $L(\cdot),$ as stated in Assumption
% \ref{assume:V}b.
Building on this framework, an account on the rationale behind the
choice of the IS transformation $\mv{T}$ and its variance
reduction properties is offered in Section \ref{sec:IS-VR}.  Roughly
speaking, the transformation $\mv{T}(\cdot)$ seeks to suitably replicate the
concentration properties of the theoretically optimal IS distribution
 from observations which are not as rare.  This is
facilitated by taking the parameter $l$ such that $l \ll u$ and the
event $\{L(\XX) \geq l\},$ though also a tail risk event, is much more
frequently observed in the initial samples when compared to the target
event $\{L(\XX) \geq u\}.$ % The smaller the parameter $l$ is, more
% frequently the transformed sample $\ZZ_i$ lands in the target rare
% set. However, an unusually small choice of $l$ might result in the
% likelihood ratio terms $\mathcal{L}_i$ being large, thus leading to
% large variance.
% Suppose that
% $\text{VaR}_p[L(\XX)] = \inf\{x: P(L(\XX) \leq x) \geq p\}$ denote the
% $p-$quantile of $L(\XX).$ Then, an example to assist in
% conceptualization, % suppose
% % that an estimation task at hand pertains to an event of form
% % $\{L(\XX) > u\}$ which may happen with probabilities of the order
% % one-in-hundred (or) one-in-thousand;
% one may think of taking the parameter $l$ to be an estimate of
% $\text{VaR}_p[L(\XX)]$ in order to propose samples for estimation
% tasks pertaining to more extreme events of the form $\{L(\XX) > u\}$
% occurring with probabilities in the order of one-in-hundreds or
% one-in-thousands, though such a choice of $l$ is best made on a
% case-by-case basis via a hyperparameter selection procedure such as
% cross-validation.
%Concretely,
% We show %in Section \ref{sec:IS-VR} that
Even if the parameter $l$ is relatively negligible when compared to
the level $u,$ we show the variance of the resulting IS estimator is
small as in,
\begin{align}
  \text{var}[\,\bar{\zeta}_{_N}(u)\,] = o\left(p_u^{2-\varepsilon}N^{-1}\right),
  \label{log-opt-VR}
\end{align}
as the estimation task is made more challenging by taking
$p_u \rightarrow 0.$ The relationship (\ref{log-opt-VR}) holds for any
arbitrary $\varepsilon > 0$ and any choice of $l = l(u)$
which is taken to be slowly varying in $u$ and satisfying
$\lim_{u \rightarrow \infty} l(u) = +\infty;$ see Theorem
\ref{thm:Var-Red} in Section \ref{sec:IS-VR} for a precise statement
of the variance reduction result and Section \ref{sec:preliminaries}
for the definition and examples of slowly varying functions.
%
%Here, the relative error random variable $Y,$ denoted
% by $\text{re}[Y],$ is the coefficient of variation of $Y,$ which is
% defined as the ratio of the standard deviation of $Y$ to the mean
% $E[Y].$

{Recall from Section \ref{sec:challenges} that traditional IS approaches typically require solving a non-trivial optimization problem {\tt (OPT)} to identify the best distribution within a chosen family IS distribution $\mathcal{P}.$ Unlike these procedures,  the selection of IS density in our approach is simplified to that of selecting the single parameter $l$ minimizing the sample variance.} The robust variance reduction
guarantee for Algorithm \ref{algo:IS}, obtained for any $l$ which is slowly varying in $u,$ enables to confine the search for a good choice of the hyper-parameter $l$ to be within a relatively narrow collection. One may execute the selection of $l$ by means of cross-validation 
{over candidate choices of $l,$ (or) with a retrospective approximation based search procedure we provide in Section \ref{sec:num-exp} together with numerical examples.} 

Contrast the reduced variance of the IS estimator in
(\ref{log-opt-VR}) with that of the naive sample average which merely
counts the fraction of samples $\{\XX_i: i = 1,\ldots,N\}$ in the
target rare set. In the case of naive sample average, the variance is
$p_u(1-p_u) N^{-1}$ and the coefficient of variation grows as in
$p_u^{-1}N^{-1/2},$ as $p_u \rightarrow 0.$ Thanks to
(\ref{log-opt-VR}), the coefficient of variation of the proposed IS
estimator grows only as $o(p_u^{-\varepsilon} N^{-1/2})$ where
$\varepsilon$ can be arbitrarily small, thus requiring only a
negligible fraction of samples compared to that required by the naive
sample average. %in order to achieve a target relative precision.
Any estimator which meets the relative error guarantee in
(\ref{log-opt-VR}) is said to offer asymptotically optimal variance
reduction and is referred to as \textit{logarithmically
  efficient}. Please refer \citet[Chapter 6]{AGbook} for a discussion
on the significance on logarithmic efficiency and why it is a natural
and pragmatic efficiency criterion for estimation tasks pertaining to
rare events.

\section{A nonparametric tail modeling description and associated
  LDP}
\label{sec:Mod-Framework}
%Our objective in Sections \ref{sec:Mod-Framework} - \ref{sec:IS-VR} is
%to develop a large deviations asymptotic for the tail probabilities of the form $P(L(\XX) > u)$ and subsequently use it to verify the efficiency of the proposed Algorithm \ref{algo:IS} without having to make restrictive assumptions for the distribution of $\XX.$ % Here we
% describe assumptions which are nonparametric in nature and are
% verified to be satisfied for a wide variety of parametric and
% nonparametric distribution families used in practice.

\subsection{Preliminaries: Regularly varying functions (Class
  $\mv{\RV}$)}
\label{sec:preliminaries}
% We begin with a brief introduction to the notion of regularly varying
% functions, as it provides a convenient means to state both the light
% and heavy-tailed assumptions.
A function $f:\R_+ \rightarrow \R_+$ is
said to be \textit{regularly varying} with index $\rho \in \mathbb{R}$
if for every $x > 0,$
\begin{align}
  \lim_{n \rightarrow \infty} \frac{f(nx)}{f(n)} = x^\rho.
  \label{uni-RV}
\end{align}
When referring to \eqref{uni-RV}, we write $f \in \RV,$ or,
$f \in \RV(\rho)$ if there is a need to explicitly specify the
exponent $\rho$. The function $f(x) = x^\rho$ is a canonical example
of the class $\RV(\rho).$ If $\rho = 0,$ then $f$ is specifically
referred as \textit{slowly varying}. Some examples of slowly varying
functions include $\log(1+x),$ $\log \log (e+x), (1 + \log(1+x))^{a}$
where $a \in \mathbb{R},$ $\exp(\log(x)^a)$ where $a \in (0,1),$ or
any function $f$ satisfying
$\lim_{x \rightarrow \infty}f(x) = c \in (0,\infty).$ A function
$f \in \RV(\rho)$ can be written as $f(x) = \ell(x)x^\rho,$ for some
slowly varying $\ell(\cdot)$ and $\rho \in \mathbb{R}.$ Evidently,
\eqref{uni-RV} is a characteristic of all homogeneous functions and
univariate polynomials.  By allowing $\rho$ to be an arbitrary real
number and $\ell(\cdot)$ to be any slowly varying function, the class
$\RV$ possesses substantially improved modeling power.
% gives a powerful extension by considering all functions which have
% an asymptotically homogenous behaviour.
See, for example, \cite{BorovkovBorovkov} for a detailed
treatment of the properties of the class $\RV.$

% A natural extension of the above definition to the multivariate case
% is to require that \sloppy{$f:\R_+^{d} \rightarrow \R_+$} satisfies, 
% \begin{align}
%   \lim_{t \rightarrow \infty} \frac{f(t\mv{x})}{f(t \mv{1})} = \lambda(\mv{x}), %\quad\quad 
%   \label{MRV-hom}
% \end{align}
% for a homogenous limit function $\lambda(\cdot)$ satisfying,
% $\lambda(s^{1/\rho}\mv{x}) = s \lambda(\mv{x}) > 0,$ for all
% $s > 0,$ $\mv{x} \in \R^d_+\setminus \{\mv{0}\},$ and some
% $\rho \in \R.$ As in the univariate case, functions satisfying
% \eqref{MRV-hom} mimic the asymptotic behaviour of homogenous
% polynomials, albeit more generally with an arbitrary $\rho$ and
% allowing the presence of slowly varying terms. The following notion of
% regular variation, which also includes non-homogenous polynomials as a
% special case, turns out to be a more inclusive class in the context of
% modeling multivariate tail behaviour.

% As in the univariate case, the class of $\MRV$ functions satisfying
% \eqref{MRV-het} include homogenous functions and a large class of
% polynomials, etc., as special cases.

%
% mimic the asymptotic behaviour of polynomials,
% albeit more generally with arbitrary real numbers $\rho_i$ allowed as
% exponents and slowly varying terms permitted.
%
% As an example, see that
% $f:\R_+^2 \rightarrow \R_+$ defined as,
% $f(\mv{x}) = x_1^{2.5}(1-\exp(-x_2)) + x_2\log(3+x_1)$
% satisfies $f \in \MRV(\lambda)$ with
% $\lambda(\mv{x}) = x_1^{2.5}x_2 + x_3 x_2.$

\subsection{Assumptions on the probability distribution of $\XX$}
Let $\bar{F}_i(x_i) := P(X_i > x_i)$ and
$\Lambda_i(x_i) := -\log \bar{F}_i(x_i)$ respectively denote the
complementary c.d.f (also known as survival function) and the
cumulative hazard function of the component $X_i$ in
$\XX = (X_1,\ldots,X_d).$ %With these definitions,
The marginal components $X_i$ are required to satisfy Assumption
\ref{assump-marginals} below.
\begin{assumption}
  \label{assump-marginals}
  \textnormal{ For $i \in \{1,\ldots,d\},$ the marginal components
    $X_i$ are such that $\Lambda_i$ is continuous, strictly increasing
    in an interval of the form $(x_0,\infty),$ and
    $\Lambda_i \in \RV(\alpha_i)$ for some $\alpha_i \in (0,\infty).$}
\end{assumption}
Common examples of distributions which satisfy Assumption
\ref{assump-marginals} are as follows: standard exponential
distribution where $\Lambda_i(x) = x$ satisfies
$\Lambda_i \in \RV(1);$ standard normal distribution where
$\Lambda_i(x) = x^2/2 - \log x(1+o(1)),$ as $x \rightarrow \infty,$
satisfies $\Lambda_i \in \RV(2);$ Weibull distribution with shape
parameter $k \in (0,\infty),$ where $\Lambda_i(x) = x^k,$ satisfies
$\Lambda_i \in \RV(k).$ Other examples of parametric families, along
with respective tail parameters $\alpha_i,$ are given in Table
\ref{tab:marginals-light} in Appendix
\ref{sec:app-eg:tails}.  % In the
% literature on modeling tails (see, for example, \cite{Kluppelberg}),
% the distributions which satisfy Assumption \ref{assump-marginals} are
% collectively referred as \textit{log-Weibull} distributions.
This large class includes distributions which are light-tailed and as
well as heavy-tailed distributions of the Weibull type. The case where
the marginal distributions possess even heavier tails, such as
log-normal, pareto, regularly varying distributions, etc., are treated
later in Section \ref{sec:extensions}. 

To describe the joint distribution of $\XX$, we first consider the {standardizing} transformation,
\[\mv{Y} = (Y_1,\ldots,Y_d) := \mv{\Lambda} (\mv{X}), \qquad 
  \text{ where } \mv{\Lambda}(\mv{x}) :=
  (\Lambda_1(x_1),\ldots,\Lambda_d(x_d)),
\]
which ``standardize'' the marginal distributions to that of standard
exponential.
% The
% above transformation renders the components $Y_i = \Lambda_i(X_i),$
% for $i = 1,\ldots,d,$ to be standard exponential random
% variables.
% This is verified in Lemma \ref{lem:Y-exp-marginals} below.
\begin{lemma}
  The marginal distributions of the components $Y_i,$ for
  $i=1,\ldots,d,$ are identical and is given by,
  % \begin{align*}
  $P(Y_i > y_i) = \exp\left(-y_i\right),$ for $y_i > 0.$
  % \end{align*}
  \label{lem:Y-exp-marginals}
\end{lemma}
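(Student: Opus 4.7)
The plan is to recognize this as an instance of the classical probability integral transform, adapted to the cumulative hazard function rather than the c.d.f.\ itself. Since Assumption \ref{assump-marginals} stipulates that $\Lambda_i$ is continuous and strictly increasing on $(x_0,\infty)$, and since $\Lambda_i(x) = -\log \bar{F}_i(x)$ with $\bar{F}_i$ continuous on the same interval, $\Lambda_i$ admits a well-defined inverse $\Lambda_i^{-1}$ on its range, and moreover the relation $\bar{F}_i(x) = e^{-\Lambda_i(x)}$ holds at every point in that range.

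The computation I would then carry out is a single chain of equalities: for any $y_i > 0$,
\begin{align*}
P(Y_i > y_i) \ = \ P\bigl(\Lambda_i(X_i) > y_i\bigr) \ = \ P\bigl(X_i > \Lambda_i^{-1}(y_i)\bigr) \ = \ \bar{F}_i\bigl(\Lambda_i^{-1}(y_i)\bigr) \ = \ \exp\bigl(-\Lambda_i(\Lambda_i^{-1}(y_i))\bigr) \ = \ e^{-y_i},
\end{align*}
where the second equality uses strict monotonicity of $\Lambda_i$, the fourth uses the definition $\Lambda_i = -\log \bar{F}_i$, and the last uses the inverse relation.

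The only mild subtlety is boundary behaviour near $x_0$: the monotonicity/continuity hypothesis is stated on $(x_0,\infty)$, so for very small $y_i$ (corresponding to $x_i \le x_0$) one has to check that $\Lambda_i$ is still continuous and nondecreasing enough for the event $\{\Lambda_i(X_i) > y_i\}$ to coincide almost surely with $\{X_i > \Lambda_i^{-1}(y_i)\}$. This is handled by noting that $\Lambda_i$ is automatically nondecreasing on the whole real line (since $\bar{F}_i$ is nonincreasing), and that if $y_i$ is so small that $\Lambda_i^{-1}(y_i) \le x_0$, then both sides of the identity reduce to $P(X_i > \Lambda_i^{-1}(y_i))$, which still evaluates to $e^{-y_i}$ by the same hazard-rate relation. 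I do not anticipate any real obstacle; the lemma is essentially a bookkeeping statement that justifies why the rest of the paper can work with standard exponential marginals after the transformation $\mv{Y} = \mv{\Lambda}(\mv{X})$.
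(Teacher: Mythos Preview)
Your proposal is correct and is essentially the same probability-integral-transform argument the paper uses; the paper's version is slightly terser, writing $\Prob(\Lambda_i(X_i)\ge y)=\Prob(F_i(X_i)\ge 1-e^{-y})=e^{-y}$ via the uniformity of $F_i(X_i)$, whereas you invert $\Lambda_i$ directly. Your extra paragraph on boundary behaviour near $x_0$ is not in the paper but is harmless added care.
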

As with the wide-spread practice of modeling joint distributions in
terms of copulas \citep[eg.,][]{copulaEmbrechts}, this
standardization restricts the focus to the dependence structure
without getting distracted by the potential non-identical nature of
marginal distributions of $\XX.$

\begin{assumption}
  \textnormal{The probability density of %the standardized vector
    $\mv{Y} := \mv{\Lambda}(\XX)$ admits the form
      \begin{align}
        f_{\mv{Y}}(\mv{y}) = p(\mv{y})\exp(-\varphi(\mv{y})),
        \label{pdf-Y}
       \end{align}
       where $\varphi(\cdot),p(\cdot)$ satisfy the following: There
       exists a limiting function
       $I:\mathbb{R}^d_+ \rightarrow \mathbb{R}_+$ such that,
       \begin{align}
         n^{-1}\varphi(n\mv{y}_n) \rightarrow I(\yy) \quad \text{ and }  \quad
         n^{-\varepsilon}\log p(n\mv{y}_n) \rightarrow 0,
         \label{limiting-I}
       \end{align}
       for any sequence $\{\yy_n\}_{n \geq 1}$ of $\Real^d_+$
       satisfying $\yy_n \rightarrow \yy \neq \mv{0},$ and
       $\varepsilon > 0.$
      \label{assump:joint-Y}
}
\end{assumption}
{A sufficient condition  for $\XX$ to satisfy Assumption \ref{assump:joint-Y} is that its pdf  is of the form $f_{\XX}(\xx) = \exp(-\psi(\xx)),$ where $\psi$ is multivariate regularly varying (see Section \ref{sec:app-eg:tails} for a definition of multivariate regularly varying functions and a precise statement of the sufficient condition).} The nonparametric nature of the assumption 
suggests that a wide variety of dependence models satisfy Assumption
\ref{assump:joint-Y}. Indeed, most commonly used distribution families
such as multivariate normal, multivariate $t$, elliptical densities,
archimedean copula models, exponential family with any regularly
varying sufficient statistic, {extreme value distributions}, suitable
members of generalized linear models, log-concave densities, etc. can
be verified to satisfy Assumption \ref{assump:joint-Y}. Table
\ref{tab:multivariate} in Appendix \ref{sec:app-eg:tails} is intended
to offer a sample of distribution families which satisfy the marginal
and joint distribution conditions in Assumption \ref{assump-marginals}
- \ref{assump:joint-Y} and to serve as a quick reference for the
limiting function $I(\cdot)$ in Assumption
\ref{assump:joint-Y}.  %Appendix \ref{sec:app-eg:tails} also contains a sufficient condition, directly in terms of the probability density of $\XX,$ under which Assumptions \ref{assump-marginals} - \ref{assump:joint-Y} are guaranteed to hold.

\begin{example}[Gaussian copula]
  \textnormal{Suppose that $\mv{Y}$ has a joint distribution given by
    a Gaussian copula with correlation matrix $R.$ Given a copula with
    density $c:[0,1]^d \rightarrow [0,1],$ the respective probability
    density $f_{\mv{Y}}(\cdot)$ can be expressly computed as,
%    \begin{align*}
    $ f_{\mv{Y}}(\mv{y}) =
    c\big(\mv{1}-\exp(-\mv{y})\big)\exp(-\mv{1}^\intercal\mv{y})$.
%      \label{copdensity-to-ydensity}
 %   \end{align*}
       %        Therefore the joint density of $\YY$ is written as,
    Therefore 
    \begin{align*}
      %\label{eqn:Meta-Gaussian-Copula}
      f_{\YY}(\yy) =  [\det{R}]^{-1/2} \exp\left(-\mv{1}^\intercal \yy -
      2^{-1}\mv{g}(\yy)^\intercal(R^{-1}-\mathbf{I})\mv{g}(\yy)\right), 
    \end{align*}
    where
    $\mv{g}(\mv{y}) := (\bar{\Phi}^{-1}(\e^{-y_1}), \ldots,
    \bar{\Phi}^{-1}(\e^{-y_d}))$ and
    $\bar{\Phi}(\cdot) : = 1 - \Phi(\cdot)$ is the complementary
    c.d.f. of the standard normal variable.  Thus, in this example, we
    have from the notation in \eqref{pdf-Y} that
    $p(\yy) = [\det{R}]^{-1/2}$ and
    $\varphi(\yy) = -\mv{1}^\intercal \yy -
    2^{-1}\mv{g}(\yy)^\intercal(R^{-1}-\mathbf{I})\mv{g}(\yy).$
    % \begin{align*}
    %   p(\yy) = [\det{R}]^{-1/2} \quad \text{ and } \quad \varphi(\yy) = -\mv{1}^\intercal \yy -
    %   2^{-1}\mv{g}(\yy)^\intercal(R^{-1}-\mathbf{I})\mv{g}(\yy).
    % \end{align*}
    Since $\bar{\Phi}^{-1}(p) = -2\log p \ (1 + o(1)),$
    %     $ -(-2\log p - \log (-2\log p) - % \log(2\pi))^{1/2} + o(1),$
    as $p \rightarrow 0,$ we have
    $\mv{g}(n\mv{y})/(n) \rightarrow (y_1^{1/2},\ldots,y_d^{1/2}),$
    and subsequently,
    %\begin{align*}
    $ \varphi(n\yy)/\varphi(n) \rightarrow (\yy^{1/2})^\intercal
    R^{-1} \yy^{1/2},$
    %\end{align*}
    compactly, as $n \rightarrow \infty.$ We therefore have the
    limiting $I(\cdot)$ in Assumption \ref{assump:joint-Y} as
    $I(\yy) := (\yy^{1/2})^\intercal R^{-1} \yy^{1/2}.$ \hfill$\Box$
    % Therefore, we have the limiting function $I(\cdot)$ in Assumption
    % \ref{assump:joint-Y} as,
    % $I(\yy) := (\yy^{1/2})^\intercal R^{-1} \yy^{1/2},$ in the example
    % of Gaussian copula.
  }
\label{eg:gauss-copula}
\end{example}
{%A sufficient condition  for $\XX$ to satisfy Assumption \ref{assump:joint-Y} is that its pdf  is of the form $f_{\XX}(\xx) = \exp(-\psi(\xx)),$ where $\psi$ is multivariate regularly varying (see Section \ref{sec:app-eg:tails} for a definition of multivariate regularly varying functions and a precise statement of the sufficient condition). 
Properties of the limiting function $I(\cdot)$ and a continued account of the distributions satisfying Assumption \ref{assump:joint-Y} are presented after introducing the tail large deviations principle in Section \ref{sec:tail-LDP}.}

% In addition to serving as a reference for example distribution
% families,

\subsection{Tail large deviations principle with $I(\cdot)$ as the
  rate function}
\label{sec:tail-LDP}
%In this section, we unravel a large deviations principle (LDP) associated
%with a sequence derived from a random %vector $\YY$ whose
%density is of the form stated in Assumption \ref{assump:joint-Y}.
% Given a sequence of random elements $\{\mv{\xi}_n: n \geq 1\},$ a key
% step in invoking the utility of the large deviations theory is to
% establish a \textit{large deviations principle} (LDP) which allows a
% characterization of the exponential rate at which the rare
% probabilities decay. A precise statement of large-deviations principle
% is as follows.
% \begin{definition}
   A sequence of random vectors $\mv{\xi}_n$ is said to satisfy a
   Large Deviations Principle (LDP) with \textit{rate function} $J(\cdot)$ { and speed $r_n \rightarrow \infty$}
   if,
\begin{align*}
 \limsup_{n \rightarrow \infty}  \frac{1}{r_n}\log P\left( \mv{\xi}_n \in F\right)
  \leq -\inf_{\xx \in F} J(\xx) \quad \text{ and } \quad
   \liminf_{n \rightarrow \infty} \frac{1}{r_n}\log P\left( \mv{\xi}_n \in G\right)
  \geq -\inf_{\xx \in G} J(\xx),
\end{align*}
for every closed subset $F$ and open subset $G.$ 
%\end{definition}
% Given a rate function $J(\cdot),$ we often write
% $J(A) := \inf_{\xx \in A} J(\xx)$ for brevity.
Theorem \ref{thm:LDP} below establishes the LDP which is useful in the
context considered.
\begin{theorem}[Tail LDP]
  Suppose that $\YY$ is a random vector whose probability density
  admits the form (\ref{pdf-Y}), where the functions
  $\varphi(\cdot),p(\cdot)$ satisfy the convergences in
  (\ref{limiting-I}) for any sequence $\{\yy_n\}_{n \geq 1}$ of
  $\Real^d_+$ satisfying $\yy_n \rightarrow \yy \neq \mv{0},$ and
  $\varepsilon > 0.$ Then the sequence $\{n^{-1}\YY : n \geq 1\}$
  satisfies the large deviations principle with rate function
  $I(\cdot)$ {{and speed $n$}.} % In particular,
  % \begin{align}
  %   \lim_{n \rightarrow \infty}\frac{1}{n}\log P\left( \frac{\YY}{n} \in A \right)
  %   = -\inf_{x \in A} I(x),
  %   \label{LDP-equality}
  % \end{align}
  % for any Borel set $A \subseteq \R_+^d$ which is $I$ - continuous, that is if $\inf_{\xx\in A^o}I(\xx)  = \inf_{\xx\in \mathrm{cl}(A)}I(\xx)$.
  % \textcolor{red}{Is (13) necessary?}
  %
  %Letting
  % $\Lambda_{\min}(n) := \min_{i=1,\ldots,d}\Lambda_i(n),$
  % Letting $\mv{h}(n) = \mv{(\Lambda \circ E)}^{-1}(n\mv{1}),$ we also
  % have that for any Borel set $A \subseteq \R_+^d,$
  % \begin{align}
  %   \lim_{n \rightarrow \infty}\frac{1}{n}\log P\left(  \frac{\mv{E}^{-1}(\mv{X})}{\mv{h}(n)}
  %     \in A
  %   \right) = -\inf_{x \in A} I \circ \pi (x),
  %   \label{X-LDP}
  % \end{align}
  % if the marginal distributions of $\XX = (X_1,\ldots,X_d)$ satisfy
  % Assumption \ref{assump-marginals}.  
  \label{thm:LDP}
\end{theorem}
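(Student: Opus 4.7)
The plan is to change variables and apply Laplace-type estimates using uniform versions of the assumed pointwise convergences. The starting identity is that for any measurable $A \subset \R^d_+$,
$$P(n^{-1}\YY \in A) \ = \ \int_{nA} p(\yy)\exp(-\varphi(\yy))\,d\yy \ = \ n^d \int_A p(n\zz)\exp(-\varphi(n\zz))\,d\zz,$$
obtained by substituting $\yy = n\zz$. A standard diagonal/compactness argument upgrades the pointwise-along-sequences hypotheses in \eqref{limiting-I} to uniform convergence on every compact set $K \subset \R^d_+ \setminus \{\mv{0}\}$, and simultaneously yields continuity of $I$ on $\R^d_+ \setminus \{\mv{0}\}$; setting $I(\mv{0}) := 0$ gives lower semicontinuity on $\R^d_+$. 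The polynomial prefactor $n^d$ is negligible on the logarithmic scale since $n^{-1}\log n^d \to 0$.

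For the lower bound, fix an open $G$ and a point $\yy_0 \in G$ with $\yy_0 \neq \mv{0}$. Choose $\delta > 0$ small enough that $B(\yy_0,\delta) \subset G$ is bounded away from the origin. Uniform convergence on $\bar{B}(\yy_0,\delta)$ together with continuity of $I$ yields, for $n$ sufficiently large, $\varphi(n\zz) \leq n(I(\yy_0)+\omega(\delta)+\varepsilon)$ and $\log p(n\zz) \geq -n\varepsilon$ throughout the ball, where $\omega(\delta) \to 0$ as $\delta \to 0$. Substituting into the integral representation gives
$$P(n^{-1}\YY \in G) \ \geq \ n^d\, |B(\yy_0,\delta)|\,\exp\bigl(-n(I(\yy_0)+\omega(\delta)+2\varepsilon)\bigr),$$
so $\liminf_n n^{-1}\log P(n^{-1}\YY \in G) \geq -I(\yy_0)$ after sending $n \to \infty$ and then $\delta,\varepsilon \to 0$. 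Taking supremum over $\yy_0 \in G$ produces $-\inf_G I$; the case $\yy_0 = \mv{0} \in G$ is handled trivially by noting $n^{-1}\YY \to \mv{0}$ almost surely, so $P(n^{-1}\YY \in G) \to 1$.

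For the upper bound, let $F$ be closed. Exponential tightness is furnished by the exponential marginals from Lemma~\ref{lem:Y-exp-marginals}: $P(n^{-1}\YY \notin [0,R]^d) \leq d\, e^{-nR}$. Hence it suffices to control $F_{R,\eta} := F \cap [0,R]^d \setminus B(\mv{0},\eta)$ for arbitrarily large $R$ and small $\eta$. Cover the compact set $F_{R,\eta}$ by finitely many balls $B(\yy_i,\delta_i)$ chosen so that $I(\zz) \geq I(\yy_i)-\varepsilon$ on each (using continuity of $I$); uniform convergence then provides $\varphi(n\zz) \geq n(I(\yy_i) - 2\varepsilon)$ and $\log p(n\zz) \leq n\varepsilon$ on each ball for large $n$. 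Bounding the integral over $F_{R,\eta}$ by the sum over these balls gives $P(n^{-1}\YY \in F_{R,\eta}) \leq n^d \sum_i |B_i|\exp(-n(\inf_{F_{R,\eta}} I - 3\varepsilon))$. Combining this with the contributions $P(n^{-1}\YY \in B(\mv{0},\eta))$ (bounded by $1$, while $\inf_{F \cap B(\mv{0},\eta)} I \to I(\mv{0}) = 0$ or $F$ is disjoint from $\mv{0}$ for small $\eta$) and $P(n^{-1}\YY \notin [0,R]^d) \leq d e^{-nR}$, then sending $\varepsilon \to 0$, $\eta \to 0$ and $R \to \infty$ large enough to dominate $\inf_F I$, completes the upper bound $\limsup_n n^{-1}\log P(n^{-1}\YY \in F) \leq -\inf_F I$.

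The main obstacle will be ensuring that the pointwise-along-sequences hypothesis is genuinely strong enough to furnish uniform convergence away from the origin, and then carefully handling the two complementary regimes where that uniform convergence is unavailable, namely the neighborhood of $\mv{0}$ (via almost-sure convergence $n^{-1}\YY \to \mv{0}$) and the unbounded tail (via exponential tightness derived from the standard exponential marginals). The remaining work is bookkeeping of the polynomial $n^d$ prefactor and the subexponential slack from $p(\cdot)$, each inconsequential on the logarithmic scale.
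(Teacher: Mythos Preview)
Your proposal is correct and follows essentially the same Laplace-method route as the paper: the identical change of variables $\yy = n\zz$, the upgrade from continuous convergence to uniform convergence on compacts away from $\mv{0}$ (the paper invokes \cite[Theorem 7.14]{rockafellar2009variational} for this), and the same two-sided exponential bounds on $f_{\YY}(t\zz)$ over small balls. The only organizational difference is that the paper shortcuts the covering and tightness bookkeeping by citing the ball-characterization of the LDP in \cite[Theorem 4.1.11]{Dembo}, verifying only
\[
-I(\xx)=\inf_{\delta>0}\limsup_{t\to\infty}\frac{1}{t}\log P\!\left(\frac{\YY}{t}\in B_\delta(\xx)\right)=\inf_{\delta>0}\liminf_{t\to\infty}\frac{1}{t}\log P\!\left(\frac{\YY}{t}\in B_\delta(\xx)\right),
\]
whereas you carry out the upper bound on arbitrary closed sets directly via a finite cover plus explicit exponential tightness from the exponential marginals. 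Your route is slightly more self-contained; the paper's is shorter but, strictly speaking, Theorem 4.1.11 in \cite{Dembo} yields only a weak LDP, so your explicit exponential-tightness step is exactly the ingredient that closes the argument to a full LDP.
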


% \subsection{Some properties of the rate function $I(\cdot)$}
% \label{sec:eg-tail-mod}
% \textcolor{red}{rewrite} The objective of this subsection is to bring
% out the richness in tail dependence structures specified by Assumption
% \ref{assump:joint-Y} by understanding some properties of the limiting
% function $I(\cdot).$

The following useful properties of the limiting function
$I:\Real^d_{+} \rightarrow \Real$ in (\ref{limiting-I}) are deduced
from the conditions in Assumption \ref{assump:joint-Y} and the
conclusion in Theorem \ref{thm:LDP}.

\begin{lemma}
  Suppose that Assumption \ref{assump:joint-Y} holds. Then
%  \vspace{-6pt}
  \begin{itemize}
  \item[a)] $I(\cdot)$ is continuous, $I(\mv{0}) = 0$ and $I(\xx) > 0$
    for all $\xx \in \mathbb{R}_d^+ \setminus \{\mv{0}\};$
  \item[b)] $I(\cdot)$ is homogeneous: that is,
    $I(\lambda \xx) = \lambda I(\xx),$ for any
    $\lambda > 0, \xx \in \mathbb{R}_d^+;$
  \item[c)] $I(\cdot)$ has compact level sets; specifically,
    $\inf_{\xx \in \R^d_+: x_i > c} I(\xx)= c,$ for all $c \geq 0$ and
    $i=1,\ldots,d.$
  \end{itemize}
  \label{lem:properties-I}
\end{lemma}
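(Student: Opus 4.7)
The plan is to establish these four properties in an order that lets earlier parts feed into later ones, with the tail LDP from Theorem~\ref{thm:LDP} serving as the hinge between (d) and the other parts.

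I would begin with the homogeneity claim (c) because it is the easiest and provides leverage for the rest. For any fixed $\mathbf{x} \in \mathbb{R}^d_+ \setminus \{\mathbf{0}\}$ and $\lambda > 0$, the substitution $m = n\lambda$ inside $n^{-1}\varphi(n\lambda\mathbf{x})$ gives $I(\lambda\mathbf{x}) = \lambda \lim_{m \to \infty} m^{-1} \varphi(m\mathbf{x}) = \lambda I(\mathbf{x})$. The identity $I(\mathbf{0}) = \lambda I(\mathbf{0})$ for every $\lambda > 0$ then forces $I(\mathbf{0}) = 0$ (recall $I$ is $\mathbb{R}_+$-valued, hence finite), settling the first half of (b).

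Next I would derive the formula in (d) directly from Theorem~\ref{thm:LDP}. Since the marginals of $\mathbf{Y}$ are standard exponential by Lemma~\ref{lem:Y-exp-marginals}, $P(Y_i > nc) = e^{-nc}$ exactly. Applying the LDP upper bound with the closed set $F = \{\mathbf{x} : x_i \geq c\}$ yields $\inf_{x_i \geq c} I(\mathbf{x}) \leq c$, while the LDP lower bound with the open set $G = \{\mathbf{x} : x_i > c\}$ yields $\inf_{x_i > c} I(\mathbf{x}) \geq c$; the inclusion $\{x_i \geq c\} \supset \{x_i > c\}$ sandwiches both infima to exactly $c$. This immediately delivers the strict positivity claim in (b): if $\mathbf{x} \neq \mathbf{0}$ with $x_j > 0$, then $\mathbf{x} \in \{y_j > x_j/2\}$, so $I(\mathbf{x}) \geq \inf_{y_j > x_j/2} I(\mathbf{y}) = x_j/2 > 0$.

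For continuity (a) at a nonzero point $\mathbf{y}$, I would use a diagonal extraction to convert the convergence hypothesis in Assumption~\ref{assump:joint-Y} (phrased for sequences $\mathbf{y}_n \to \mathbf{y}$ coupled with $n$) into the statement that $\mathbf{y} \mapsto I(\mathbf{y})$ is continuous in its own argument. Given any $\mathbf{y}_k \to \mathbf{y} \neq \mathbf{0}$, for each $k$ the assumption applied to the constant sequence $\equiv \mathbf{y}_k$ lets me pick $n_k \uparrow \infty$ with $|n_k^{-1}\varphi(n_k \mathbf{y}_k) - I(\mathbf{y}_k)| < 1/k$. Interpolating these as a sequence $\tilde{\mathbf{y}}_n \to \mathbf{y}$ (set $\tilde{\mathbf{y}}_n = \mathbf{y}_{k}$ for $n_k \leq n < n_{k+1}$) and feeding it back into Assumption~\ref{assump:joint-Y} gives $n_k^{-1}\varphi(n_k \mathbf{y}_k) \to I(\mathbf{y})$ along $\{n_k\}$, which combined with the $1/k$ approximation yields $I(\mathbf{y}_k) \to I(\mathbf{y})$. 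Continuity at $\mathbf{0}$ then follows from homogeneity: $I(\mathbf{y}_k) = \|\mathbf{y}_k\| I(\mathbf{y}_k / \|\mathbf{y}_k\|)$, and the unit vectors lie in the compact set $\{\mathbf{u} \in \mathbb{R}^d_+ : \|\mathbf{u}\| = 1\}$ on which $I$ is continuous and hence bounded, so $I(\mathbf{y}_k) \to 0 = I(\mathbf{0})$.

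Finally, compact level sets in (d) fall out of the formula and continuity: if $I(\mathbf{x}) \leq c$, then no coordinate can exceed $c$ (else $I(\mathbf{x}) \geq \inf_{y_i > c} I(\mathbf{y}) = c$ would have to be achieved strictly, contradicting the strict inequality $x_i > c$ at a point with $I(\mathbf{x}) \leq c$ once combined with homogeneity to rule out equality on the boundary at arbitrary scales), so $\{I \leq c\} \subset [0,c]^d$ is bounded, and closedness follows from the just-proved continuity. The main obstacle I anticipate is making the diagonal extraction for continuity at nonzero points fully rigorous, particularly ensuring $\tilde{\mathbf{y}}_n \to \mathbf{y}$ and that the assumption applies along the chosen subsequence; the other parts reduce to short chains of LDP bounds and homogeneity.
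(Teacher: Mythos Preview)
Your approach is correct and mirrors the paper's, which cites Rockafellar--Wets for (a) (continuous convergence implies a continuous limit) and \cite[Proposition~4.2]{de2016approximation} for (b)--(d); you are simply unpacking those references via the tail LDP and the exponential marginals. Two minor slips to tighten: in (d) the bounds $\inf_{x_i \geq c} I \leq c$ and $\inf_{x_i > c} I \geq c$ do not sandwich directly---use $\inf_{x_i > c} I \leq \inf_{x_i \geq c'} I \leq c'$ for each $c' > c$ and let $c' \downarrow c$; and the identity $I(\mv{0}) = \lambda I(\mv{0})$ is not actually delivered by your argument for (c), which treats only $\xx \neq \mv{0}$---obtain $I(\mv{0}) = 0$ instead from your continuity-at-zero step or directly from the LDP applied to a small ball about the origin.
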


Conversely, any function $I:\R^d_{+} \rightarrow \R_+$ which satisfies
the above conditions can be used to readily specify a joint
distribution for $\YY$ which has standard exponential marginals and
satisfies the tail LDP. This is verified, for instance, by considering $\YY$ for which
$P(\YY > \yy) = \exp(-\inf_{\zz > \yy} I(\zz)).$ 
% \begin{figure}
% \FIGURE
%  {
%  \includegraphics[width=\textwidth]{V1/}
%  }
%  {Illustration of the level sets of
%       $I(\cdot)$ capturing different strengths of the positive
%       (indicated (+)) or negative (indicated (-)) tail correlations
%       between the components of $\YY = (Y_1,Y_2).$ Range of axes
%       $=[0,5]$. \label{fig:diff-I}}
% % {Left panel: Normal marginals,$\,\rho = 0.5$. Middle panel: Weibull marginals,$\,\rho = 0.3$, Right panel: Exponential marginals,$\,\rho = 0.5$  \label{fig:self-similarity}}
% {From left to right: Panel 1: Gaussian copula($+$), Panel 2: Gaussian copula($-$), Panel 3: Clayton copula($+$), Panel 4: Clayton copula($-$).}
% \end{figure}

\begin{figure}[ht]
  \caption{{Illustration of the level sets of
      $I(\cdot)$ capturing different strengths of the positive
      (indicated (+)) or negative (indicated (-)) tail correlations
      between the components of $\YY = (Y_1,Y_2).$ Range of axes
      =[0,5]}}\label{fig:diff-I}
\begin{center}
  \begin{subfigure}{0.24\textwidth}
  \centering
    \includegraphics[width=\textwidth]{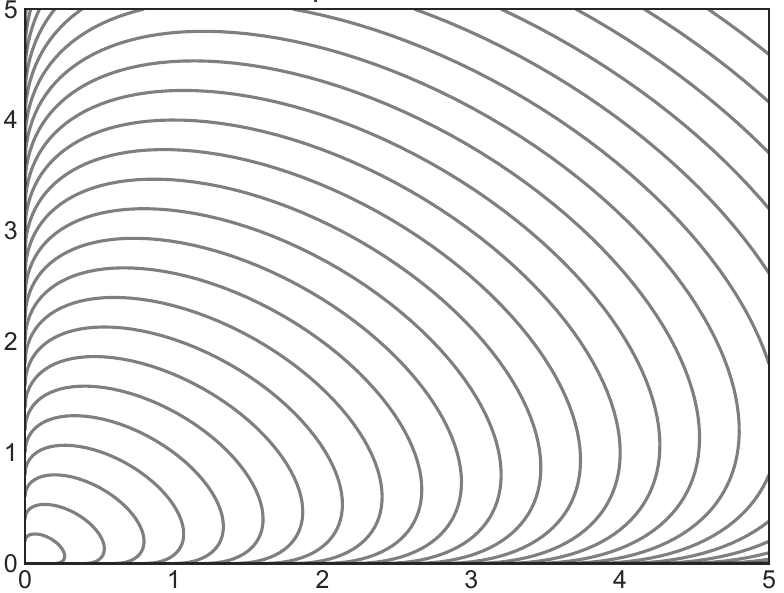}
    \caption{{Gaussian copula(+)}}
\end{subfigure}
\begin{subfigure}{0.24\textwidth}
 \centering
  \includegraphics[width=\textwidth]{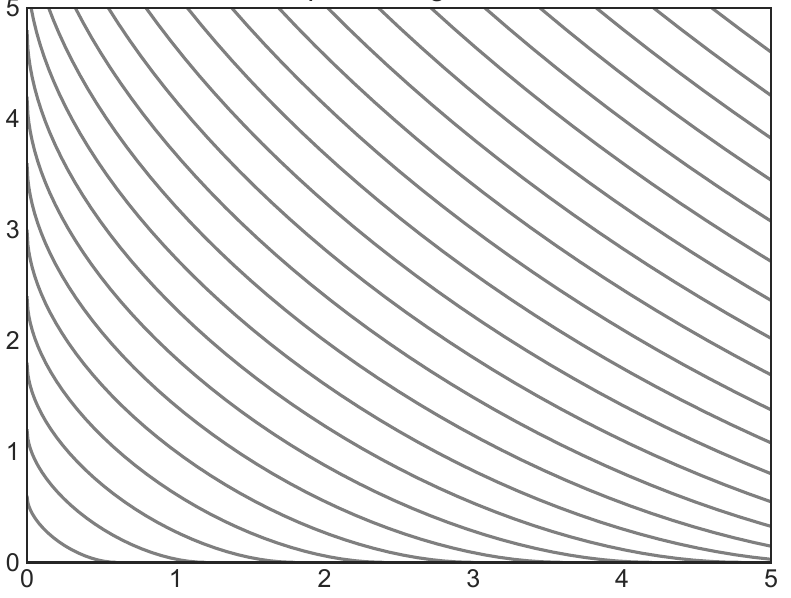}
  \caption{{Gaussian copula (-)}}
  \end{subfigure}
    \begin{subfigure}{0.24\textwidth}
     \centering
      \includegraphics[width=\textwidth]{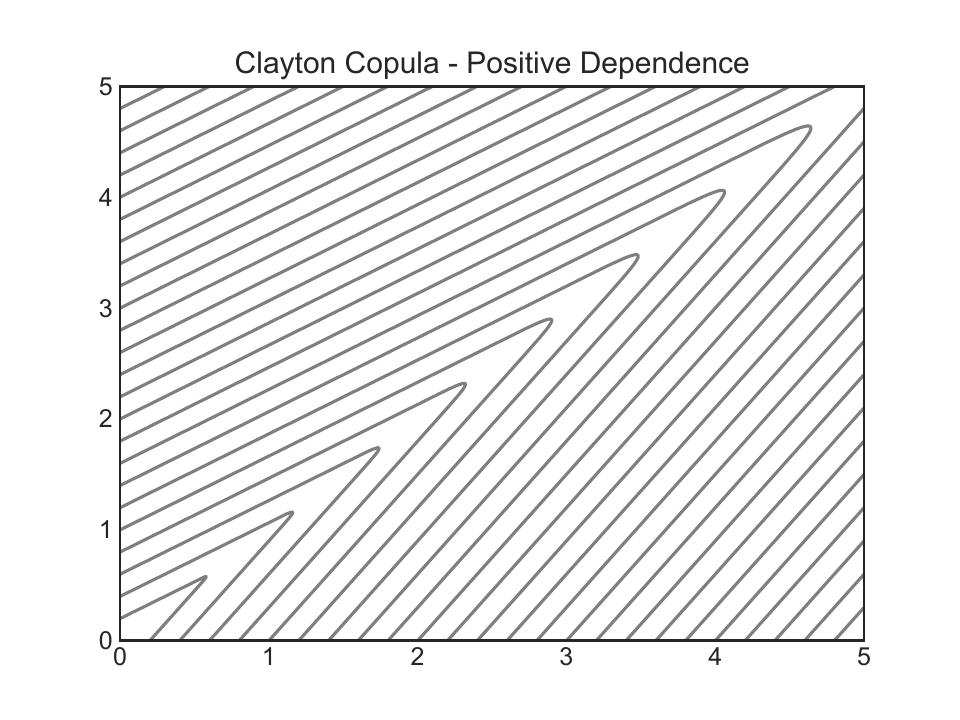}
      \caption{{Clayton copula (+)}}
\end{subfigure}
\begin{subfigure}{0.24\textwidth}
 \centering
  \includegraphics[width=\textwidth]{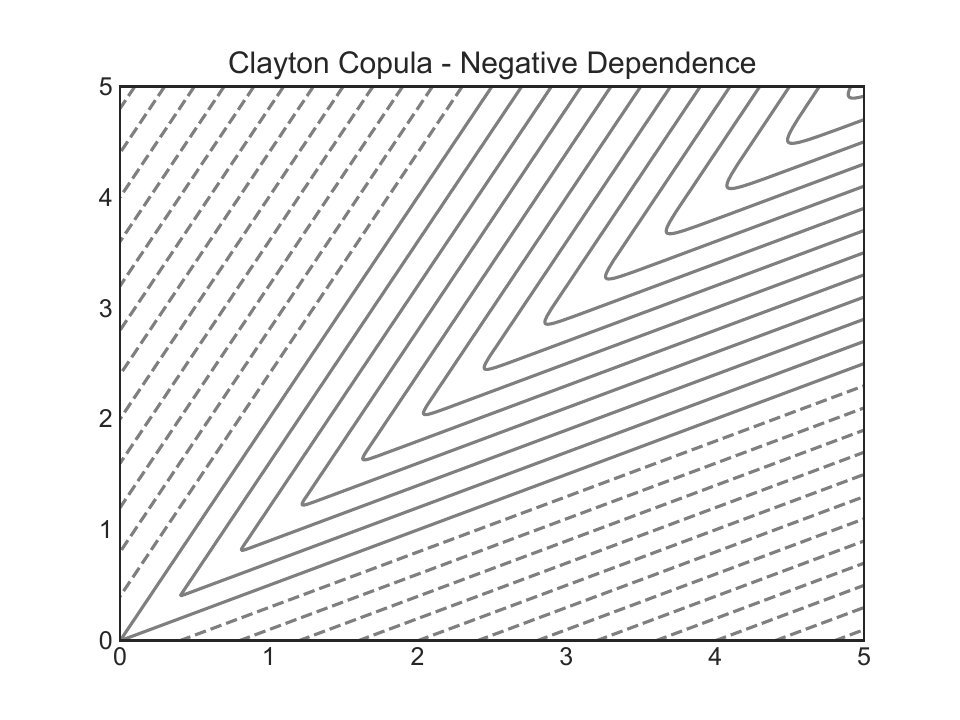}
  \caption{{Clayton copula (-)}}
  \end{subfigure}
\end{center}
\end{figure}

 While the rate
function $I(\cdot)$ is unique for a given distribution for which the
tail LDP holds, it is instructive to note that there may be multiple
distributions which give rise to the same limit $I(\cdot).$ Indeed,
the relation ``has the same rate function $I(\cdot)$ in the tail LDP''
is an equivalence relation, and every function $I(\cdot)$ satisfying
properties (a) - (c) in Lemma \ref{lem:properties-I} specifies a
equivalence class of distributions for the random vector
$\YY.$ %satisfying the tail LDP.
Thus, the nonparametric nature of the limiting function $I(\cdot)$
offers a great amount of expressive power in capturing the joint
dependence features observed in the the tail regions. The hazard
functions $\Lambda_1(\cdot),\ldots,\Lambda_d(\cdot)$ in Assumption
\ref{assump-marginals}, on the other hand, offer flexibility in terms
of specifying marginal distributions with various tail strengths.

% See Figure \ref{fig:diff-I}
% for an illustration of how the level sets of $I(\cdot)$ capture
% different strengths of tail correlations among the components.

%\vspace{-10pt}

% Tail LDP of the nature verified in Theorem \ref{thm:LDP} has been
% proposed as a starting point for statistical estimation of
% multivariate tail probabilities in \cite{de2016approximation}. The
% conclusion that the tail LDP holds for the rich modeling class
% specified by Assumptions \ref{assump-marginals} - \ref{assump:joint-Y}
% and the tail asymptotics derived in the following section are, to the
% best of our knowledge, previously unknown and reported first in this
% paper.

\section{Large deviations {characterizations and zero-variance IS distributions}}
\label{sec:LD-Tails}
% The theory of large deviations provides a systematic framework for
% utilizing LDPs to examine rare events of interest and characterize the
% most likely way(s) in which they happen.
% Our objective in this section
% is to develop a large deviations approximation for the distribution
% tail of $L(\XX).$
% the probability of
% interest, namely, $P(L(\XX) > u).$
%by building upon the
% the tail modeling framework described in Section
% \ref{sec:Mod-Framework}.
%Specifically,
%\subsection{Asymptotics for $P(L(\XX) \geq u)$}
{We begin by characterizing} the exponential rate at which
$P(L(\XX) \geq u)$ decays as in,
\begin{align*}
  P(L(\XX) \geq u) = \exp\{- t(u)[I^\ast + o(1)] \},
  \quad \text{ as } u \rightarrow \infty,
\end{align*}
where the function $t(u),$ which grows to infinity as
$u \rightarrow \infty,$ is identified in terms of the marginal hazard
functions $\Lambda_1,\ldots,\Lambda_d$ described in Assumption
\ref{assump-marginals}, and the constant $I^\ast$ is identified in
terms of the marginal tail parameters
$\mv{\alpha} = (\alpha_1,\ldots,\alpha_d)$ and the limiting functions
$L^\ast(\cdot)$ and $I(\cdot)$ in Assumptions \ref{assume:V} and
\ref{assump:joint-Y}.  In order to state the result,
  let us define
% \begin{align*}
%   \Lambda_{\min}(u) &:= \min_{i=1,\ldots,d} \Lambda_i(u), \quad \text{ for } u > 0,
% \end{align*}
% and
  $\Lambda_{\min}: \Real_+ \rightarrow \Real_+$ and
  $\hat{\mv{q}}: \mathbb{R}_+ \rightarrow \mathbb{R}^d_+$ as,
\begin{align}
  \Lambda_{\min}(u) := \min_{i=1,\ldots,d}\Lambda_i(u)
  \quad \text{ and } \quad                      
  \hat{\mv{q}}(t) :=
  \frac{\mv{q}(t\mv{1})}{\ \  \Vert \mv{q}(t\mv{1}) \Vert_\infty},
  \label{defn:qhat}                                                
\end{align}
where $\mv{q}:\Real_+^d \rightarrow \Real^d$ denotes the
component-wise inverse $\mv{q}(\yy) = (q_1(y_1),\ldots,q_d(y_d)),$
with $q_i(y_i) = {\Lambda}_i^{\leftarrow}(y_i)$ specifying the
left-continuous inverse of the hazard function $\Lambda_i(\cdot).$
%   \begin{align}
%     \fLD(\yy,\mv{\nu}) :=
%     \begin{cases}
%        {L}^\ast({{\mv{\nu}}}\yy^{\mv{1}/\mv{\alpha}}) 
%        \quad &\text{ if } \mathcal{L} = \{1,\ldots,d\},\\
%        \Vert \mv{\nu} \yy^{\mv{1}/\mv{\alpha}}\Vert_\infty
%       %\max\{ \bar{\nu}_i y_i^{1/\gamma_i}:  i \notin \mathcal{L} \}
%       &\text{ otherwise}; 
%     \end{cases}
%   \label{defn:fLD}
%   \end{align}
% \text{ and }   
%   \begin{align}
%     \hat{\mv{q}}(t) :=
% \begin{cases}
%    \frac{\mv{\Lambda}^{\leftarrow}(t\mv{1})}
%    {\Lambda_{\min}^{\leftarrow}(t\mv{1})}       \quad &\text{ if } \mathcal{L} = \{1,\ldots,d\},\\
%    & \vspace{-15pt}\\
%    \frac{\log \mv{\Lambda}^{\leftarrow}(t\mv{1})}
%   {\log \Lambda_{\min}^{\leftarrow}(t\mv{1})} &\text{ otherwise}.
% \end{cases}                                                
% \label{defn:qhat}                                                
% \end{align}
  
\begin{theorem}[Tail asymptotic]
  \label{thm:Tail-asymp}
  Suppose that the marginal distributions of the components
  $X_1,\ldots,X_d$ of the random vector $\XX = (X_1,\ldots,X_d)$
  satisfy Assumption \ref{assump-marginals} and the standardized
  vector $\YY = \mv{\Lambda}(\XX)$ is such that the tail LDP in
  Theorem~\ref{thm:LDP} holds. Further suppose that the limit
  $\mv{q}^\ast := \lim_{t \rightarrow \infty} \hat{\mv{q}}(t)$ exists.
  Then, for any $L(\cdot)$ satisfying Assumption \ref{assume:V},
  \begin{align}
    \label{tail-asymp}
    {\log P\left( L(\XX) \geq u \right)}
    =  -{\Lambda_{\min}\big( u^{1/\rho}\big)} \big[I^\ast + o(1)\big], 
  \end{align}
  as $u \rightarrow \infty;$ here the non-negative constant $I^\ast$
  is given by,
  \begin{align}
    I^\ast := \inf \big\{ I(\yy): L^\ast\big(\mv{q}^\ast
    \mv{y}^{\mv{1}/\mv{\alpha}}\big)
    \geq 1, \ \yy \geq \mv{0} \big\}.
    \label{I-star}
  \end{align}
  % In the presence of heavy-tailed components \textnormal{(}that is, if
  % $\mathcal{L} \neq \{1,\ldots,d\}$\textnormal{)}, the exponent $I^\ast$ evaluates
  % to 1.
\end{theorem}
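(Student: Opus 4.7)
The plan is to reduce the tail event $\{L(\XX) \geq u\}$, via the standardizing transformation $\YY = \mv{\Lambda}(\XX)$, to an event of the form $\{\YY/n \in A_n\}$ at a scale $n = n(u)$ chosen so that $\{A_n\}$ converges to a nondegenerate limit set. Writing $b_t := \Vert \mv{q}(t)\Vert_\infty$, the regular variation of $\mv{q} = \mv{\Lambda}^{\leftarrow}$ with exponent vector $\mv{1}/\mv{\alpha}$ (from Assumption \ref{assump-marginals}) dictates that the correct scale is the unique $n$ for which $b_n \sim u^{1/\rho}$. Since $\max_i q_i(n) = u^{1/\rho}$ is equivalent to $\min_i \Lambda_i(u^{1/\rho}) = n$, this gives $n = \Lambda_{\min}(u^{1/\rho})$, matching the prefactor in (\ref{tail-asymp}).

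First I would establish the pointwise limit: for every $\yy \in \Real^d_{++}$ and every sequence $\yy_n \to \yy$,
\begin{align*}
\frac{\mv{q}(n \yy_n)}{b_n} \longrightarrow \mv{q}^\ast\, \yy^{\mv{1}/\mv{\alpha}},
\end{align*}
which follows componentwise from $q_i \in \RV(1/\alpha_i)$ together with $\hat{\mv{q}}(t) \to \mv{q}^\ast$. Applying Assumption \ref{assume:V}b along this converging sequence yields
\begin{align*}
\frac{L\!\left(\mv{q}(n \yy_n)\right)}{b_n^{\rho}} \longrightarrow L^\ast\!\left(\mv{q}^\ast\, \yy^{\mv{1}/\mv{\alpha}}\right).
\end{align*}
Consequently, for each $\delta > 0$ and all sufficiently large $u$, the event $\{L(\XX) \geq u\} = \{\YY/n \in A_n\}$ with $A_n = \{\yy \geq \mv{0} : L(\mv{q}(n\yy)) \geq u\}$ is sandwiched between the open inner approximation $\{\yy : L^\ast(\mv{q}^\ast \yy^{\mv{1}/\mv{\alpha}}) > 1 + \delta\}$ and the closed outer approximation $\{\yy : L^\ast(\mv{q}^\ast \yy^{\mv{1}/\mv{\alpha}}) \geq 1 - \delta\}$.

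Applying the LDP of Theorem \ref{thm:LDP} to both approximations and letting $\delta \to 0$ would give the matching upper and lower bounds $\limsup n^{-1}\log P(L(\XX)\geq u) \leq -I^\ast$ and $\liminf n^{-1}\log P(L(\XX)\geq u) \geq -I^\ast$. Matching these infima uses homogeneity: by Lemma \ref{lem:properties-I}c, $I(\lambda \yy) = \lambda I(\yy)$, and $L^\ast$ is $\rho$-homogeneous, so any minimizer $\yy^\ast$ of $I$ over the closed constraint set can be perturbed to $(1+\varepsilon)\yy^\ast$ in the open set, with $I((1+\varepsilon)\yy^\ast) = (1+\varepsilon) I^\ast \to I^\ast$. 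Together with $n = \Lambda_{\min}(u^{1/\rho})$, this delivers (\ref{tail-asymp}).

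The main obstacle is that the pointwise convergences driving the argument are not uniform on $\Real^d_+$: some components of $\mv{q}^\ast$ may vanish, and both the regular-variation estimates for $\mv{q}$ and Assumption \ref{assume:V}b degenerate near the coordinate axes. I would handle this by restricting attention to the region $\{\yy : \min_i y_i \geq \eta\}$ and using Lemma \ref{lem:properties-I}d to argue that the infimum of $I$ over the complementary region grows without bound as $\eta \to 0$, so its contribution is negligible on the exponential scale. Within the stay-away region, Potter-type bounds for regular variation provide the uniform approximations of $\mv{q}(n\yy_n)/b_n$ and $L(\mv{q}(n\yy_n))/b_n^\rho$ needed to realize the sandwiching on compact sets of the scaled space, thereby closing the argument.
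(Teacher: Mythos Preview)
Your overall strategy---identifying the scale $n = \Lambda_{\min}(u^{1/\rho})$, establishing continuous convergence of $L_u(\yy) := u^{-1}L(\mv{q}(n\yy))$ to $\fLD(\yy) := L^\ast(\mv{q}^\ast \yy^{\mv{1}/\mv{\alpha}})$, sandwiching the event between $\delta$-perturbed level sets of $\fLD$, and invoking the LDP---is exactly what the paper does. The matching of open and closed infima via homogeneity is also fine (the paper packages this as a set-convergence continuity lemma, but your direct perturbation argument works just as well).

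The gap is in your treatment of non-compactness, and there are two separate issues that you have partly conflated.

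First, you do not address unboundedness of the level sets. The sandwich $\{\fLD > 1+\delta\} \subseteq A_n \subseteq \{\fLD \geq 1-\delta\}$ requires $|L_u - \fLD|$ to be uniformly small on $A_n$, which fails because $A_n$ is unbounded and the continuous convergence is only locally uniform. The paper fixes this by truncating to $B_M = \{\Vert \yy \Vert_\infty \leq M\}$ and bounding $P(\YYu \in B_M^c)$ separately via the LDP; it is \emph{here} that Lemma \ref{lem:properties-I}d is used, in the form $\inf_{B_M^c} I = M \to \infty$.

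Second, your boundary fix is the wrong way around. Lemma \ref{lem:properties-I}d concerns half-spaces $\{y_i > c\}$; it says nothing about $\{y_i < c\}$, and in fact $\inf I$ over $\{\min_i y_i < \eta\}$ equals $0$ for every $\eta > 0$ (the origin is in its closure and $I(\mv{0}) = 0$). Worse, excising a neighborhood of the coordinate hyperplanes can discard the minimizer itself: when some $q^\ast_j = 0$, the constraint $\fLD(\yy) \geq 1$ is a cylinder in the $y_j$-direction, and the infimizer of $I$ over it can have $y^\ast_j = 0$, i.e., it lives precisely on the boundary you are removing. The paper handles this not by excision but by $\varepsilon$-\emph{enlargement}: continuous convergence on $\Real^d_{++}$ yields, for large $u$, the inclusion $\Lev^+_1(L_u) \cap B_M \subseteq \big[\Lev^+_1(\fLD) \cap B_M \cap \Real^d_{++}\big]^{1+\varepsilon}$, and continuity of $I$ then absorbs the fattening at cost $\delta$ in the rate. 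Boundary minimizers are thereby recovered as limits of interior points rather than discarded.
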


As a consequence of Theorem \ref{thm:LDP}, the tail asymptotic in
(\ref{tail-asymp}) holds automatically for any joint distribution
specified via Assumptions \ref{assump-marginals} -
\ref{assump:joint-Y}.  While there is a rich literature on tail risk
probabilities of the form $P(X_1 + \cdots + X_d \geq u)$ where
$X_1,\ldots,X_d$ are independent, the treatment for more general
objectives which arise in modeling operations exist only for specific
instances. As examples, we have \cite{GHS2000} deriving asymptotics of
the form (\ref{tail-asymp}) for a specific quadratic objective
$L(\cdot)$ motivated from the delta-gamma approximation of portfolio
losses. Likewise,
\cite{juneja2007asymptotics,blanchet2019rare,ahn2018efficient,bai2020rareevent}
derive asymptotics of the form (\ref{tail-asymp}) considering
piecewise linear $L(\cdot)$ motivated from settings requiring
evaluation of the likelihood of excessive project delays, cascading
failures in product distribution and banking networks, safety in
intelligent physical systems, etc. These results rely, however, on
exploiting the specific structure of $L(\cdot)$ and the distributional
assumptions such as $\XX$ being multivariate normal, or elliptical, or
possessing independent components. On the other hand, Theorem
\ref{thm:Tail-asymp} is applicable for $L(\cdot)$ as general as the
instances considered in Examples~\ref{affine-MILP}-\ref{eg:features-NN} and across a broad spectrum of distributions.

\begin{remark}[Sufficient conditions on existence of
  $\mv{q}^\ast$]\label{rem:q-suff}
  \textnormal{Let $r_i(x) := \Lambda_{\min}(x)/\Lambda_i(x),$
    $i = 1,\ldots,d,$ and
    $\alpha_{\ast} := \min_{i=1,\ldots,d} \alpha_i.$ With
    $\mv{q} :=
    \mv{\Lambda}^{\leftarrow},$ % being the component-wise inverse of
    % $\mv{\Lambda}(\cdot),$
    the limit $\mv{q}^\ast = (q^\ast_1,\ldots,q_d^\ast),$ when exists,
    satisfies,
    \begin{align}
      {q}^\ast_i = \lim_{x \rightarrow \infty} r_i(x)^{1/\alpha_{\ast}}, 
      \label{ratio-Lambda}
    \end{align}
    for $i$ in $\{1,\ldots,d\};$ see the discussion at the end of
    Appendix \ref{sec:tech-proofs} for additional explanation. Then
    for any $i$ such that $\alpha_i > \alpha_{\ast},$ the limit in
    (\ref{ratio-Lambda}) expressly evaluates to ${q}_i^\ast = 0.$ For
    all $i$ such that $\alpha_i = \alpha_{\ast},$ we have that the
    limit in (\ref{ratio-Lambda}) exists if
    $\Lambda_i(x) = x^{\alpha_{\ast}} (c_i + o(1)),$ for some positive
    constant $c_i;$ or more generally if
    $ \vert \frac{d}{dx} r_i(x) \vert = O(x^{-(1+\varepsilon)}),$ for
    some $\varepsilon > 0.$ As the latter condition merely restricts
    the magnitude of oscillations of the ratio
    $\Lambda_{\min}(x)/\Lambda_i(x),$ we have that the limit
    $\mv{q}^\ast$ exists for commonly used parametric distribution
    families.} % Further, an extension of
    % the conclusion in Theorem \ref{thm:Tail-asymp} to the case where
    % the limit $\mv{q}^\ast$ does not exist is presented in Proposition
    % \ref{prop:LD-Ext} in the appendix.
\end{remark}

%\subsection{Interpreting the tail asymptotic (\ref{tail-asymp})}
To interpret the tail asymptotic (\ref{tail-asymp}), first note that
the occurrence of
$\Lambda_{\min}(\cdot) := \min_{i=1,\ldots,d} \Lambda_i(\cdot)$ in the
denominator in \eqref{tail-asymp} is aligned with the phenomenon that
the ``heaviest tail wins''. This observation is well-known within the
specialized context of sums of random variables 
\citep[see, for eg.,][Example 8.17]{Hult_2012}. % Indeed, the smaller the cumulative
% hazard function $\Lambda(\cdot),$ the heavier the respective tail is,
% as seen from the relationship $P(X_i > x_i) = \exp(-\Lambda_i(x_i)).$
Thus, as is expected, the presence of an heavier tail results in
larger probability for $P(L(\XX) \geq u).$
With $\mv{q}^\ast$ characterized as in (\ref{ratio-Lambda}) in terms
of the ratio $r_i(x):=\Lambda_{\min}(x)/\Lambda_i(x),$ the appearance
of $\mv{q}^\ast$ in (\ref{tail-asymp}) captures the differences in tail
heaviness of the marginal distributions of $X_1,\ldots,X_d.$ In the
simpler case where all the components are identically distributed, we
have $\mv{q}^\ast = 1.$ If, for example, $X_1$ is the component with
the heaviest tail in $\XX = (X_1,X_2)$ and if
$P(X_1 > x)/P(X_2 > x) = O(1)$ as $x \rightarrow \infty,$ then
$\mv{q}^\ast = (1,c)$ for some constant $c \in (0,1);$ if on the other
hand, $\Lambda_1(x)/\Lambda_2(x) \rightarrow \infty,$ then
$\mv{q}^\ast = (1,0).$ The same description is applicable in higher
dimensions where $d > 2.$ % in the presence more than two
% components ($d > 2.$).
%The constant $I^\ast,$ which characterizes the rate specific to the given event $\{L(\XX) \geq u\},$ is referred as the \textit{large-deviations exponent}.  

 {
% \subsection{Self-similarity in zero-variance distributions}
Recall from Section \ref{sec:challenges} that the theoretically optimal IS distribution, which possesses zero variance in the estimation of $P(L(\XX) \geq u),$ %for any $u > 0,$ 
is merely the  conditional distribution
\begin{align}
P_u^\ast(d\xx) \ :=  \  P \left(\XX \in d\xx \,\vert \, L(\XX) \geq u \right)
  =  \frac{f_{\XX}(\xx)}{P (L(\XX) \geq u)} d\xx.
  \label{ZV-measure}
\end{align}
For brevity, let $\ZZ_u^\ast$ be such that the
$\text{law of }\ZZ_u^\ast = P_u^\ast$ for  $u > 0.$ 
Proposition~\ref{prop:zv-ldp} below gives an LDP for the collection $\{\ZZ_u^\ast: u > 0\}.$  For $l < u,$ the LDP reveals how a suitably scaled version of $\ZZ_l^\ast \sim P_l^\ast$ can be seen to concentrate in regions similar to that of $\ZZ_u^\ast.$ In what follows,  let $I^\ast$ be as defined in \eqref{I-star} and functions $\chi_1: \R^d \rightarrow \{0,+\infty\},$ $t:\R_+ \rightarrow \R_+$ be defined as below:
\begin{align*}
  \chi_{1}(\yy) :=
  \begin{cases}
    0, \quad& \text{ if } L^*(\qq^*\yy^{1/\mv{\alpha}})\geq 1,\\
    +\infty, & \text{ otherwise}, 
  \end{cases}
  \quad \text{ and } \quad
     t(v) := \Lambda_{\min}(v^{1/\rho}).% \quad \text{ for } v > 0. 
\end{align*}

\begin{proposition}[Self-similarity in zero variance distributions]
  \label{prop:zv-ldp}
  \sloppy{Under the assumptions in Theorem \ref{thm:Tail-asymp},} the collection  $\{\mv{\Lambda}(\ZZ_u^\ast)/t(u)\}_{u > 0}$ satisfies LDP with rate function $I^{\prime}(\cdot) = I(\cdot) - I^\ast + \chi_1(\cdot)$ and speed $t(u),$ as $u \rightarrow \infty.$ Further, if $l$ is such that $l\to\infty$ and $u/l \to s \in (1,\infty]$ as $u \rightarrow \infty,$ then %and any Borel subset $A,$
  \begin{align*}
      P\left( \ZZ_u^\ast/\mv{q}(t(u)) \in A\right) &= \exp \left( -t(u) \left[ I^{\prime}(A) + o(1) \right] \right), \quad \text{ and }\\
      P\left( \ZZ_l^\ast/\mv{q}(t(l)) \in A\right) &= \exp \left( -t(l) \left[ I^{\prime}(A) + o(1) \right] \right),
  \end{align*}
  where $A$ is any closed subset of $\mathbb{R}^d$ and the respective $I^{\prime}(A) := \inf_{\pp \in A} I^{\prime}(\pp^{\mv{\alpha}})$.
%  $\{\ZZ_l^*/[t(l)]^{\mv{1/\alpha}} :l>0\}$ and $\{\ZZ_u^*/[t(u)]^{\mv{1/\alpha}}:u>0\}$ satisfy LDPs with identical rate functions.
\end{proposition}
}

\section{Variance reduction properties of Algorithm \ref{algo:IS}}
\label{sec:IS-VR}
{Building on the large deviations characterizations in Sections \ref{sec:Mod-Framework} - \ref{sec:LD-Tails}, this section aims to (i) bring out the rationale behind the choice of $\mv{T}(\cdot)$ employed in Algorithm \ref{algo:IS}; and (ii)} establish the optimal variance reduction properties of the IS estimator $\bar{\zeta}_{_N}(u)$ returned by Algorithm \ref{algo:IS}. 
%Recall that the IS estimator $\bar{\zeta}_{_N}(u)$ and Algorithm \ref{algo:IS} were introduced in Section \ref{sec:IS-prop} for the purpose of estimating the probability $p_u := \Prob\{L(\XX) \geq u\}.$ In short, $\bar{\zeta}_{_N}(u)$ is the sample mean computed from $N$ independent replications of the random variable,
%\begin{align*}
%   \zeta(u) :=  \mathcal{L}(\ZZ)\mathbb{I}\left( L(\ZZ) \geq u\right),
%\label{IS-est}
%\end{align*}
%where $\ZZ := \mv{T}(\XX)$ and $\mathcal{L}(\ZZ) :=  J(\XX) {f_{\XX}(\ZZ)}/{f_{\XX}(\XX)},$ with $ J(\cdot) $ as the Jacobian of the map $\mv{T}(\cdot)$. Let $M_{2,u} := E\big[ \, \zeta(u)^2 \, \big]$ denote the second moment of $\zeta(u).$ With $\bar{\zeta}_{_N}(u)$ being the average of $N$ independent samples of $\zeta(u),$ the variance of $\bar{\zeta}_{_N}(u)$ is given by $\left(M_{2,u} - p_u^2\right)N^{-1}.$ If one were to take the naive estimator $\mathbb{I}(L(\XX) > u),$ then as explained in Section \ref{sec:IS-prop}, the resulting second moment is $p_u(1-p_u)$ and the relative error scales as $p_u^{-1}.$ Theorem \ref{thm:Var-Red}, stated towards the end of this section, utilizes the large deviations machinery developed in Sections \ref{sec:Mod-Framework} - \ref{sec:LD-Tails} to establish that the second moment {\color{blue}$M_{2,u} = o(p_u^{2-\varepsilon})$ for any $\varepsilon>0$, thereby offering nearly optimal variance reduction when considering the lower bound $M_{2,u} \geq p_u^2.$} 

\subsection{Deducing effective IS transformations from large deviations}
\label{sec:rf-preserving}
{The zero variance distribution in \eqref{ZV-measure}} is not practical as a choice of IS distribution as its specification relies on the unknown quantity $P(L(\XX) \geq u).$  Despite this limitation, the zero-variance distribution is often utilized as a guide for identifying a good choice of IS distribution. Indeed, most verifiably effective IS schemes seek to identify a proposal IS distribution which possess relevant aspects of
the respective zero-variance IS distribution and approximate it in a suitable manner \citep[see, for eg.,][Chapter 6]{JS2002, AGbook}.
{ %Instead of explicitly seeking an IS distribution approximating the zero variance distribution, 
In Definition \ref{defn:Conc-Preserving-Dist} below, we  instead seek transformations $\{\TTu : u > 0\}$ such that (i) the distribution of $\TTu(\XX) \mid L(\mv{T}_u(\XX))\geq u$ and the zero-variance distribution $P_u^\ast$ in  \eqref{ZV-measure} share similar large deviation properties; and (ii) the risky scenarios in the target rare set $\{\xx: L(\xx)\geq u\}$ occur exponentially more frequently under the distribution of $\TTu(\XX).$}
%occurs more frequently under $\mv{T}_u(\XX)$ than under $\XX$. Further, to ensure that the sampler is model agnostic, we desire that these properties be satisfied for all losses $L(\cdot)$ and distributions $\XX$ that satisfy the Assumptions in Theorem~\ref{thm:Tail-asymp}. % for any choice of $L(\cdot)$ and $\XX$ satisfying Assumptions \ref{assume:V} - \ref{assump:joint-Y}.  
%Lemma \ref{lem:zv-ldp} below, which characterizes an LDP for the zero-variance distributions $\{P_u^\ast: u > 0\},$ serves as a starting point in this exercise. 
{Throughout this section, let $\TTu$ denote a mapping with domain and co-domain to be $\R^d$ and $\ZZ_u$ denote the conditional realization satisfying
\begin{align}
\label{zz-zz-star-u}
\text{Law of }\ZZ_u = \text{Law of }\TTu(\XX) \mid L(\TTu(\XX)) \geq u.
\end{align}

%For brevity, define
%\begin{align*}
%  t(u) := \Lambda_{\min}\big(u^{1/\rho}\big) \quad \text{ and } \quad
%  \tilde{t}(u) := %\Lambda_{\min}\big([u/s]^{1/\rho}\big). 
%\end{align*}
%\begin{definition}
%\label{defn:Conc-Preserving-Dist}
%A bijective map $\mv{T}:\R^d \rightarrow \R^d $ is said to be rate-point preserving if for any loss function $L(\cdot)$ satisfying Assumption  \ref{assume:V} and any distribution $\mv{X}$ satisfying Assumptions \ref{assump-marginals}-\ref{assump:joint-Y}, the IS vector $\ZZ = \mv{T}(\XX)$ shares identical minimum rate points with $\XX$ in the following sense: as $u \rightarrow \infty$
%\begin{align*}
%    \frac{\log \Pr(L(\ZZ) \geq u)}{\Lambda_{\min}(u^{1/\rho})} = \frac{\log \Pr(L(\ZZ) \geq u)}{g(u)} = -I^\ast + o(1),
%\end{align*}
%where $I^\ast$ is defined as in \eqref{I-star} and $g \in \RV$. % satisfying $\liminf_{u \rightarrow \infty} \Lambda_{\min}(u^{1/\rho})/g(u) > 1.$ 
%end{definition}

\begin{definition}
\label{defn:Conc-Preserving-Dist}
  For a given loss $L(\cdot),$ density $f_{\XX}(\cdot),$ and $s \in [1,\infty),$ a family of bijective maps $\{ \TTu: u > 0\}$ is said to be \textit{rate-function preserving} for $(L,f_{\XX})$ with speed-up $s$ if the following hold as $u \rightarrow \infty:$
\begin{itemize}
  %  \item[(a)] $\TTu$ is bijective for every $u;$ and
    \item[(i)] the collection $\{\mv{\Lambda}(\ZZ_u)/t(u) : u > 0\}$ %which is dependent on the transformations $\TTu$ via \eqref{zz-zz-star-u},  
    satisfies LDP with the rate function $I^{\prime}(\cdot) = I(\cdot) - I^\ast + \chi_1(\cdot),$ thereby coinciding with the rate function of LDP satisfied by the zero-variance based counterpart $\{\mv{\Lambda}(\ZZ_u^\ast)/t(u): u > 0\};$ and  
%    satisfies an LDP with the  same rate function $I^\prime$ identified in  Proposition \ref{prop:zv-ldp} for zero-variance distributions; and 
\item[(ii)] %with $\mv{\Lambda} \in \RV(\mv{\alpha})$,  and  $\alpha_\ast = \min_i \alpha_i$, 
$P(L(\mv{T}_u(\XX)) \geq u)= [P(L(\XX) \geq u/s)]^{ 1 +o(1)},$ %as $u\to\infty$.
\end{itemize}
\end{definition}
Observe that the dependence on $\TTu$ in requirement (i) of Definition \ref{defn:Conc-Preserving-Dist} is via  \eqref{zz-zz-star-u}. The requirement (ii) in Definition \ref{defn:Conc-Preserving-Dist} stipulates that, under the change of measure, the target event occurs roughly as frequently as the less rare event $\{L(\XX) \geq u/s\}.$ In particular, due to Thm. \ref{thm:Tail-asymp} and  $\Lambda_{\min}(\cdot) \in \RV(\alpha_\ast)$,
\begin{align*}P(L(\XX) \geq u/s) =  [P(L(\XX) \geq u)]^{ \frac{\Lambda_{\min}((u/s)^{1/\rho})}{\Lambda_{\min}(u^{1/\rho})} + o(1)} = [P(L(\XX) \geq u)]^{ s^{-\alpha_\ast/\rho} + o(1)},
\end{align*}
thus rendering $P(L(\mv{T}_u(\XX)) \geq u)$ to be exponentially larger than $P(L(\XX) \geq u)$ when $s > 1.$ Here, recall that $\alpha_\ast := \min\{ \alpha_1,\ldots,\alpha_d\}.$ Proposition~\ref{prop:Self-Structuring} below gives a  sufficient condition for $\{\TTu\}_{u > 0}$  to be rate-function preserving. 
% For $\mv{f}:\R^d\to\R^d$, let $f_{\min}(\xx) = \min_i f_i(x_i)$
%By establishing that the target event $\{L(\TTu(\XX)) \geq u\}$ has become exponentially less rare (relative to $P\{L(\XX) \geq u\}$) by a factor involving $s,$ Part (b) of Proposition~\ref{prop:Self-Structuring} gives an intuition behind why the parameter $s > 1$ is referred as ``speed-down" parameter. 
\begin{proposition}
\label{prop:Self-Structuring} 
Given a loss $L$ and density $f_{\XX}$ satisfying the conditions in Theorem \ref{thm:Tail-asymp},  a family of maps $\{\TTu\}_{u > 0}$ is rate-function preserving for $(L,f_{\XX})$ with speed-up  $s$ if as $u\to\infty$ %for every  $\mv{\alpha} = (\alpha_1,\ldots,\alpha_d) > \mv{0},$ $\rho > 0,$ and $\alpha_\ast := \min_i \alpha_i,$ for all $\mv{f}\in \RV(\alpha_\ast/\rho\mv{\alpha})$,
\begin{equation}
\label{eqn:Conc-Preserving-Ts}
  \frac{\mv{T}_u(\mv{q}(t(u)\pp))}{\mv{q}(t(u))} \to  %[\mv{T}_s^\ast(\pp)]^{1/\mv{\alpha}} := 
   (s^{\alpha_\ast/\rho}\pp)^{1/\mv{\alpha}}  \text{ uniformly over $\pp$ in compact subsets of $\R^d_{+}\setminus\{0\}.$}
%  \TTu(\xx) = s^{\frac{\alpha_\ast}{\rho\mv{\alpha}}} \, \xx \, ( 1 + o(1)),
\end{equation}
% uniformly over $\xx \in \{\mv{q}(t(u))\pp: \pp \in A\}$ for any compact $A \subseteq \R^d_+ \setminus \{0\}.$ 
%Further, for any $\{T_u\}_{u > 0}$ satisfying \eqref{eqn:Conc-Preserving-Ts}, %\begin{align*}
%    \log P\big(L(\TTu(\XX)) \geq u \big) = s^{-\alpha_\ast/\rho} \log P \big(L(\XX) \geq u \big)(1 + o(1)), \quad \text{ as } u \rightarrow \infty,
%\end{align*}
%for loss $L(\cdot)$ and $\XX$ satisfying Assumptions \ref{assume:V} - \ref{assump:joint-Y}.
\end{proposition}
Since $\mv{q} := \mv{\Lambda}^{\leftarrow} \in \RV(1/\mv{\alpha}),$ the characterization in \eqref{eqn:Conc-Preserving-Ts} can be viewed as $\TTu(\xx) = \xx s^{\frac{\alpha_\ast}{\rho \mv{\alpha}}} + o(\Vert \xx \Vert),$ uniformly over $\xx \in \{\mv{q}(t(u)\pp): \pp \in A\}$ for any compact $A \subseteq \R^d_+ \setminus \{0\}.$ Thus, to obtain a speed-up $s,$ Proposition \ref{prop:Self-Structuring} suggests that it is sufficient if the realizations of $\XX$ in relevant regions are stretched multiplicatively by a suitable factor. The multiplicative factor may need to be different component-wise based on the indices $\alpha_1,\ldots,\alpha_d$ determining the heaviness of  distribution tail of the components $X_1,\ldots, X_d$. Corollaries \ref{cor:Self-struc} - \ref{cor:Self-struc-2} below assert that the transformations
\begin{align}
    \TTu^{(1)} (\xx) &:= \xx \times \left( u/l \right)^{\mv{\kappa}^{(1)}(\xx)},  \quad \text{ where } \mv{\kappa}^{(1)}(\xx) =  \frac{\log(1 + \vert \xx \vert)}
  { \rho \Vert\log (1 + \vert \xx \vert) \Vert_\infty}, 
   \label{eqn:T}
\end{align} 
and 
\begin{align}
      \mv{T}_{u}^{(2)}(\xx) &:= \xx \times \left( u/l \right)^{\mv{\kappa}^{(2)}(\xx)}, \quad  \text{ where } \mv{\kappa}^{(2)}(\xx) = \frac{\log(1+|\xx|)}{\log \, l},
    \label{eqn:alt-T}
\end{align}
introduced in Section \ref{sec:Desc}, in turn, seek to accomplish this component-wise multiplicative stretching and are rate-function preserving with speed-up $s$ for suitable classes of loss $L$ and density $f_{\XX}.$ %and any $l = u(1/s+o(1))$ as $u \rightarrow \infty$.

\begin{corollary}
\label{cor:Self-struc}
Consider any $\rho > 0$  and $l = u(1/s + o(1))$ as $u \rightarrow \infty.$ Suppose that the loss $L$ satisfies Assumption \ref{assume:V} with the given $\rho$ and the density $f_{\XX}$ satisfies  the requirements in Theorem \ref{thm:Tail-asymp}. Then %we have 
\begin{align*}
    \mv{\kappa}^{(1)} \left(\mv{q}\big(t(u)\pp\big) \right) \rightarrow \frac{\alpha_\ast}{\rho \mv{\alpha}} %\quad \text{ and } 
%    \quad  \TTu^{(1)} (\xx) =  s^{\frac{\alpha_\ast}{\rho\mv{\alpha}}} \, \xx \, ( 1 + o(1)),
\end{align*}
uniformly over $\pp$ in compact subsets of $\R^d_{++}.$ The sufficient condition in \eqref{eqn:Conc-Preserving-Ts} is satisfied as a consequence and therefore, the collection of transformations $\{\TTu^{(1)}: u > 0\}$ in  \eqref{eqn:T} is rate-function preserving for $(L,f_{\XX})$ with speed-up $s.$
\end{corollary}

\begin{corollary}
\label{cor:Self-struc-2}
Consider any $l = u(1/s + o(1))$ as $u \rightarrow \infty.$ Suppose that the loss $L$ satisfies Assumption \ref{assume:V} for any $\rho > 0$ and the density $f_{\XX}$ satisfies the requirements in Theorem \ref{thm:Tail-asymp}. Then we have 
\begin{align*}
    \mv{\kappa}^{(2)} \left(\mv{q}\big(t(u)\pp\big) \right) \rightarrow \frac{\alpha_\ast}{\rho \mv{\alpha}} %\quad \text{ and } 
%    \quad  \TTu^{(1)} (\xx) =  s^{\frac{\alpha_\ast}{\rho\mv{\alpha}}} \, \xx \, ( 1 + o(1)),
\end{align*}
uniformly over $\pp$ in compact subsets of $\R^d_{++}.$ The sufficient condition in \eqref{eqn:Conc-Preserving-Ts} is satisfied as a consequence and therefore, the collection of transformations $\{\TTu^{(2)}: u > 0\}$ in  \eqref{eqn:alt-T} is rate-function preserving for $(L,f_{\XX})$ with speed-up $s.$
\end{corollary}
}

\noindent 
\textbf{A numerical illustration.} Figure \ref{fig:Extrp} below
provides a pictorial illustration of the {rate-function preserving} property by plotting samples from the conditional
distributions observed for the loss $L(\xx) = 0.5(x_1 + x_2).$ A fixed
number of samples from the zero-variance IS distribution are plotted
in red colour in Figures \ref{fig:Extrp}(a) - \ref{fig:Extrp}(c) below
considering different distribution choices for $\XX = (X_1,X_2).$ In
particular, $(X_1,X_2)$ are taken to have identical normal marginal
distributions in Figure \ref{fig:Extrp}(a), exponential marginal
distributions  in Figure \ref{fig:Extrp}(b), and heavier-tailed Weibull
marginal distributions for which $\mv{\alpha} = (0.5,0.5)$ in Figure
\ref{fig:Extrp}(c). To illustrate cases where the concentration of
conditional distributions happen in different regions, the joint
distributions are taken to be given by i)  Gaussian copula with correlation coefficient = 0.5 in Figure \ref{fig:Extrp}(A) ii) { a $t$-copula with degrees of freedom = 1 in} Figure \ref{fig:Extrp}(B), and iii) independent copula in Figure
\ref{fig:Extrp}(c).  In order to facilitate an easy comparison across these different choices of joint distributions, the numbers $u$ and $l$ are taken to be such that $P(L(\XX) \geq u) = 10^{-5}$ and $P(L(\XX) \geq l) = 10^{-2}$ in each of these cases. %\textcolor{blue}{Choosing $\mv{T} = \mv{T}_u^{(1)}$ from \eqref{eqn:T},}  
An identical number of samples of the IS random vector $\mv{T}_u^{(1)}(\XX) \,\vert\, L(\mv{T}_u^{(1)}(\XX)) > u,$ computed from the chosen values of $(u,l)$ for each of the above
distributions, are plotted in blue colour in the respective sub-figures in Figure \ref{fig:Extrp}.  In this setup, the following observations are readily inferred from Figure \ref{fig:Extrp}.

% \begin{figure}[h!]
% \FIGURE{
%   \includegraphics[width=1\textwidth]{V1/}
%   }
%   {Sub-plots numbered (a)-(f) clockwise, starting top-left. Figures (a) - (c) plot independent samples from the zero-variance distribution (in red) and that of the IS vector $\mv{Z} \, \vert \, L(\ZZ) \geq u$ (in blue) to illustrate their identical concentration behaviour. Contours indicate the level sets of the respective joint distributions. Figures (d) - (f) show the respective histograms for $\mv{\kappa}^{(1)}(\XX) \ \vert \ L(\ZZ) \geq u $ involved in the transformation $\ZZ = \mv{T}(\XX)$.          \label{fig:Extrp}
%   }
%   {}
% \end{figure}

\begin{figure}[h!]
  \begin{center}
    \begin{subfigure}{0.32\textwidth}
      \includegraphics[width=0.9\textwidth,height=0.7\textwidth]{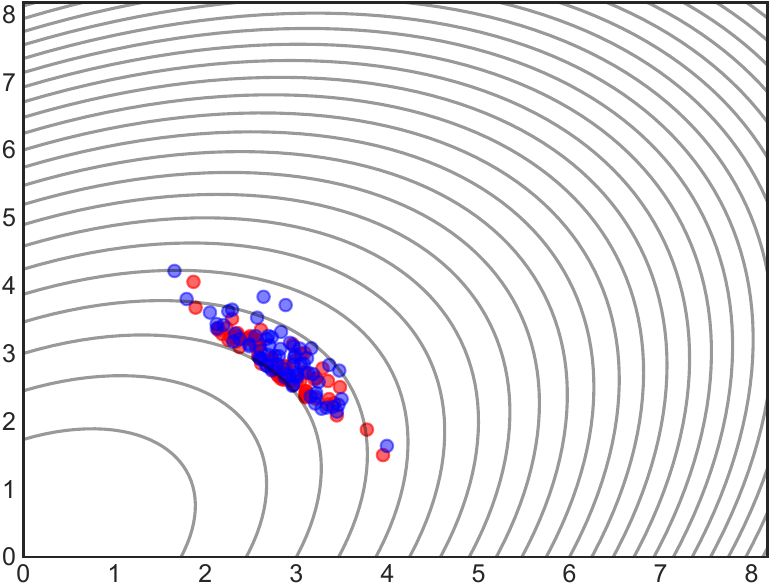}
      \caption{\small  \centering Gaussian marginals,
        correlation$ = 0.5$}
    \end{subfigure}
    % \begin{subfigure}{0.32\textwidth}
    %   \includegraphics[width=1.1\textwidth]{}
    %   \caption{ \centering Exponential marginals,  \newline \centering         correlation$ = 0.5$}
    % \end{subfigure}
    \begin{subfigure}{0.32\textwidth}
      \includegraphics[width=0.9\textwidth,height=0.7\textwidth]{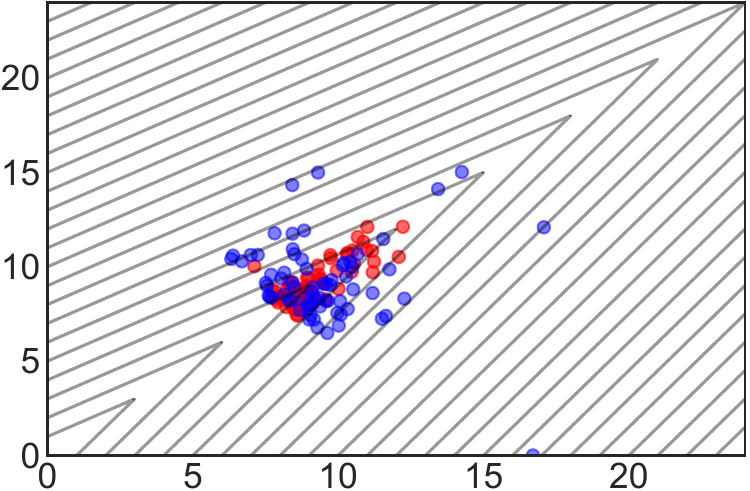}
      \caption{\small \centering Exponential marginals,  \newline \centering        with $t-$copula}
    \end{subfigure}
    \begin{subfigure}{0.32\textwidth}
      \includegraphics[width=0.9\textwidth,height=0.7\textwidth]{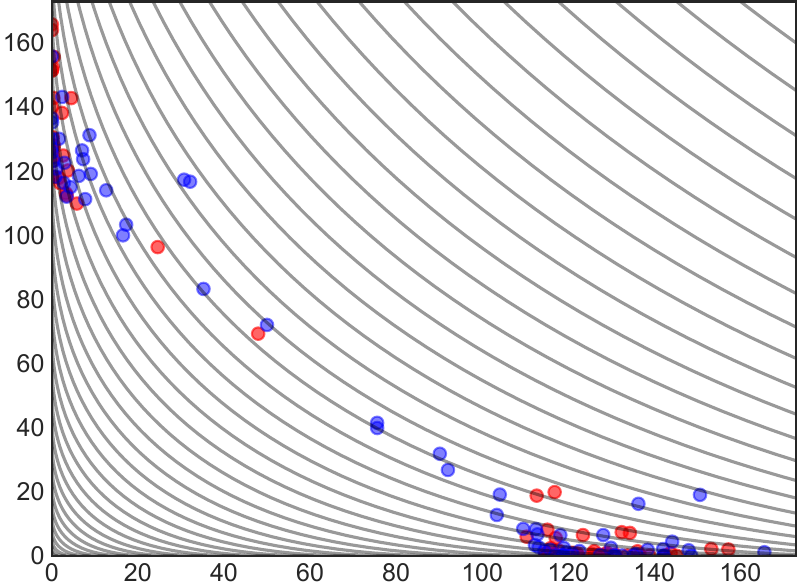}
      \caption {\small \centering Weibull marginals with
        $\mv{\alpha} = (0.5,0.5),$ uncorrelated}
    \end{subfigure}
        \begin{subfigure}{0.32\textwidth}
      \includegraphics[width=0.9\textwidth,height=0.7\textwidth]{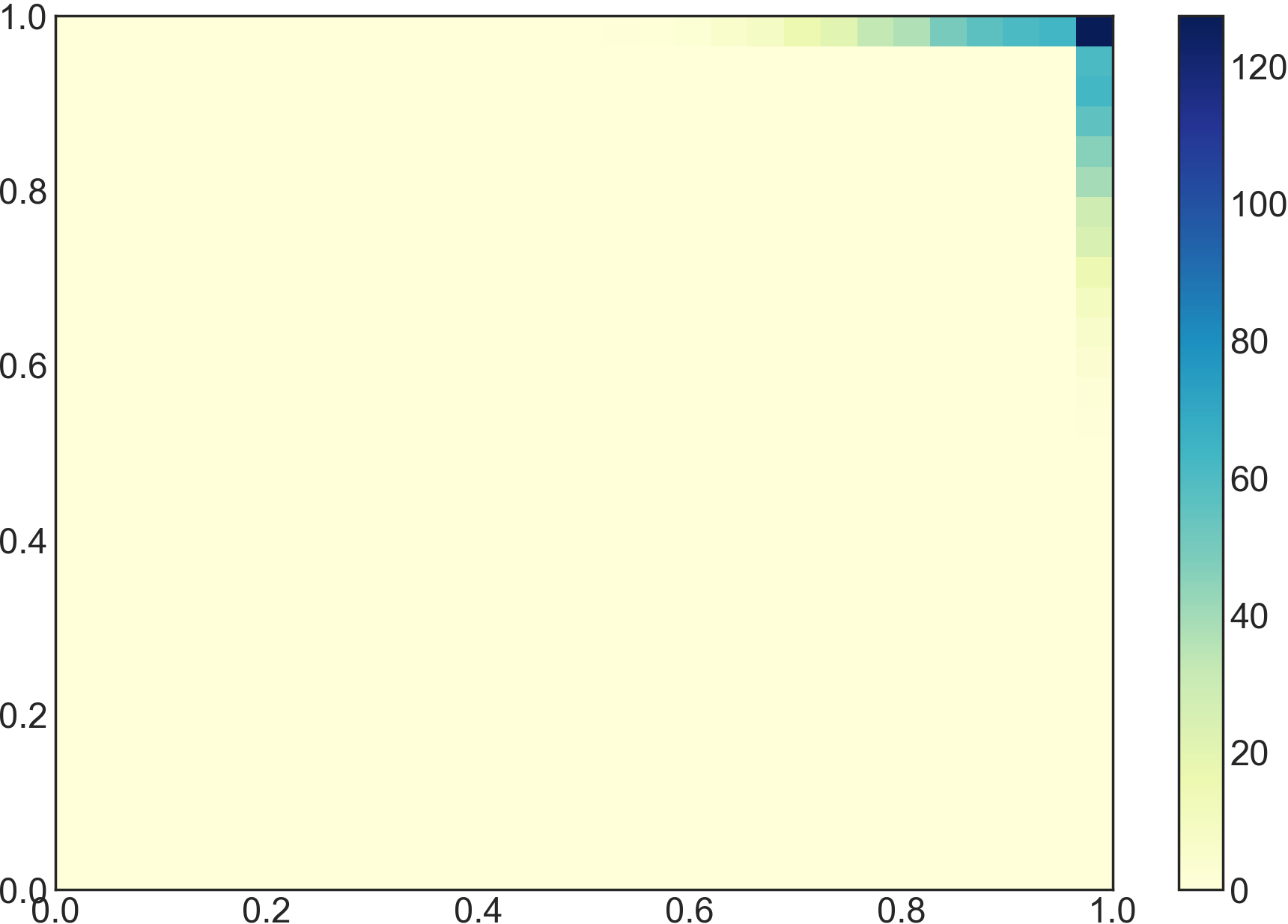}
      \caption{\small \centering}
    \end{subfigure}
    \begin{subfigure}{0.32\textwidth}
      \includegraphics[width=0.9\textwidth,height=0.7\textwidth]{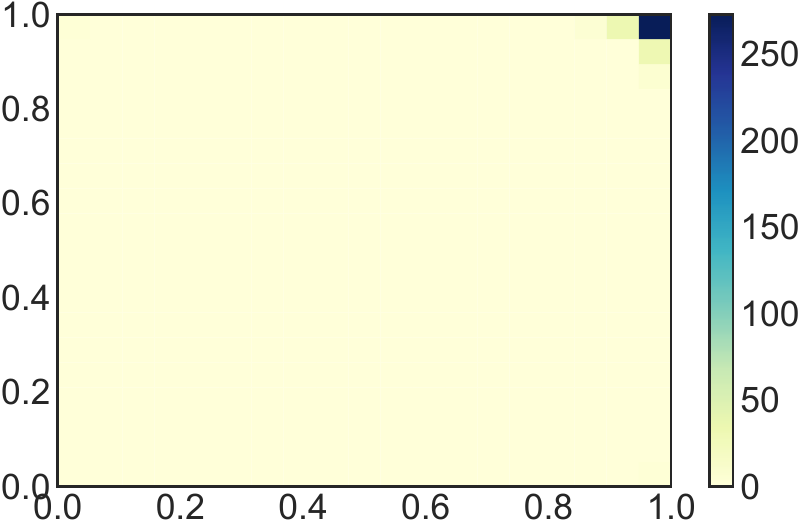}
            \caption{\small \centering }
    \end{subfigure}
    \begin{subfigure}{0.32\textwidth}
      \includegraphics[width=0.9\textwidth,height=0.7\textwidth]{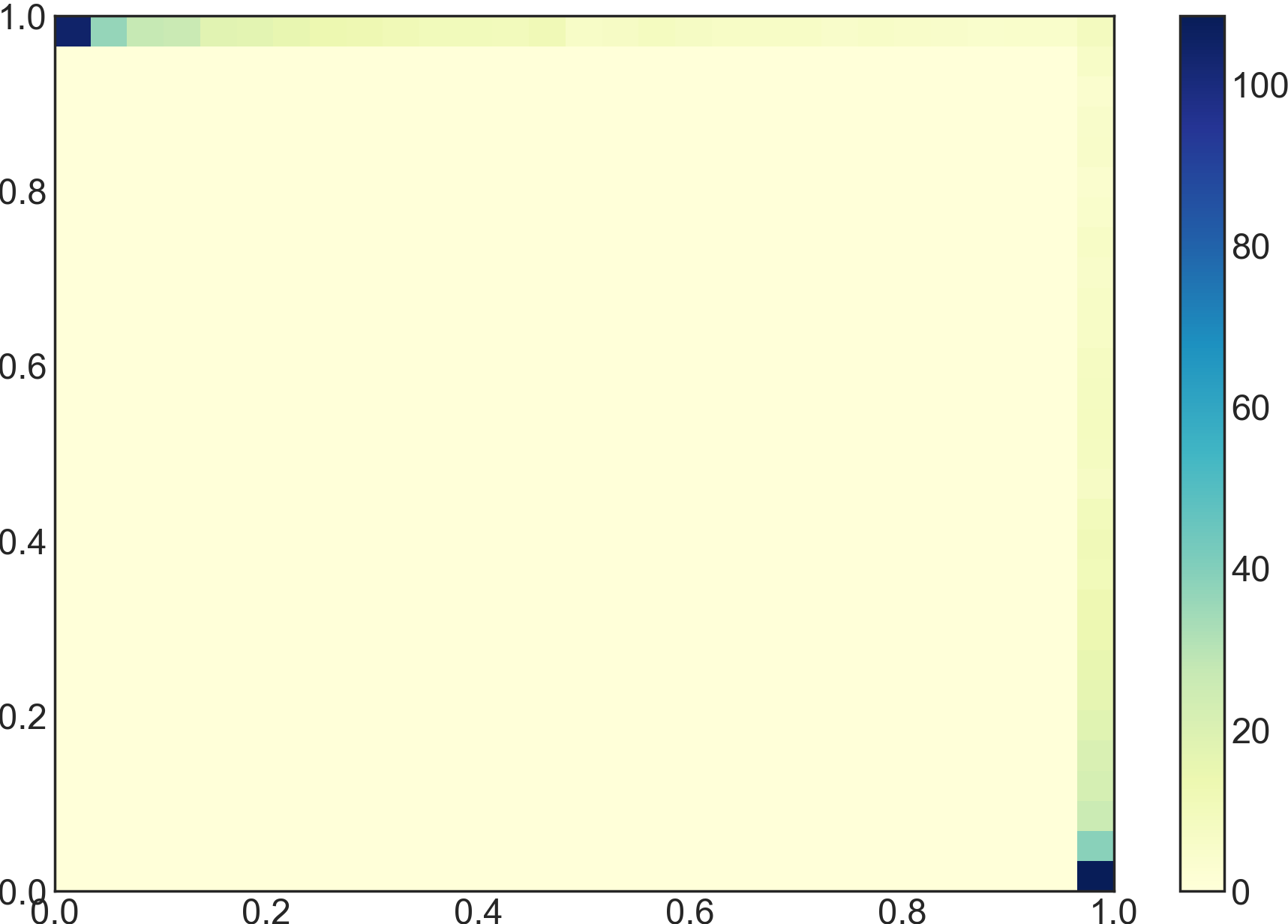}
            \caption{\small \centering }
    \end{subfigure}
  \end{center}
      \caption{\small{Figures (a) - (c) plot independent samples from the zero-variance distribution (in red) and that of the IS vector $\mv{Z} \, \vert \, L(\ZZ) \geq u$ (in blue) to illustrate their identical concentration behaviour. Contours indicate the level sets of the respective joint distributions. Figures (d) - (f) show the respective histograms for $\mv{\kappa}^{(1)}(\XX) \ \vert \ L(\ZZ) \geq u $ involved in the transformation $\ZZ = \mv{T}(\XX)$.} 
             \label{fig:Extrp}}
\end{figure}
%\vspace{-30pt}

% First, in line with the rate points identified in Figures
% \ref{fig:sums}(A) and \ref{fig:sums}(D), the samples from the zero
% variance distributions concentrate i) well in the interior for the
% Gaussian marginals in Figure \ref{fig:Extrp}(A), and ii) near the axes
% for the heavier-tailed Weibull marginals in Figure \ref{fig:Extrp}(C).

 The zero variance and the IS samples tend to concentrate in the same
 neighbourhoods in all the three cases considered in Figures
 \ref{fig:Extrp}(a) - \ref{fig:Extrp}(c) as asserted by Proposition
 \ref{prop:Self-Structuring}.  Regardless of the distinctions in the
 regions where the zero variance distribution
 concentrates, % in Figures
 % \ref{fig:Extrp}(A) - \ref{fig:Extrp}(C),
 the blue conditional IS samples replicate the concentration in the same
 neighbourhood.% by implicitly learning from the samples which are not as rare. 
 % in the light-tailed case.
 
To gain intuition behind this phenomenon, we first see that the
 multiplicative factor $(u/l)^{\mv{\kappa}^{(1)}(\xx)} \gg \mv{1}$ in the
 transformation $\mv{T}_u^{(1)}(\xx) = (u/l)^{\mv{\kappa}^{(1)}(\xx)} \xx$ ensures
 that the IS vector $\mv{T}_u^{(1)}(\XX)$ is more likely to take more extreme
 values than $\XX.$ Here the exponent $\mv{\kappa}^{(1)}(\xx)$ ensures that
 the components are relatively magnified only to the extent
 necessary. Indeed, a quick examination by applying the definition of
 $\mv{\kappa}^{(1)}(\xx)$ in  \eqref{defn:kappa}
% \begin{align*}
%   \mv{\kappa}(\xx) := \frac{\log(1 + \vert \xx \vert)}
%   { \rho \Vert\log (1 + \vert \xx \vert) \Vert_\infty}, 
% \end{align*}
to the red points in the respective cases in Figure \ref{fig:Extrp}
reveals the following observation: The distribution of
$\mv{\kappa}^{(1)}(\XX) \,\vert\, L(\ZZ) \geq u$ concentrates in the neighbourhood of
the points $\{(1,0),(0,1)\}$ in Figure \ref{fig:Extrp}(f), unlike
those in Figures \ref{fig:Extrp}(d) - \ref{fig:Extrp}(e) where its
concentration is in the vicinity of $(1,1).$ While a naive
multiplication by the factor $(u/l)$ will result in both components
$(X_1,X_2)$ being magnified, the introduction of $(u/l)^{\mv{\kappa}^{(1)}(\XX)}$
lets the conditional distribution of $\ZZ$ concentrate appropriately
% in the interior in Figures \ref{fig:Extrp}(A) - \ref{fig:Extrp}(B),
% and
near the axes in the heavier-tailed case in Figure
\ref{fig:Extrp}(c). Thus the transformation $\mv{T}_u^{(1)}(\cdot)$ is crucial here in
adjusting the magnification of different components of $\XX$ such that
the transformed vector $\mv{Z} = \mv{T}_u^{(1)}(\XX)$ concentrates measure in
the regions deemed suitable by the zero-variance IS
distribution. % Likewise, the factor $\mv{\kappa}(\xx)$ plays a non-trivial
% role in the cases involving marginal distributions with different tail
% parameters.
%To develop a precise understanding, we refer readers to the proof of
%{\color{blue} Proposition \ref{prop:Self-Structuring}}.

\subsection{Logarithmic efficiency { of Algorithm \ref{algo:IS}}}
Recall the IS estimator $\bar{\zeta}_{_N}(u)$  
{returned by Algorithm \ref{algo:IS}} is  the sample mean computed from $N$ independent replications of the random variable,
\begin{align*}
   \zeta(u) :=  \mathcal{L}(\ZZ)\mathbb{I}\left( L(\ZZ) \geq u\right),
% \label{IS-est}
\end{align*}
where $\ZZ := \mv{T}(\XX)$ and $\mathcal{L}(\ZZ) :=  J(\XX) 
{f_{\XX}(\ZZ)}/{f_{\XX}(\XX)},$
with $ J(\cdot) $ as the Jacobian of the map $\mv{T}(\cdot)$. {Here the map $\mv{T}$ employed in Algorithm \ref{algo:IS} can be seen to coincide with the rate-function preserving transformations \eqref{eqn:T} - \eqref{eqn:alt-T} deduced in Section \ref{sec:rf-preserving}.} Let $M_{2,u} := E\big[ \, \zeta(u)^2 \, \big]$ denote the second moment of $\zeta(u).$ With $\bar{\zeta}_{_N}(u)$ being the average of $N$ independent samples of $\zeta(u),$ the variance of $\bar{\zeta}_{_N}(u)$ is given by $\left(M_{2,u} - p_u^2\right)N^{-1}.$ If one were to take the naive estimator $\mathbb{I}(L(\XX) > u),$ then as explained in Section \ref{sec:IS-prop}, the resulting second moment is $p_u(1-p_u)$ and the relative error scales as $p_u^{-1}.$ Theorem \ref{thm:Var-Red} below 
{establishes that the second moment $M_{2,u} = o(p_u^{2-\varepsilon})$ for any $\varepsilon>0$, thereby offering nearly optimal variance reduction when considering the lower bound $M_{2,u} \geq p_u^2.$} 
%Interestingly, this strategy of informing concentration properties of the zero variance IS distribution by means of the IS transformations results in low variance guarantees (for the IS estimator returned by Algorithm \ref{algo:IS}) even if the hyper-parameter $l$ is taken to be negligibly small when compared to $u.$ This is the content of Theorem \ref{thm:Var-Red} below.  
In order to state Theorem \ref{thm:Var-Red}, let us introduce a regularity
condition on the marginal distributions of $\XX.$
%Interestingly, the maps $\{\mv{T}^{(1)}_u:u>0\}$ enjoy significant variance reduction %even when $(u/l)\to\infty$.
% In order to state this result rigorously, we introduce a regularity
% condition on the marginal distributions of $\XX.$
\begin{assumption}
  There exists $x_0 > 0$ such that for every $i = 1,\ldots,d,$ the
  cumulative hazard function, $\Lambda_i(x) = -\log P(X_i > x),$ is
  either a convex or concave function over the interval
  $x \in [x_0,\infty).$
  \label{assume:mhr}
\end{assumption}
The condition in Assumption \ref{assume:mhr} is readily satisfied for
the examples in Tables \ref{tab:marginals-light} -
\ref{tab:marginals-heavy} in Appendix \ref{sec:app-eg:tails} and for
other commonly used probability distributions.

\begin{theorem} [Logarithmic efficiency]
  \label{thm:Var-Red}
  Suppose $\XX$ satisfies Assumptions \ref{assump-marginals} -
  \ref{assume:mhr} and the limit $\mv{q}^\ast$ exists. For the loss
  $L(\cdot),$ suppose that $L(\cdot)$ satisfies Assumption
  \ref{assume:V} and the limiting function $L^\ast(\cdot)$ is such
  that $L^\ast(\mv{q}^\ast\xx)$ is not identically zero for
  $\xx \in \Real_d^+.$ Then for any choice of parameter $l$ in the IS
  transformation~(\ref{IS-transf}) which is taken to be slowly varying
  in $u,$ the family of estimators $\{\zeta(u): u > 0\}$ is
  logarithmically efficient in estimating $p_u := P(L(\XX) \geq u):$
  that is,
\begin{align}
  \lim_{u\to\infty} \frac{\log M_{2,u}}{\log p_u^2} = 1.
  \label{log-eff-thm}
\end{align}
\end{theorem}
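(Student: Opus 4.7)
\emph{Proof plan.} The plan is to use Jensen's inequality for one direction and then to establish a matching large-deviations upper bound on $M_{2,u}$. Since $E[\bar{\zeta}_{_N}(u)] = p_u$ by Proposition \ref{prop:unbiased}, Jensen gives $M_{2,u} \geq p_u^2,$ and noting $\log p_u^2 < 0$ we immediately obtain $\limsup_{u\to\infty}\log M_{2,u}/\log p_u^2 \leq 1.$ My remaining task is therefore to show $\log M_{2,u} \leq -2 t(u) I^\ast + o(t(u));$ combined with Theorem \ref{thm:Tail-asymp}, which yields $\log p_u^2 = -2 t(u) I^\ast(1+o(1)),$ this would give the matching $\liminf \geq 1$ and hence (\ref{log-eff-thm}).

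First I would rewrite the second moment via a change of variables using the bijectivity of $\mv{T}(\cdot)$ and the Jacobian formula in Algorithm \ref{algo:IS}, obtaining
\[M_{2,u} = \int_{\{\zz \,:\, L(\zz) > u\}} \frac{f_\XX(\zz)^2}{f_\ZZ(\zz)}\, d\zz.\]
For a point $\zz$ in the rare set, I would parametrize $\mv{\Lambda}(\zz) = t(u)\yy$ with $\yy \in \Real^d_+$ bounded, so that $\zz$ lies at the scale $u^{1/\rho}$ in the direction $\mv{q}^\ast \yy^{\mv{1}/\mv{\alpha}}$. Assumption \ref{assump:joint-Y} combined with Theorem \ref{thm:LDP} and the expression $f_\XX(\xx) = p(\mv{\Lambda}(\xx))\exp(-\varphi(\mv{\Lambda}(\xx)))\prod_i\Lambda_i^\prime(x_i)$ should then yield $\log f_\XX(\zz) = -t(u)[I(\yy) + o(1)].$ Next, I would argue that the pre-image $\xx = \mv{T}^{-1}(\zz)$ sits at the smaller scale $l(u)^{1/\rho}$ in essentially the same direction, by the multiplicative structure $\mv{T}(\xx) = \xx(u/l)^{\mv{\kappa}(\xx)}$ and the choice of $\mv{\kappa}(\cdot);$ a computation patterned on the proof of Proposition \ref{prop:self-similarity} should give $\log f_\ZZ(\zz) = -\tilde{t}(u)[I(\yy)+o(1)] + O(\log(u/l)).$

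The crucial observation is that since $l(u)$ is slowly varying and $\Lambda_{\min} \in \RV(\alpha_{\min})$ with $\alpha_{\min} > 0$, Potter's bounds imply $\tilde{t}(u)/t(u) = \Lambda_{\min}(l(u)^{1/\rho})/\Lambda_{\min}(u^{1/\rho}) \to 0,$ and likewise $\log(u/l) = o(t(u)).$ Combining the estimates, the integrand should satisfy
\[\log \frac{f_\XX(\zz)^2}{f_\ZZ(\zz)} = -2t(u)[I(\yy) + o(1)]\]
uniformly on the rare set. A Varadhan/Laplace-type integration over the constraint set $\{\yy : L^\ast(\mv{q}^\ast \yy^{\mv{1}/\mv{\alpha}}) \geq 1,\ \yy \geq \mv{0}\},$ whose infimum of $I(\yy)$ is $I^\ast,$ should then deliver $\log M_{2,u} \leq -2 t(u) I^\ast + o(t(u)),$ completing the argument.

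The hard part will be converting the pointwise convergences in Assumptions \ref{assume:V}b and \ref{assump:joint-Y} into uniform bounds valid across the entire integration domain on the scale $u^{1/\rho},$ especially near the boundary of the constraint set and near coordinate axes where some components of $\yy$ vanish. Assumption \ref{assume:mhr} on monotone hazard rates should be invoked here precisely to preclude pathological oscillations of $\Lambda_i(\cdot)$ and supply the needed uniformity. Verifying that the polynomial prefactor $p(\yy)$ in (\ref{pdf-Y}) and the Jacobian $J(\xx)$ contribute only subexponentially to $\log M_{2,u}$ (so they do not perturb the rate $2 t(u) I^\ast$) will be the most delicate technical piece.
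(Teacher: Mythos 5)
Your proposal follows essentially the same route as the paper's proof: Jensen for $M_{2,u}\geq p_u^2$, a change of variables expressing the second moment as an integral of $f_{\XX}^2/f_{\ZZ}$ over the rare set (this is exactly Lemma \ref{lem:M2-rewrite}), the observation that the pre-image $\mv{T}^{-1}(\zz)$ lives at the slowly-varying scale $l$ so the denominator density contributes only $e^{o(t(u))}$ (Lemma \ref{lemma: RV-1}b, via Lemma \ref{lem:property-T}), subexponential control of the hazard-ratio/Jacobian factor using Assumption \ref{assume:mhr} (Lemma \ref{lemma:Jac-Light}), and a Varadhan-type integration against the limiting constraint set to extract $-2I^\ast$. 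Note that for the denominator you only need the one-sided bound $f_{\YY}(\mv{\psi}_u(t(u)\pp))\geq e^{-\varepsilon t(u)}$, not the full asymptotic you state.

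The one step that would fail as literally written is the claim that $\log\bigl(f_{\XX}(\zz)^2/f_{\ZZ}(\zz)\bigr) = -2t(u)[I(\yy)+o(1)]$ holds \emph{uniformly on the entire rare set}. The convergences in Assumptions \ref{assume:V}b and \ref{assump:joint-Y} are only locally uniform (uniform on compact subsets of $\Real^d_{+}$ bounded away from the origin), the rare set in the $\yy$-scale is unbounded, and $I(\cdot)$ is unbounded there, so no single uniform expansion is available; Assumption \ref{assume:mhr} controls oscillations of $\Lambda_i$ but does not repair this. The paper's device is a truncation: restrict to $\YYu\in B_M$, where Corollary \ref{cor:set-inclusions} and the locally uniform estimates apply, and handle $\YYu\in B_M^c$ separately by showing the density ratio $a_u(\pp)\leq 1$ there (Lemma \ref{lemma:Y-outside-compact}, which exploits that $\Vert\mv{\psi}_u(t(u)\pp)\Vert_\infty\leq\varepsilon t(u)\Vert\pp\Vert_\infty$ on that region), so that this contribution is at most $\Prob(\YYu\in B_M^c)\leq e^{-Mt(u)(1+o(1))}$ by the LDP; letting $M\to\infty$ at the end removes the truncation. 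Your plan needs this (or an equivalent) splitting to be complete; with it, the argument goes through exactly as in the paper.
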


{\color{blue}
% Theorem~\ref{thm:Var-Red} implies that variance of the resulting IS estimator is
% small as in,
% \begin{align}
%   \text{var}[\,\bar{\zeta}_{_N}(u)\,] = o\left(p_u^{2-\varepsilon}N^{-1}\right),
%   \label{log-opt-VR}
% \end{align}
% as the estimation task is made more challenging by taking
% $p_u \rightarrow 0.$ The relationship (\ref{log-opt-VR}) holds for any
% arbitrary choice of $\varepsilon > 0$ and for any choice of $l = l(u)$
% which is taken to be slowly varying in $u$ and satisfies
% $\lim_{u \rightarrow \infty} l(u) = +\infty;$ see Section \ref{sec:preliminaries}
% for the definition and examples of slowly varying functions.
% %
% %Here, the relative error random variable $Y,$ denoted
% % by $\text{re}[Y],$ is the coefficient of variation of $Y,$ which is
% % defined as the ratio of the standard deviation of $Y$ to the mean
% % $E[Y].$
% In the case of naive sample average, the variance is
% $p_u(1-p_u) N^{-1}$ and the coefficient of variation grows as in
% $p_u^{-1}N^{-1/2},$ as $p_u \rightarrow 0.$ Thanks to
% (\ref{log-opt-VR}), the coefficient of variation of the proposed IS
% estimator grows only as $o(p_u^{-\varepsilon} N^{-1/2})$ where
% $\varepsilon$ can be arbitrarily small, thereby requiring only a
% negligible fraction of samples compared to that required by the naive
% sample average to estimate probabilities to a prescribed accuracy. %in order to achieve a target relative precision.

{\color{blue}
}
}
% \begin{remark}
% Assuming that $\rho$ is known is nonrestrictive in many applications as illustrated in Examples~\ref{eg:affine-MILP}-\ref{eg:features-NN} as well as in optimisation formulations involving distribution tails (e.g; optimisation of Conditional Value at Risk). Notably, in the latter setting, utilising the model agnostic transformation from Section~\ref{rem:alt-T} may lead to a loss of convexity of the objective function.
% \end{remark}

\subsection{Log-efficiency in the presence of
    heavier-tailed distributions}
\label{sec:extensions}
%\label{sec:tail-asymp-HT}
Here we present the counterpart for Theorems \ref{thm:Tail-asymp} and
\ref{thm:Var-Red} when one or more of the components of
$X_1,\ldots,X_d$ are heavier-tailed than considered in Assumption
\ref{assump-marginals}. Interestingly, the same Algorithm
\ref{algo:IS} is shown to offer asymptotically optimal variance
reduction  in the presence of heavier-tailed distributions.  As in
Section \ref{sec:Mod-Framework}, we write
\sloppy{$\Lambda_i(x) = -\log P(X_i > x),$} for $x \in \Real.$ In
addition,  let 
\begin{align*}
  \bar{\Lambda}_i(x) &:= -\log P(\log X_i > x) = \Lambda_i \circ \exp
                       (x), \ x \in \Real. % \qquad \text{ and} \\
  % \mathcal{L} &:= \{ i \in \{1,\ldots,d\}:  \Lambda_i \text{ satisfies Assumption } \ref{assump-marginals} \}
  % \in \RV(\alpha_i)
  % \text{ for some } \alpha_i \in (0,\infty)\}
\end{align*}
% denote the collection of indices of components $X_1,\ldots,X_d$ which
% satisfy Assumption \ref{assump-marginals}. Since the case where
% $\mathcal{L} = \{1,\ldots,d\}$ is treated in the previous sections,
% here we consider the case where $\mathcal{L}$ is a strict subset of
% $\{1,\ldots,d\}.$

\begin{assumption}
  % \textnormal{For any $i \in \{1,\ldots,d\}$ such that
  %   $i \notin \mathcal{L},$ we have $\bar{\Lambda}_i$ is continuous
  %   and $\bar{\Lambda}_i \in \RV(\alpha_i),$ for some
  %   $\alpha_i \in [1,\infty).$ }
  \textnormal{For any $i \in \{1,\ldots,d\}$ for which $\Lambda_i$
    does not satisfy Assumption \ref{assump-marginals},
    $\bar{\Lambda}_i$ is continuous  and strictly increasing in the interval $(x_0,\infty)$, and
    $\bar{\Lambda}_i \in \RV(\alpha_i),$ for some
    $\alpha_i \in [1,\infty).$ }
  \label{assume:marginals-HT}
\end{assumption}

Assumption \ref{assume:marginals-HT} enriches Assumption
\ref{assump-marginals} by including the possibility that, if the
hazard function for $X_i$ is not regularly varying, then the hazard
function for $\log X_i$ is instead regularly varying. This immediately
brings commonly used heavier-tailed distributions such as log-normal,
pareto and regularly varying distributions under the framework
considered. Indeed, if $X_i$ is log-normally distributed, we have
$\bar{\Lambda}_i(x) = x^2/2 - \log x (1+o(1))$ satisfying
$\bar{\Lambda}_i \in \RV(2).$ Instead, if $X_i$ is a pareto or
regularly varying random variable, we have
$\bar{\Lambda}_i (x) = \alpha x - \log L(e^x),$ for some $\alpha > 0$
and a slowly varying function $L(\cdot);$ in this case,
$\bar{\Lambda}_i \in \RV(1)$ (see Table~\ref{tab:marginals-heavy}).
Since the case where all the components $X_1,\ldots,X_d$ satisfy
Assumption \ref{assump-marginals} is treated in the sections before,
we proceed without any loss of generality by assuming here that there
exists at least one component $X_i$ for which Assumption
\ref{assump-marginals} is not satisfied. Let us assign
\begin{align}
  \hat{\mv{q}}(t)  :=  \frac{\log \mv{q}(t\mv{1})}   {\ \Vert \log \mv{q}(t\mv{1})\Vert_\infty },
  \quad \text{ and } \quad
  \mv{q}^\ast := \lim_{t \rightarrow \infty} \hat{\mv{q}}(t),
  \label{defn:qhat-HT}                                                
\end{align}
if the limit exists. Here, 
$\mv{q}:\Real_+^d \rightarrow \Real^d$ denotes the component-wise
\sloppy{inverse} $\mv{q}(\yy) = (q_1(y_1),\ldots,q_d(y_d)),$ with
  $q_i(y_i) = {\Lambda}_i^{\leftarrow}(y_i)$ specifying the
  left-continuous inverse of the hazard function.
  %In the presence of heavier-tailed distributions,
  We proceed assuming that the loss
  $L(\cdot)$ satisfies the following variation of Assumption
  \ref{assume:V}.

  \begin{assumption}
  \textnormal{Suppose that the function
    $L : \Real^d \rightarrow \Real_+$ satisfies Assumption
    \ref{assume:V} and the limiting function $L^\ast(\cdot)$ is such
    that
 % \begin{align*}
    $\lim_{n \rightarrow \infty} n^{-1}\log L^\ast\left( \exp(n\xx) \right)
    = \bar{L}^\ast(\xx),$ 
 % \end{align*}
  for all $\xx \in \mathbb{R}^d_+$ and some limiting function
  $\bar{L}^\ast: \mathbb{R}^d_+ \rightarrow \mathbb{R}_+.$ }
  \label{assume:V-HT}
\end{assumption}
For instance, in the examples considered earlier in Section
\ref{sec:Desc-IS}, we have the resulting
$\bar{L}^\ast(\xx) = \max\{x_i: i = 1,\ldots,d\}$ and
$\bar{L}^\ast(\xx) = 2\max \{ x_i: i = 1,\ldots,d\}$ { for the linear and quadratic losses in Examples~\ref{affine-MILP}-\ref{eg:features-NN} respectively}. We have the following counterparts to
Theorems \ref{thm:Tail-asymp} \& \ref{thm:Var-Red-HT} in the presence
of heavier tailed distributions.

\begin{theorem}[Tail asymptotic]
  \label{thm:Tail-asymp-HT}
  Suppose that the marginal distributions of the components
  $X_1,\ldots,X_d$ of the random vector $\XX = (X_1,\ldots,X_d)$
  satisfy Assumption \ref{assume:marginals-HT} and the standardized
  vector $\YY = \mv{\Lambda}(\XX)$ is such that the tail LDP in
  Theorem \ref{thm:LDP} holds. Further suppose that the limit
  $\mv{q}^\ast$ in (\ref{defn:qhat-HT}) exists.  Then for any
  $L(\cdot)$ satisfying Assumption \ref{assume:V-HT},
  \begin{align}
    \label{tail-asymp-HT}
    {\log P\left( L(\XX) \geq u \right)}
    =  -{\Lambda_{\min}( u)} \big(I^\ast + o(1)\big), \quad
    u \rightarrow \infty
  \end{align}
  where the non-negative constant
  $I^\ast := \inf \big\{ I(\yy):
  \bar{L}^\ast\big({\mv{q}^\ast}\yy^{\mv{1}/\mv{\alpha}}\big) \geq 1,
  \ \yy \geq \mv{0} \big\}.$
 % given by,
  % \begin{align}
  %   I^\ast := \inf \big\{ I(\yy): {V}^\ast\big({\mv{q}^\ast}\yy^{\mv{1}/\mv{\alpha}}\big)
  %   \geq 1, \ \yy \geq \mv{0} \big\}.
  %   \label{I-star-HT}
  %  \end{align}
  % In the presence of heavy-tailed components \textnormal{(}that is, if
  % $\mathcal{L} \neq \{1,\ldots,d\}$\textnormal{)}, the exponent $I^\ast$ evaluates
  % to 1.
\end{theorem}

\begin{theorem}[Logarithmic efficiency of Algorithm \ref{algo:IS} in
  the presence of heavier tails]
  Suppose that the random vector $\XX$ satisfies Assumptions
  \ref{assump:joint-Y} - \ref{assume:mhr}, \ref{assume:marginals-HT}
  and the limit $\mv{q}^\ast$ in (\ref{defn:qhat-HT}) exists. For the
  loss $L(\cdot),$ suppose that $L(\cdot)$ satisfies Assumption
  \ref{assume:V-HT} and the resulting limiting function
  $\bar{L}^\ast(\cdot)$ is such that $\bar{L}^\ast(\mv{q}^\ast\xx)$ is
  not identically zero for $\xx \in \Real_d^+.$ Taking $\rho = 1$ and the choice $l$ in the IS
  transformation~(\ref{IS-transf}) to be slowly varying in $u,$ the
  second moment $M_{2,u}$ satisfies \eqref{log-eff-thm} and therefore
  the family of estimators $\{\zeta(u): u > 0\}$ is logarithmically
  efficient. % in estimating
  % $p_u := P(L(\XX) > u).$
% \[
%   \lim_{u\to\infty} \frac{\log M_{2,u}}{\log p_u^2} = 1. 
% \]
\label{thm:Var-Red-HT}
\end{theorem}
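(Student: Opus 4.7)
The plan follows the template established for Theorem \ref{thm:Var-Red} in the light-tailed regime, but with the asymptotic rates recast in terms of the logarithmic hazards $\bar{\Lambda}_i$ rather than $\Lambda_i$. Since $E[\zeta(u)] = p_u$ forces $M_{2,u} \geq p_u^2$ by Jensen's inequality, and since Theorem \ref{thm:Tail-asymp-HT} supplies $-\log p_u = \Lambda_{\min}(u)(I^\ast + o(1))$, establishing \eqref{log-eff-thm} reduces to the matching upper bound $\log M_{2,u} \leq -2\Lambda_{\min}(u)I^\ast(1 - o(1))$ as $u \to \infty$.

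The first step is to express
\[
M_{2,u} = \int_{\Real^d_+} \mathcal{L}(\mv{T}(\xx))^2 \mathbb{I}\{L(\mv{T}(\xx)) > u\} f_{\XX}(\xx) d\xx = \int_{\Real^d_+} \mathcal{L}(\mv{T}(\xx)) \mathbb{I}\{L(\mv{T}(\xx)) > u\} f_{\XX}(\mv{T}(\xx)) J(\xx) d\xx,
\]
using the definition of the likelihood ratio, and then to pass to the standardized coordinates $\yy = \mv{\Lambda}(\xx)$. In the heavier-tailed setting, the relevant rescaling of $\yy$ is driven by $\Lambda_{\min}(u) = \bar{\Lambda}_{\min}(\log u)$ rather than $\Lambda_{\min}(u^{1/\rho})$; this reflects the fact that $\log \mv{T}(\xx) = \log \xx + \mv{\kappa}(\xx)\log(u/l)$ acts as an additive shift on the logarithmic scale by an amount exploiting the regular variation of $\bar{\Lambda}_i$. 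A Varadhan-type Laplace argument applied to the resulting exponentially tilted integral, with rate function $I(\cdot)$ supplied by Assumption \ref{assump:joint-Y} and Theorem \ref{thm:LDP}, then yields the sought exponential decay rate.

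The second step is to establish a heavy-tailed counterpart of Proposition \ref{prop:self-similarity}, showing that $\mv{T}(\XX)$ conditioned on $\{L(\mv{T}(\XX)) > u\}$ concentrates on the same effective set $\{\yy: I(\yy) \leq I^\ast + \varepsilon\}$ as the zero-variance distribution. Using Assumption \ref{assume:V-HT}, the event $\{L(\mv{T}(\xx)) > u\}$ can be rewritten, after normalization by $\Lambda_{\min}(u)$, in terms of $V^\ast(\mv{q}^\ast \yy^{\mv{1}/\mv{\alpha}}) \geq 1 - o(1)$, which aligns exactly with the constraint in \eqref{I-star-HT}. Combined with Assumption \ref{assume:mhr} controlling oscillations of the marginal densities, this forces the integrand $\mathcal{L}(\mv{T}(\xx)) f_{\XX}(\mv{T}(\xx)) J(\xx)$ on the rare set to be of order $\exp(-\Lambda_{\min}(u) I^\ast (1 - o(1)))$ times the nominal density, so that the integral inherits one additional $\exp(-\Lambda_{\min}(u) I^\ast (1-o(1)))$ factor and delivers the required bound on $M_{2,u}$.

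The main obstacle will be uniform control, across $\xx \in \Real^d_+$, of the interplay between the additive shift $\mv{\kappa}(\xx)\log(u/l)$ in log-coordinates and the heavier-than-exponential decay of $f_{\XX}(\mv{T}(\xx))$. Since $l$ is only slowly varying, $\log(u/l(u)) \sim \log u$, so the effective shift is of the same order as the log-standardized support scale; this is precisely where the heavy-tailed regime departs structurally from the light-tailed one. Ruling out contributions from regions where $\mv{\kappa}(\xx)$ has components close to zero (so that $\mv{T}$ is nearly degenerate along those directions) or where $I(\yy)$ substantially exceeds $I^\ast$ will require combining the homogeneity of $I(\cdot)$ from Lemma \ref{lem:properties-I}, continuity of $V^\ast$, and the monotone-hazard regularity in Assumption \ref{assume:mhr}, to secure pointwise density bounds sharp enough for the Laplace argument to close.
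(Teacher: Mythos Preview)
Your proposal correctly identifies the overall strategy and correctly diagnoses that the log-scale shift $\log(u/l) \sim \log u$ is of the \emph{same order} as the standardized support scale in the heavy-tailed regime, unlike the light-tailed case. But the resolution you sketch does not engage with the specific consequence of that fact, and this is where the argument as written will not close.

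In the light-tailed proof of Theorem \ref{thm:Var-Red}, the decisive estimate is Lemma \ref{lemma: RV-1}(b): $f_{\YY}(\mv{\psi}_u(t(u)\pp)) \geq \exp(-\varepsilon t(u))$, which holds because $\|\mv{\psi}_u(t(u)\pp)\|_\infty = o(t(u))$ via Lemma \ref{lem:property-T}. In the heavy-tailed regime this estimate \emph{fails}: the paper computes instead that
\[
t(u)^{-1}\mv{\psi}_u(t(u)\pp) \ \to\ \pp\Big(1 - \|\mv{q}^\ast \pp^{\mv{1}/\mv{\alpha}}\|_\infty^{-1}\Big)^{\mv{\alpha}},
\]
a non-trivial limit. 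Consequently the ratio $a_u(\pp) = f_{\YY}(t(u)\pp)/f_{\YY}(\mv{\psi}_u(t(u)\pp))$ is bounded not by $\exp(-t(u)(I(\pp)-\varepsilon))$ but by
\[
\exp\Big(-t(u)\Big[I(\pp) - I\Big(\pp\big(1-\|\mv{q}^\ast \pp^{\mv{1}/\mv{\alpha}}\|_\infty^{-1}\big)^{\mv{\alpha}}\Big) - \varepsilon\Big]\Big).
\]
The subtracted $I(\cdot)$ term is not uniformly small on the rare set, so your claim that the integrand there is of order $\exp(-\Lambda_{\min}(u)I^\ast(1-o(1)))$ times the nominal density is false in general. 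The Laplace step then delivers an infimum of $2I(\pp) - I(\text{shifted }\pp)$ over the constraint $\{V^\ast(\mv{q}^\ast\pp^{\mv{1}/\mv{\alpha}}) \geq 1\}$, and the reason this equals $2I^\ast$ is a separate optimization: by homogeneity of $I(\cdot)$ the infimum is attained on the boundary $\|\mv{q}^\ast \pp^{\mv{1}/\mv{\alpha}}\|_\infty = 1$, where the shifted argument collapses to $\mv{0}$ and $I(\mv{0}) = 0$. Your proposal does not anticipate this residual term or the boundary argument needed to eliminate it; without them the second-moment exponent you would obtain is strictly weaker than $-2I^\ast$.
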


{While Theorems \ref{thm:Var-Red} \& \ref{thm:Var-Red-HT} prove asymptotically optimal variance reduction at the entire generality considered, we also point out that the proposed IS procedure is not suitable in its current form for example in certain tasks where Assumptions 1-4 are not satisfied: these include for example where $\XX$ is a bounded random variable, or in tail estimation for steady-state simulation.}

\section{Application to Portfolio Credit Risk}
\label{sec:CR}
% In this section, we demonstrate how the tail asymptotic
% (\ref{tail-asymp}) can be used to readily derive the probability that
    %     a portfolio of loans incurs a large loss.
Efficient IS schemes for estimating excess loss probabilities of a
portfolio of loans have been considered in
\cite{GLi2005,BJZ2006,GKS2008}. A salient feature of these approaches
is the flexibility to have correlated loan defaults informed suitably
via Gaussian or extremal copula models. The repertoire of loan default
probability models considered in the literature since then have 
expanded to include machine learning based approaches aiming to
capture more intricate interactions between the underlying covariates;
see, for example, \cite{RiskLoanPool,sirignano2018deep} and references
therein.  The treatment in this section capitalizes on the generic
applicability of the proposed IS scheme to demonstrate how efficient
samplers can be similarly devised in this
setting.  % To the best of our knowledge, this
% is the first work which exhibits a provably efficient IS scheme
% applicable for some of the machine learning based default models.

To introduce the default model studied here, consider a portfolio of
$m$ loans indexed by $\{1,\ldots,m\}$ belonging to $J \geq 1$
types. % The default of the
% loans depends on certain loan-specific variables such as its type,
% original interest rate, original loan-to-value and debt-to-income
% ratios, FICO score, pre-payment penalty, etc., and as well on common
% covariates capturing region-level economic factors such as
% unemployment, median income, etc. which are uncertain over the period
% for which excess loss probability is to be estimated.
For any $i \in \{1,\ldots,m\}$, let $t(i) \in \{1,\ldots,J\}$ denote
the type of loan $i$, $Y_i$ denote the indicator random variable that
loan $i$ defaults over a fixed horizon of interest, $e_i$ denote the
exposure upon its default, and $\mv{v}_i \in \mathbb{R}^k$ denote
loan-specific factors (such as original interest rate, original
loan-to-value, original debt-to-income ratios, FICO score, pre-payment
penalty, etc.) which are fixed for a given loan. The average loss
incurred by the portfolio is $L_m := m^{-1}\sum_{i=1}^{m} e_iY_i.$  If we let $\bar{e}_m := m^{-1}\sum_{i=1}^m e_i$ denote the average of the
exposures, then it is clear that $L_m \in (0,\bar{e}_m).$ For a given $q \in (0,1),$ our objective is to estimate the probability of
the excess loss event,
\begin{align}
  \mathcal{E}_m := \{L_m \geq q\bar{e}_m\}, % \quad
  % \text{ where } L_m := \frac{1}{m}\sum_{i=1}^{m} e_iY_i
  \label{defn:Em}
\end{align}
%$P(L_m \geq q\bar{e}_m),$
which is the event that the incurred loss exceeds a given fraction of
the maximum loss. % The average loss incurred by the portfolio
% is given by
%  \begin{align}
%    L_{m} = \frac{1}{m}\sum_{i=1}^{m} e_iY_i
%    \label{cr-loss}
% \end{align}
      %       where $e_i$ is the exposure upon default of loan $i.$
To restrict the focus to main ideas, we take the exposure $e_i$ to be
fixed for every $i \in \{1,\ldots,m\}$ and satisfy $e_i \in (0,e_0],$
where $e_0 < \infty$ is the maximum exposure level.

The joint distribution of the default variables $Y_1,\ldots,Y_m$ is
taken to be determined by the loan-specific variables
$\mv{v}_1,\ldots,\mv{v}_m$ and some common stochastic factors
$\XX \in \Real^d_+$ which affect all loans. The common factors $\XX$
may capture region-level economic effects, such as those given by
unemployment level, median income, etc., whose evolution is uncertain
over the time horizon of interest. 
% See \cite{sirignano2018deep} for an
% account of the factors useful in modeling mortgage
% defaults. 
Conditioned on $\XX,$ the default indicators
$Y_1,\ldots,Y_m$ are taken to be independent and the respective
conditional default probabilities are specified by the family of
functions $\{W_j\}_{j = 1,\ldots,J}$ as in,
\begin{align}
  P\left( Y_i = 1 \, \vert\, \XX \right) =
  \frac{\exp\left(W_{t(i)}(\XX, \mv{v}_i) - \gamma\right) }
  {1 + \exp\left(W_{t(i)}(\XX, \mv{v}_i) - \gamma\right)}
%  h_{t(i)}(\XX,\mv{v}_i),
  \quad i = 1,\ldots,m,
  \label{loan-def-prob}
\end{align}
almost surely; in the above expression,
$W_{j}:\mathbb{R}^d \times \mathbb{R}^k \rightarrow \mathbb{R}$ and
$\gamma$ is a parameter modeling the rarity of loan defaults.  While
the functions $W_j(\cdot)$ are typically modeled as members of
parametric families \citep[see eg.,][]{RiskLoanPool}, we only require Assumption \ref{assume:Wj} below which is merely a
restatement of Assumption \ref{assume:V}b suitably adapted to this
portfolio credit risk setting.

\begin{assumption}
  \textnormal{For $j \in \{1,\ldots,J\},$ the function
    $W_j: \R^d \times \mathbb{R}^k \rightarrow \Real$ is such that for any
    sequence $\{\xx_n,\mv{v}_n\}_{n \geq 1}$ of $\mathbb{R}^d_+$
    satisfying $(\xx_n, \mv{v}_n) \rightarrow (\xx,\mv{v}),$ we have
  \begin{align*}
    \lim_{n \rightarrow \infty} \frac{W_j(n\xx_n,\mv{v}_n)}{n^\rho}= W_j^\ast(\xx),
  \end{align*}
  where $\rho$ is a positive constant and
  $W_j^\ast: \mathbb{R}^d_+ \rightarrow \mathbb{R}$ is the limiting
  function such that the cone $\{\xx \in \R^d_+: W_j^\ast(\xx) > 0\}$
  is not empty.}
  \label{assume:Wj}
\end{assumption}

The task of estimating $\Prob(\mathcal{E}_m)$ is particularly
challenging in portfolios composed of high quality loans with small
default probabilities. This rarity is specified, for example, in the
default probabilities by letting the parameter $\gamma$ be large in
(\ref{loan-def-prob}). In order to study how an IS scheme fares when
the target event becomes increasingly rare, we embed the given problem
in the sequence of estimation problems indexed by $m,$ with
$m \rightarrow \infty$ and the respective $\gamma$ in
(\ref{loan-def-prob}) and the exposures satisfying
\begin{align}
  \gamma m^{-\eta}   \rightarrow c, \quad
  \bar{e}_m \rightarrow \bar{e}, \quad
  \frac{1}{m} \sum_{i : t(i) = j} e_i \rightarrow \bar{c}_j,
  \label{cr-asymp-1}
\end{align}
for some positive constants $c,\eta,\bar{e},\bar{c}_j,$
$j = 1,\ldots,J.$ Similar asymptotic frameworks form the basis of
analysis in \cite{GLi2005,BJZ2006,GKS2008}; see \cite{DeoJuneja} for a
detailed exposition on the appropriateness of this regime in the
context of logit default models. % With respect to the exposures, we
% assume,
% \begin{align}
%   \bar{e}_m \rightarrow \bar{e}, \quad 
%   \frac{1}{m} \sum_{i : t(i) = j} e_i \rightarrow \bar{c}_j,
%   \label{cr-asymp-2}
% \end{align}
% as $m \rightarrow \infty,$ for some positive constants
% $\bar{e},\bar{c}_j,$ $j = 1,\ldots,J,$ and
We impose a mild technical requirement that the parameter $q$ in
\eqref{defn:Em} % specifying the target
% event$\mathcal{E}_m := \{L_m \geq q\bar{e}_m\}$
does not lie in the set
$\{\bar{e}^{-1}\sum_{j\in I} \bar{c}_j:I \subseteq \{1,\ldots,
J\}\}$. This unrestritive condition is common in the literature for
ease of analysis; see \cite{glasserman2007large}.

We first present a tail asymptotic for the excess loss probability
$\Prob(\mathcal{E}_m)$ in Theorem \ref{thm:credit-risk-asymp} below as
an application of Theorem \ref{thm:Tail-asymp}. In order to state the
result, let $\mathcal{J}_m$ denote the collection of subsets of the
set $\{1,\ldots,J\}$ whose collective exposure exceeds the specified
threshold; in other words,
%\begin{align*}
$\mathcal{J}_m := \{ I \subseteq \{1,\ldots,J\}: \sum_{i:\, t(i) \in I}
e_i \geq mq\bar{e}_m \}.$
   %  \end{align*}
Let
\begin{align}
  \Lcr(\xx)
  := \max_{I \in \mathcal{J}_m} \min_{t(i) \in I} W_{t(i)}(\xx,\mv{v}_i),
    \quad \text{ and } \quad   \mathcal{A}_{m}
  := \left\{ \Lcr(\XX) > c(1-\varepsilon_m) m^\eta \right\},
\label{loss-cr}
% \label{Am}
\end{align}
where $(\varepsilon_m: m \geq 1)$ is a sequence decreasing to zero 
%$\varepsilon_m \searrow 0.$ and $m\gamma \varepsilon_m \rightarrow \infty$
as $m \rightarrow \infty.$
With the event $\mathcal{A}_{m}$ amenable to be treated via Theorem
\ref{thm:Tail-asymp}, Theorem \ref{thm:credit-risk-asymp} below
establishes that the events $\mathcal{E}_m$ and $\mathcal{A}_{m}$
coincide as $m \rightarrow \infty$ and uses this observation to
establish the asymptotic for $\Prob(\mathcal{A}_{m})$ and
$\Prob(\mathcal{E}_m).$

% $c^\prime:=c^{\alpha_\ast/\rho}.$

% \begin{lemma}
%   Under Assumption \ref{assume:Wj}, we have
%   $n^{-\rho}\Lcr(n\xx_n) \rightarrow L^\ast(\xx)$ for any
%   $\xx_n \rightarrow \xx,$ where
% \begin{align*}
%   L^\ast(\xx) = \max_{I \in \mathcal{J}} \min_{j \in I} W_{j}^\ast(\xx),
% %  \label{Lstar-cr}
% \end{align*}
% and
% $\mathcal{J} := \{ I \subseteq \{1,\ldots,J\}: \sum_{j \in I} \bar{c}_j
% \geq q\bar{e} \}.$
% \label{lcr-convergence}
% \end{lemma}

\begin{theorem}[Tail asymptotic for $\Prob(\mathcal{E}_m)$]
  Suppose that the conditional default probabilities are specified as
  in (\ref{loan-def-prob}), with the functions
  $W_1(\cdot),\ldots,W_J(\cdot)$ satisfying Assumption
  \ref{assume:Wj}. In addition, suppose that the convergences in
  (\ref{cr-asymp-1}) hold and $\XX$ satisfies the conditions in
  Theorem \ref{thm:Tail-asymp} with
  $\alpha_\ast := \min_{i=1,\ldots,d}\alpha_i$ satisfying
  $\alpha_\ast < \rho (1+\eta^{-1}).$
  % Let the parameter $\eta$ in (\ref{cr-asymp-1}) and
  % $\alpha_\ast := \min_{i=1,\ldots,d}\alpha_i$ are such that 
  % Let $\XX$ satisfy the assumptions in Theorem
  % \ref{thm:Tail-asymp} and the convergences in (\ref{cr-asymp-1}) -
  % (\ref{cr-asymp-2}) hold with parameter $\eta$ in
  % (\ref{cr-asymp-1})
  % taken to be in the interval $(0,(\alpha_\ast/\rho-1)^{-1}),$ where
  % $\alpha_\ast := \min_{i=1,\ldots,d}\alpha_i.$
  Then as $m \rightarrow \infty,$
  \begin{align}
    \Prob \big(\mathcal{E}_m \setminus \mathcal{A}_m   \big)
    &= o\left(\Prob(\mathcal{E}_m)\right) \quad \text{ and }
      \label{cr-prob-equiv}\\
    \log \Prob \left(\mathcal{E}_m \right)
    \sim \log\Prob \left(\mathcal{A}_m \right) &=
      -\Lambda_{\min}\big( m^{\eta/\rho} \big)
      \big(c^\prime \Icr + o(1)\big),     \nonumber 
%  \end{align}
%  as $m \rightarrow \infty.$ % Further, for every $\mv{\kappa}>0$, there
  % exists an $m$ large enough, such that for some $\delta_m\searrow 0$,
  % for all $\xx \in \{ \xx: \Lcr(\xx) > c(1+\delta_m)m^\eta\}$,
  % \begin{align}
  %   \Prob \big\{ \mathcal{E}_m  \vert \XX =\xx \big\}
  %   \geq 1-\mv{\kappa}
  %   \label{cr-prob-equiv-0LB}
  % \end{align}
   %    Consequently,
%  Moreover, 
%  \begin{align*}
  \end{align}
  for any sequence $\varepsilon_m \rightarrow 0$ and
  $\varepsilon_m m^{\eta r/\rho} \rightarrow \infty,$ where
  $r < \rho (1+\eta^{-1}) -\alpha_\ast.$ Here the constant
  $c^\prime:=c^{\alpha_\ast/\rho}$ and $\Icr$ are identified as,
  \begin{align}
 \Icr := \inf \big\{ I(\yy): \max_{I \in \mathcal{J}} \min_{k \in
      I} W_{k}^\ast(\mv{q}^\ast \yy^{\mv{1}/\mv{\alpha}}) \geq 1,
    \mv{y} \geq 0 \big\},
    \label{Icr}
  \end{align}
  in terms of
  $\mathcal{J} := \{ I \subseteq \{1,\ldots,J\}: \sum_{k \in I}
  \bar{c}_k \geq q\bar{e} \}$ and $\mv{q}^\ast$ specified as in
  Theorem \ref{thm:Tail-asymp}.
  \label{thm:credit-risk-asymp}
\end{theorem}

% Recall that the IS procedure introduced in Section \ref{sec:IS-prop}
% is readily applicable for the purpose of estimating
% $\Prob (\Lcr(\XX) > cm^\eta ).$ Indeed, the generality of the
% assumption on $L(\cdot)$ under which the tail asymptotic in Theorems
% \ref{thm:Tail-asymp}, \ref{thm:Var-Red} hold makes it simpler to
% characterize the large deviations rate and as well develop an
% efficient IS scheme for complex objectives such as in
     %      \eqref{loss-cr}.
Since the IS procedure introduced in Section \ref{sec:IS-prop} is readily applicable for estimating
$\Prob(\mathcal{A}_m),$ thanks to \eqref{cr-prob-equiv}, one may
suitably modify { it to arrive at Algorithm \ref{algo-IS-PCR-NN} below which is efficient} in estimating the desired excess loss probability $\Prob(\mathcal{E}_m).$ {
%Given a realization $\ZZ_i$ of the common factors, 
The conditional sampling of  default variables in Step 2a of Algorithm~\ref{algo-IS-PCR-NN} involving exponential twisting is conventional \citep[see, for eg.,][Chapter 9]{Gbook}.
The selection of a suitable IS distribution for the common factors $\XX,$ on the other hand, is often non-trivial \citep[see eg.,][]{GKS2008} and is unknown if the functions $W_{j}(\cdot)$ modeling the default probabilities are specified generally, say, for example, in terms of  a ReLU neural network as in \cite{RiskLoanPool}. This non-trivial task of arriving at a suitable IS distribution for $\XX$}  is however readily handled by the IS transformation $\mv{T}(\cdot)$ in Algorithm~\ref{algo-IS-PCR-NN}.
%\vspace{-30pt}
\begin{algorithm}[h!]
  \caption{IS algorithm for estimating the excess loss probability
    $\Prob(\mathcal{E}_m)$ }
  \label{algo-IS-PCR-NN}
  \vspace{5pt} \KwIn{ Loss threshold $q \in (0,1),$ $N$ independent
    samples $\XX_1,\ldots, \XX_N$ of $\XX$, loan-specific values
    $\{\mv{v}_1,\ldots,\mv{v}_m\},$ {$u := \gamma,$}
    hyper-parameter $l.$}
  \textbf{Procedure:}\\
  \textbf{1. Transform the samples and compute associated likelihood:}
  Compute IS samples $\ZZ_1,\ldots,\ZZ_n$ and their likelihoods
  $\mathcal{L}_1,\ldots\mathcal{L}_N$ as
  described in Steps 1 - 2 of  Algorithm~\ref{algo:IS}.\\
 \textbf{2. Obtain samples of portfolio loss:} For $k = 1,\ldots,N,$
  do the following steps:
  \begin{itemize}
%  \item[a)] Generate the loan default variables
%    $Y_{1,k},\ldots,Y_{m,k},$ where for $i = 1,\ldots,m,$ $Y_{i,k}$ is
%    independent Bernoulli random variable with success probability
%    $\tilde{p}_i(\ZZ_k)$ in (\ref{eqn:P_lambda}); specifically, the
%    parameter $\lambda$ in (\ref{eqn:P_lambda}) is chosen to be the
%    unique solution of the optimization problem
%    \[\min_{\gamma \geq 0} \{-\gamma q + \psi_m(\gamma, \ZZ_k)\} =:
%    g_k\]
%  \item[b)] Compute the conditional likelihood associated with the
%    collection $\{Y_{1,k},\ldots,Y_{m,k}\}$ by letting,
%%    \begin{align*}
%      $\mathcal{L}^y_{k} := \exp \left(-m g_k\right).$
%%    \end{align*}
%  \item[c)] Set $L_{m,k} := m^{-1}\sum_{i=1}^m e_i Y_{i,k}$ to be the
%    portfolio loss.
  \item[a)] Generate the loan default variables
    $Y_{1,k},\ldots,Y_{m,k},$ where for $i = 1,\ldots,m,$ $Y_{i,k}$
    are independent Bernoulli random variable with success probability
    $\tilde{p}_i(\ZZ_k),$  {where
   \begin{equation}
  \label{eqn:P_lambda}
  \tilde{p}_i(\zz) := \frac{p_{i}(\zz)\exp(\lambda e_i)} {1+ p_{i}(\zz)(\exp(\lambda e_i)-1)},
  \end{equation}
     $p_i(\zz) := P(Y_i=1 \ \vert \ \XX=\zz)$ is as in \eqref{loan-def-prob},} and $\lambda = \lambda_m(\ZZ_k)$ in (\ref{eqn:P_lambda}) is the unique solution of  $\min_{\theta \geq 0} \{-\theta q \bar{e}_m+ \psi_m(\theta,
    \ZZ_k)\}$ {with $\psi_m(\cdot,\mv{z})$ denoting the conditional cumulant
\begin{align*}
  \psi_m(\theta, \zz)
  = {m}^{-1}\log \Expc \big[\exp(\theta mL_m) \vert \ \XX=\zz \big] ={m}^{-1}\sum_{i=1}^{m} \log\big(1+ p_{i}(\zz)(\exp(\theta e_i) -1)).
  \end{align*}}
    \item[b)] Set $L_{m,k} := m^{-1}\sum_{i=1}^m e_i Y_{i,k}$ to be the
    portfolio loss.
  \item[c)] Compute the conditional likelihood associated with the
    collection $\{Y_{1,k},\ldots,Y_{m,k}\}$ by letting
%    \begin{align*}
      $\mathcal{L}^y_{k} := \exp \left(-m[\lambda_m(\ZZ_k) L_{m,k} - \psi_m(\lambda_m(\ZZ_k), \ZZ_k)]\right).$
%    \end{align*}
  \end{itemize}
%  \vspace{5pt}
  \textbf{3. Return the output estimator.} Return the IS average
  computed as in,
\[
  \bar{\zeta}_{N}(m) = \frac{1}{N} \sum_{k=1}^N\mathcal{L}_k\mathcal{L}^y_k
  \ \mathbb{I}(L_{m,k} \geq q\bar{e}_m). 
\]
\end{algorithm}

\begin{proposition}[Log-efficiency of Algorithm \ref{algo-IS-PCR-NN}]
  \label{prop:IS-NN}
  Suppose that $\XX$ satisfies the conditions in Theorem
  \ref{thm:Var-Red} and the density of $\XX$ is bounded away from $0$
  on compact subsets of $\R_d^+$. For the conditional default
  probabilities specified in (\ref{loan-def-prob}), let the functions
  $W_1(\cdot),\ldots,W_J(\cdot)$ satisfy Assumption \ref{assume:Wj}
  and the resulting constant $\Icr$ in \eqref{Icr} is finite. Then
  under the asymptotic regime in (\ref{cr-asymp-1}) we have 
  \begin{align*}
    \lim_{m \rightarrow \infty} \frac{\log E[\bar{\zeta}_N(m)^2]}
    {\log P(\mathcal{E}_m)^2} = 1, \qquad  N \geq 1.
  \end{align*}
  for any choice of the parameter $l$ which is slowly varying in $m.$
  In other words, the family of unbiased estimators returned by
  Algorithm \ref{algo-IS-PCR-NN} is logarithmically efficient in
  estimating $\Prob(\mathcal{E}_m).$
\end{proposition}

\section{Numerical Experiments}\label{sec:num-exp}
{We begin with a discussion on the selection of hyperparameter $l$ in Algorithm \ref{algo:IS} before presenting the results of numerical experiments. Recall that the IS estimator  returned by Algorithm \ref{algo:IS}  is unbiased for any choice of $l$ and the number of replications needed to attain a target relative precision is directly determined by the estimator variance. Therefore we seek to select $l$ which minimizes the sample average second moment estimate of the resulting IS estimator. Algorithm \ref{algo:RA_IS} below utilizes Retrospective Approximation (RA) for accomplishing this, as RA is effective in reducing the overall computational effort by balancing the errors due to sample based approximation of the objective and optimization \citep[see][]{pasupathy2010choosing}. %In iteration $k$ of the while loop in Step 2, 
The RA procedure in Step 2 seeks to progressively increase the sample size $m_k$ employed, if required, while capitalizing on the current iterate $l_{k-1}$ to obtain an improved choice $l_k.$ An intermediate quantile of the distribution of $L(\XX)$ computed from a small pilot simulation run is used to obtain the initial choice $l_0;$ as a rule of thumb, this may be chosen to be in the last two deciles of the distribution of $L(\XX)$. An advantage of our transformation based IS approach is that it requires obtaining samples only from the distribution of $\XX,$ and therefore the  collection of samples in earlier steps get fully reused in successive steps of RA and  in the computation of final IS estimate. In all the estimation tasks in Sections \ref{num:SPP} - \ref{numexp-pcr} below, we report results obtained for target relative precision requiring $\varepsilon = \alpha = 0.05$ in Algorithm~\ref{algo:RA_IS}  and with the initialisation  set to ${\tt tol} =q=0.1,m_0 = 500$, and $c=1.2$.

\begin{algorithm}[h!]
  \caption{Combining IS with Retrospective Approximation based hyperparameter search for estimating $P(L(\XX) \geq u)$ within $\varepsilon$ relative error with $(1-\alpha) \times 100\%$-confidence}
  \label{algo:RA_IS}
  \vspace{5pt} 
  %\KwIn{loss threshold $u$, target relative precision $\varepsilon > 0,$ confidence level $1-\alpha,$  specify if $\mv{\kappa}(\cdot)$ is chosen from \eqref{defn:kappa} to be either $\mv{\kappa} = \mv{\kappa}^{(1)}$ or $\mv{\kappa} = \mv{\kappa}^{(2)}.$}
%  \textbf{Steps: }\\
  \noindent \textbf{Step 0}: Initialize sample size $m_0$ for pilot run, tolerance {\tt tol} for terminating optimization,  intermediate quantile level $q \in [0.1,30/m_0),$ $c > 1,$ $\textsc{flag}_i = 0, \ i=1,2.$
  
 \noindent  \textbf{Step 1} (Pilot simulation run):  Draw $m_0$ i.i.d. samples  $\XX_1,\ldots \XX_{m_0}$ of $\XX.$ Assign $l_0 = L_{(\lceil q m_0 \rceil)},$ where $L_{(i)}$  is the $i$-th largest value in the collection $\{L(\XX_i): i = 1,\ldots,m_0\}.$
 
 \textbf{Step 2} (Retrospective Approximation): Set $k=1$. Do \textbf{while} $\textsc{flag}_1 = 0$,
  \begin{itemize}
  \item[a) ] Set $m_k = \lceil c m_{k-1} \rceil$ and draw i.i.d. samples $\XX_{m_{k-1}+1},\ldots, \XX_{m_k}$ of $\XX.$\\
  \item[b) ] Minimise second moment estimate  $\secmom(\XX_{1},\ldots,\XX_{m_k};l)$ numerically over $l$ with the initial iterate set to $l_{k-1}.$   Assign $M_k$ and $l_k,$ respectively, to be the optimal value and the solution obtained by solving until the absolute error between successive iterates becomes smaller than
  $\varepsilon_k = m_k^{-1/2}$. The procedure $\secmom(\xx_{1},\ldots,\xx_{k};l)$ for estimating second moment of the IS estimator given $l$ and a collection $\{\xx_1,\ldots,\xx_k\},$ for any $k,$  is as follows:

  \textbf{procedure} $\secmom(\xx_{1},\ldots,\xx_{k};l)$\\
  $\quad$ For $i=1,\ldots,k,$ set $\zz_i = (u/l)^{\mv{\kappa}(\xx_i)}, \mathcal{L}_i = \frac{f_{\XX}(\zz_i)}{f_{\XX}(\xx_i)}J(\xx_i),$ where $J(\xx)$ is as in Table \ref{tab:Jacobians}.\\
  $\quad$ Return the second moment estimate $\hat{M}_2(l) = \frac{1}{k}\sum_{i=1}^k \mathcal{L}_i^2 \mathbf{I}(L(\zz_i) \geq u).$\\
  \textbf{end procedure}
  
  \item[c) ] \textbf{If} relative improvement $(M_k-M_{k-1})/M_{k-1} < {\tt tol},$ %and the relative change in iterate $\vert l_k - l_{k-1}\vert /l_{k-1} < {\tt tol},$  
  terminate while loop by setting  $\textsc{flag}_1 = 1,$ choice of $l = l_k,$ $\zeta_i= \mathcal{L}_i \mathbf{I}(L(\ZZ_i) \geq u),$ with $\ZZ_i = (u/l)^{\mv{\kappa}(\XX_i)},$ $\mathcal{L}_i = \frac{f_{\XX}(\ZZ_i)}{f_{\XX}(\XX_i)} J(\XX_i),$ for $i\leq m_k$; \textbf{else} increment $k \leftarrow k+1.$\\
  \end{itemize}
  \textbf{end while}

%\textbf{Return:} Importance weighted samples $\zeta_i= \mathcal{L}_i \mathbf{I}(L(\ZZ_i) \geq u)$, where $\ZZ_i = (u/l)^{\mv{\kappa}(\XX_i)}$ and $\mathcal{L}_i = \frac{f_{\XX}(\ZZ_i)}{f_{\XX}(\XX_i)} J(\XX_n)$ for $1\leq i\leq m_k$ to reuse in step 3 

\textbf{Step 3} (Compute IS estimate with desired precision):  Set $n=m_k+1$.  Do \textbf{while} $\textsc{flag}_2= 0$,
\begin{itemize}
    \item [a) ] Draw an independent sample $\XX_n$ from the distribution of $\XX.$ Set $\zeta_n = \mathcal{L}_n \mathbf{I}(L(\ZZ_n) \geq u)$, where  $\ZZ_n =\XX_n (u/l)^{\mv{\kappa}(\XX_n)} $ and $\mathcal{L}_n =  \frac{f_{\XX}(\ZZ_n)}{f_{\XX}(\XX_n)} J(\XX_n) $. Evaluate  $\bar{\zeta}_n(u)$ as the sample mean and $\hat \sigma$ as the sample standard deviation of the collection $\{\zeta_i : i\leq n\}$. 
    \item [b) ] \textbf{If} $\Phi^{-1}(1-\frac{\alpha}{2}) \frac{\hat{\sigma}}{\sqrt{n}} <  \bar{\zeta}_n(u) \varepsilon$, terminate  loop by setting $\textsc{flag}_2 =1$; \textbf{else} increment  $n \leftarrow n+1$.
\end{itemize}
\textbf{end while} 

\noindent \textbf{Return} $\bar{\zeta}_n(u)$ as the estimate for $P(L(\XX) \geq u).$\\
\end{algorithm}

% Note that unlike the method of \cite{lemaire2010}, RA does not require oracle access to gradients of the objective and therefore is readily usable in cases where one only has black box access to loss and density evaluations. The precise implementation of a single run of the RA procedure is given below. Note that Algorithm~\ref{algo:RA_IS} remains valid even if the Nelder-Mead algorithm in Step 2b) may be replaced by an alternate derivative-free optimisation method.  With the sample size and tolerance selected as in Algorithm~\ref{algo:RA_IS}, Theorems 1 and 5 of \cite{pasupathy2010choosing}  demonstrate that RA converges to a locally optimum value of $l$. In order to mitigate this local convergence issue, one may restart the RA algorithm at different initial points (by choosing different values of $q_0$) until the improvement in optimal objective values between successive restarts drops below a preset threshold; this is in spirit with the average expected improvement metric used as  stopping criterion for numerical algorithms in literature on global optimisation; see for example \cite{huang2006global}. 

}

\subsection{Illustration with the contextual shortest path problem}\label{num:SPP}
Here we employ the IS scheme for the contextual shortest path problem considered in \cite{elmachtoub2020smart}.  The goal is to travel from the north-west corner to the south-east corner of a $5\times 5$ grid. From any given vertex, only edges which travel south or east are available.  Associated with each edge $j$ (enumerated as
$\{1,\ldots 40\}$) is a traversing cost $C_j(\mv{S},\mv{\varepsilon})$ determined by contextual side information $\mv{S}$ and an additional random vector $\mv{\varepsilon}.$ The context $\mv{S} \in \R^5$ is
taken to have independent Weibull marginals, with
$F_i(x) = 1-\exp(-x^{0.5}),$ for $1\leq i\leq 5 $.  The cost $C_j(\mv{S},\mv{\varepsilon})$ is given by, 
\begin{equation}\label{eqn:Grigas_cost}
  \mv{C}(\mv{S},\mv{\varepsilon}) = \big[ 5^{-1/2} (\mv{B} \mv{S} +\mv{3})^{\textrm{deg}} +\mv{1}  \big]\mv{\varepsilon},
\end{equation}
where $\mv{B}$ is a fixed $40\times 5$ matrix, $\text{deg} \in (0,\infty)$ allows nonlinear dependence on $\mv{S},$ and the independent noise $\mv{\varepsilon}$ has i.i.d. components uniformly distributed on $[1-a,1+a]$. % The constant '$\textrm{deg}$' signifies the degree of
                  %                   mis-specification in the model.
The loss $L(\mv{C}) = L(\mv{S},\mv{\varepsilon})$ is given as in \eqref{spo-eg}, where $\mv{\Theta}$ is the shortest path polytope on the considered  $5 \times 5$ grid. 
% The loss $L(\mv{C}) $ is the solution to the shortest path problem % (omitting the dependence of $\mv{C}$ on $\mv{S}$ and % $\mv{\varepsilon}$ for brevity);
% $L(\mv{C}) = \min_{\mv{w}\in K }\mv{C}^\intercal\mv{w}$, where $K$ is % the polytope corresponding to the shortest path on the $5\times 5$
% grid.

\textbf{Experiment 1: } We consider the estimation of
$p_u := P(L(\mv{C}) \geq u),$ for values of $u$ resulting in $p_u \in [10^{-5.5}, 10^{-2.5}].$ The probability of large travel cost is estimated as in Algorithm \ref{algo:RA_IS} by averaging i.i.d. samples of $\mathcal{L} \mv{I}(L(\tilde{\mv{C}}) \geq u),$ where  $\tilde{\mv{C}}= \mv{C} (\mv{T}(\mv{S}),\mv{\varepsilon})$  is the IS cost vector and $\mathcal{L}$ is
the respective likelihood ratio. Letting $V_u$ denote the sample variance of the {collection $\{\mathcal{L}_i \mv{I}(L(\tilde{\mv{C}_i}) \geq u) :i \leq n\}$}, we plot $\log V_u$ against $\log p_u$ in Figure \ref{fig:Context}(a)  {obtained for/ both the choices of  $\mv{\kappa}$ in \eqref{defn:kappa} and} parameter choices
$a =0.25, \text{ deg} = 1.3.$  Note that the plot for IS variance is approximately a straight line with a slope of 2, supporting the conclusion $V_u = p_u^{2 + o(1)}$ from Theorem \ref{thm:Var-Red}. { Naive sample averaging, on the other hand, can be seen to have orders of magnitude larger variance in Figure \ref{fig:Context}(a).} 
{\color{blue}}

\textbf{Experiment 2:} Here we consider the predictive setting where a
routing decision
$\hat{\mv{\theta}}(\hat{\mv{C}}) =\arg\min_{\mv{\theta}\in
  \mv{\Theta}} \mv{\theta}^\intercal\mv{\hat{C}}$ is obtained by
plugging in the cost $\hat{\mv{C}}$ predicted from the realized
contextual side information $\mv{S}.$ Note that while the realized
cost $\mv{C}$ depends on both $\mv{S}$ and $\mv{\varepsilon},$ the
realization of $\mv{\varepsilon}$ is not available at the time of
decision-making.
% In order to make predictions, a practitioner must therefore estimate
% the costs of transversal along edges, having being presented with a
% realisation of the context $\XX$.
Suppose that similar to \cite[Section 6.1]{elmachtoub2020smart}, the
cost is predicted from a linear model
$\hat{\mv{C}} = {\mv{\hat{A}}}\mv{S};$ we take % we obtain
% ${\mv{\hat A}}$ by solving
$\mv{\hat{A}}\in \arg\min\left\{ 10^{-3} \sum_{i=1}^{10^3}
     \left\|\mv{C}_i-\mv{A}\mv{S}_i\right\|_2^2 \right\}$,
estimated from historical data,
$(\mv{S}_1, \mv{C}_1),\ldots, (\mv{S}_{1000}, \mv{C}_{1000}).$ The
total cost realized by deploying the decision
$\hat{\mv{\theta}}(\mv{C})$ is then given by
$[\mv{\hat{\theta}}(\mv{\hat{C}})]^\intercal\mv{C},$ where the
edge-traversal costs $\mv{C}$ satisfy \eqref{eqn:Grigas_cost}. A risk manager is then naturally interested in evaluating tail risk
probabilities such as in
$p_u:=P( [\mv{w}(\hat{\mv{C}})]^\intercal \mv{C} >u),$ that have to be
borne from deploying the routing decision
$\hat{\mv{\theta}}(\hat{\mv{C}}).$ Drawing 
%$N=2\times 10^3$ i.i.d.
samples 
%$\mv{S}_1, \ldots, \mv{S}_{N}$ 
for this purpose (drawn
independently of those used to estimate ${\mv{\hat A}}$), we first
generate the transformed tuple
$(\ZZ_i,\mv{\hat{D}}_i,\mv{\tilde{C}}_i)_{i\geq 1};$ here $\ZZ_i=\mv{T}(\mv{S}_i)$ is the IS vector,
%(with $\mv{\kappa}_j$ as indicated in Figure~\ref{fig:Context})
$\mv{\tilde{\mv{C}}}_i = \mv{C}(\ZZ_i,\mv{\varepsilon}_i)$ and
$\mv{\hat{D}}_i=\mv{\hat{A}}\ZZ_i$ serve as the realized and predicted
costs, and $\hat{\mv{w}}(\hat{\mv{D}}_i)$ denotes the respective
shortest paths.  Our IS estimator for the probability
$p_u$ is then computed by averaging over 
$\{\mathcal{L}_i
\mv{I}([ \hat{\mv{w}}(\hat{\mv{D}}_i)]^\intercal\mv{\tilde{C}}_i )
\geq u): i \leq n\}$ as in  Step 3 of Algorithm~\ref{algo:RA_IS}.  We plot $\log V_u$ (denoting the sample log variance) against $\log p_u$ in
Figure~\ref{fig:Context}(b) for the same parameter choices {in Experiment 1.} As before, we observe that the plot is approximately a straight line with a slope of $2$.

Figure \ref{fig:Context}(c) presents additional details on cross-validation by plotting the logarithm of estimator variance,
$ \log V_{u},$ observed (in red) for different choices of
hyper-parameter $l$ considered. In Figure \ref{fig:Context}(c), the
level $u = 150$ is such that $p_u\approx 2\times 10^{-6}$. The high degree of
variance reduction (exceeding 99.99\%) in the interval
$l \in (15,35)$ demonstrates that the estimator variance is not
unduly sensitive to the choice of parameter $l.$ %Furthermore, the sample variances estimated from $N=2\times 10^3$ samples closely approximate the true variance (green crosses). 
{Further, we observe that Step 2 concludes after $m_k = 2152$ samples} and the estimated sample variance closely approximate the true variance.  {Thus $m_k$ can be seen to constitute only a fraction of the total sample requirement: for eg., a total of $n = {16157}$ samples were required (on an average), in the estimation of $p_u \approx 10^{-4}$ with the stated precision.}
% \begin{figure}[h!]
% \FIGURE
% {
% \includegraphics[width=\textwidth]{V1/}
% % \includegraphics[width=0.32\textwidth]{V1/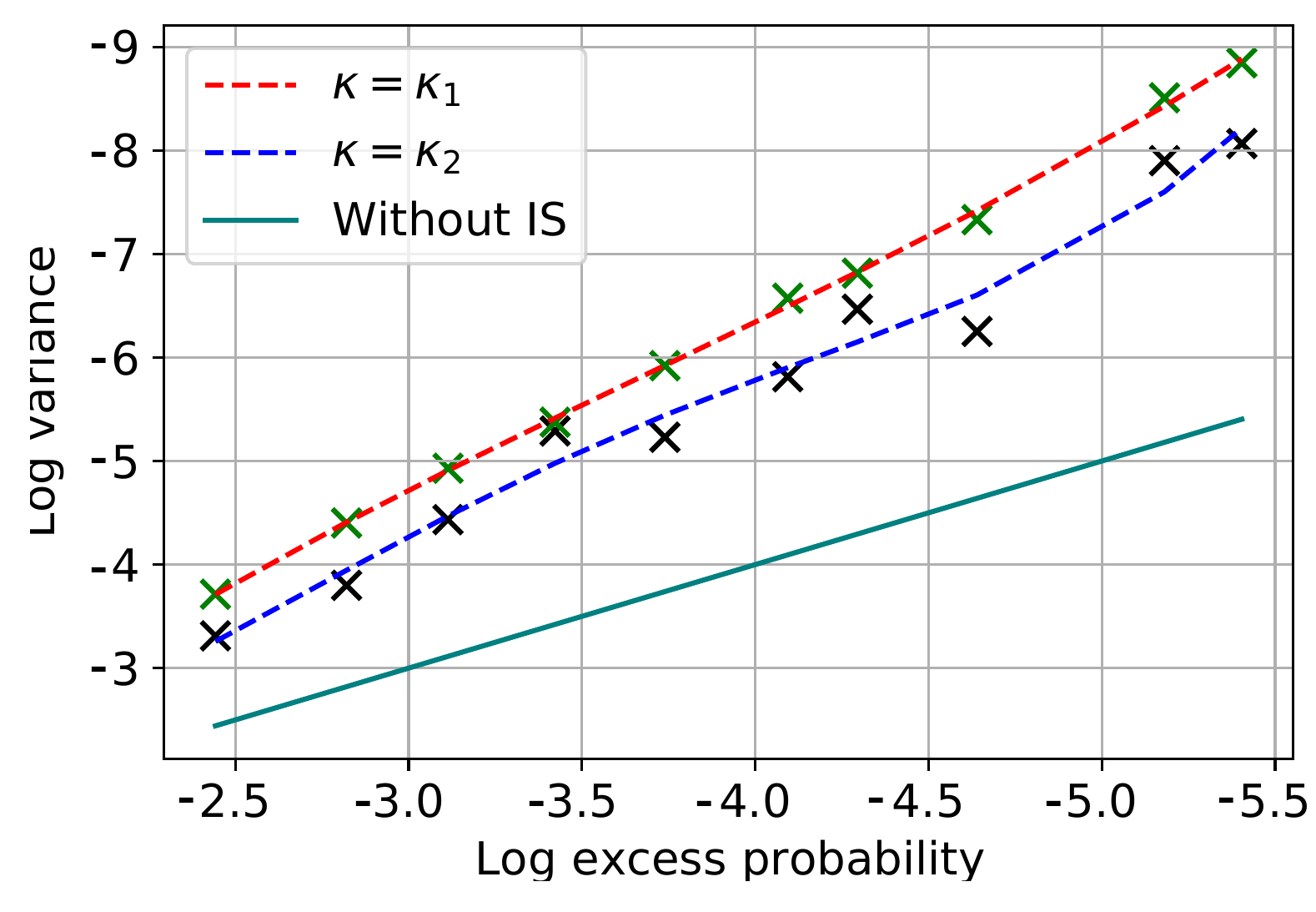}
% % \includegraphics[width=0.32\textwidth]{V1/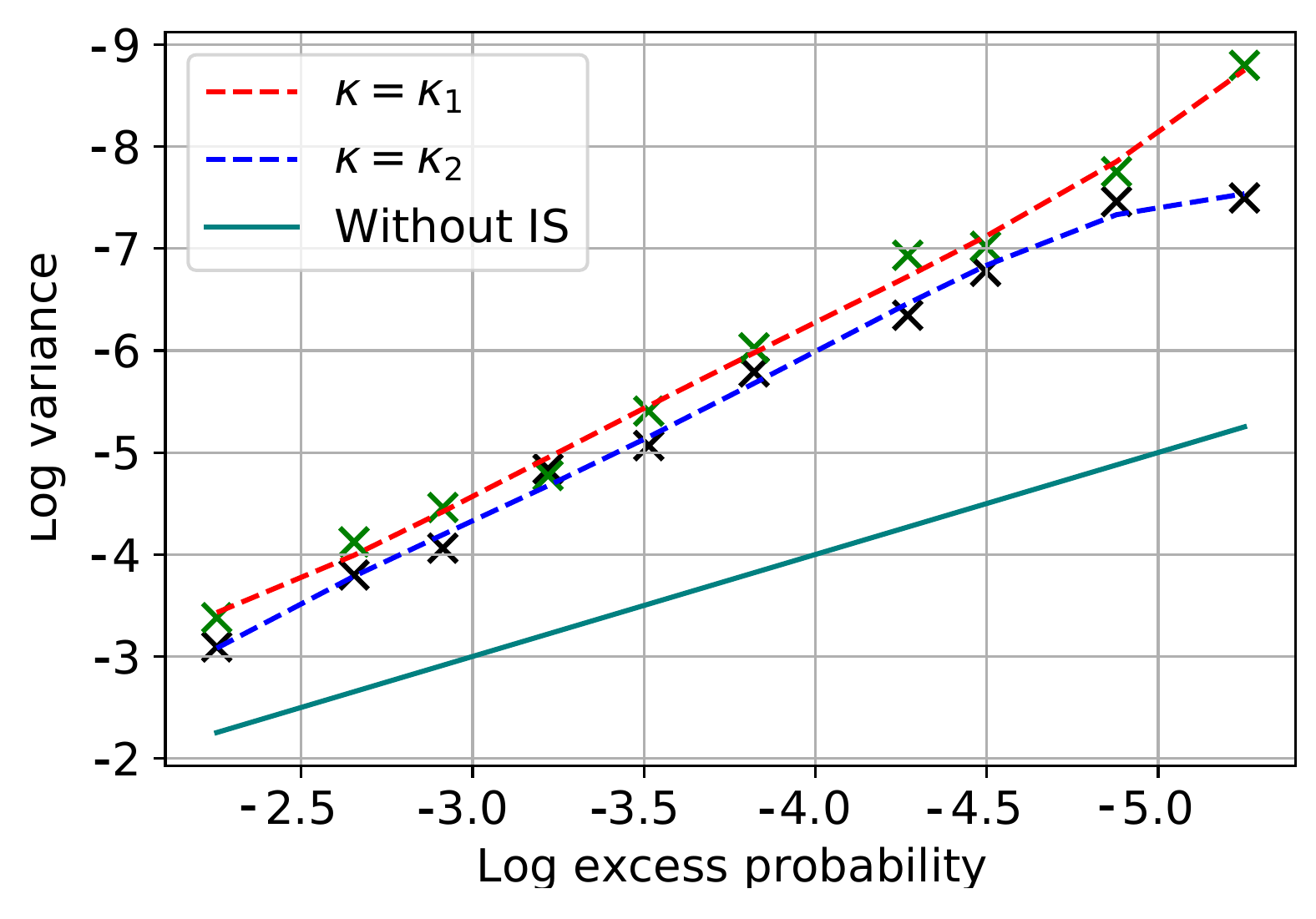}
% % \includegraphics[width=0.32\textwidth]{V1/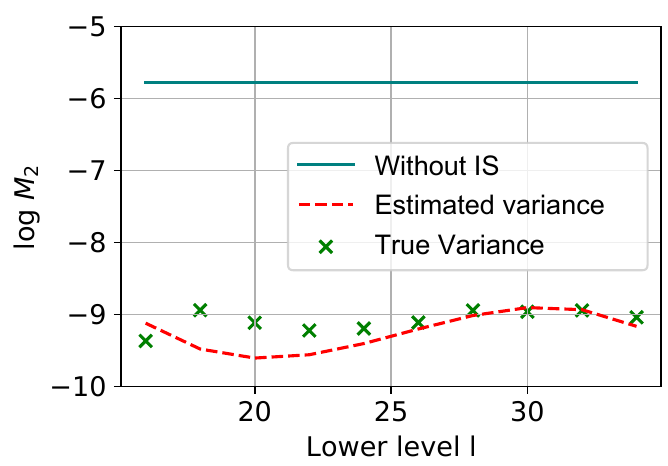}
% }
% {
% Variance of the proposed IS estimator, illustrated via
%         a log-log plot, for the contextual shortest path problem.
%         Lines (solid and dashed) represent a polynomial fit to the
%         variance values marked via crosses.
%       \label{fig:Context}
% }
% {
% From left to right: Panel 1: Estimation of $P(L(\mv{C})\geq u)$, Panel 2: Estimation of $P( [\mv{w}(\hat{\mv{C}})]^\intercal \mv{C} \geq u)$, Panel 3: $l$ vs second-moment $\hat{M}_2(l)$ 
% }
% \end{figure}
\begin{figure}[h!]


      \begin{center}
        \begin{subfigure}{0.32\textwidth}
          \includegraphics[width=1\textwidth]{ContextualIS_Revision_Model_Known_Kappas_V2.pdf}
          \caption{\small{Estimation of $P(L(\mv{C})\geq u)$}}
       \end{subfigure}
               \begin{subfigure}{0.33\textwidth}
          \includegraphics[width=0.97\textwidth]{ContextualIS_Revision_Model_Unknown_Kappas_V2.pdf}
          \caption{\small{Estimation of
              $P( [\mv{w}(\hat{\mv{C}})]^\intercal \mv{C} \geq u)$ }}
        \end{subfigure}
        \begin{subfigure}{0.31\textwidth}
          \includegraphics[width=1\textwidth]{CrossValidateContextual_Revision.pdf}
          \caption{\small{$l$ vs second-moment $\hat{M}_2(l)$ }}
        \end{subfigure}
      \end{center}
      \caption{Variance of the proposed IS estimator, illustrated via
        a log-log plot, for the contextual shortest path problem.
        Lines (solid and dashed) represent a polynomial fit to the
        variance values marked via crosses.}
      \label{fig:Context}
    \end{figure}
\subsection{Illustration with the portfolio credit risk model}
\label{numexp-pcr}
For the portfolio credit risk model considered in Section
\ref{sec:CR}, we take a pool of $m = 3000$ loans of a single type,
each with an exposure $e_i = 1.$ As in Section \ref{num:SPP}, the
common factors $\XX \in \mathbb{R}^5$ are taken to possess standard
Weibull marginal distributions { with shape parameter $0.8$.} The dependence is informed by a
Gaussian copula whose non-zero off-diagonal entries of the correlation
matrix $R$ are taken to be $[R]_{i,j} = 0.2,$ for
$\vert j - i \vert = 1.$ We consider cases where the conditional
default probabilities are given by (a) the logit model in
\eqref{loan-def-prob} and (b) the discrete default intensity of the
form, $P(Y_i=1\, \vert \, \XX) = 1-\exp(-\e^{W(\XX) - \gamma}).$ The
function $W(\xx)= \mv{1}^{\intercal}(\mv{A}\xx -\mv{b})^{+}$ is
informed by a ReLU neural network with $1$ hidden layer with weights
${A}_{i,j} = 1/5$ for all $(i,j)$ and $\mv{b} = \mv{0}.$

Taking the loss threshold $q = 0.2$ in \eqref{defn:Em}, we aim to
estimate the probability of excess default loss
$p_\gamma := \Prob (\mathcal{E}_m);$ the parameter $\gamma$ in the
conditional default probabilities dictate the rarity of loan defaults
and is varied in the experiments so that the respective $p_\gamma$
varies from $10^{-2}$ to $10^{-5}.$ In Figure \ref{fig:PCRLDP} below,
we report $\log V_\gamma$ against $ \log p_\gamma,$ where $V_\gamma$
is the sample variance of the IS estimator.
%computed using {\color{blue} steps 1-2 of Algorithm \ref{algo:RA_IS}, where the importance sampling procedure for the loss $L_m$ for a given $l$ is as given by Algorithm~\ref{algo-IS-PCR-NN} with $\mv{Z}_i = \mv{T}(\XX_i)$ ($\mv{\kappa}$ in the definition of $\mv{T}$ is as indicated in the respective figures), and $p_\gamma$ is the probability of large loss computed to with the precision parameters $\alpha,\varepsilon$ as given in Section~\ref{num:SPP}.}  
% For sub-exponential covariates, cross validation
% of estimation variance (for both the logit and default intensity
% models) in the interval $(0.9,2.4)$ results in the hyper-parameter
% choice $l = 1.5.$ Likewise, cross-validation over the interval
% $(0.8,1.9)$ in the case of super-exponential covariates results in the
% hyper-parameter choice $l = 1.4.$ 
As in Section \ref{num:SPP}, the
ratio between the logarithm of variance of IS estimator and that of
the naive estimator (without IS) ranges over the interval $(1.6,1.9)$,
which points to the IS estimator possessing negligible variance
compared to that of sample averaging without IS. 
% \begin{figure}
% \FIGURE
% {
% % \includegraphics[width=0.45\textwidth]{V1/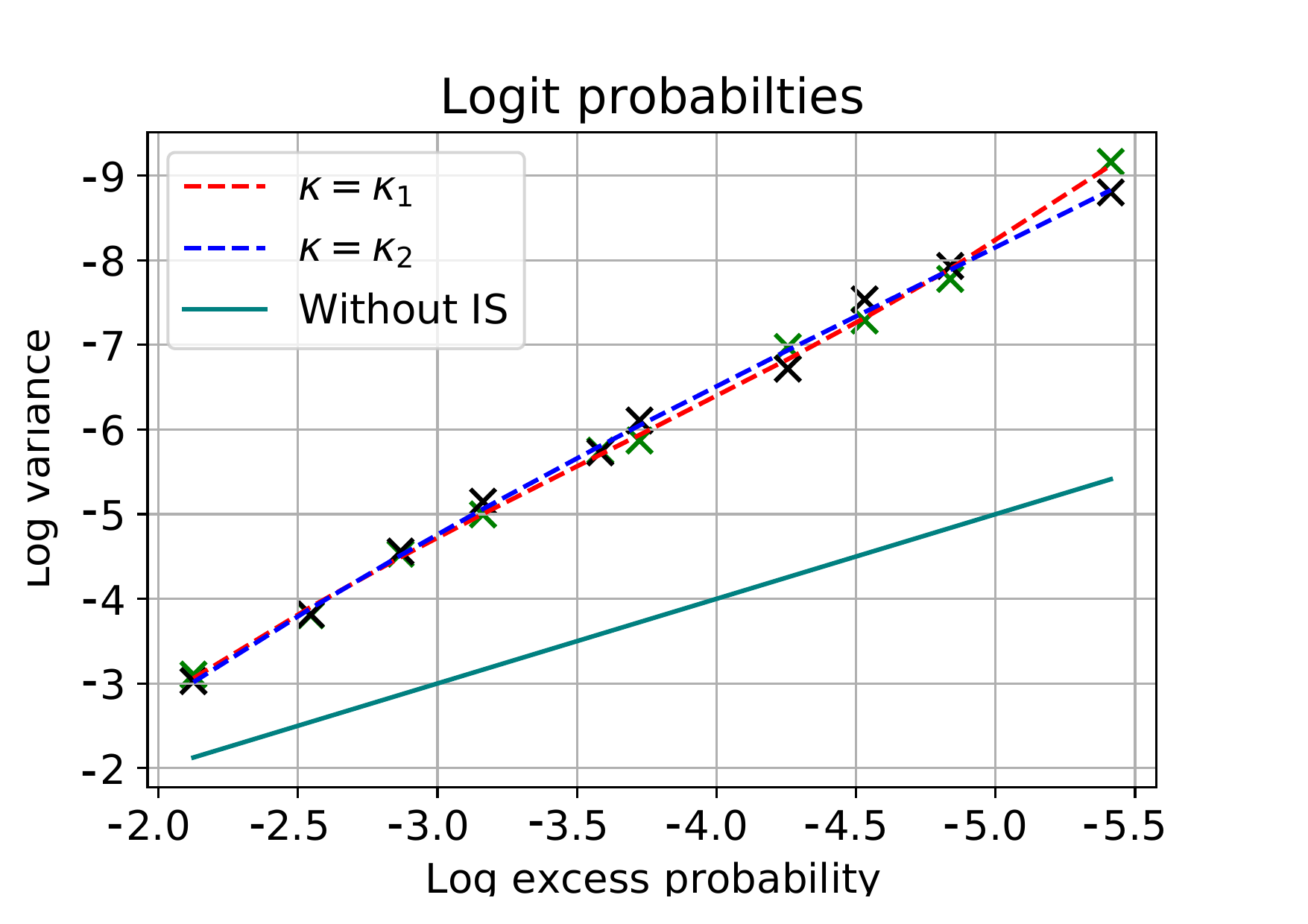}
% %  \includegraphics[width=0.45\textwidth]{V1/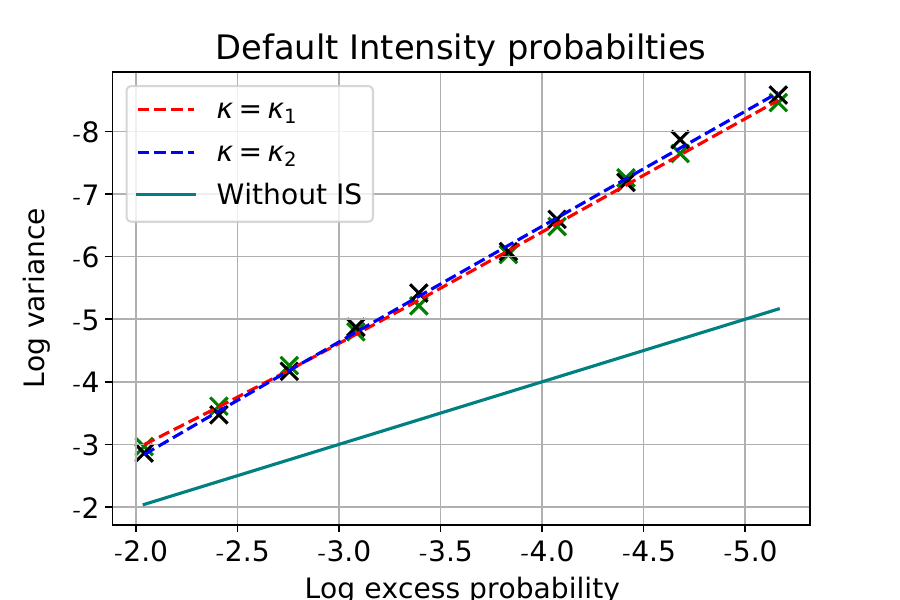}
% \includegraphics[width=\textwidth]{V1/}
% }
% {Plots displaying the logarithm of sample
%           variance, $ \log V_\gamma ,$ of the IS estimator
%           versus $ \log p_\gamma,$ where $p_\gamma$ is
%           excess default loss probability estimated.
%       \label{fig:PCRLDP}}
% {
% From left to right: Panel 1 - Logit loan default probabilities. Panel 2 - Intensity based default probabilities
% }
% \end{figure}
\begin{figure}[h!]
      \begin{center}
        \begin{subfigure}{0.4\textwidth}
          \includegraphics[height = 0.7\textwidth, width=0.95\textwidth]{Logit_Prob_Revision_Kappas.pdf}
          \caption{\small{Logit loan default probabilities}}
       \end{subfigure}
               \begin{subfigure}{0.4\textwidth}
          \includegraphics[height=0.7\textwidth,width=0.95\textwidth]{DI_Prob_Revision_Kappas.pdf}
          \caption{\small{Intensity based default probabilities}}
        \end{subfigure}
      \end{center}
      \caption{{Plots displaying the logarithm of sample
          variance, $ \log V_\gamma ,$ of the IS estimator
          versus $ \log p_\gamma,$ where $p_\gamma$ is
          excess default loss probability estimated.}}
      \label{fig:PCRLDP}
    \end{figure}
{

\subsection{Comparison with Cross-Entropy}\label{num:CE}
In order to benchmark the number of samples and loss evaluations required by Algorithm~\ref{algo:RA_IS}  against a state-of-the-art adaptive IS algorithm, we compare its performance with that of cross-entropy. To this end, we consider evaluation of exceedence probabilities of (i) the PERT network loss from \cite{rubinstein2013cross} and (ii) neural network loss $W(\cdot)$ defined in Section~\ref{numexp-pcr}. We assume that $\XX$ has independent and exponential marginals. For cross-entropy, in line with the approach in \cite{rubinstein2013cross}, we search for an IS distribution from among distributions with independent exponential marginals. For evaluation of probabilities  using the proposed method, we use Algorithm~\ref{algo:RA_IS}. To facilitate a comparison of the two methods, we plot in Figure~\ref{fig:CE_comp} the total number of samples required (across {30} independent experiments) to carry out the estimation to within a relative precision of $\varepsilon=0.05$ with confidence $\alpha = 0.05$ using the respective method. Observe that in either setting, at a probability of $10^{-5}$, the sample requirement for the proposed IS is smaller by a factor exceeding $10^3.$ 
%CE takes just under $10^8$ samples to achieve the target accuracy, while for Algorithm~\ref{algo:RA_IS}, only $n \approx 10^4$ samples are required to return $\bar{\zeta}_n(u)$ to the required accuracy. 
Further, cross-entropy method takes just under $10^8$ loss evaluations to cross-validate and compute the probabilities, while for the same task IS requires only about $10^5$.

% \begin{figure}
% \FIGURE{
%  % \includegraphics[width=0.45\textwidth]{V1/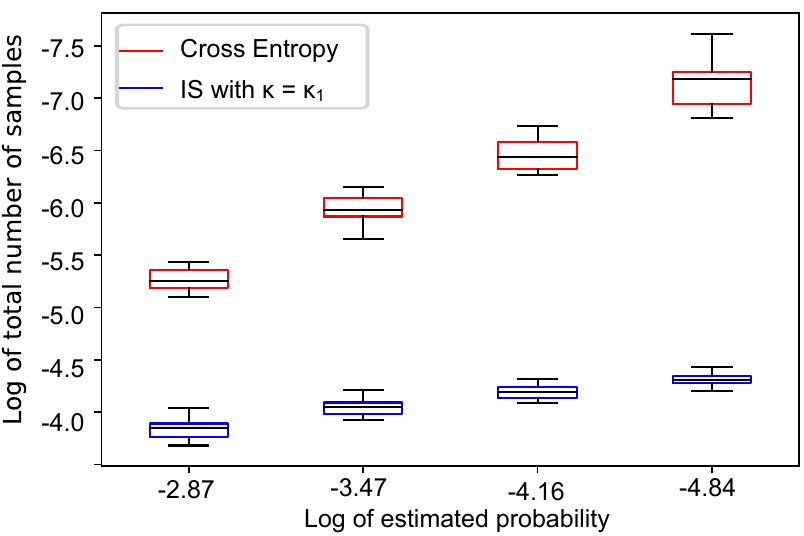}
%  % \includegraphics[width=0.45\textwidth]{V1/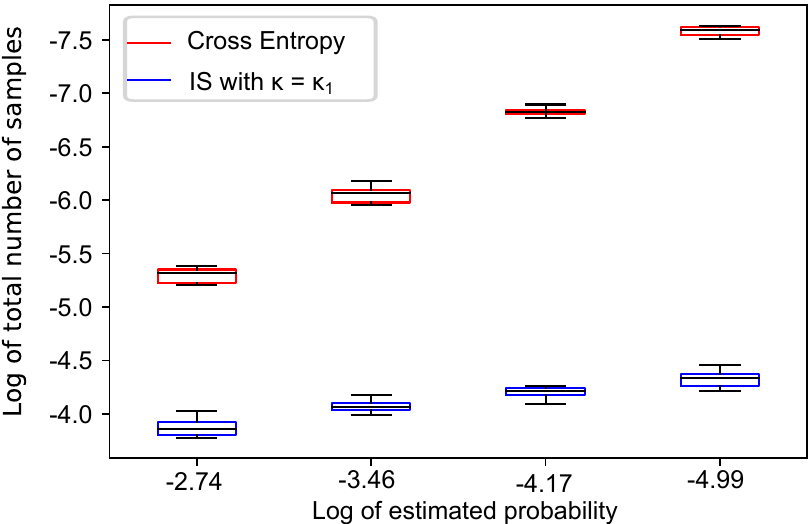}
%  \includegraphics[width=\textwidth]{V1/}
% }
% {Box-plots comparing sample requirements of cross-entropy method (in red) with those of the proposed sampler (in blue): Cross-entropy expends about $10^3$ times more effort than the proposed sampler. \label{fig:CE_comp} }
% {
% Left Panel: PERT Network. Right Panel: Neural Network
% }
% \end{figure}

\begin{figure}[h!]
      \begin{center}
        \begin{subfigure}{0.4\textwidth}
          \includegraphics[width=\textwidth]{CE-vs-SSIS_BoxPlot_mu2.pdf}
          \caption{\centering{PERT Network}}
       \end{subfigure}
               \begin{subfigure}{0.4\textwidth}
          \includegraphics[width=\textwidth]{CE-vs-SSIS_BoxPlot_NN_Total_Samples.pdf}
         \caption{\centering {Neural Network}}
        \end{subfigure}
        \caption{{Box-plots comparing sample requirements of cross-entropy method (in red) with those of the proposed sampler (in blue): Cross-entropy expends about $10^3$ times more effort than the proposed sampler.}} \label{fig:CE_comp} 
      \end{center}         
    \end{figure}
% }  
\noindent
\textbf{Acknowledgements.} Support from Singapore Ministry of
Education's AcRF Tier 2 grant MOE2019-T2-2-163 is gratefully
acknowledged.{We thank the anonymous reviewers and the Associate Editor for the insightful  suggestions which have helped improve the paper in various respects.}
% \section*{Author Biographies}
% \noindent \textbf{Anand Deo} is an assistant professor at the Indian Institute of Management, Bangalore. His research focuses around modelling and design of algorithms that are robust to distribution tails. He holds a Ph.D. from the Tata Institute of Fundamental Research, Mumbai 

% \noindent\textbf{Karthyek Murthy} is an assistant professor at the Singapore University of Technology and Design. His research interests revolve around stochastic modeling, optimization under uncertainty, and Monte Carlo methods. He focuses on developing robust and efficient data-driven algorithms for decision-making under uncertainty, with applications in quantitative risk management, smart grid systems, and traffic flow management problems.
    
\bibliographystyle{authordate1}
\bibliography{references}

\begin{thebibliography}{}

\bibitem[\protect\citename{Ahn \& Kim, }2018]{ahn2018efficient}
Ahn, Dohyun, \& Kim, Kyoung-Kuk. 2018.
\newblock Efficient Simulation for Expectations over the Union of Half-Spaces.
\newblock {\em ACM Trans. Model. Comput. Simul.}, {\bf 28}(3).

\bibitem[\protect\citename{Arief {\em et~al.}, }2021]{arief2020deep}
Arief, Mansur, Huang, Zhiyuan, Kumar, Guru Koushik~Senthil, Bai, Yuanlu, He,
  Shengyi, Ding, Wenhao, Lam, Henry, \& Zhao, Ding. 2021.
\newblock Deep probabilistic accelerated evaluation: A robust certifiable
  rare-event simulation methodology for black-box safety-critical systems.
\newblock {\em Pages  595--603 of:} {\em International Conference on Artificial
  Intelligence and Statistics}.
\newblock PMLR.

\bibitem[\protect\citename{Asmussen \& Glynn, }2007]{AGbook}
Asmussen, S., \& Glynn, P.W. 2007.
\newblock {\em Stochastic Simulation: Algorithms and Analysis}.
\newblock Springer New York.

\bibitem[\protect\citename{Asmussen \& Kroese, }2006]{AK2006}
Asmussen, Søren, \& Kroese, Dirk~P. 2006.
\newblock Improved algorithms for rare event simulation with heavy tails.
\newblock {\em Advances in Applied Probability}, {\bf 38}(2), 545–558.

\bibitem[\protect\citename{Bai {\em et~al.}, }2022]{bai2020rareevent}
Bai, Yuanlu, Huang, Zhiyuan, Lam, Henry, \& Zhao, Ding. 2022.
\newblock Rare-Event Simulation for Neural Network and Random Forest
  Predictors.
\newblock {\em ACM Trans. Model. Comput. Simul.}, {\bf 32}(3).

\bibitem[\protect\citename{Ban \& Rudin, }2019]{VahnRudin}
Ban, Gah-Yi, \& Rudin, Cynthia. 2019.
\newblock The Big Data Newsvendor: Practical Insights from Machine Learning.
\newblock {\em Operations Research}, {\bf 67}(1), 90--108.

\bibitem[\protect\citename{Bassamboo {\em et~al.}, }2008]{BJZ2006}
Bassamboo, Achal, Juneja, Sandeep, \& Zeevi, Assaf. 2008.
\newblock Portfolio Credit Risk with Extremal Dependence: Asymptotic Analysis
  and Efficient Simulation.
\newblock {\em Operations Research}, {\bf 56}(3), 593--606.

\bibitem[\protect\citename{Beer {\em et~al.}, }1992]{beer1992characterization}
Beer, Gerald, Rockafellar, RT, \& Wets, Roger J-B. 1992.
\newblock A characterization of epi-convergence in terms of convergence of
  level sets.
\newblock {\em Proceedings of the American Mathematical Society}, {\bf 11}(3),
  753--761.

\bibitem[\protect\citename{Blanchet \& Mandjes, }2009]{blanchet2009rare}
Blanchet, J, \& Mandjes, M. 2009.
\newblock Chapter 5. Rare Event Simulation for Queues.
\newblock {\em In:} Rubino, Gerardo, \& Tuffin, Bruno (eds), {\em Rare event
  simulation using Monte Carlo methods}.
\newblock John Wiley \& Sons.

\bibitem[\protect\citename{Blanchet {\em et~al.}, }2019]{blanchet2019rare}
Blanchet, Jose, Li, Juan, \& Nakayama, Marvin~K. 2019.
\newblock Rare-Event Simulation for Distribution Networks.
\newblock {\em Operations Research}, {\bf 67}(5), 1383--1396.

\bibitem[\protect\citename{Borovkov, }2008]{BorovkovBorovkov}
Borovkov, A.A. 2008.
\newblock {\em Asymptotic analysis of random walks}.
\newblock  Vol. 118.
\newblock Cambridge University Press.

\bibitem[\protect\citename{Chen {\em et~al.}, }2018]{chen_rhee_zwart_2018}
Chen, Bohan, Rhee, Chang-Han, \& Zwart, Bert. 2018.
\newblock Importance sampling of heavy-tailed iterated random functions.
\newblock {\em Advances in Applied Probability}, {\bf 50}(3), 805–832.

\bibitem[\protect\citename{Collamore, }2002]{collamore2002}
Collamore, J.F. 2002.
\newblock Importance Sampling Techniques for the Multidimensional Ruin Problem
  for General Markov Additive Sequences of Random Vectors.
\newblock {\em Ann. Appl. Probab.}, {\bf 12}(1), 382--421.

\bibitem[\protect\citename{de~Haan \& Ferreira, }2010]{deHaanFerreira2010}
de~Haan, Laurens, \& Ferreira, Ana. 2010.
\newblock {\em Extreme Value Theory: An Introduction (Springer Series in
  Operations Research and Financial Engineering)}. 1st edition. edn.
\newblock Springer.

\bibitem[\protect\citename{de~Valk, }2016]{de2016approximation}
de~Valk, Cees. 2016.
\newblock Approximation and estimation of very small probabilities of
  multivariate extreme events.
\newblock {\em Extremes}, {\bf 19}(4), 687--717.

\bibitem[\protect\citename{Dembo \& Zeitouni, }1998]{Dembo}
Dembo, Amir, \& Zeitouni, Ofer. 1998.
\newblock {\em Large Deviations Techniques and Applications}.
\newblock Springer.

\bibitem[\protect\citename{Deo {\em et~al.}, }2022]{deo2022combining}
Deo, A., Murthy, K., \& Sarker, T. 2022.
\newblock Combining Retrospective Approximation with Importance Sampling for
  Optimising Conditional Value at Risk.
\newblock {\em P.1-12 of: 2022 Winter Simulation Conference (WSC)}.

\bibitem[\protect\citename{Deo \& Juneja, }2020]{DeoJuneja}
Deo, Anand, \& Juneja, Sandeep. 2020.
\newblock Credit Risk: Simple Closed-Form Approximate Maximum Likelihood
  Estimator.
\newblock {\em Operations Research}.

\bibitem[\protect\citename{Deo \& Murthy, }2021]{deo2021efficient}
Deo, Anand, \& Murthy, Karthyek. 2021.
\newblock Efficient black-box importance sampling for VaR and CVaR estimation.
\newblock {\em Pages  1--12 of:} {\em 2021 Winter Simulation Conference (WSC)}.
\newblock IEEE.

\bibitem[\protect\citename{Dupuis {\em et~al.}, }2007]{DLW2007}
Dupuis, Paul, Leder, Kevin, \& Wang, Hui. 2007.
\newblock Importance Sampling for Sums of Random Variables with Regularly
  Varying Tails.
\newblock {\em ACM Trans. Model. Comput. Simul.}, {\bf 17}(3).

\bibitem[\protect\citename{Einmahl {\em et~al.}, }2021]{einmahl2021testing}
Einmahl, John~HJ, Yang, Fan, \& Zhou, Chen. 2021.
\newblock Testing the multivariate regular variation model.
\newblock {\em Journal of Business \& Economic Statistics}, {\bf 39}(4),
  907--919.

\bibitem[\protect\citename{Elmachtoub \& Grigas, }2022]{elmachtoub2020smart}
Elmachtoub, A., \& Grigas, P. 2022.
\newblock Smart “Predict, then Optimize”.
\newblock {\em Management Science}, {\bf 68}(1), 9--26.

\bibitem[\protect\citename{Embrechts {\em et~al.}, }2001]{copulaEmbrechts}
Embrechts, P., Lindskog, F., \& McNeil, A. 2001.
\newblock Modelling dependence with copulas.
\newblock {\em Rapport technique}.

\bibitem[\protect\citename{Glasserman, }2004]{Gbook}
Glasserman, Paul. 2004.
\newblock {\em Monte Carlo methods in financial engineering}.
\newblock New York: Springer.

\bibitem[\protect\citename{Glasserman \& Li, }2005]{GLi2005}
Glasserman, Paul, \& Li, Jingyi. 2005.
\newblock Importance Sampling for Portfolio Credit Risk.
\newblock {\em Management Science}, {\bf 51}(11), 1643--1656.

\bibitem[\protect\citename{Glasserman \& Wang, }1997]{glasserman_wang}
Glasserman, Paul, \& Wang, Yashan. 1997.
\newblock {Counterexamples in importance sampling for large deviations
  probabilities}.
\newblock {\em The Annals of Applied Probability}, {\bf 7}(3), 731 -- 746.

\bibitem[\protect\citename{Glasserman {\em et~al.}, }2000]{GHS2000}
Glasserman, Paul, Heidelberger, Philip, \& Shahabuddin, Perwez. 2000.
\newblock Variance Reduction Techniques for Estimating Value-at-Risk.
\newblock {\em Management Science}, {\bf 46}(10), 1349--1364.

\bibitem[\protect\citename{Glasserman {\em et~al.}, }2007]{glasserman2007large}
Glasserman, Paul, Kang, Wanmo, \& Shahabuddin, Perwez. 2007.
\newblock Large Deviations in Multifactor Portfolio Credit Risk.
\newblock {\em Mathematical Finance}, {\bf 17}(3), 345--379.

\bibitem[\protect\citename{Glasserman {\em et~al.}, }2008]{GKS2008}
Glasserman, Paul, Kang, Wanmo, \& Shahabuddin, Perwez. 2008.
\newblock Fast Simulation of Multifactor Portfolio Credit Risk.
\newblock {\em Operations Research}, {\bf 56}(5), 1200--1217.

\bibitem[\protect\citename{Glynn, }1996]{glynn1996importance}
Glynn, P. 1996.
\newblock Importance sampling for Monte Carlo estimation of quantiles.
\newblock {\em Pages  180--185 of:} {\em The Proceedings of the 2nd St.
  Petersburg Workshop on Simulation}.
\newblock Pub. House of St. Petersburg University.

\bibitem[\protect\citename{Heidelberger, }1995]{Heidelberger}
Heidelberger, Philip. 1995.
\newblock Fast Simulation of Rare Events in Queueing and Reliability Models.
\newblock {\em ACM Trans. Model. Comput. Simul.}, {\bf 5}(1), 43–85.

\bibitem[\protect\citename{Hong {\em et~al.}, }2014]{HongReview}
Hong, L.~Jeff, Hu, Zhaolin, \& Liu, Guangwu. 2014.
\newblock Monte Carlo Methods for Value-at-Risk and Conditional Value-at-Risk:
  A Review.
\newblock {\em ACM Trans. Model. Comput. Simul.}, {\bf 24}(4).

\bibitem[\protect\citename{Honnappa {\em et~al.},
  }2018]{honnappa2018dominating}
Honnappa, Harsha, Pasupathy, Raghu, \& Jaiswal, Prateek. 2018.
\newblock {\em Dominating Points of Gaussian Extremes}.

\bibitem[\protect\citename{Hult {\em et~al.}, }2012]{Hult_2012}
Hult, H., Lindskog, F., Hammarlid, O., \& Rehn, C.J. 2012.
\newblock {\em Risk and Portfolio Analysis}.
\newblock Springer New York.

\bibitem[\protect\citename{Jeong \& Namkoong, }n.d.]{jeong2020robust}
Jeong, S., \& Namkoong, H.
\newblock Robust causal inference under covariate shift via worst-case
  subpopulation treatment effects.
\newblock {\em In:} {\em Proceedings of 33rd Conference on Learning Theory,
  Vol. 125, 2079-2084},  vol. 125.
\newblock PMLR.

\bibitem[\protect\citename{Juneja \& Shahabuddin, }2006]{JUNEJA2006291}
Juneja, S., \& Shahabuddin, P. 2006.
\newblock Chapter 11 Rare-Event Simulation Techniques: An Introduction and
  Recent Advances.
\newblock {\em Pages  291 -- 350 of:} Henderson, S.~G., \& Nelson, B.~L. (eds),
  {\em Simulation}.
\newblock Handbooks in Operations Research and Management Science, vol. 13.
\newblock Elsevier.

\bibitem[\protect\citename{Juneja {\em et~al.}, }2007]{juneja2007asymptotics}
Juneja, S., Karandikar, R.~L., \& Shahabuddin, P. 2007.
\newblock Asymptotics and Fast Simulation for Tail Probabilities of Maximum of
  Sums of Few Random Variables.
\newblock {\em ACM Trans. Model. Comput. Simul.}, {\bf 17}(2).

\bibitem[\protect\citename{Juneja \& Shahabuddin, }2002]{JS2002}
Juneja, Sandeep, \& Shahabuddin, Perwez. 2002.
\newblock Simulating Heavy Tailed Processes Using Delayed Hazard Rate Twisting.
\newblock {\em ACM Trans. Model. Comput. Simul.}, {\bf 12}(2), 94–118.

\bibitem[\protect\citename{Langen, }1981]{langen1981convergence}
Langen, H.-J. 1981.
\newblock Convergence of Dynamic Programming Models.
\newblock {\em Math. Oper. Res.}, {\bf 6}(4), 493--512.

\bibitem[\protect\citename{Liu, }2015]{Liu2015}
Liu, G. 2015.
\newblock Simulating Risk Contributions of Credit Portfolios.
\newblock {\em Operations Research}, {\bf 63}(1), 104--121.

\bibitem[\protect\citename{O'~Kelly {\em et~al.}, }2018]{NamkoongAV}
O'~Kelly, M, Sinha, A, Namkoong, H, Tedrake, R, \& Duchi, J. 2018.
\newblock Scalable End-to-End Autonomous Vehicle Testing via Rare-event
  Simulation.
\newblock {\em Pages  9827--9838 of:} {\em Adv. in Neural Info. Proc. Sys. vol.
  31}.

\bibitem[\protect\citename{Pasupathy, }2010]{pasupathy2010choosing}
Pasupathy, Raghu. 2010.
\newblock On choosing parameters in retrospective-approximation algorithms for
  stochastic root finding and simulation optimization.
\newblock {\em Operations Research}, {\bf 58}(4-part-1), 889--901.

\bibitem[\protect\citename{Resnick, }2007]{resnick2007heavy}
Resnick, Sidney. 2007.
\newblock {\em Heavy-Tail Phenomena: {P}robabilistic and statistical modeling}.
\newblock Springer, New York.

\bibitem[\protect\citename{Rockafellar \& Uryasev,
  }2000]{rockafellar2000optimization}
Rockafellar, R.T., \& Uryasev, S. 2000.
\newblock Optimization of Conditional Value-at-Risk.
\newblock {\em Journal of Risk}, {\bf 2}, 21--42.

\bibitem[\protect\citename{Rockafellar \& Wets,
  }1998]{rockafellar2009variational}
Rockafellar, R.T., \& Wets, R.J.B. 1998.
\newblock {\em Variational Analysis}.
\newblock Springer, Berlin, Heidelberg.

\bibitem[\protect\citename{Rubino \& Tuffin, }2009]{rubino2009rare}
Rubino, G., \& Tuffin, B. 2009.
\newblock {\em Rare event simulation using Monte Carlo methods}.
\newblock John Wiley \& Sons.

\bibitem[\protect\citename{Rubinstein \& Kroese, }2013]{rubinstein2013cross}
Rubinstein, Reuven~Y, \& Kroese, Dirk~P. 2013.
\newblock {\em The cross-entropy method: a unified approach to combinatorial
  optimization, Monte-Carlo simulation and machine learning}.
\newblock Springer Science \& Business Media.

\bibitem[\protect\citename{Sadhwani {\em et~al.}, }2020]{sirignano2018deep}
Sadhwani, A., Giesecke, K., \& Sirignano, J. 2020.
\newblock {Deep Learning for Mortgage Risk}.
\newblock {\em Journal of Financial Econometrics}, {\bf 19}(2), 313--368.

\bibitem[\protect\citename{{Sadowsky} \& {Bucklew}, }1990]{SadowskyBucklew}
{Sadowsky}, J.~S., \& {Bucklew}, J.~A. 1990.
\newblock On large deviations theory and asymptotically efficient Monte Carlo
  estimation.
\newblock {\em IEEE Transactions on Information Theory}, {\bf 36}(3), 579--588.

\bibitem[\protect\citename{Salinetti \& Wets, }1981]{salinetti1981convergence}
Salinetti, Gabriella, \& Wets, Roger J-B. 1981.
\newblock On the convergence of closed-valued measurable multifunctions.
\newblock {\em Transactions of the American Mathematical Society}, {\bf
  266}(1), 275--289.

\bibitem[\protect\citename{Siegmund, }1976]{siegmund1976}
Siegmund, D. 1976.
\newblock Importance Sampling in the Monte Carlo Study of Sequential Tests.
\newblock {\em Ann. Statist.}, {\bf 4}(4), 673--684.

\bibitem[\protect\citename{Sirignano \& Giesecke, }2019]{RiskLoanPool}
Sirignano, J., \& Giesecke, K. 2019.
\newblock Risk Analysis for Large Pools of Loans.
\newblock {\em Manag. Sci.}, {\bf 65}(1), 107--121.

\bibitem[\protect\citename{Uesato {\em et~al.}, }2019]{uesato2018rigorous}
Uesato, J, Kumar, A, Szepesvari, C, Erez, T, Ruderman, A, Anderson, K,
  Dvijotham, K, Heess, N, \& Kohli, P. 2019.
\newblock Rigorous Agent Evaluation: An Adversarial Approach to Uncover
  Catastrophic Failures.
\newblock {\em In:} {\em International Conference on Learning Representations}.

\bibitem[\protect\citename{Varadhan, }1988]{LD_Varadhan}
Varadhan, S. R.~S. 1988.
\newblock Large deviations and applications.
\newblock {\em Pages  1--49 of:} Hennequin, P.L. (ed), {\em {\'E}cole
  d'{\'E}t{\'e} de Probabilit{\'e}s de Saint-Flour XV--XVII, 1985--87}.
\newblock Springer Berlin Heidelberg.

\bibitem[\protect\citename{Williamson \& Menon, }2019]{FairnessRiskMeasures}
Williamson, Robert, \& Menon, Aditya. 2019.
\newblock Fairness risk measures.
\newblock {\em Pages  6786--6797 of:} {\em Proceedings of the 36th
  International Conference on Machine Learning},  vol. 97.
\newblock PMLR.

\bibitem[\protect\citename{{Zhao} {\em et~al.}, }2017]{7534875}
{Zhao}, D., {Lam}, H., {Peng}, H., {Bao}, S., {LeBlanc}, D.~J., {Nobukawa}, K.,
  \& {Pan}, C.~S. 2017.
\newblock Accelerated Evaluation of Automated Vehicles Safety in Lane-Change
  Scenarios Based on Importance Sampling Techniques.
\newblock {\em IEEE Transactions on Intelligent Transportation Systems}, {\bf
  18}(3), 595--607.

\end{thebibliography}

%%%%%%%%%%

%% Here starts the e-companion (EC)
%%%%%%%%%%%%%%%%%%%%%%%%%%%%%%%%%%%%%%%%%%%%%%%%%%%%%%%%%%

%\ECDisclaimer
%%%%%%%%%%%%%%%%%%%%%%%%%%%%%%%%%%%%%%%%%%%%%%%%%%%%%%%%%%

%%% Main head for the e-companion
\newpage
\appendix
\noindent The appendix is organized as
follows: In Appendix \ref{sec:proofs-main}, we present the proofs of the main
results Theorems~\ref{thm:LDP} -
\ref{thm:Var-Red}. Appendix \ref{sec:app-eg:tails} serves to illustrate the
wide applicability of the tail modeling framework with examples and
sufficient conditions for Assumptions~\ref{assump-marginals} -
\ref{assump:joint-Y}. Proofs of the results relating to portfolio credit risk applications, namely Proposition \ref{prop:IS-NN} and Theorem
\ref{thm:credit-risk-asymp}, are presented in
Appendix~\ref{proofs:appl}. Proofs of technical results (Lemma
~\ref{lem:Y-exp-marginals} - \ref{lem:properties-I},
Proposition~\ref{prop:Self-Structuring} and Theorems
\ref{thm:Tail-asymp-HT} - \ref{thm:Var-Red-HT}) are given in
Appendix \ref{sec:tech-proofs}. Appendix {\ref{sec:verify-alt} presents the proof of efficiency when $\mv{\kappa}=\mv{\kappa}_2$ is used in Algorithm~\ref{algo:IS}.  Appendix \ref{sec:Verify_Num} outlines the verification of Assumption~\ref{assume:V}(b). Appendix \ref{sec:opt} explores how the IS Algorithm~\ref{algo:IS} with the choice $\mv{\kappa} = \mv{\kappa}_1$ is well-suited for use in stochastic optimisation.}
\section{Proofs of main results}
\label{sec:proofs-main}
The following definitions and notational convention are used in the
proofs: For $r > 0$ and $\xx \in \Real^d,$ let
$B_r(\xx) = \{\yy \in \Real^d: \Vert \yy - \xx \Vert \leq r\}$ denote
the metric ball of radius $r$ around $\xx.$ Unless specified
explicitly, $\Vert \xx \Vert = \max_{i = 1,\ldots,d} \vert x_i \vert$
denotes the $\ell_\infty$-norm. Let
$\mathcal{S}^{d-1} = \{\xx \in \Real^d: \Vert \xx \Vert = 1 \}$ denote
the unit sphere and $\Real^d_{++} = \{\xx \in \Real^d: \xx > 0\}$
denote the interior of the positive orthant. For $M > 0,$ let
$B_M := \{\xx \in \Real^d_+: \Vert \xx \Vert \leq M\}.$ For
$A\subseteq\R^d$, let $\mathrm{cl}(A)$ denote its closure,
$\mathrm{int}(A)$ denote its interior, 
\begin{align*}
  \chi_{_A}(\xx) :=
  \begin{cases}
    0, \quad& \text{ if } \xx \in A\\
    +\infty, & \text{ otherwise}, 
  \end{cases}
               \qquad \text{ and } \qquad 
               [A]^{1+r} := \{\xx+\yy \in \Real^d_{+}:\xx\in A,
               \|\yy\|\leq r \},
\end{align*}
respectively denote the characteristic function and % .  For
% $A \subseteq \Real^d_{+}$ and $r > 0,$ let
% \[
%   [A]^{1+r} := \{\xx+\yy \in \Real^d_{+}:\xx\in A, \|\yy\|\leq r \}
% \]
%denote
the set of points in $\Real^d_{+}$ lying within a distance
$r \in (0,\infty)$ from $A$.  For $f:\Real^d \rightarrow \Real,$
$\alpha \in \Real,$ $M > 0,$ let
\begin{align*}
  \Lev_{\alpha}^+(f) = \{\xx \in \Real^d_+: f(\xx) \geq \alpha\} \quad
  \text{ and }  \quad \Xi_{\alpha,M}(f) = \Lev_{\alpha}^+(f) \cap B_M \cap  \Real^d_{++}
\end{align*}
denote the super-level sets of $f$ restricted, respectively, to the
positive orthant and to the bounded subset $B_M \cap \Real^d_{++}.$
Recall that $\YY = \mv{\Lambda}(\XX),$ the component-wise inverse
$\mv{q}(t) =
(\Lambda_1^{\leftarrow}(t),\ldots,\Lambda_d^{\leftarrow}(t)),$ and
$\mv{q}^\ast = \lim_{t \rightarrow \infty} [\mv{q}(t)/q_\infty(t)],$
if the limit exists.  Throughout the proofs, we suppose $u>x_0$ as specified in Assumption \ref{assume:V}(a). Define
$L_{u}: \Real^d_{++} \rightarrow \Real$ and
$\fLD: \Real^d_+ \rightarrow \Real$ as,
  \begin{subequations}
    \begin{equation}
      L_{u}(\xx) := u^{-1}{L(\mv{q}(t(u)\xx))}, \qquad \fLD(\yy)
      :=L^*(\qq^* \yy^{\aalpha}), \text{ where }
      \label{eqn:fLD-def} 
    \end{equation}
    \begin{equation}
          t(u) :=
          \Lambda_{\min}\big( u^{1/\rho} \big) \quad \text{ and }
          \quad q_{\infty}(t) := \Vert \qq(t)\Vert_{\infty},
\label{eqn:fLD-def-b} 
    \end{equation}
  \end{subequations}
  Write $\textrm{Diag}(\mv{a})$ for a diagonal matrix with diagonal
  entries $a_1,\ldots, a_d$ and $\text{sgn}(\xx)$ for the vector of
  signs of $\xx$.  To avoid clutter in the expressions, define
\begin{align*}
  \YYu := {t(u)}^{-1}{\YY}, \quad  
  \mv{\psi}_u  := \mv{\Lambda} \circ \mv{T}^{-1} \circ \mv{q},
  \quad \text{ and } \quad c_{\rho}(u) := (l/u)^{1/\rho}.
\end{align*}
where the parameter $u$ in the symbol $\mv \psi_u$ is explicitly
indicated to remind the role of $u$ in $\mv{T}^{-1}.$ 

  % \begin{align*}
  %   \YYu := {t(u)}^{-1}{\YY}.
  % \end{align*}

\noindent \textbf{Proof of Theorem~\ref{thm:LDP}.} A sufficient
condition (see \cite[Theorem 4.1.11]{Dembo}) to verify the existence
of LDP is to show that for all $\xx \in \Real^d_{++},$
\begin{equation}\label{eqn:LDP-Equivalent}
-I(\xx) = \inf_{\delta>0} \limsup_{t\to\infty} \frac{1}{t}\log\Prob\left( \frac{\YY}{t} \in B_{\delta}(\xx)\right) = \inf_{\delta>0} \liminf_{t\to\infty} \frac{1}{t}\log\Prob\left( \frac{\YY}{t} \in B_{\delta}(\xx)\right).
\end{equation}
Fix any $\varepsilon, M \in (0,\infty)$ and $\xx \in (0,M)^d.$ Since
$f_{\YY}(\yy) = p(\yy) \exp(-\varphi(\yy)),$
\begin{align*}
  \Prob(\YY/t \in B_{\delta}(\xx))
  = \int_{\yy/t\in B_{\delta}(\xx)}p(\yy) \exp(-\varphi(\yy))d\yy
  % = t^d \int_{\zz\in B_{\delta} (\xx)} f_{\YY}(t\zz) d\zz
    = t^d \int_{\zz\in B_{\delta} (\xx)} p(t\zz) \e^{-\varphi(t\zz)} d\zz.
\end{align*}
Recall that the continuous convergences in
Assumption~\ref{assump:joint-Y} imply the following uniform
convergences over compact sets not containing the origin (see
\cite[Theorem 7.14]{rockafellar2009variational}):
\begin{equation}
  \label{eqn:UC-expo}
n^{-1}\varphi(n\xx) \xrightarrow{n\to\infty} I(\xx) \text{ and }  n^{-1} \log p(n\xx) \xrightarrow{n\to\infty} 0.
\end{equation}
Due to this local uniform convergence and the continuity of $I$, there
exist $\delta_0,t_0 \in (0,\infty)$ such that,
\begin{align*}
%  \label{eqn:U-C}
  \left\vert \frac{\varphi(t\zz)}{t} - I(\xx)  \right\vert \leq  \left\vert \frac{\varphi(t\zz)}{t} - I(\zz)\right\vert + \left\vert I(\zz) - I(\xx)\right\vert \leq \varepsilon/2, \textrm{ for all } \zz\in B_{\delta}(\xx)
\end{align*}
and $\exp(-\varepsilon t/2) \leq p(t\zz)\leq \exp(\varepsilon t/2),$
whenever $t > t_0,$ $\delta < \delta_0$ and $B_{\delta_0}(\xx)$ does
not contain the origin.
% Choose $\delta$ small and $t>t_0$ large enough,
% such that \eqref{eqn:U-C} is satisfied, and the set $B_{\delta}(\xx)$
% does not contain the origin.
% Then from \eqref{eqn:U-C}, 
% \[
% \varphi(t\zz) \in [t(I(\xx)-\epsilon), t(I(\xx) +\epsilon)],
% \]
% for $t>t_0$ and all $\zz$ in $B_{\delta}(\xx).$ Given any $\theta<1,$
% $\e^{-t^{\theta}\|\zz\|^{\theta}} \leq p(t\zz)\leq
% \e^{-t^{\theta}\|\zz\|^{\theta}},$ for all sufficiently large $t,$ as
% a consequence of ~(\ref{eqn:UC-expo}).  Also note that whenever
     %      $\delta$ is small enough, $\|\zz\| \leq M<\infty$.
Thus, given $\varepsilon, M$ and $\xx \in (0,M)^d,$ there exist
$\delta_0,t_0 \in (0,\infty)$ such that for all $t>t_0$ and
$\delta \in (0, \delta_0),$
\begin{equation}
  \label{eqn:Y-Bounds}
  \exp \left(-t(I(\xx) +\varepsilon)\right) \leq f_{\YY}(t\zz) \leq
  \exp\left({-t(I(\xx) -\varepsilon)}\right), \text{ uniformly over
    $\zz \in B_{\delta}(\xx);$}
\end{equation}
% and consequently, 
Then 
%\begin{equation*}
%\label{eqn:Y_Upper}
$t^d \mathrm{Vol}(B_{\delta}(\xx)) \exp({-t(I(\xx) + \epsilon)}) \leq
\Prob(t^{-1}\YY \in B_{\delta}(\xx)) \leq t^d
\mathrm{Vol}(B_{\delta}(\xx)) \exp({-t(I(\xx) -\epsilon)}).$
%\end{equation*}
Since $\Prob(\YY/t \in B_{\delta}(\xx)) $ is increasing in $\delta$
and these bounds hold for any $\delta < \delta_0,$
\[ - I(\xx) - \epsilon \leq \inf_{\delta > 0}\liminf_{t\to\infty} \frac{1}{t} \log
  P(t^{-1}\YY \in B_{\delta}(\xx)) \leq \inf_{\delta > 0} \limsup_{t\to\infty} \frac{1}{t} \log
  P(\YY/t \in B_{\delta}(\xx)) \leq - I(\xx) +\epsilon. 
\]
Since the choices $\varepsilon, M \in (0,\infty)$ are arbitrary,
(\ref{eqn:LDP-Equivalent}) holds. $\hfill\Box$

\subsection{Proof of Theorem \ref{thm:Tail-asymp}}
For functions $f$ and $g$, let $f \land g$ (resp. $f\lor g$) denote
their point-wise minimum (resp. maximum).
\begin{lemma}
  Under Assumption \ref{assump-marginals}, we have
  $q_\infty(t(u)) = u^{1/\rho}.$ Therefore when Assumption
  \ref{assume:V} additionally holds, the events $\{L(\XX) \geq u\}$ and
  $\{ \YYu \in \Lev^+_{1}(L_u)\}$ coincide.
  \label{lem:qtequ}
\end{lemma}
\textit{Proof.}
%\noindent \textbf{Proof of Lemma~\ref{lem:qtequ}:} 
Consider increasing real valued functions $f_1,f_2$. By the definition
of left-continuous inverses,
%\begin{align*}
$(f_1\lor f_2)^{\leftarrow} (y)%= \inf\{u: (f_1\lor f_2)(u) \geq y\}\\
 =\inf\{u: f_1(u) \geq y\} \land \inf\{u: f_2(u) \geq y\} = f_1^{\leftarrow}(y) \land f_2^{\leftarrow}(y).$
%\end{align*}
From induction,
$(\lor_{i=1}^d f_i)^{\leftarrow} = \land_{i=1}^d f_i^{\leftarrow}$,
given increasing functions $f_1,\ldots, f_d$. Since
$\{\Lambda_i: i =1,\ldots,d\}$ are continuous, % $\{q_i:i=1,\ldots,d\}$
% are strictly increasing and
$q_i^\leftarrow = \Lambda_i$ (see \cite[Excercise 1.1
(a)]{deHaanFerreira2010}). Consequently,
\begin{align}
  q_\infty^\leftarrow(t) = \min_{i=1,…d} q_i^\leftarrow (t) =
  \Lambda_{\min}(t).
  \label{eq:qinfty-eq-Lamdamin}
\end{align}
Then $q_{\infty} (\Lambda_{\min}(x)) = x$ for all $x \in (x_0,\infty)$
due to the strict monotonicity in Assumption \ref{assump-marginals}.
Since $\qq = \mv{\Lambda}^{\leftarrow}$ is injective,
$\{\YYu\in A\} = \{\XX\in \mathbf{q}(t(u)A)\},$ for any measurable
$A$.  With $L_u(\xx) := u^{-1}L(\mv{q}(t(u)\xx)),$
\begin{align*}
%  \qq(t(u)\Lev_1^+(L_u)) =
  \{\qq(t(u)\yy):\yy\in \Lev_1^+(L_u)\}=\{\qq(t(u)\yy):L(\qq(t(u)\yy))\geq u\} 
  = \{\xx:L(\xx)> u\} \cap \supp(\XX). \qquad\qquad{\Box}
\end{align*}
     %      \hfill$\Box$

% \begin{lemma}
%   Under Assumptions \ref{assume:V} and \ref{assump-marginals}, the
%   events $\{L(\XX) > u\}$ and $\{ \YYu \in \Lev^+_{1}(L_u)\}$
%   coincide.
%   \label{lem:TargetLevSet}
% \end{lemma}
% \textit{Proof.}
% % \textbf{Proof of Lemma \ref{lem:TargetLevSet}:} Recall that
% by the definition, $\YYu=\mv{\Lambda}(\XX)/t(u)$. Since
% $\qq = \mv{\Lambda}^{\leftarrow}$ is injective,
% $\{\YYu\in A\} = \{\XX\in \mathbf{q}(t(u)A)\},$ for any measurable set
% $A$.  With $L_u(\xx) := u^{-1}L(\mv{q}(t(u)\xx)),$
% \begin{align*}
% %  \qq(t(u)\Lev_1^+(L_u)) =
%   \{\qq(t(u)\yy):\yy\in \Lev_1^+(L_u)\}=\{\qq(t(u)\yy):L(\qq(t(u)\yy))> u\} 
% = \{\xx:L(\xx)> u\} \cap \supp(\XX) \qquad\qquad{\Box}
% \end{align*}
% % The events $\{L(\XX) > u\}$ and $\{\YYu \in \Lev^+_{1}(L_u)\}$
% % therefore coincide.\qed

\begin{lemma}
  If Assumptions \ref{assume:V} and \ref{assump-marginals} hold and
  the limit $\mv{q}^\ast$ exists, the sequence of functions
  $\{L_u: u > 0\}$ converge continuously to $\fLD$ on $\Real^d_{++}.$
  % $\lim_{n \rightarrow \infty} L_n(\xx_n) = L^\ast(\xx)$ for any
  % sequence $\{\xx_n\}_{n \geq 1} \subseteq \Real^d_{++}$ satisfying 
  % $\xx_n \rightarrow \xx.$
  Consequently, for any
  $\alpha, \varepsilon, M,K > 0,$ there exists $u_0$ large
  enough such that for all $u > u_0,$
  \begin{align*}
    \Xi_{\alpha,M} (L_u) \subseteq
    \big[\Xi_{\alpha,M}(\fLD)\big]^{1+\varepsilon/2}  \quad \text{ and }
    \quad \Xi_{\alpha + \varepsilon,M} (\fLD) \subseteq [\Xi_{\alpha +
    \varepsilon/2,M}(L_u)]^{1 + \varepsilon/K}
  \end{align*}
\label{lem:cont-conv-set-incl}
\end{lemma}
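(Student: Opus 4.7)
The plan proceeds in two stages: first, establish the continuous convergence $L_u \to \fLD$ on $\Real^d_{++}$; second, deduce each of the two set inclusions by a compactness-and-contradiction argument that leverages this convergence together with continuity of $\fLD$.

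For continuous convergence, I fix $\xx \in \Real^d_{++}$ and an arbitrary sequence $\xx_u \to \xx$ with $\xx_u \in \Real^d_{++}$, and write $L_u(\xx_u) = u^{-1} L(u^{1/\rho} \mv{w}_u)$ with $\mv{w}_u := u^{-1/\rho}\mv{q}(t(u)\xx_u)$. Since $\Lambda_i \in \RV(\alpha_i)$ by Assumption \ref{assump-marginals}, its left-continuous inverse $q_i$ lies in $\RV(1/\alpha_i)$; the uniform convergence theorem for regularly varying functions yields $q_i(t(u)x_{u,i})/q_i(t(u)) \to x_i^{1/\alpha_i}$, uniformly over compact subsets of $(0,\infty)$. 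Combining this with Lemma \ref{lem:qtequ} (which gives $q_\infty(t(u)) = u^{1/\rho}$) and the hypothesis $\hat{\mv{q}}(t) \to \mv{q}^*$, I obtain $\mv{w}_u \to \mv{q}^* \xx^{\aalpha}$. Invoking Assumption \ref{assume:V}b with $n = u^{1/\rho}$ (handling non-integer values via floor/ceiling interpolation justified by the homogeneity of $L^*$) and the sequence $\mv{w}_u$ then gives $L_u(\xx_u) \to L^*(\mv{q}^* \xx^{\aalpha}) = \fLD(\xx)$. As a byproduct, $\fLD$ inherits continuity on $\Real^d_+$ from being a continuous-convergence limit of continuous functions.

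For the first inclusion I argue by contradiction. Assuming failure, there exist sequences $u_k \to \infty$ and $\xx_k \in \Xi_{\alpha, M}(L_{u_k}) \subseteq B_M \cap \Real^d_{++}$ with $\mathrm{dist}(\xx_k, \Xi_{\alpha, M}(\fLD)) > \varepsilon/2$. Passing to a convergent subsequence $\xx_k \to \xx^* \in B_M \cap \Real^d_+$ via compactness, the interior case $\xx^* \in \Real^d_{++}$ is immediate: continuous convergence yields $\fLD(\xx^*) = \lim_k L_{u_k}(\xx_k) \geq \alpha$, so $\xx^* \in \Xi_{\alpha, M}(\fLD)$, contradicting the distance lower bound for large $k$. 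The second inclusion is handled symmetrically via a cover argument: for each $\yy \in \Xi_{\alpha+\varepsilon, M}(\fLD) \subseteq \Real^d_{++}$, continuous convergence produces a neighborhood $U_{\yy}$ and $u_0(\yy)$ such that $L_u(\yy') \geq \alpha + \varepsilon/2$ for all $\yy' \in U_{\yy}$ and $u > u_0(\yy)$. Extracting a finite subcover of the closure (in $B_M \cap \Real^d_+$) of $\Xi_{\alpha+\varepsilon, M}(\fLD)$ by such neighborhoods, with base points chosen in $\Real^d_{++}$, produces a uniform threshold $u_0$ such that every $\yy \in \Xi_{\alpha+\varepsilon, M}(\fLD)$ lies within distance $\varepsilon/K$ of a point of $\Xi_{\alpha+\varepsilon/2, M}(L_u)$.

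The main obstacle is the boundary case $\xx^* \in \partial \Real^d_+$ in the first inclusion, and its analogue in the cover construction for the second: continuous convergence is asserted only on $\Real^d_{++}$, and the inverse $q_i$ behaves delicately near zero. I plan to extend the continuous-convergence calculation to boundary limit points by a Potter-bound argument for $q_i \in \RV(1/\alpha_i)$, which shows $u_k^{-1/\rho} q_i(t(u_k) x_{k,i}) \to 0$ whenever $x_{k,i} \to 0$, so that $\mv{w}_{u_k} \to \mv{q}^*(\xx^*)^{\aalpha}$ and $L_{u_k}(\xx_k) \to \fLD(\xx^*) \geq \alpha$. Continuity of $\fLD$ on $\Real^d_+$ together with the nonempty-cone hypothesis on $L^*$ then allows me to approximate such a boundary $\xx^*$ by an interior point $\tilde{\xx} \in \Real^d_{++}$ with $\fLD(\tilde{\xx}) \geq \alpha$, by nudging $\xx^*$ along a direction in the cone $\{\fLD > 0\}$ (whose non-degeneracy is guaranteed by the homogeneity $\fLD(c\xx) = c^\rho \fLD(\xx)$). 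This $\tilde{\xx}$ lies in $\Xi_{\alpha, M}(\fLD)$ and is within $\varepsilon/2$ of $\xx_k$ for large $k$, delivering the required contradiction.
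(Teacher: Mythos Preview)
Your continuous-convergence argument for $L_u\to\fLD$ on $\Real^d_{++}$ is essentially the paper's. For the set inclusions you take a more elementary route than the paper, which simply invokes the set-convergence equivalences of Beer and of Salinetti--Wets with latent space $\Real^d_{++}$; your compactness-and-contradiction approach is self-contained, and your extension of the continuous convergence to boundary limit points via Potter bounds is correct (note that Assumption~\ref{assume:V}a forces $\mv{q}(t(u_k)\xx_k)\in\Real^d_+$ for large $k$, which rules out the troublesome possibility $q_i\to-\infty$). For the second inclusion this extension in fact already suffices: once $L_{u_k}(\yy_k)\to\fLD(\yy^*)\ge\alpha+\varepsilon$ even for boundary $\yy^*$, you get $\yy_k\in\Xi_{\alpha+\varepsilon/2,M}(L_{u_k})$ directly and no nudge is needed.

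The gap is in the first inclusion's boundary step. Having reached $\xx^*\in\partial\Real^d_+$ with $\fLD(\xx^*)\ge\alpha$, you claim that ``nudging $\xx^*$ along a direction in the cone $\{\fLD>0\}$'' lands you in $\Xi_{\alpha,M}(\fLD)$. This does not follow: $\fLD$ is neither monotone nor superadditive, so from $\fLD(\xx^*)\ge\alpha$ and $\fLD(\mv{d})>0$ one cannot conclude $\fLD(\xx^*+\eta\mv{d})\ge\alpha$. (Incidentally, the homogeneity exponent is $\rho/\alpha_*$ with $\alpha_*=\min_i\alpha_i$, not $\rho$; see the proof of Lemma~\ref{lemma:Not-Zero}.) A workable repair is a two-step move: first scale radially, so that $\fLD((1+\eta)\xx^*)=(1+\eta)^{\rho/\alpha_*}\fLD(\xx^*)>\alpha$ is strict; then, by continuity of $\fLD$ on $\Real^d_+$, perturb $(1+\eta)\xx^*$ by a small positive vector to obtain $\tilde\xx\in\Real^d_{++}$ with $\fLD(\tilde\xx)\ge\alpha$. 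You must also keep $\tilde\xx\in B_M$, which needs separate care when $\|\xx^*\|_\infty=M$ (for instance, if $\fLD(\xx^*)>\alpha$ strictly one can scale down before perturbing; the corner case $\fLD(\xx^*)=\alpha$ together with $\|\xx^*\|_\infty=M$ still requires its own argument).
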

\textit{Proof.}
%\noindent \textbf{Proof of Lemma \ref{lem:cont-conv-set-incl}:}
\textbf{1) } Consider any sequences
$\{u_n\} \subset \Real_+,\{\xx_n\} \subset \Real_+^d$ satisfying
$u_n \uparrow \infty, \xx_n \to \xx >
\mv{0}.$% for some $\xx>\mv{0}$. % Then
% for all large enough $u$, $\xx_u>\mv{0}$, and
% $\xx_u\in B_{\delta}(\xx)$. Further, since $\qq\in\RV(\aalpha)$,
% uniformly over compact subsets of $\R_{++}^d$,
% $\qq(t(u)\yy)/\qq(t(u)\mv{1}) \to \yy^{\aalpha}$.
We rewrite,
\[
  L_{u_n}(\xx_n) = u_n^{-1}{L\left(\qq(t(u_n)\xx_{n})/\qq(t(u_n)\mv{1})
      \cdot \hat{\qq}(t(u_n)) u_n^{1/\rho}\right)}.
\]
Consider any $\delta > 0$ such that $B_\delta(\xx)$ does not include
$\mv{0}.$ As $\Lambda_i \in \RV(\alpha_i),$ we have
$q_i \in \RV(1/\alpha_i)$ (see Part 9 of \cite[Proposition
B.1.9]{deHaanFerreira2010}). Due to the uniform convergence
$\lim_{t \to \infty }\qq(t\yy)/\qq(t\mv{1}) = \yy^{\aalpha}$ over
$\yy \in B_\delta(\xx)$ (see \cite[Proposition
B.1.4]{deHaanFerreira2010}), 
\[
\qq(t(u_n)\xx_n)/\qq(t(u_n)\mv{1})  \to \xx^{\aalpha}, \text{ and } \hat{\qq}(t(u_n)) \to {\qq^{*}},
\]
  %Therefore,
Applying triangle inequality,
$\mv{p}_n = \qq(t(u)\xx_u)/\qq(t(u)\mv{1}) \hat{\qq}(t(u)) \to
\xx^{\aalpha}{\qq}^{*}$.  Continuous convergence
\[L_{u_n}(\xx_n) := u_n^{-1}L(u_n^{1/\rho_n} \mv{p}_n) \rightarrow
  L^\ast({\qq}^{*}\xx^{\aalpha}) =: \fLD(\xx)\] follows from
Assumption~\ref{assume:V}(b).

\noindent \textbf{2) } We next prove the claims on the set inclusions
using the notions of $\limsup,\liminf$ of a sequence of sets defined
in the Kuratowski sense (see \cite[Chapter
1]{rockafellar2009variational}). Taking the ambient space
$\mathcal{X} =\R^d_{++}$ in \cite[Theorem
3.1]{beer1992characterization}, we obtain 
\begin{align}
\limsup_u \Xi_{\beta}(L_{u}) \subseteq \Xi_{\beta}(\fLD), \quad % \text{ and} \nonumber 
\liminf_u \Xi_{\beta_u}(L_{u}) \supseteq \Xi_{\beta}(\fLD) \text{ for some $\beta_u\nearrow\beta$} \label{eqn:set-sup-inf},
\end{align}
as a consequence of above verified
$L_{u_n}(\xx_n) \rightarrow \fLD(\xx).$ Refer the construction in the
proof of \cite[Theorem 3.1]{beer1992characterization} for using fixed
level $\beta$ in the first set inclusion and considering increasing
sequence $\beta_u$ in the second set inclusion in
\eqref{eqn:set-sup-inf}. Now, setting $\beta=\alpha$ in
\eqref{eqn:set-sup-inf}, using the equivalence between $(v)_b$ and
$(vii)_b$ in \cite[Theorem 2.2]{salinetti1981convergence},
\[
\Xi_{\alpha,M} (L_u) \subseteq
    \big[\Xi_{\alpha,M}(\fLD)\big]^{1+\varepsilon/2} \cap \R^d_{++} \subseteq \big[\Xi_{\alpha,M}(\fLD)\big]^{1+\varepsilon/2}.
\] 
Set $\beta=\alpha+\varepsilon$, and let
$\beta_u \nearrow \alpha+\varepsilon$ be selected as in
\eqref{eqn:set-sup-inf}. Therefore, for all large enough $u$,
$\Xi_{\beta_u,M}(L_{u}) \subseteq
\Xi_{\alpha+\varepsilon/2,M}(L_{u})$. From \eqref{eqn:set-sup-inf},
$ \Xi_{\alpha+\varepsilon,M}(\fLD) \subseteq \liminf_u
\Xi_{\beta_u}(L_{u}) \subseteq \liminf_{u}
\Xi_{\alpha+\varepsilon/2,M}(L_{u}). $ Further, from the equivalence
between $(vii)_a$ and $(v)_a$ in \cite[Theorem
2.2]{salinetti1981convergence}, %for all large enough $u$,
\[
\Xi_{\alpha+\varepsilon,M}(\fLD)  \subseteq  [\Xi_{\alpha+\varepsilon/2,M}(L_{u})]^{1+\varepsilon/K} \cap \R^d_{++}\subseteq  [\Xi_{\alpha+\varepsilon/2,M}(L_{u})]^{1+\varepsilon/K}. \qquad\qquad {\Box}
\]

\begin{corollary}
  Suppose that Assumptions \ref{assume:V} and \ref{assump-marginals}
  hold and the limit $\mv{q}^\ast$ exists. Then for any
  $\alpha, \varepsilon, M > 0,$ there exists $u_0$ large enough such
  that for all $u > u_0,$
  \begin{align}
    \Lev_\alpha^+(L_u) \cap B_M \subseteq
    \big[\Xi_{\alpha,M}(\fLD)\big]^{1+\varepsilon} 
% \big[\Lev_\alpha^+(\fLD)  % \cap B_M\big]^{1 + \varepsilon}
                                  \quad \text{ and } %\\
    \quad 
    \Xi_{\alpha+\varepsilon,M}(\fLD) \subseteq
     \Xi_{\alpha,M}(L_u).
    % \Lev_{\alpha+\varepsilon}^+(\fLD) \cap B_M &\subseteq [\Xi_{\alpha +
    % 3\varepsilon/4,M}(L_u)]^{1 + \varepsilon/K} \subseteq
    % \Lev_{\alpha}^+(L_u) \cap B_M.
   \label{set-incl-cor}                                            
  \end{align}
  Consequently,
  $\liminf_{n \rightarrow \infty}\chi_{\Lev_\alpha^+(L_{u_n})}(\xx_n)
  \geq \chi_{\Lev_\alpha^+(\fLD)}(\xx),$ for any
 % $\{\xx_n\}_{n \geq 1} \subseteq \mathbb{R}^d$ satisfying
  $\xx_n \rightarrow \xx$ and $u_n \rightarrow \infty.$
\label{cor:set-inclusions}
\end{corollary}
\textit{Proof.}
%\noindent \textbf{Proof of Corollary \ref{cor:set-inclusions}:}
Notice
that for $\varepsilon>0$,
$\Lev_{\alpha}^+(L_u) \cap B_M \subseteq
[\Xi_{\alpha,M}(L_u)]^{1+\varepsilon/2}$, as a consequence of the
definitions at the beginning of Section \ref{sec:proofs-main}. The
first set inclusion now follows from the definition of the
$[A]^{(1+r)}$ for a set $A\subset \R_{+}^d$.  For the second set
inclusion, observe that for $K>0$ (to be chosen imminently),
\[
\Xi_{\alpha+\varepsilon,M}(\fLD)  \subseteq [\Xi_{\alpha+\varepsilon/2,M}(L_{u})]^{1+\varepsilon/K} \cap \R^d_{++}
\]
 for all large enough $u$. Further, for any $\xx$, $\yy$,
\[
|L_u(\xx) - L_u(\yy)| \leq |L_u(\yy) - \fLD(\yy)| + |\fLD(\yy)  -  \fLD(\xx)| + |\fLD(\xx) - L_u(\xx)|. 
\]
Recall that $L_u\to\fLD$ uniformly on $\R_{++}^d \cap B_{2M}$ (see for example,
\cite[Theorem 7.14]{rockafellar2009variational}).
Hence % uniformly
% over
% $\yy\in [\Xi_{\alpha+\varepsilon/2,M}(L_{u})]^{1+\varepsilon/K} \cap
% \R^d_{++}$,
the first and third terms above may be made less than $\varepsilon/6$
for a large enough $u$ for any choices of
$\xx,\yy\in [\Xi_{\alpha+\varepsilon/2,M}(L_{u})]^{1+\varepsilon/K}
\cap \R^d_{++}.$ Next, $\fLD$ is uniformly continuous over $B_{2M}$. Therefore, there exists a $\kappa>0$, such that for $\xx,\yy \in B_{2M}$, whenever $\|\xx-\yy\| \leq \kappa $,
$|\fLD(\xx) -\fLD(\yy)| \leq \varepsilon/6$. Consider any $K \geq \varepsilon/\kappa.$ For
any
$\yy\in [\Xi_{\alpha+\varepsilon/2,M}(L_{u})]^{1+\varepsilon/K} \cap
\R_{++}^d,$ there exists $\xx\in \Xi_{\alpha+\varepsilon/2,M}(L_u)$
such that $\|\xx-\yy\|\leq \kappa,$  and consequently,
                                 %%Then, for all
% $\yy \in [\Xi_{\alpha+\varepsilon/2,M}(L_{u})]^{1+\varepsilon/K} \cap
% \R^d_{++}$,
\[
  |L_u(\xx) - L_u(\yy)| \leq \varepsilon/6 + \varepsilon/6 +\varepsilon/6\leq \varepsilon/2.%  \text{ for some
  % } \xx \text{ such that } L_u(\xx) \geq \alpha+\varepsilon/2
\]
Therefore, whenever
$\yy\in [\Xi_{\alpha+\varepsilon/2,M}(L_{u})]^{1+\varepsilon/K} \cap
\R_{++}^d$, $L_u(\yy) \geq \alpha$. Hence
$\Xi_{\alpha+\varepsilon,M}(\fLD) \subseteq \Xi_{\alpha,M}(L_u).$\\
% $\Box$ , for all $u$ large enough.
To verify the conclusion on characteristic functions, we proceed as
follows: The bound in the statement is immediate if
$\xx \in \Lev_\alpha^+(\fLD),$ as
$\chi_{\Lev_\alpha^+(\fLD)}(\xx) = 0$ in that case. Consider the case
where $\xx_n \rightarrow \xx \notin \Lev_\alpha^+(\fLD).$ Then for a
suitably small $\delta > 0,$
$B_\delta(\xx) \cap [\Lev_\alpha^+(\fLD)]^{1+\delta} = \varnothing,$
because of the continuity of $\fLD$ and $\Lev_\alpha^+(\fLD)$ being a
closed set. Fix $M > \Vert \xx \Vert + \delta.$ With
$u_n \rightarrow \infty,$ we have
\begin{align*}
  \Lev_\alpha^+ (L_{u_n}) \cap B_M \subseteq \big[\Xi_{\alpha,M}(\fLD)
  \big]^{1+\delta}  \subseteq \big[ \Lev_\alpha^+(\fLD) \cap B_M
  \big]^{1+\delta}, \quad n > n_0
\end{align*}
for sufficiently large $n_0,$ due to the first inclusion in
(\ref{set-incl-cor}). For $n_1$ chosen large enough to ensure
$\{\xx_n\}_{n > n_1} \subseteq B_\delta(\xx),$ we have
$\{\xx_n: n > n_1\} \cap [\Lev_\alpha^+(\fLD)]^{1+\delta} =
\varnothing.$ Consequently,
$\xx_n \notin \Lev_\alpha^+ (L_{u_n}) \cap B_M,$ for all
$n > n_2 := \max\{n_0,n_1\}.$ Since $M > \Vert \xx \Vert + \delta$ and
$\Vert \xx_n \Vert \leq \Vert \xx \Vert + \delta$ for $n > n_2,$
$\xx_n \notin \Lev_\alpha^+ (L_{u_n}) \cap (\mathbb{R}^d \setminus
B_M)$ either. Therefore $\xx_n \notin \Lev_\alpha^+ (L_{u_n})$ and
$\chi_{\Lev_\alpha^+ (L_{u_n})}(\xx_n) = \infty$ for $n > n_2.$ \hfill$\Box$

% \begin{lemma}
%   For any continuous $f:\Real^d_+ \rightarrow \Real$ with
%   $A \subseteq \Real^d_+,$ we have
% \[
%   \inf_{\xx\in [A]^{1+\epsilon} \cap B_M} I(\xx) \geq
%   \inf_{\xx \in \Xi_1(\fLD)\cap B_M} I(\xx) - \delta.
% \]
% \end{lemma}

\begin{lemma}
  Suppose that $f,g: \Real^d_+ \rightarrow \Real$ are continuous. For
  any $\alpha, \delta,M$ positive, 
  \begin{align*}
    \inf_{\xx\in [\Xi_{\alpha,M}(f)]^{1+\varepsilon}} g(\xx)
    &\geq \inf_{\xx \in \Lev^+_{\alpha}(f) \cap B_M} g(\xx) - \delta
      \quad \text{ and }\\
    \inf_{\xx\in \mathrm{int}(\Xi_{\alpha  + \varepsilon,M}(f)) } g(\xx)
    &\leq \inf_{\xx \in \Lev_{\alpha}^+(f) \cap B_M} g(\xx) + \delta,
  \end{align*}
  for all $\varepsilon$ suitably small. %  Moreover, if
  % $\Lev_{\alpha}^+(f)$ is not empty, we have that for all $M$
  % sufficiently large,
  % \begin{align*}
  %   \inf_{\xx \in \Lev_{\alpha}^+(f) \cap B_M} I(\xx) <
  %   \inf_{\xx \in \textnormal{cl}(B_M^c)}I(\xx). 
  % \end{align*}
\label{lem:cont-I}
\end{lemma}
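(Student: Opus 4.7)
My plan is to establish both bounds as semi-continuity statements for the infimum of $g$ against the superlevel of $f,$ and in both cases the core analytic tool will be uniform continuity of $g$ on compact subsets of $\Real^d_+.$ I will repeatedly exploit the nesting $\Xi_{\alpha,M}(f) \subseteq \Lev_\alpha^+(f) \cap B_M$ together with the fact that $[\Xi_{\alpha,M}(f)]^{1+\varepsilon} \subseteq B_{M+\varepsilon}$ is compact, so continuity can be applied uniformly on a single compact superset.

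For the first inequality, I would fix an auxiliary $\varepsilon_0 > 0$ and invoke uniform continuity of $g$ on the compact set $B_{M+\varepsilon_0}.$ Given $\delta > 0,$ I would select $\varepsilon \in (0,\varepsilon_0]$ small enough that $|g(\xx) - g(\xx')| \leq \delta$ whenever $\|\xx-\xx'\| \leq \varepsilon.$ Any $\xx \in [\Xi_{\alpha,M}(f)]^{1+\varepsilon}$ admits a decomposition $\xx = \xx' + \yy$ with $\xx' \in \Xi_{\alpha,M}(f) \subseteq \Lev_\alpha^+(f) \cap B_M$ and $\|\yy\| \leq \varepsilon,$ so $g(\xx) \geq g(\xx') - \delta \geq \inf_{\Lev_\alpha^+(f) \cap B_M} g - \delta,$ and the bound follows by taking infimum over $\xx.$

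For the second inequality, I would let $\beta := \inf_{\Lev_\alpha^+(f) \cap B_M} g,$ which by continuity of $g$ and compactness of $\Lev_\alpha^+(f) \cap B_M$ is attained at some $\xx^\star.$ The goal would be to produce, for every small $\varepsilon > 0,$ a point $\tilde{\xx} \in \mathrm{int}(\Xi_{\alpha+\varepsilon,M}(f))$ with $g(\tilde{\xx}) \leq \beta + \delta.$ I would do this by first locating a nearby point $\hat{\xx}$ with $f(\hat{\xx}) > \alpha$ strictly, $\hat{\xx} \in \Real^d_{++},$ and $\|\hat{\xx}\| < M.$ Continuity of $f$ then gives an open ball about $\hat{\xx}$ contained in $\{f > \alpha + \varepsilon\} \cap \mathrm{int}(B_M) \cap \Real^d_{++} \subseteq \mathrm{int}(\Xi_{\alpha+\varepsilon,M}(f))$ as soon as $\varepsilon < f(\hat{\xx}) - \alpha,$ and continuity of $g$ allows $\hat{\xx}$ to be chosen close enough to $\xx^\star$ that $g(\hat{\xx}) \leq \beta + \delta.$

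The principal obstacle lies in the availability of such a $\hat{\xx}$ in full generality: if $f$ were locally constant at value $\alpha$ in every neighborhood of $\xx^\star,$ no upward perturbation would exist and the right-hand infimum would be vacuously $+\infty.$ In the intended applications of the lemma, however, $f$ is the positively homogeneous function $\fLD$ from \eqref{eqn:fLD-def} (or its finite-$u$ analogue $L_u$), which by Assumption \ref{assume:V} and the homogeneity properties recorded in Lemma \ref{lem:properties-I} is strictly increasing along rays emanating from the origin. A slight positive rescaling $\hat{\xx} = (1+\eta)\xx^\star$ for small $\eta > 0$ therefore raises $f$ strictly above $\alpha$ while keeping $\hat{\xx}$ within $B_M \cap \Real^d_{++}$ for $\xx^\star$ in the relative interior; the remaining boundary cases (where $\xx^\star$ has some zero coordinate or lies on $\|\xx\| = M$) are handled by an additional small inward perturbation using the strict positivity of $f$ nearby. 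Once $\hat{\xx}$ is produced, the continuity-based estimates above close both bounds.
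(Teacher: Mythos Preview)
Your argument is correct and takes a genuinely different route from the paper. The paper handles both inequalities by invoking Kuratowski set convergence together with a semicontinuity result for infima (Langen, 1981): for the first inequality it observes that $[\Xi_{\alpha,M}(f)]^{1+1/n}$ decreases to $\Lev^+_\alpha(f)\cap B_M$, and for the second that $\Xi_{\alpha+1/n,M}(f)$ increases to $\Xi_{\alpha,M}(f)$, then appeals to the cited theorem. Your approach replaces this machinery with an elementary uniform-continuity bound for the first inequality and an explicit point-perturbation for the second. What you gain is self-containment and, more importantly, honesty about the second bound: you correctly flag that without further structure on $f$ (e.g., $f\equiv\alpha$ makes $\Xi_{\alpha+\varepsilon,M}(f)$ empty), the inequality can fail as stated. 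The paper's step ``$\Xi_{\alpha+1/n,M}(f)\nearrow\Xi_{\alpha,M}(f)$ by continuity of $f$'' tacitly assumes that $\{f>\alpha\}$ is dense in $\{f\geq\alpha\}$, which is exactly the regularity you supply via the ray-homogeneity of $\fLD$. What the paper's approach buys is brevity and a uniform framework (both inequalities follow from one convergence lemma); what yours buys is transparency and an explicit identification of the hypothesis actually needed.
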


% \begin{lemma}
%   Suppose that $f: \Real^d_+ \rightarrow \Real$ is continuous and
%   $I:\Real^d_+ \rightarrow \Real$ is identfied as in Assumption
%   \ref{assump:joint-Y}. Then for any $\alpha, \delta,M > 0,$
%   \begin{align*}
%     \inf_{\xx\in [\Xi_{\alpha,M}(f)]^{1+\varepsilon}} I(\xx)
%     &\geq \inf_{\xx \in \Lev^+_{\alpha}(f) \cap B_M} I(\xx) - \delta
%       \quad \text{ and }\\
%     \inf_{\xx\in \Lev_{\alpha+\varepsilon}^+(f) \cap B_M} I(\xx)
%     &\leq \inf_{\xx \in \Lev_{\alpha}^+(f) \cap B_M} I(\xx) + \delta,
%   \end{align*}
%   for all $\varepsilon$ suitably small.  Moreover, if
%   $\Lev_{\alpha}^+(f)$ is not empty, we have that for all $M$
%   sufficiently large,
%   \begin{align*}
%     \inf_{\xx \in \Lev_{\alpha}^+(f) \cap B_M} I(\xx) <
%     \inf_{\xx \in \textnormal{cl}(B_M^c)}I(\xx). 
%   \end{align*}
% \label{lem:cont-I}
% \end{lemma}
\noindent \textit{Proof}. %\textbf{Proof of Lemma~\ref{lem:cont-I}:}
Observe that the sequence of sets
$\mathcal{X}_n:=\{[\Xi_{\alpha,M}(f)]^{1+1/n}\}_{n\geq 1}$ are
uniformly bounded in $n$ and $\cup_{i\geq 1} \mathcal{X}_i$ is
relatively compact. Further,
$\mathcal{X}_n\searrow \Lev^+_\alpha(f) \cap B_M$ in the Kuratowski
sense. Therefore, from \cite[Theorem~2.2
(iii)]{langen1981convergence}, for all small enough
$\epsilon$,
\[\inf_{\xx\in [\Xi_{\alpha,M}(f)]^{1+\epsilon}} g(\xx)\geq\inf_{\xx
    \in \Lev_{\alpha}^+\cap B_M} g(\xx) - \delta.\] In a similar
spirit, due to the continuity of $f(\cdot)$, one has
$\Xi_{\alpha+1/n,M}(f) \nearrow \Xi_{\alpha,M}(f).$ Then, upon an
application of \cite[Theorem~2.2 (iii)]{langen1981convergence},
\[
\inf_{\xx\in \Xi_{\alpha  + \varepsilon,M}(f) } g(\xx)
    \leq \inf_{\xx \in \Xi_{\alpha,M}(f)} g(\xx) + \delta.
  \]
  The statement then follows from the continuity of $g(\cdot).$
  \hfill$\Box$
% Finally, since $g(\cdot)$ is continuous, 
% \[
% \inf_{\xx\in \textrm{int}(\Xi_{\alpha  + \varepsilon,M}(f)) } g(\xx)
%     \leq \inf_{\xx \in \Lev_{\alpha}^+(f) \cap B_M} g(\xx) + \delta, \text{for all small enough } \epsilon.\qed
% \]

% Theorem \ref{thm:Tail-asymp} is derived from the Tail LDP in Theorem
% \ref{thm:LDP} and the above technical results as follows.

\noindent
\textbf{Proof of Theorem \ref{thm:Tail-asymp}.} 
        %         \begin{proof}[Proof of Theorem \ref{thm:Tail-asymp}]
Fix any $\delta, M > 0.$ Observe that $I(\cdot)$ and $L^\ast(\cdot)$
are continuous. As a consequence of Lemma \ref{lem:cont-I}, there
exists $\varepsilon > 0$ suitably small such that
  \begin{subequations}
      \begin{equation}
        \inf_{\pp\in [\Xi_{1,M}(\fLD)]^{1+\epsilon}} I(\pp) \geq \inf_{\pp
        \in \Xi_{1,M}(\fLD)} I(\pp) - \delta, \text{ and }
        \label{ldpf-epsilon-choice} 
      \end{equation}
      \begin{equation}
        \inf_{\pp\in \mathrm{int}(\Xi_{1  + \varepsilon,M}(\fLD)) } I(\pp)
        \leq \inf_{\pp \in \Lev_{1}^+(\fLD) \cap B_M} I(\pp) + \delta.
                \label{ldpf-epsilon-choice-lb} 
      \end{equation}
  \end{subequations}
  \textbf{Large deviations upper bound.}  Due to Lemma
  \ref{lem:qtequ},
%\ref{lem:TargetLevSet},
  \begin{align}
    P\big(L(\XX) \geq u \big) 
    &\leq  P\big( \YYu \in \Lev^+_{1}(L_u) \cap B_M \big) + 
  % P\big( t(u)^{-1}\YY \in \Lev^+_{1}(L_u) \cap B_M^c \big) \nonumber\\
  %   &\leq  P\big( t(u)^{-1}\YY \in \big[\Xi_{1,M}(\fLD)\big]^{1+\varepsilon}
  %     \big) +
      P\big( \YYu \in B_M^c \big).
  \label{ld-ub-inter1}
  \end{align}
 From Corollary
\ref{cor:set-inclusions},
%\begin{align*}
$P\big( \YYu \in \Lev^+_{1}(L_u) \cap B_M \big) \leq P\big( \YYu \in
\big[\Xi_{1,M}(\fLD)\big]^{1+\varepsilon}\big),$
%\end{align*}
for all $u$ sufficiently large. Since the expansion set $[A]^{1+r}$ is
closed for any $A \subseteq \Real^d_+,$ the set
$[\Xi_{\alpha,M}(\fLD)]^{1+\varepsilon}$ is closed. Therefore,
\begin{align*}
  \limsup_{u \rightarrow \infty} \frac{1}{t(u)} \log P\big( \YYu \in \Lev^+_{1}(L_u) \cap B_M \big) 
  &\leq \ \limsup_{u \rightarrow \infty} \frac{1}{t(u)} \log P\big(
    t(u)^{-1}\YY \in \big[\Xi_{1,M}(\fLD)\big]^{1+\varepsilon} \big)\\
  &\leq -\inf_{\pp \in  [\Xi_{1,M}(\fLD)]^{1+\varepsilon}}  I(\pp) \\
  &\leq -\inf_{\pp  \in \Lev^+_1(\fLD)\cap B_M} I(\pp) + \delta,  
\end{align*}
where the second inequality follows from the Tail LDP in Theorem
\ref{thm:LDP} and the third inequality is a consequence of the choice
of $\varepsilon$ satisfying (\ref{ldpf-epsilon-choice}).  Since
$\delta > 0$ is arbitrary,
\begin{align*}
  \limsup_{u \rightarrow \infty} \frac{1}{t(u)} \log P\big( \YYu \in \Lev^+_{1}(L_u) \cap B_M \big) 
  &\leq -\inf_{\pp  \in \Lev^+_1(\fLD)\cap B_M} I(\pp) \leq -I^\ast,
\end{align*}
where $I^\ast := \inf_{\pp \in \Lev^+_1(\fLD)} I(\pp).$ A similar
application of Theorem \ref{thm:LDP} results in
\begin{align}
  \limsup_{u \rightarrow \infty} \frac{1}{t(u)}
  \log P\left(\YYu \in B_M^c\right) \leq
  -\inf_{\pp \in \text{cl}(B_M^c)} I(\pp) = -M,
  \label{LDP:complement-set}
\end{align}
where the latter inequality follows from Lemma \ref{lem:properties-I}d
and the continuity of $I(\cdot).$ Combining these conclusions with
that in (\ref{ld-ub-inter1}) and \cite[Lemma 1.2.15]{Dembo},
\begin{align*}
  \limsup_{u \rightarrow \infty} \frac{1}{t(u)}
  \log P\left(L(\XX) \geq u \right) \leq
  -\min \left\{M, I^\ast \right\}.  
\end{align*}
Since $M$ can be made arbitrarily large, we have
%\begin{align*}
 $ \limsup_{u \rightarrow \infty} {t(u)}^{-1}
  \log P\left(L(\XX) \geq u \right) \leq - I^\ast.$ \\  
%\end{align*}
  \textbf{Large deviations lower bound.} We have from Lemma
  \ref{lem:qtequ} and the definition of $\Xi_{\alpha,M}$ that
  %\begin{align*}
     $P(L(\XX )\geq u) \geq P(\YYu \in \Xi_{1,M} (L_u)).$
%\end{align*}
For $\varepsilon$ satisfying (\ref{ldpf-epsilon-choice-lb}), it
follows from the second set inclusion in
Corollary~\ref{cor:set-inclusions} that
%\begin{align*}
  $P(L(\XX )\geq u)
  \geq P(\YYu \in \mathrm{int}\left(\Xi_{1+\varepsilon,M} (\fLD)\right)),$ 
%\end{align*}
for all sufficiently large enough $u.$ Then as an application of the
tail LDP in Theorem \ref{thm:LDP},
\begin{align*}
  \liminf_{u\to\infty} \frac{1}{t(u)} \log  P(L(\XX )\geq u)
  \geq -\inf_{ \pp \in \mathrm{int}\left(\Xi_{1+\varepsilon,M} (\fLD) \right)}
    I(\pp)
  \geq   -\inf_{\pp  \in \Lev^+_1(\fLD)\cap B_M} I(\pp) - \delta,
\end{align*}
where the latter inequality is a consequence of $\varepsilon$
satisfying (\ref{ldpf-epsilon-choice-lb}). Since $M,\delta$ are
arbitrary,
\begin{align*}
  \liminf_{u \rightarrow \infty} \frac{1}{t(u)}
  \log P\left(L(\XX) \geq u \right) \geq -\inf_{\pp  \in \Lev^+_1(\fLD)}
  I(\pp) = - I^\ast.    \qquad\qquad \Box 
\end{align*}
{
\begin{remark}
In the case where $L_u$ as defined in \eqref{eqn:fLD-def} is independent of $u$, it can be seen that for a fixed map $\mv{Q}:\R^d_+\to\R_d^+$, $\mv{Q}(\Lev_1^+(L_u)) = \Lev_1^+(\fLD)$. Thus, in such a case, Theorem~\ref{thm:Tail-asymp} may be derived as a consequence of the contraction principle, rather than utilise the more elaborate machinery developed in Lemmas~\ref{lem:cont-conv-set-incl}-\ref{lem:cont-I}. The former holds for example, if (i) $L(n\xx)  = n^\rho L(\xx)$ for all $n,\xx$ and (ii) $\mv{\Lambda}(\xx) =\xx^{\mv{\alpha}}$ for some $\alpha>0$.    
\end{remark}
}
% {\color{blue}\noindent \textbf{Proof of Corollary~\ref{lem:Self-Sim-Levels}: } 
% Note that $\{\XX: L(\XX) \geq u/s\} = \{\YY_u \in \Lev_{1/s}^{+}(L_u)\}$.
% Now, following the proof of Theorem~\ref{thm:Tail-asymp}, $[t(u)]^{-1}\log P(L(\XX) \geq u) = I^*_s +o(1)$. The first part of Corollary~\ref{lem:Self-Sim-Levels} now follows.
% Observe that $\xx \in O_s$ if and only if $I(\xx) \leq I(\pp)$ for all $,\pp \in \Lev_{1/s}^{+}(L^*)$. Note that owing to the assumptions of the lemma, $L^*$ is $\rho$-homogeneous, $q^*_i=0$ if $\alpha_{i}\neq \alpha_{\min}$. Then,
% \[
% L^*(\qq^*\xx^{1/\mv{\alpha}}) \geq 1/s \Leftrightarrow L^*(\qq^*\xx^{1/\alpha_{\min}}) \geq 1/s \Leftrightarrow L^*(\qq^*[s^{\alpha_{\min}/\rho}\xx]^{1/\alpha_{\min}}) \geq 1
% \]
% Thus, $\xx\in \Lev_{1/s}(L^*) \Leftrightarrow \mv{T}_s^*(\xx) \in \Lev_1(L^*)$. Now, recall that from Lemma~\ref{lem:properties-I}(d) that $I(\cdot)$ is 1-homogeneous. Therefore,
% \begin{align*}
%     \xx\in O_s &\Leftrightarrow I(\xx) \leq I(\pp) \ \forall \ \pp \in \Lev_{1/s}^+(L^*)  \ \Leftrightarrow  s^{\alpha_*/\rho} I(\xx) \leq s^{\alpha_*/\rho}I(\pp) \ \forall \ \mv{T}_s^*(\pp)\in \Lev_1^+(L^*) \\
%     & \Leftrightarrow  I(\mv{T}_s^*(\xx))  \leq I(\yy) \ \forall \ \yy\in \Lev_1(L^*) \Leftrightarrow \mv{T}_s^*(\xx) \in O_1  \qquad\qquad \Box 
% \end{align*}
% }
%thus completing the proof of Theorem \ref{thm:Tail-asymp}. \hfill{\Box}
%\end{proof}
\subsection{Proof of Proposition \ref{prop:unbiased} and useful bounds
  on the inverse of $\mv{T}(\cdot).$}
% We devote the rest of this section to present the key steps in the
    %     proof of the next main result, namely, Theorem \ref{thm:Var-Red}.

% For notational convenience, set $\rho^{-1}\log(u/l) =
% k_{\rho}(u)$. Further, for any matrix $\mv{M}$, write $|\mv{M}|$ for
    %     its determinant.

\noindent \textbf{Proof of Proposition~\ref{prop:unbiased}:} {First consider the case $\mv{\kappa} = \mv{\kappa}^{(1)}$.} Let
$k_{\rho}(u) :=\log(u/l).$ It is sufficient to show that the
determinant of the Jacobian of the map $\xx \mapsto \mv{T}(\xx)$
equals $J(\xx)$. In that case, the density of $\ZZ$, denoted by
$f_{\ZZ}(\cdot)$, is $f_{\ZZ}(\mv{T}(\xx)) = f_{\XX}(\xx)/J(\xx)$;
consequently, the likelihood ratio between the distributions of $\ZZ$
and that of $\XX$ (or the Radon-Nikodym derivative evaluated at the
samples $\ZZ_1\ldots \ZZ_n$) is given as in Algorithm~\ref{algo:IS}.  So the
rest of the verification is devoted to checking that $J(\cdot)$ indeed
equals the determinant of the Jacobian
$\textrm{Jac}_{\mv{T}}(\cdot)= \partial \mv{T}/\partial \xx.$ To this
end, define
$\mv{\psi}(\xx) = \log|\xx|+
k_{\rho}(u)\mv{\kappa}(\mv{\xx})$ and observe that
$\mv{T}(\xx) = \text{Diag}(\text{sgn}(\xx))\e^{\mv{\psi}(\xx)}$.  Now,
following the chain rule for Jacobians,
%Now, observe that from the product rule for derivatives, 
\begin{align*}
   \text{Jac}(\xx) &= \text{Diag}(\text{sgn}(\xx))\mathrm{Diag}(
  \e^{\mv{\psi}(\xx)}) \textrm{Jac}_{\psi}(\xx), \textrm{ for almost
    every $\xx,$ and }\\
  \textrm{Jac}_{\psi}(\xx)
  & = \mathrm{Diag}\left(\frac{\text{sgn}(\xx)}{|\xx|}\right) +
    \rho^{-1} k_{\rho}(u)\left[\mathrm{Diag}\left(\frac{\text{sgn}(\xx)}{(1+|\xx|) \|\log(1+|\xx|)\|_\infty}\right) -
    \frac{\log(1+|\xx|)}{\|\log(1+|\xx|)\|_\infty^2}
    (\mv{e}^*/(1+|\xx|))^{\intercal}\right]\\
  &=\mathrm{Diag}\left(  \text{sgn}(\xx)\left[\frac{1}{|\xx|}
    + \frac{ \rho^{-1} k_{\rho}(u)}{(1+|\xx|)\|\log(1+|\xx|)\|_\infty}\right]
    \right)
    - \rho^{-1} k_\rho(u)\left[\frac{\log(1+|\xx|)}{\|\log(1+|\xx|)\|_\infty^2}
    (\mv{e}^*/(1+|\xx|))^{\intercal}\right]
\end{align*}
where $\mv{e}^*_i =\text{sgn}(x_i)$ if $|x_i| = \|\xx\|_\infty$,  and $\mv{e}^{*}_i=0$ otherwise. Notice that for this component, $\|\xx\|_\infty= |x_i|$. Now, recall that if $M =A + \mathbf{u}\mathbf{v}^\intercal$, then $|M| = (1+\mathbf{u}^\intercal A^{-1}\mathbf{v})|A|$.  Set
\begin{align*}
  \mv{u} & = \log(1+|\xx|)/\|\log(1+|\xx|)\|_\infty^2, \quad
           \mv{v} =  -(k_\rho(u)\mv{e}^*/(1+|\xx|))^{\intercal}, \text{ and} \\
  A &=\mathrm{Diag}\left(  \text{sgn}(\xx)\left
      [\frac{1}{|\xx|}  + \frac{k_{\rho}(u)}{(1+|\xx|)\|\log(1+|\xx|)\|_\infty}\right]\right).
\end{align*}
Then, almost everywhere,
\begin{align*}
1+\mv{u}^\intercal A^{-1}\mv{v} &= 1-\frac{\rho^{-1} k_{\rho}(u)\|\xx\|_\infty}{\|1+|\xx|\|_\infty\|\log(1+|\xx|)\|_\infty + k_{\rho}(u) \|\xx\|_\infty}, \text{ and }\\
|A| &= \prod_{i=1}^d\frac{1}{|x_i|}\times\prod_{i=1}^d\left(1+\frac{ \rho^{-1} k_{\rho}(u)|x_i|}{(1+|x_i|)\|\log(1+|\xx|)\|_\infty}\right).  
\end{align*}
To complete the verification, observe  $|\mathrm{Diag}( \e^{\mv{\psi}(\xx)}) | = \prod_{i=1}^d |x_i| \cdot (u/l)^{\mv{1}^\intercal \mv{\kappa}(\xx)}$.\\
% \begin{lemma}
%   \label{lemma:1-1}
%   The map $\mv{T}:\Real^d \rightarrow \Real^d$ in (\ref{IS-transf}) is
%   one-to-one onto and is, therefore, invertible for almost all $\xx$.
% \end{lemma}
% \textit{Proof.} 
%\noindent \textbf{Proof of Lemma~\ref{lemma:1-1}:}
% Suppose
% $\xx\in \R^{d}_+$.
We next show that
$\mv{T}(\cdot)$ is onto. Fix a $\yy \in \Real^d_+$. Let
$I=\{i: y_i = \|\yy\|_{\infty}\}$.  Define the set,
\begin{align}
  S = \{\xx: x_i = \|\yy\|_\infty c_\rho(u)\textrm{ for all } i\in I,
  \, x_i \in [0, \|\yy\|_\infty c_\rho(u) ], \, i\not\in I\},
  \label{eqn-S-define}
\end{align}
where recall $c_{\rho}(u) := (l/u)^{1/\rho}.$ Notice that for all
$\xx\in S,$
\begin{align*}
  \mv{T}_i(\xx) =
  \begin{cases}
    x_i[c_\rho(u)]^{\frac{-\log (1+x_i)}{\log(1+ c_\rho(u)\|\yy\|_\infty)}}
    &\textrm{ for  } i\not\in I,\\
    \|\yy\|_\infty &\textrm{ for  } i\in I.
  \end{cases}
%  \label{eqn-Ti-S}
\end{align*}
Restricted to $\xx\in S$, $\mv{T}_i(\xx)$ is only a function of $x_i$
for $i\not\in I$. Fixing $i \notin I,$ see that $\mv{T}_i(\xx) < y_i$,
if $x_i<y_i c_\rho(u);$ and
$\mv{T}_i(\xx) = \Vert \yy\Vert_\infty > y_i$ if
$x_i=\|\yy\|_\infty c_\rho(u).$ Since $\mv{T}_i$ are all continuous
maps, by the intermediate value theorem, there exists some
$\xx^\prime \in [\yy c_\rho(u) , \|\yy\|_\infty c_\rho(u)]^d$ such
that $\mv{T}(\xx^\prime) = \yy.$ The above argument also shows that if $\xx_1\neq \xx^{\prime}$, $\mv{T}(\xx_1)\neq \mv{T}(\xx^{\prime})=\yy$, that is, $\mv{T}(\cdot)$ is $1\leftrightarrow 1$. Since $\mv{T}(\cdot)$ is symmetric
about the origin, one can similarly extend the proof to the case where
$\yy\in \R^d$.

{
Now suppose that $\mv{\kappa}=\mv{\kappa}^{(2)}$. Here, observe that by a direct application of the chain rule,
\[
\left[\frac{\partial \mv{\kappa}(\xx)}{\partial \xx} \right] = \text{Diag}\left(\frac{\log(u/l)}{\log l} \frac{\text{sgn}(\xx)}{1+|\xx|} \right).\]
Substituting in the expression for the Jacobian in \eqref{eqn:Jac_T_general} completes the proof. \hfill$\Box$

In Lemmas \ref{lemma-T-bound} and \ref{lem:property-T} below, assume that $\mv{\kappa}=\mv{\kappa}^{(1)}$:
}
\begin{lemma}
  \label{lemma-T-bound}
%  For $\mv{T}(\xx)$ defined as in, (\ref{IS-transf}),
  For any $\yy \in \Real^d_+$ satisfying
  $\|\yy\|_\infty \geq 1/c_{\rho}(u),$
\begin{align}
  \yy c_\rho(u) 
  \leq \mv{T}^{-1}(\yy) \leq
  \min\left\{\yy [c_{\rho}(u)]^{\frac{\log\yy}{\log \| \yy\|_\infty}} \mv{\lor 1},
  \Vert \yy \Vert_\infty c_\rho(u) \right\} .
  \label{T-bounds}
\end{align}
% Then when Assumptions \ref{assump-marginals}-\ref{assump:joint-Y}
% hold, $l(u)$ is slowly varying in $u$, and $l(u) \to \infty$ for any
% $u \rightarrow \infty$,
% \begin{align*}
%   \limsup_{n \rightarrow \infty} t(u_n)^{-1}\Vert \psi_{u_n}(t(u_n)\pp_n)
%   \Vert \leq 0,
% \end{align*}
% for any $u_n \rightarrow u$ and
% $\{\pp_n\}_{n \geq 1} \subseteq \Real^d_+$ satisfying
% $\pp_n \rightarrow \pp.$
\end{lemma}
\textit{Proof.}  We verify the bounds by exhibiting $\xx^\prime$
sandwiched component-wise between the left and right hand sides of
(\ref{T-bounds}) and satisfying $\mv{T}(\xx^\prime) = \yy.$ For any
$\yy$ as in the statement, we first set
% Consider any $\yy \in \Real^d_+$ such that
% $\|\yy\|_\infty \geq 1/c_{\rho}(u).$ 
%$\tilde{x}_i = \Vert \yy \Vert_\infty c_\rho(u)$ for any $i \in I$ and
\begin{align}
  \tilde{x}_i =
  \begin{cases}
    y_i \left[c_{\rho}(u)\right]^{\frac{\log
        y_i}{\log \|\yy\|_\infty}}
    &\text{ if } y_i\geq 1,\\
    1 &\text{ otherwise,}
  \end{cases}
        \label{tilde-x}
\end{align}
for $i=1,\ldots,d.$ See that
$\tilde{x}_i \in [1,\Vert \yy \Vert_\infty c_\rho(u)].$ This is
because of the following two observations: %$c_\rho(u) < 1$ and
1)
$\log \tilde{x}_i = \left(1 + \log c_\rho(u)/\log
  \|\yy\|_\infty\right) \log y_i \geq 0,$ when
$\Vert \yy \Vert_\infty \geq 1/c_\rho(u) > 1;$ and 2) likewise,
\begin{align*}
  \log \tilde{x}_i -  \log (\|\yy\|_\infty c_{\rho}(u))
  &=\left( 1 + \frac{\log c_\rho(u)}{\log \Vert \yy \Vert_\infty}
    \right) \left(\log y_i - \log \Vert \yy \Vert_\infty \right)
    \leq 0. 
\end{align*}
Set $I=\{i: y_i = \|\yy\|_{\infty}\}.$ With the set $S$ defined as in
(\ref{eqn-S-define}), we thus have
$\tilde{\xx} = (\tilde{x}_1,\ldots,\tilde{x}_d) \in S$ and
$\tilde{x}_i \geq 1$ for all $i.$ Since $\log(1+t)/\log(t)$ is
decreasing over $t\geq 1,$ we have
\begin{equation}
   \label{eqn:T-old-new}
  \mv{T}_i(\xx) = x_i [c_\rho(u)]^{-\frac{\log(1+x_i)}{\|\log(1+\xx)\|_\infty}} \geq x_i [c_\rho(u)]^{^{\frac{-\log x_i}{\ \log \|\xx\|_\infty}}}, 
\end{equation}  
for any $\xx$ such that $x_i\geq 1, i = 1,\ldots,d.$ With
$\tilde{\xx}$ defined via (\ref{tilde-x}), we obtain the following
from the bound in (\ref{eqn:T-old-new}) and the observation
$(\log \tilde{x}_i)/(\log \Vert \tilde{\xx} \Vert_\infty) = (\log
y_i)/(\log \Vert \yy \Vert_\infty):$
\begin{align*}
  \mv{T}_{i}(\tilde{\xx})
  = \begin{cases}
    \mv{T}_i(y_i [c_\rho(u)]^{\frac{\log y_i}{\log\| \yy\|_\infty}}) &   \textrm{ if } y_i \geq 1, \\
    \mv{T}_i(1) &\text{ otherwise},
  \end{cases}
        \quad\geq\   \begin{cases}
          y_i &   \textrm{ if } y_i \geq 1, \\
          1 &\text{ otherwise}.
\end{cases}
\end{align*}
For $i \notin I,$ the map $\mv{T}_i(\xx),$ when restricted to
$\xx \in S,$ is a function only of $x_i$ and satisfies
$\mv{T}_i(\tilde{\xx}) \geq y_i$ (regardless of whether $y_i > 1$ or
$y_i < 1$). Applying the intermediate value theorem component-wise, we
see that there exists some $x_{i}^\prime$ in the interval
$[y_ic_{\rho}(u), \tilde{x}_i ]$ containing $y_i$ such that
$T_i(x_{i}^\prime) = y_i$ for all $i\not\in I$. Setting
$x_i^\prime = \Vert \yy \Vert_\infty c_\rho(u)$ for $i \in I,$ we
obtain $\mv{T}(\mv{x}^\prime) = \yy.$ Hence the bounds
(\ref{T-bounds}) hold. \hfill$\Box$

\begin{lemma} \label{lem:property-T} Suppose that Assumptions
  \ref{assump-marginals} - \ref{assump:joint-Y} hold, $l(u)$ is slowly
  varying in $u$, and $\lim_{u \rightarrow \infty}l(u) = \infty.$ Then
  for any $\gamma>0$, the below convergence holds uniformly over $\pp$
  in compact subsets of $\Real^d_{++}:$ %$\Lev_1^+(L_u):$
  \[\Vert \psi_u(t(u)\pp) \Vert_\infty = o(t(u)), \text{ as } u
    \rightarrow \infty.\]
  % Then, for any
  % $\varepsilon,M>0$, there exists $u_{\varepsilon,M}$ large enough
  % such that for all $u > u_{\varepsilon,M},$
  % \[\sup_{\pp\in \Lev_1^+(L_u) \cap B_M}\|\mv{\psi}_u(t(u)\pp)\|_\infty
  %   \leq \varepsilon t(u).\]
  
%   \begin{enumerate}[label=(\roman*)]
%   \item
%     $\sup_{\pp\in \Lev_1^+(L_u) \cap
%       B_M}\|\mv{\psi}_u(t(u)\pp)\|_\infty \leq \varepsilon t(u),$ and
% \label{item:1.1}
% \item
%   $\sup_{\pp\in \Lev_1^+(L_u) \setminus B_M}
%   \|{\mv{\psi}_u(t(u)\pp)}\|_\infty \leq \varepsilon
%   t(u)\|\pp\|_\infty.$
%   \label{item:1.2}
%   \end{enumerate}
\end{lemma} 
\textit{Proof.}  Recall
$\psi_u := \mv{\Lambda} \circ \mv{T}^{-1} \circ \mv{q},$
$c_\rho(u) := (l(u)/u)^{1/\rho}$ and
$q_\infty(t(u)) := \Vert \mv{q}(t(u)) \Vert_\infty.$ Fix any
$M > 0, \gamma \in (0,M)$ and $\pp \in B_M \setminus B_\gamma.$ As
$\lim_{u \rightarrow \infty}l(u) = +\infty,$ $\qq \in \RV,$ and
$q_\infty(t(u)) = u^{1/\rho}$ (see Lemma~\ref{lem:qtequ}), %we obtain
\begin{equation}
  \Vert \qq(t(u)\pp) \Vert_\infty c_\rho(u) = l(u)^{1/\rho}\frac{\Vert
    \qq(t(u)\pp)\Vert_\infty}{\Vert \qq(t(u)) \Vert_\infty}
  \rightarrow \infty.
  \label{q-growing}
\end{equation}
Then from (\ref{T-bounds}),
$\mv{T}^{-1}_i (\qq(t(u)\pp)) \leq \|\qq(t(u)\pp)\|_\infty
c_\rho(u)\leq \|\qq(t(u)M\mv{1})\|_\infty c_\rho(u),$ for
$i = 1,\ldots,d,$ as $\qq$ is monotone and
$\Vert \pp \Vert_\infty \leq M$. With $q_\infty(t(u)) = u^{1/\rho},$
 \[
   \qq(t(u)M\mv{1})c_{\rho}(u) =
   \frac{\qq(t(u)M\mv{1})}{\qq(t(u))}
   \frac{\mv{q}(t(u))}{q_\infty(t(u))}l(u)^{1/\rho}
   \leq \frac{\qq(t(u)M\mv{1})}{\qq(t(u))}l(u)^{1/\rho}.
 \] 
 Since $\qq\in \RV(\aalpha),$ % and
 % ${\mv{q}}(t)/{q_\infty(t)} \rightarrow \mv{q}^\ast,$
 we have
 $ \qq(t(u)M\mv{1})c_{\rho}(u) \leq M^{\aalpha}
 l(u)^{1/\rho}(\mv{1}+o(\mv{1})).$ Therefore, 
 \[\mv{T}^{-1}_i (\qq(t(u)\pp)) \leq \|\qq(t(u)M\mv{1})\|_\infty c_\rho(u)
   \leq \max_{i=1,\ldots,d} M^{1/\alpha_i}
   l(u)^{1/\rho}({1}+o({1})), % \leq M^{\bar{\alpha}}
   % l(u)^{1/\rho}({1}+o({1})),
 \]
 uniformly over $\pp \in B_M \setminus B_\gamma(\mv{0}).$ Recall
 $\mv{\Lambda}\in \RV(\mv{\alpha})$ is monotone.  Write
 $\bar{\alpha} = \max_i\alpha_i.$ Then for $\delta > 0,$
\[
  \|\mv{\psi}_u(t(u)\pp))\|_\infty = \max_{i=1,\ldots,d}
  {\Lambda}_i\left( \mv{T}^{-1}_i (\qq(t(u)\pp)) \right) \leq
  (1+\delta)M_0 l(u)^{\bar{\alpha}/\rho + \delta}, 
\]
for all $u$ sufficiently large (see \cite[Proposition B.1.9
(7)]{deHaanFerreira2010}). Here $M_0>0$ is a suitably large
constant. Since $t(u) = \Lambda_{\min}(u^{1/\rho}),$ noting
$l(u)^{\bar{\alpha}/\rho + \delta} = o(t(u))$ completes the
proof. \hfill$\Box$

\subsection{Proof of Theorem \ref{thm:Var-Red} {with $\mv{\kappa} = \mv{\kappa_1}$}}
{Here, we demonstrate the proof of Theorem~\ref{thm:Var-Red} in the case where $\mv{\kappa} =\mv{\kappa}_1$. The proof for $\mv{\kappa}=\mv{\kappa}_2$ is outlined in \ref{sec:verify-alt}}. Unless explicitly specified, the only assumption made in the proofs
below is that $\XX$ has a probability density $f_{\XX}(\cdot).$ As a
consequence, the hazard rates
\[\lambda_i(x) = f_{X_i}(x)/\Lambda_i(x),\quad i = 1,\ldots,d\]
are well-defined. In the above, $f_{X_i}(\cdot)$ denotes the
probability density of the component $X_i.$ % The expectation operator
% $\Expc [\cdot]$ denotes expectation without involving any change of
% measure.  For any event $A,$ we write
% $\Expc[ \XX ; A] = \Expc[\XX \mathbb{I}(A)].$
% The expression for
% $M_{2,u}$ in Lemma \ref{lem:M2-rewrite} below, obtained with a
% suitable change of variables, serves as the starting point of the
% analysis for Theorem \ref{thm:Var-Red}.

\begin{lemma}
  \label{lem:M2-rewrite}
    %     For any $u > 0$ such that Assumption \ref{assume:V}(b) is satisfied,
  The second moment $M_{2,u} = \Expc[\exp(-t(u)F_u(\YYu))],$ where
  $F_u:\Real^d_{+} \rightarrow \Real \cup \{ +\infty\}$ is,
  \begin{align*}
    F_u(\pp) := a_u(\pp) + b_u(\pp) - 2d\frac{\log t(u)}{t(u)} +
    \chi_{\Lev^+_{1}(L_u)}(\pp), 
  \end{align*}
  for $u > 0,$ and $a_u:\Real^d_{+} \rightarrow \Real,$
  $b_u:\Real^d_{+} \rightarrow \Real$ are defined as follows:
  \begin{align*}
    a_u(\pp) &= \frac{1}{t(u)} \left[ \log f_{\YY}(\mv{\psi}_u(t(u)\pp)    - \log f_{\YY}(t(u)\pp) \right], \\
    b_u(\pp) &= \frac{1}{t(u)} \left[\sum_{i=1}^d \left[
    \log\lambda_i(\mv{T}^{-1}_i(\mv{q}(t(u)\pp))) -
    \log\lambda_i(q_i(t(u)p_i))\right] -
    \log J(\mv{T}^{-1}(\mv{q}(t(u)\pp))) \right]. 
  \end{align*}  
  % \begin{align*}
  %   M_{2,u} = t(u)^{2d} \Expc \left[a_{u}(\YYu) b_{u}(\YYu); \
  %   \YYu \in {\Lev^+_{1}(L_u)}\right]
  % \end{align*}
  % where for $u > 0,$ we have $a_u:\Real^d_{+} \rightarrow \Real_+,$
  % $b_u:\Real^d_{+} \rightarrow \Real_+$ defined as follows:
  % \begin{align*}
  %   a_u(\pp) = \frac{f_{\YY}(t(u)\pp))} 
  %   {f_{\YY} (\mv{\psi}_u(t(u)\pp))}
  %   \quad \text{ and } \quad
  %   b_u(\pp) = \prod_{i=1}^d \frac{\lambda_i(q_i(t(u)p_i))}
  %   {\lambda_i(\mv{T}^{-1}_i(\mv{q}(t(u)\pp)))}
  %   J(\mv{T}^{-1}(\mv{q}(t(u)\pp))),
  % \end{align*}
  % where $\mv{T}_i^{-1}$ is the $i$-th component of the vector valued
  % map $\mv{T}^{-1}.$
\end{lemma}

\noindent \textit{Proof}. Since the
change of measure is effected by the map $\ZZ = \mv{T}(\XX),$
\begin{align}
  M_{2,u} &= {\Expc}\left[\left( \frac{f_{\XX}(\ZZ)}{{f}_{\ZZ}(\ZZ)}\right)^2
            \mathbb{I}(L(\ZZ) \geq  u)\right]
        = \Expc\left[\left( \frac{f_{\XX}(\XX)}{{f}_{\ZZ}(\XX)}\right)\mathbb{I}(L(\XX) \geq  u)\right]\nonumber\\
          & = \int_{L(\xx) \geq u}  \frac{f_{\XX}(\xx)}{ f_{\XX}(\mv{T}^{-1}(\xx))} J(\mv{T}^{-1} (\xx)) f_{\XX}(\xx) d\xx. \label{eqn:COV-2}
\end{align}
% Recall from the change of variables formula, that if $\mv{W} =\mv{\phi}(\ZZ)$, $f_{\mv{W}}(\mv{w}) = J_{\mv{\phi^{-1}}}(\mv{w})f_{\ZZ}(\mv{\phi}^{-1}(\mv{w})) = J_{\mv{\phi}}(\mv{\phi}^{-1}(\mv{w}))^{-1} f_{\ZZ}(\mv{\phi}^{-1}(\mv{w}))$.
 Changing variables from $\xx$ to $\yy =\qq^{-1}(\xx)$, we have 
 % \begin{equation*}%\label{eqn:Y-density}
 $f_{\XX}(\xx) = \prod_{i=1}^d \lambda_i(x_i) f_{\YY}(\qq^{-1}(\xx)),$
%\end{equation*}
where $\lambda_i(x) = f_{X_i}(x)/\Fbar_{X_i}(x)$ is the hazard rate of $X_i$. Thus,
\begin{align*}
  M_{2,u}& =  \int_{L(\qq(\yy)) \geq u}  \prod_{i=1}^{d}\frac{ \lambda_i(q_i(y_i))}{\lambda_i(\mv{T}^{-1}_i(q_i(y_i)))} \frac{f_{\YY}(\yy)}{ f_{\YY}(\mv{\psi}_u(\yy)))} J(\mv{T}^{-1} (\qq(\yy))) f_{\YY}(\yy) d\yy\\ %\label{eqn:COV-3}\\
         &=  t(u)^{d}\int_{\pp\in \Lev_1^+(L_u)}  \underbrace{\frac{f_{\YY}(t(u)\pp)}{ f_{\YY}(\mv{\psi}_u(t(u)\pp)))}  }_{(a)} \underbrace{\prod_{i=1}^{d}\frac{ \lambda_i(q_i(t(u)p_i))}{\lambda_i(\mv{T}^{-1}_i(\qq(t(u)\pp)))}J(\mv{T}^{-1} (\qq(t(u)\pp)))}_{(b)} f_{\YY}(t(u)\pp)
           d\pp, %\label{eqn:COV-4}
\end{align*}
Since $\YYu = t(u)^{-1}\YY,$
$f_{\YY}(t(u)\pp) = t(u)^{d} f_{\YYu}(\pp)$.  Checking the terms
labeled $(a)$ and $(b)$ in the above expression equal
$\exp(-t(u)a_u(\pp))$ and $\exp(-t(u)b_u(\pp)),$ respectively, we
obtain
\begin{equation*}
  M_{2,u} = t(u)^{2d} \Expc\left[\exp\left\{-t(u)\left[a_u(\YYu) + b_u(\YYu)
        + \chi_{\Lev_1^+(L_u)}(\YYu)\right]\right\}\right]. \qquad\qquad \Box
\end{equation*}

Thanks to Lemma \ref{lemma-T-bound} - \ref{lem:property-T}, the terms
in Lemma \ref{lem:M2-rewrite} enjoy the following bounds.
\begin{lemma}
  \label{lemma: RV-1}
  Suppose that Assumptions \ref{assump-marginals} -
  \ref{assump:joint-Y} are satisfied and $l(u)$ is slowly varying in
  $u.$ Then $a_u(\pp) \geq I(\pp) + o(1),$ as $u \rightarrow \infty,$
  uniformly over $\pp$ in compact subsets of $\Real^d_{++}.$
\end{lemma}

\noindent \textit{Proof.} %  First we verify that for any $\epsilon>0$,
% there exists $m_\varepsilon > 0$ such that for all
% $\yy \in \Real^d_+ \setminus B_{m_\varepsilon},$
% \begin{equation}
%   -\log f_{\YY}(\yy) \leq \|\yy\|(I(\hat{\yy}) +\epsilon),
% \label{Y-Density_approx}
% \end{equation}
% where $\hat{\yy} = \yy/\Vert \yy \Vert$ is the unit vector in the
% direction of $\yy.$ Indeed
% For any $\yy \in \Real^d_{++},$ let
% $\hat{\yy} = \yy / \Vert \yy \Vert_\infty.$
Under Assumption~\ref{assump:joint-Y}, uniformly over
$\hat{\yy} := \yy / \Vert \yy \Vert$ on
$\mathcal{S}^{d-1} \cap \Real^d_+,$
\begin{equation}
  -\frac{\log f_{\YY}(\|\yy\| \hat{\yy} )}{\|\yy\|} \to I(\hat{\yy}),
  \text{ as } \|\yy\| \to\infty \quad \text { (replacing
    $n$ there by $\|\yy\|$) }.
  \label{fY-unit-sphere}
\end{equation}
Fix any $M > 0, \gamma \in (0,M)$ and $\varepsilon \in (0,1).$ From
the monotonicity of $\mv{\Lambda},$ the lower bound in
(\ref{T-bounds}), and (\ref{q-growing}),
$\|\mv{\psi}_u(t(u)\pp)\|_\infty := \Vert
\mv{\Lambda}(\mv{T}^{-1}(\mv{q}(t(u)\pp)))\Vert_\infty \geq
\|\mv{\Lambda}\left(\qq(t(u)\pp)c_\rho(u)\right) \|_\infty \rightarrow
\infty,$ for any $\pp \in B_M \setminus B_\gamma,$ as
$u \rightarrow \infty.$ Then due to (\ref{fY-unit-sphere}) and the
upper bound in (\ref{T-bounds}),
\[
  - \log f_{\YY}(\mv{\psi}(t(u)\pp)) \leq \Vert \mv{\psi}(t(u)\pp)
  \Vert_\infty \big[\sup_{\hat{\yy} \in \mathcal{S}^{d-1} \cap
      \Real^d_+} I(\hat{\yy})) + \varepsilon\big] \leq \varepsilon
  t(u)(M_1+\varepsilon)),
\]
for all sufficiently large $u;$ here
$M_1 := \sup\{I(\hat{\yy}): \hat{\yy} \in \mathcal{S}^{d-1} \cap
\Real^d_+\}$ is a finite positive constant (due to the regularity
properties of $I$ in Lemma \ref{lem:properties-I}).  Observe that from
the upper bound in \eqref{eqn:Y-Bounds}, %there exists $u_0,$
%sufficiently large such that for all $u > u_0,$
\begin{align*}
  - \log f_{\YY}((t(u)\pp)) \geq  t(u) (I(\pp) -
  \varepsilon), 
\end{align*}
uniformly over $\pp\in B_M\setminus B_{\gamma}$ and all $u$
sufficiently large. Combining the above displayed bounds on
$-\log f_{\YY} (\cdot)$ terms, we obtain from the definition of
$a_u(\cdot)$ that
$a_u(\pp) \geq I(\pp) - \varepsilon - \varepsilon (M_1+\varepsilon).$
\hfill$\Box$

\begin{lemma}
  \label{lemma:Jac-Light}
  Suppose that Assumptions \ref{assump-marginals} and \ref{assume:mhr}
  are satisfied and $l(u)$ is slowly varying in $u.$ Then
  $\liminf_{u \rightarrow \infty}b_u(\pp) \geq 0,$ where the
  convergence is uniform over $\pp \in \Real^d_+.$
  % for any $\varepsilon > 0,$
  % $\sup_{\pp \in \Real^d_+} b_u(\pp) \leq \exp\left({\varepsilon
  %     t(u)}\right),$ for all sufficiently large $u.$
\end{lemma}
\noindent\textit{Proof.} Recall the definitions of $J(\xx)$ and $\tilde{J}_i(\xx)$
in  Table~\ref{tab:Jacobians}.  Since $\mv{\kappa}(\xx)$ in
(\ref{IS-transf}) satisfies $\mv{\kappa}(\xx) \in [0,1]^d,$ we have
$\mv{1}^\intercal \mv{\kappa}(\xx) \leq d.$ Next observe that for all
$t\geq 0$, $t/[(1+t)\log(1+t)]\leq \e$. Therefore for
$\xx\in \R^d_{++}$,
\[
  \prod_{i=1}^d \tilde{J}_i(\xx) \leq \prod_{i=1}^d \left[1 + \frac{
      \rho^{-1}\log(u/l)}{\log(1+|\xx|_i)}\frac{|x_i|}{1+|x_i|}
  \right] \leq \left(1 + \e \rho^{-1}  \log (u/l)
  \right)^{d}.\]
Moreover, $\max\{\tilde{J}_i(\xx), \ldots, \tilde{J}_d(\xx)\} \geq 1.$
Combining these observations we obtain that
\begin{align}
  J(\xx) \leq  \left[1 + \e \rho^{-1}  \log (u/l)
  \right]^{d} (u/l)^d, \quad \xx \in \Real^d_{++}. 
  \label{inter-jac-light-1}
\end{align}
To bound the terms involving hazard rates $\lambda_i(\cdot),$ we
proceed as follows: Due to Assumption \ref{assume:mhr},
$\lambda_i(\cdot)$ is eventually monotone for any $i$. From
Lemma~\ref{lemma-T-bound}, if $\lambda_i$ is eventually decreasing,
$\lambda_i(\mv{T}^{-1}_i(\qq(t(u)\pp)) > \lambda_i(q_i(t(u)p_i))$. If
$\lambda_i$ is eventually increasing, the bound
$\mv{T}^{-1}(\qq(t(u)\pp)) \geq \qq(t(u)\pp)c_\rho(u)$ from
(\ref{T-bounds}) implies
$\lambda_i(\mv{T}_i^{-1}(q_i(t(u)p_i)))\geq
\lambda_i(q_i(t(u)p_i)c_\rho(u)).$ In either case,
% \[
%   \mv{T}^{-1}(\qq(t(u)\pp)) \geq \qq(t(u)\pp)c_\rho(u) \text{ from
%     (\ref{T-bounds})} \implies
%   \lambda_i(\mv{T}_i^{-1}(q_i(t(u)p_i)))\geq
%   \lambda_i(q_i(t(u)p_i)c_\rho(u)).
% \]
\begin{align}
  \log\lambda_i\big(\mv{T}^{-1}_i(\mv{q}(t(u)\pp))\big) -
  \log\lambda_i\big(q_i(t(u)p_i)\big)  \geq
  \log \lambda_i\big(q_i(t(u)p_i)c_\rho(u)\big) - \log \lambda_i\big(q_i(t(u)p_i)\big). 
  \label{Jac-light-inter-2}
\end{align}
Since $\Lambda_i\in \RV(\alpha_i)$ and
$\lambda_i = \Lambda_i^{\prime}$ is monotone,
$\lambda_i\in \RV(\alpha_i - 1)$ (see \cite[Proposition B.1.9
(11)]{deHaanFerreira2010}). Given $\varepsilon > 0,$ an application of
Potter's bounds \cite[Proposition B.1.9 (7)]{deHaanFerreira2010}
yields
$\lambda_i(q_i(t(u)p_i )c_\rho(u))/\lambda_i(q_i(t(u)p_i)) \geq
c_\rho(u)^{\alpha_i-1+\epsilon},$ for all $u$ sufficiently large.
Then from
\eqref{Jac-light-inter-2}, %Combining this with the above displayed inequality, we obtain
%\begin{align*}
$ \log\lambda_i\big(\mv{T}^{-1}_i(\mv{q}(t(u)\pp))\big) -
\log\lambda_i\big(q_i(t(u)p_i)\big) \geq (\alpha_i - 1 + \varepsilon)
\log c_\rho(u),$ for $i = 1, \ldots,d.$
%\end{align*}
Since $c_\rho(u) := (l/u)^{1/\rho},$ we obtain the following by
combining this bound with (\ref{inter-jac-light-1}):
\begin{align*}
  \inf_{\pp \in \Real^d_+}b_u(\pp) \geq -\rho^{-1}\frac{\log(u/l)}{t(u)} \sum_{i=1}^d (\alpha_i - 1 + \varepsilon)
  - d \frac{\log \left(1 + \e \rho^{-1} \log(u/l)\right)}{t(u)} -d\frac{\log(u/l)}{t(u)}
  \rightarrow 0,
\end{align*}
where the convergence follows from noting
$t(u) := \Lambda_{\min}(u^{1/\rho})$ and $\log(u/l) = o(t(u)).$
\hfill$\Box$

\begin{lemma}
\label{lemma:Not-Zero}
  Suppose that Assumptions \ref{assume:V} and \ref{assump-marginals}
  are satisfied. Then for all sufficiently large $u,$
  $\Lev^+_1(L_u) \subseteq \Real^d_+ \setminus B_\gamma,$ for
  some $\gamma > 0.$
\end{lemma}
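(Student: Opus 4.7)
The goal is to prove that $L(\mv{q}(t(u)\xx))/u$ is bounded strictly below $1$ uniformly over $\xx \in B_\gamma(\mv{0}) \cap \Real^d_+$ for some $\gamma > 0$ and all sufficiently large $u$. Setting $n := u^{1/\rho}$ and $\zz_u(\xx) := \mv{q}(t(u)\xx)/u^{1/\rho}$, we write $L_u(\xx) = L(n\,\zz_u(\xx))/n^\rho$, so the task reduces to controlling this quantity using Assumption \ref{assume:V}(b). The strategy is to first control $\|\zz_u(\xx)\|_\infty$ in terms of $\gamma$, and then invoke the asymptotic homogeneity of $L$ together with the homogeneity of $L^\ast$ to make $L(n\,\zz_u(\xx))/n^\rho$ arbitrarily small.

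First, since each $q_i$ is nondecreasing, $q_i(t(u)x_i) \leq q_i(\gamma t(u))$ whenever $0 \leq x_i \leq \gamma$. Regular variation $q_i \in \RV(1/\alpha_i)$, inherited from $\Lambda_i \in \RV(\alpha_i)$ via Assumption \ref{assump-marginals}, yields $q_i(\gamma t(u))/q_i(t(u)) \to \gamma^{1/\alpha_i}$ as $u \to \infty$. Coupling this with the identity $q_\infty(t(u)) = u^{1/\rho}$ from Lemma \ref{lem:qtequ} and the crude estimate $q_i(t(u)) \leq q_\infty(t(u))$, one deduces that $\zz_u(\xx)_i \leq (1+o(1))\gamma^{1/\alpha_i}$ for each $i$ and all $\xx \in B_\gamma \cap \Real^d_+$. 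Restricting to $\gamma \in (0,1)$ and setting $\alpha_{\max} := \max_i \alpha_i$, the $\ell_\infty$-bound $\|\zz_u(\xx)\|_\infty \leq 2\gamma^{1/\alpha_{\max}}$ holds uniformly for $\xx \in B_\gamma \cap \Real^d_+$ and all $u$ large enough.

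Next, Assumption \ref{assume:V}(b) is precisely the statement that the sequence $f_n(\zz) := L(n\zz)/n^\rho$ converges continuously to $L^\ast$ on $\Real^d_+$; by the standard equivalence on locally compact metric spaces, this forces $L^\ast$ to be continuous on $\Real^d_+$ and the convergence to be uniform on every compact subset of $\Real^d_+$. Applying this to the compact set $K_\gamma := \{\zz \in \Real^d_+: \|\zz\|_\infty \leq 2\gamma^{1/\alpha_{\max}}\}$, for any $\varepsilon > 0$ and all $u$ sufficiently large, $\sup_{\zz \in K_\gamma} f_n(\zz) \leq \sup_{\zz \in K_\gamma} L^\ast(\zz) + \varepsilon$. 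By the homogeneity $L^\ast(c\mv{w}) = c^\rho L^\ast(\mv{w})$ noted under Assumption \ref{assume:V}, together with continuity of $L^\ast$ on the compact unit sphere $\{\mv{w} \in \Real^d_+: \|\mv{w}\|_\infty = 1\}$, the constant $M^\ast := \sup\{L^\ast(\mv{w}): \mv{w} \geq \mv{0},\ \|\mv{w}\|_\infty = 1\}$ is finite and $\sup_{\zz \in K_\gamma} L^\ast(\zz) \leq M^\ast (2\gamma^{1/\alpha_{\max}})^\rho$.

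Combining the preceding estimates, $L_u(\xx) = f_n(\zz_u(\xx)) \leq M^\ast(2\gamma^{1/\alpha_{\max}})^\rho + \varepsilon$ for every $\xx \in B_\gamma(\mv{0}) \cap \Real^d_+$ and every $u \geq u_0 = u_0(\gamma, \varepsilon)$. Choosing $\varepsilon < 1/4$ and then $\gamma$ small enough that $M^\ast(2\gamma^{1/\alpha_{\max}})^\rho < 1/4$ gives $L_u(\xx) < 1$ throughout $B_\gamma(\mv{0}) \cap \Real^d_+$ for all $u \geq u_0$, which is exactly the conclusion $\Lev^+_1(L_u) \subseteq \Real^d_+ \setminus B_\gamma(\mv{0})$. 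The main obstacle is the passage from the pointwise continuous convergence in Assumption \ref{assume:V}(b) to a uniform bound on a compact set that meets the boundary of the positive orthant; this is resolved by the equivalence between continuous convergence and locally uniform convergence with continuous limit, which is the same tool underlying Lemma \ref{lem:cont-conv-set-incl}.
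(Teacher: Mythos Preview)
Your proof is correct and takes a more direct route than the paper's. The paper's argument passes through Corollary~\ref{cor:set-inclusions} to obtain $\Lev^+_1(L_u) \cap B_M \subseteq [\Xi_{1,M}(\fLD)]^{1+\varepsilon}$ and then uses the homogeneity $\fLD(c\yy) = c^{\rho/\alpha_*}\fLD(\yy)$ of the composite limit $\fLD(\yy) = L^*(\qq^*\yy^{\aalpha})$ to exclude a small ball around the origin; this route implicitly relies on the existence of $\qq^*$, a hypothesis not listed in the lemma itself but present wherever the lemma is invoked. You bypass $\fLD$ entirely: you bound $\zz_u(\xx) = \qq(t(u)\xx)/u^{1/\rho}$ coordinatewise via the regular variation of each $q_i$ together with Lemma~\ref{lem:qtequ}, and then apply the locally uniform convergence $L(n\,\cdot)/n^\rho \to L^*$ on the compact $K_\gamma$ plus the $\rho$-homogeneity of $L^*$. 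This is more elementary, uses only the hypotheses actually stated in the lemma, and avoids the set-inclusion machinery; the paper's approach has the complementary advantage of reusing tools already assembled for Theorem~\ref{thm:Tail-asymp}. One small point worth flagging: when $\supp(X_i)$ extends below zero and $x_i$ is very small, $q_i(t(u)x_i)$ can be negative, so $\zz_u(\xx) \notin K_\gamma \subset \Real^d_+$ and your uniform bound on $K_\gamma$ does not apply directly; such $\xx$ are handled by Assumption~\ref{assume:V}(a), which forces $L(\qq(t(u)\xx)) \leq u$ whenever $\qq(t(u)\xx) \notin \Real^d_+$, so the conclusion still closes.
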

\noindent\textit{Proof}.   Recall the definition
$\fLD(\yy): = L^*(\qq^* \yy^{\aalpha}).$ The function $\fLD(\cdot)$ is
therefore continuous and bounded on the unit sphere
$\mathcal{S}^{d-1} \cap \Real^d_+.$ Since $q_i^\ast = 0$ if
$\alpha_i > \min_j \alpha_j$, we have for all $c>0$,
\[\fLD(c\yy) = L^*(\qq^*(c\yy)^{\aalpha})
  = L^*(c^{1/\alpha_*} \qq^*\yy^{\aalpha}) =
  c^{\rho/\alpha_*}L^*(\qq^{*}\yy^{\aalpha}) =
  c^{\rho/\alpha_*}\fLD(\yy) > 0,\] from the homogeneity of $L^\ast.$
Combining these observations, we obtain that for any $\gamma > 0,$
$\sup_{\yy \in B_\gamma}\fLD(\yy) < \gamma^{\rho/\alpha_*} M_1$, where
$M_1 > \max_{\yy \in B_1}\fLD(\yy)$. Choosing
$\gamma<(2M_1)^{-\alpha_*/\rho}$ ensures
$\sup_{\yy \in B_\gamma}\fLD(\yy)<1/2$. Thus
$[\Xi_{1,\gamma}(\fLD)]^{1+\varepsilon} \cap B_{\gamma} =\emptyset$,
if $\varepsilon > 0$ is chosen suitably small.  Then from the
inclusion
$\Lev^+_1(L_u) \cap B_\gamma \subseteq
[\Xi_{1,\gamma}(\fLD)]^{1+\varepsilon}$ from Corollarly
\ref{cor:set-inclusions}, we have
$\Lev_1^+(L_u)\cap B_{\gamma} =\emptyset,$ for all $u$ sufficiently
large. \hfill$\Box$

Recall from Lemma \ref{lem:M2-rewrite} that the second moment
$M_{2,u} = E\left[ \exp \big(-t(u)F_u(\YYu) \big) \right].$ 
\begin{lemma}
  \label{lem:non-negative-Fu}
  Suppose that Assumptions \ref{assume:V} - \ref{assume:mhr} are
  satisfied and $l(u)$ is taken to be slowly varying in $u.$ Then
  there exists $u_1$ sufficiently large such that for all $u > u_1,$
  $\inf_{\pp \in \R_{+}^{d}}F_u(\pp) \geq 0.$
\end{lemma}
% The bounds in Lemma \ref{lemma: RV-1} - \ref{lemma:Jac-Light} help us
% evaluate $\liminf_u F_u(\pp)$ and Lemma \ref{lem:non-negative-Fu}
% below asserts that $F_u(\cdot)$ remains non-negative for all $u$
% sufficiently large.
Checking this non-negativity in Lemma \ref{lem:non-negative-Fu}, while
is executed along similar lines as in the proofs of Lemma \ref{lemma:
  RV-1} - \ref{lemma:Jac-Light}, is technically more involved. Its
proof is therefore given later in Section \ref{sec:tech-proofs}, which
is devoted to technical results that are repetitive in terms of the
key ideas involved. We now prove the key variance reduction result,
namely, Theorem \ref{thm:Var-Red}.

%  \begin{lemma}
%    \label{lemma:Y-outside-compact}
%    Suppose that Assumption \ref{assump-marginals} -
%    \ref{assump:joint-Y} are satisfied. Then, for any $M > 0,$
%    $\sup_{\pp \in \Lev^+_{1}(L_u) \cap B_M^c }  a_u(\pp)
% % \frac{f_{\YY}(t\pp)}  {f_{\YY}(\mv{\psi}_u(t\pp))}
%    <1,$ for all sufficiently large $u.$
% \end{lemma}

\noindent\textbf{Proof of Theorem \ref{thm:Var-Red}.}
From Lemma \ref{lem:M2-rewrite}, we have
$M_{2,u} = E\left[ \exp \{-t(u)F_u(\YYu) \} \right].$ Define the
function $F: \Real^d_+ \rightarrow \Real \cup \{+\infty \}$ as,
\begin{align*}
  F(\pp) := I(\pp) + \chi_{\Lev^+_1(\fLD)}(\pp). 
\end{align*}
Since $\Lev^+_1(\fLD)$ is closed and $I(\cdot)$ is continuous,
$F(\cdot)$ is lower semi-continuous. Consider sequences
$\{u_n\} \subseteq \Real_+,\{\pp_n\} \subseteq \Real^d$ satisfying
$u_n \rightarrow \infty$ and $\pp_n \rightarrow \pp.$ Due to Lemma
\ref{lemma:Not-Zero}, there exists $\gamma,n_0 > 0$ such that
$\Lev_1^+(L_{u_n}) \cap B_\gamma = \varnothing,$ for all $n > n_0.$
Suppose $\pp \notin B_{\gamma/2}.$ Then from the uniform convergences
of $a_u(\cdot),b_u(\cdot)$ in Lemma \ref{lemma: RV-1} -
\ref{lemma:Jac-Light} and that of $\chi_{\Lev^+_1(L_u)}(\cdot)$ in
Corollary \ref{cor:set-inclusions},
\begin{align}
  \liminf_{n \rightarrow \infty} F_{u_n}(\pp_n) :=
  \liminf_{n \rightarrow \infty} a_{u_n}(\pp_n) + b_{u_n}(\pp_n) -
  2d\frac{\log t(u_n)}{t(u_n)} + \chi_{\Lev^+_{1}(L_{u_n})}(\pp_n) \geq F(\pp).
  \label{lim-Fu}
\end{align}
On the other hand, if $\pp \in B_{\gamma/2},$ we have
$\{\pp_n: n \geq n_1\} \subseteq B_{\gamma}$ for some $n_1 > n_0.$
Since $\Lev_1^+(L_{u_n}) \cap B_\gamma = \varnothing$ for all
$n > n_1,$ we obtain $\inf_{n \geq n_1} F_{u_n}(\pp_n) = \infty.$
Thus, regardless of the membership of $\pp$ (in the ball
$B_{\gamma/2}$), (\ref{lim-Fu}) holds.  From Lemma
\ref{lem:non-negative-Fu}, we deduce that there exists $n_2 > n_1$
satisfying that the family $\{F_{u_n}: n \geq n_2\}$ comprises
non-negative valued functions. Recall from Theorem \ref{thm:LDP} that
the sequence $\{\YY_{\hspace{-4pt}{u_n}}: n \geq 1\}$ satisfies LDP
with rate function $I(\cdot).$ Then due to a general version of
Varadhan's integral lemma (see \cite[Theorem 2.2]{LD_Varadhan}), we
obtain from (\ref{lim-Fu}) that
\begin{align*}
  %\limsup_{n \rightarrow \infty} \frac{\log M_{2,u_n}}{t(u_n)} =
  \limsup_{n \rightarrow \infty} \frac{1}{t(u_n)} \log E\left[ \exp\big\{ -t(u_n)
  F_{u_n}(\YY_{\hspace{-4pt}{u_n}})\big\}\right] \leq -\inf_{\pp \in \Real^d} \left\{
  F(\pp) + I(\pp)\right\} = -2 \inf_{\pp \in \Lev^+_1(\fLD)}I(\pp).
\end{align*}
Since $M_{2,u} = E\left[ \exp \{-t(u)F_u(\YYu) \} \right],$ combining
this conclusion with the bounds $M_{2,u} \geq p_u^2$ and
$\liminf_{u \rightarrow \infty} [t(u)^{-1}\log p_u^2] \geq -2I^\ast$
from Theorem \ref{thm:Tail-asymp}, we obtain (\ref{log-eff-thm}).
\hfill$\Box$

{
\subsection{Proofs of  Propositions~\ref{prop:zv-ldp}-\ref{prop:Self-Structuring} and Corollaries~\ref{cor:Self-struc}-\ref{cor:Self-struc-2}}

\noindent \textbf{Proof of Proposition~\ref{prop:zv-ldp}: } Recall that $\YY_u = \YY/t(u)$. Observe that $\mv{\Lambda}(\ZZ_u^\ast)/t(u)$ has the distribution of $\YY_{u}$ given $L_u(\YY_{u}) \geq 1$. Now, notice that for any $\pp$, $\delta>0$, 
\begin{equation}\label{eqn:Condtional-LDP}
P(\YY_{u}\in B_{\delta}(\pp) \mid \{L_u(\YY_{u}) \geq  1\}) = \frac{P(\YY_{u}\in B_{\delta}(\pp) \cap \{L_u(\YY_{u}) \geq 1\}) }{P(L_u(\YY_{u}) \geq 1)}
\end{equation}
The numerator in \eqref{eqn:Condtional-LDP} can be upper bounded invoking the LDP for $\YY_{u}$:
\begin{equation}\label{eqn:Conditional_Upper_Bound}
    \inf_{\delta >0}\limsup_{u\to\infty}\frac{1}{t(u)} \log P(\YY_{u}\in B_{\delta}(\pp) \cap \{L_u(\YY_{u}) \geq 1\}) \leq -[I(\pp)  + \chi_{1}(\pp)] 
\end{equation}
To see this, notice that if $\pp\in \{\fLD(\pp)\geq1\}$, \eqref{eqn:Conditional_Upper_Bound} follows from the continuity of $I(\cdot)$ and the LDP upper bound for $\YY_u$. Now suppose, $\pp\not\in  \{\fLD(\pp) \geq 1\}$. From the continuous convergence of $L_u$ to $\fLD$, following the proof of Corollary \ref{cor:set-inclusions}, there exist $\delta_0,u_0$, such that $B_{\delta}(\pp) \cap \{L_u(\YY_u) \geq 1\} = \emptyset \ \forall \delta<\delta_0$ and $u>u_0$. Thus, for all $\delta<\delta_0, u>u_0$, the probability in \eqref{eqn:Conditional_Upper_Bound} evaluates to $0$. The bound in \eqref{eqn:Conditional_Upper_Bound} now follows. Using the tail asymptotic (Thm.~\ref{thm:Tail-asymp}) in the denominator of \eqref{eqn:Condtional-LDP},
\[
\inf_{\delta >0}\limsup_{u\to\infty}\frac{1}{t(u)} \log P(\YY_{u}\in B_{\delta}(\pp) \mid \{L_u(\YY_{u}) \geq 1\}) \leq -[I(\pp) - I^* + \chi_{1}(\pp)] 
\]
Fix an arbitrary $\varepsilon>0$ and let $M>0$ sufficiently large so that $B_1(\pp) \subset B_M$. Recall that as a consequence of Corollary~\ref{cor:set-inclusions},  $\Xi_{1+\varepsilon,M}(L_u) \subseteq \Xi_{1,M}(\fLD) \subseteq \Lev_{1}^+(L_u)$ for all $u$ large enough. Then, \[
P(\YY_{u}\in B_{\delta}(\pp) \cap  \{L_u(\YY_{u}) \geq  1\})  \geq P(\YY_{u}\in B_{\delta}(\pp)\cap \{\YY_{u} \in\textrm{Int}(\Xi_{1+\varepsilon,M}(\fLD)))\}).
\]
Apply the LDP lower bound for $\YY_{u}$ to the left hand side above, and use the continuity of $I(\cdot)$ (refer to the proof of Theorem~\ref{thm:LDP})
\begin{equation}\label{eqn:Partial-lower}
\inf_{\delta>0}\liminf _{u\to\infty}\frac{1}{t(u)} \log P(\YY_{u}\in B_{\delta}(\pp) \cap \{L_u(\YY_{u}) \geq  1\}) \geq -[I(\pp) +\varepsilon + \chi_{\Lev_{1+\varepsilon}^+(\fLD)}(\pp)]  
\end{equation}
From Theorem~\ref{thm:Tail-asymp}, observe that
\[
\lim_{u\to\infty}\frac{1}{t(u)} \log  P(L_u(\YY_{u}) \geq 1) = -I^*.
\]
Noting that $\varepsilon>0$ in \eqref{eqn:Partial-lower} is arbitrary,
\[
\inf_{\delta >0}\liminf_{u\to\infty}\frac{1}{t(u)} \log P(\YY_{u}\in B_{\delta}(\pp) \mid \{L_u(\YY_{u}) \geq 1\}) \geq -[I(\pp) - I^* + \chi_{1}(\pp)] 
\]
The LDP required in Proposition~\ref{prop:zv-ldp} follows as a consequence \cite[Theorem 4.1.11]{Dembo}. To see the last statement observe that as $u\to\infty$,
\[
\frac{\ZZ_u^*}{\qq(t(u))} = \frac{{\qq\left(t(u)\frac{\mv{\Lambda}(\ZZ_u^*)}{t(u)}\right)}}{\qq(t(u))} \ \text{ and } \ \frac{\qq(t(u)\pp_u)}{\qq(t(u))} \to \pp^{1/\mv{\alpha}} \text{ whenever } \pp_u\to \pp.
\]
An application of the approximate contraction principle \cite[Theorem 4.2.23]{Dembo} to the sequences $\{{\mv{\Lambda}(\ZZ_u^*)}/{t(u)}:u>0\}$ shows that $\{{\ZZ_u^*}/{\qq(t(u))}:u>0\}$ satisfies an LDP with rate function $I_1(\pp) = I(\pp^{\mv{\alpha}}) - I^* + \chi_1(\pp^{\mv{\alpha}})$. The last statement now follows as a consequence of the definition of the LDP  \citep[see][Section 1.2]{Dembo}.\hfill $\Box$

\noindent \textbf{Proof of Proposition~\ref{prop:Self-Structuring}: }
Let $\qq$, $\mv{\Lambda}$ and $t(u)$ be defined as before. Under the assumptions of the proposition, uniformly over $\pp$ in compact sets of $\R^d_+\setminus\{ 0\}$,
% recall that 
% for any $\mv{f}\in \RV(\alpha_\ast/\mv{\alpha}\rho)$, 
% \[
% \frac{\mv{T}_u(\mv{f}(u)\pp^{1/\mv{\alpha}})}{{\mv{f}(u)}} = [\mv{T}_s^*(\pp)]^{1/\mv{\alpha}}[1+o(1)], \textrm{ uniformly over $\pp$ in compact sets}.
% \]
% Further, notice that with $\mv{\Lambda}\in \RV(\mv{\alpha})$, $\qq(t(\cdot)) \in \RV(\alpha_{\ast}/ \mv{\alpha}\rho)$. Consequently, $\mv{T}(\qq(t(u))\pp^{1/\mv{\alpha}}) = \mv{T}(\qq(t(u)\pp))$. Therefore,
\begin{align*}
    \frac{\mv{T}_u \circ \qq(t(u)\pp)}{\qq(t(u))} &= [\pp s^{\alpha_\ast/\rho}]^{1/\mv{\alpha}}[1+o(1)] \implies\\
    \mv{T}_u \circ \qq(t(u)\pp) &=\qq(t(u)) ([\pp s^{\alpha_\ast/\rho}]^{1/\mv{\alpha}} +o(1))\implies\\
    \frac{\mv{\Lambda} \circ \mv{T}_u \circ \qq (t(u)\pp)}{t(u)} &= \pp s^{\alpha_\ast/\rho} [1+o(1)] \textrm{ (apply $\mv{\Lambda} \in \RV(\mv{\alpha})$ to both the sides of the above)}.
\end{align*}

For convenience, denote $[t(u)]^{-1}\mv{\Lambda} \circ \mv{T}_u\circ\qq(t(u)\pp) = \mv{\phi}_u(\pp)$. Observe that $\mv{\phi}_u(\cdot)$ converges uniformly over compact sets to $s^{\alpha_\ast/\rho} \text{Id}$, where $\text{Id}$ is the identify function. Then, from the approximate contraction principle (see \cite[Theorem 4.2.23]{Dembo}), and the homogeneity of $I(\cdot)$, $\mv{\phi}_u(\YY_{u})$ satisfies an LDP with rate function $I_1(\xx) = s^{-\alpha_\ast/\rho}I(\xx)$. Observe next that $[t(u)]^{-1}\mv{\Lambda}(\ZZ_u)$ has the distribution of $\mv{\phi}_u(\YY_{u})$ given $L_u(\mv{\phi}_u(\YY_{u}))  \geq 1$. Then, 
\begin{equation}\label{eqn:2nd-LDP}
P(\mv{\phi}_u(\YY_{u})\in B_{\delta}(\pp) \mid \{L_u(\mv{\phi}_u(\YY_{u})) \geq 1\}) = \frac{P(\mv{\phi}_u(\YY_{u})\in B_{\delta}(\pp) \cap \Lev_1^+(L_u)) }{P(\mv{\phi}_u(\YY_u) \in \Lev_1^+(L_u))}    
\end{equation}
% The numerator of \eqref{eqn:2nd-LDP} can be handled by invoking the LDP for $\mv{\phi}_u(\YY_{t(u)})$:
% \[
% \inf_{\delta >0}\limsup_{u\to\infty}\frac{1}{t(u)} \log P(\mv{\phi}_u(\YY_{t(u)})\in B_{\delta}(\pp) \cap \{L_u(\mv{\phi}_u(\YY_{t(u)})) >1\})  \leq -[I(s^{-\alpha_*/\rho} \pp) + \chi_{\Lev_1(L^*)}(\pp)] 
% \]
% Since $I(\cdot)$ is homogeneous, we have that
% \[
% \inf_{\delta >0}\limsup_{u\to\infty}\frac{1}{s^{-\alpha_*/\rho}t(u)} \log P(\mv{\phi}_u(\YY_{t(u)})\in B_{\delta}(\pp) \cap \{L_u(\mv{\phi}_u(\YY_{t(u)})) >1\})  \leq -[I( \pp) + \chi_{\Lev_1(L^*)}(\pp)]  \text{ and }\]
% \[
% \inf_{\delta >0}\liminf_{u\to\infty}\frac{1}{s^{-\alpha_*/\rho}t(u)} \log P(\mv{\phi}_u(\YY_{t(u)})\in B_{\delta}(\pp) \cap \{L_u(\mv{\phi}_u(\YY_{t(u)})) >1\})  \geq -[I( \pp) + \chi_{\Lev_1(L^*)}(\pp)] 
% \]
%  
The limit of the numerator of \eqref{eqn:2nd-LDP} can be  evaluated by invoking the LDP of $\mv{\phi}_u(\YY_{u})$ and proceeding as in the proof of Proposition~\ref{prop:zv-ldp}, replacing $I(\cdot)$ there by $s^{-\alpha_\ast/\rho}I(\cdot)$:
\begin{align*}
    \inf_{\delta >0}\limsup_{u\to\infty}\frac{1}{t(u)} \log P(\mv{\phi}_u(\YY_{u})\in B_{\delta}(\pp) \cap \Lev_1^+(L_u)) &=  \inf_{\delta >0}\liminf_{u\to\infty}\frac{1}{t(u)}\log  P(\mv{\phi}_u(\YY_{u})\in B_{\delta}(\pp) \cap \Lev_1^+(L_u))\\
    &=  -s^{-\alpha_\ast/\rho}I(\pp) + \chi_{1}(\pp)\\
    &= -s^{-\alpha_\ast/\rho}[I(\pp) + \chi_{1}(\pp)] 
\end{align*}
where the last statement follows since $\chi_{1}(\pp)$ equals either $0$ or $+\infty$. The denominator can be evaluated by noting that
\[
\lim_{u\to\infty} \frac{1}{t(u)} \log P(L_u(\mv{\phi}_u(\YY_{u})) \geq 1) = -\inf_{\pp\in \Lev_{1}^+(\fLD)} s^{-\alpha_\ast/\rho} I(\pp) = -s^{-\alpha_\ast/\rho} I^*.
\]
Combining everything together and observing that $s^{-\alpha_\ast/\rho}t(u)= t(u/s)(1+o(1))$, $\{\mv{\Lambda}(\ZZ_u)/t(u): u>0\}$ satisfies LDP with rate $I-I^*+\chi_1$ and speed $t(u/s)$ as a consequence of \cite[Theorem 4.1.11]{Dembo}. Finally, to verify condition (ii), observe that from the above calculation, $\log P(L(\mv{T}_u(\XX) \geq u) = - t(u/s) [I^*+o(1)] = [1+o(1)]\log  P(L(\XX) \geq u/s)$. \hfill $\Box$

\noindent 
\textbf{Proof of Corollary~\ref{cor:Self-struc}: }  Consider $\{\pp_{u_n}\}_{n \geq 1} \subset \Real^d_+ $ such that
$\pp_{u_n} \rightarrow \pp \in \Real^d_+ \setminus \{\mv{0}\}$ as
$n \rightarrow \infty.$ 

\noindent \textbf{1) } Suppose $\pp > \mv{0}.$ Since with $\mv{q} \in \RV(1/\mv{\alpha}),$  from the definition of
$\mv{\kappa}(\cdot)$ in (\ref{IS-transf}), with $(u_n/l_n)\to s$,
$\lim_{n \rightarrow \infty}\mv{\kappa}(\mv{q}((t(u_n)\pp_{u_n})) =
\alpha_\ast/(\rho \mv{\alpha});$. 
which proves the first part of the corollary. 

\noindent \textbf{2) } When $\pp> \mv{0}$ from the definition of $\mv{T}$ in (\ref{IS-transf}),
\[\mv{T}_{u_n}\big(\mv{q}(t(u_n)\pp_n) \big) = \qq(t(u_n)\pp_n)
  (s+o(1))^{\frac{\alpha_\ast}{\rho
        \mv{\alpha}}}(1+o(1)) = \mv{q}(t(u_n)) (s^{\alpha_*/\mv{\alpha}\rho} \pp^{1/\mv{\alpha}})(1+o(1)) \text{
  as } n \rightarrow \infty.\]
 Suppose $\pp = (p_1,\ldots,p_d)$ is such that the
subset of indices $I := \{i:p_i = 0\}$ is non-empty. Since
$\pp \neq \mv{0},$ $I$ is a strict subset of $\{1,\ldots,d\}.$ % Recall
% $q_\infty(t) = \max_i q_i(t).$
Since $p_i = 0$ for $i \in I,$ we have
$q_i(t(u_n)p_{i,u_n}) = o(q_i(t(u_n))).$ Then, $\mv{T}_{u_n,i}\big({\mv{q}}(t(u_n))\pp_{u_n}^{1/\mv{\alpha}} \big) = o(q_i(t(u_n)))$ 
for all $i\in I$. Then, the sequence of functions 
\[
\frac{\mv{T}_{u_n}(\mv{q}(t(u_n)\pp_{u_n})}{\mv{q}(t(u_n))} \to \pp^{1/\mv{\alpha}}s^{\alpha_\ast/\mv{\alpha}\rho}= [\mv{T}_s^*(\pp)]^{1/\mv{\alpha}}.
\]
From \cite[Theorem 7.14]{rockafellar2009variational} and the above continuous, uniformly over compact subsets of $\R^d_+\setminus\{ 0 \}$, $\mv{T}_u(\qq(t(u)\pp))= \qq(t(u)\pp)s^{\alpha_{\ast}/\mv{\alpha\rho}}$. Thus, the statement in \eqref{eqn:Conc-Preserving-Ts} holds uniformly over compact $A\subseteq \R^d_+\setminus\{0\}\}$
and $\{\mv{T}_{u}^{(1)}\}_{u>0}$ is therefore rate function preserving. \hfill $\Box$

\noindent \textbf{Proof of Corollary~\ref{cor:Self-struc-2}:} Suppose that $\pp_u \to \pp>0$. Observe that
\begin{align}
    \mv{T}^{(2)}(\mv{q}(t(u)\pp_u)) &= \mv{q}(t(u)\pp_u)\left(s+o(1)\right)^{\frac{\log(1+\mv{q}(t(u)\pp_u))}{\log u -\log (s+o(1))}}\nonumber\\
    &= \mv{q}(t(u))  \pp^{\mv{1/\mv{\alpha}}}[s+o(1)]^{\frac{\log(1+u^{{\alpha_\ast/\mv{\alpha\rho}}})}{\log u}} (1+o(1)) \nonumber\textrm{ since $\qq\in \RV(1/\mv{\alpha})$}\\
    &= \mv{q}(t(u)) \pp^{\mv{1/\mv{\alpha}}} [s^{\alpha_\ast/\mv{\alpha}\rho}+o(1)]\label{eqn:Alt_Verify}
\end{align}
The case where the set $\{i:p_i=0\} \neq \emptyset$ can the handled similar to the proof of Corollary~\ref{cor:Self-struc}.\hfill$\Box$
}

\section{Sufficient conditions and examples for the tail models considered}
\label{sec:app-eg:tails}
This section serves to illustrate the variety of multivariate
distribution families which come under the tail modeling framework
considered and to provide sufficient conditions. In Tables
\ref{tab:marginals-light} - \ref{tab:marginals-heavy} below, we
provide some examples of uni-variate distribution families which
satisfy either the marginal assumptions in Assumption
\ref{assump-marginals}, or its heavier-tailed counterpart in
Assumption \ref{assume:marginals-HT}.
% \begin{table}
% \TABLE
% {Examples of some marginal distributions satisfying
%     Assumption \ref{assump-marginals} and their right tail parameter
%     $\alpha.$ Larger the parameter $\alpha,$ the lighter the
%     respective tail is. \label{tab:marginals-light}
% }
% {
% \begin{tabular}{ l c c }
%   \hline
%   Distribution families  & Tail parameter $\alpha$\\
%   \hline
%   Exponential, Erlang, Gumbel, Logistic    & 1\\
%   Gamma, Chi-squared, phase-type & 1\\
%   Gaussian, Chi, mixtures of Gaussians, Rayleigh & 2\\
%   Weibull with shape parameter $k$  & $k$ \\
%   Generalized-gamma  with shape parameter $k$  & $k$ \\
%   \hline
% \end{tabular}
% }
% {
% \ 
% }
% \end{table}
\begin{table}[h!]
  \caption{Examples of some marginal distributions satisfying
    Assumption \ref{assump-marginals} and their right tail parameter
    $\alpha.$ Larger the parameter $\alpha,$ the lighter the
    respective tail is.}
\begin{center}
\begin{tabular}{ l  c c }
  \hline
  Distribution families  & Tail parameter $\alpha$\\
  \hline
  Exponential, Erlang, Gumbel, Logistic    & 1\\
  Gamma, Chi-squared, phase-type & 1\\
  Gaussian, Chi, mixtures of Gaussians, Rayleigh & 2\\
  Weibull with shape parameter $k$  & $k$ \\
  Generalized-gamma  with shape parameter $k$  & $k$ \\
  \hline
\end{tabular}
\end{center}
\label{tab:marginals-light}
\end{table}
% \begin{table}
% \TABLE
% {
% Examples of some heavy-tailed marginal distributions
%     satisfying Assumption \ref{assume:marginals-HT} and the respective
%     right tail parameter $\alpha.$ Larger the parameter $\alpha,$
%     relatively lighter is the respective tail. \label{tab:marginals-heavy}
% }
% {
% \begin{tabular}{ l c c }
%   \hline
%   Distribution families &  Tail parameter $\gamma$\\
%   \hline
%   Lognormal & 2\\
%   Generalized Pareto, Student's $t,$ Regularly varying & 1\\
%   Log-Laplace, Frechet, Lomax, Log-logistic, Cauchy    & 1\\
%   \hline
% \end{tabular}
% }
% {

% }
% \end{table}

\begin{table}[h!]
  \caption{Examples of some heavy-tailed marginal distributions
    satisfying Assumption \ref{assume:marginals-HT} and the respective
    right tail parameter $\alpha.$ Larger the parameter $\alpha,$
    relatively lighter is the respective tail.}
\begin{center}
\begin{tabular}{ l c c }
  \hline
  Distribution families &  Tail parameter $\gamma$\\
  \hline
  Lognormal & 2\\
  Generalized Pareto, Student's $t,$ Regularly varying & 1\\
  Log-Laplace, Frechet, Lomax, Log-logistic, Cauchy    & 1\\
  \hline
\end{tabular}
\end{center}
\label{tab:marginals-heavy}
\end{table}

The notion of multivariate regular variation turns out to be
convenient in understanding the condition in Assumption
\ref{assump:joint-Y} in terms of the probability density of $\XX.$ We
say that a function $f:\R^d_+ \rightarrow \R_+$ is
\textit{multivariate regularly varying} if for any sequence $\xx_n$ of
$\Real^d_+$ satisfying $\xx_n \rightarrow \xx \neq \mv{0},$
\begin{align}
  \lim_{n \rightarrow \infty} n^{-1}f(\mv{h}(n)\mv{x}_n)
   = f^\ast(\mv{x}), %\in (0,\infty),  
  \label{MRV-het}
\end{align}
for some limiting $f^\ast: \R^d_{+} \rightarrow (0,\infty)$ and a
component-wise increasing $\mv{h}(t) = (h_1(t),\ldots,h_d(t))$
satisfying $h_i \in \RV(1/\rho_i), \ \rho_i > 0, i = 1,\ldots,d.$ It
follows from \eqref{MRV-het} that $f^\ast(\cdot)$
satisfies %$f^\ast(\mv{1}) = \mv{1}$ and
$f^\ast(s^{1/\mv{\rho}} \mv{x}) = sf^\ast(\mv{x}).$ In the above, the
notation $s^{1/\mv{\rho}}$ is to be interpreted as the vector
$s^{1/\mv{\rho}} = (s^{1/\rho_1},\ldots,s^{1/\rho_d}).$ When referring
to \eqref{MRV-het}, we write $f \in \MRV,$ or more specifically,
$f \in \MRV(f^\ast,\mv{h})$ if there is a need to explicitly specify
the scaling functions $\mv{h}(\cdot)$ and the respective limit
function $f^\ast(\cdot).$

For instance, the function $f:\R_+^2 \rightarrow \R_+$ defined as in,
$f(\mv{x}) = x_1^{2.5}(1-\exp(-x_2)) + x_2^{0.5},$ satisfies
$f \in \MRV(f^\ast,\mv{h})$ with
$f^\ast(\mv{x}) = x_1^{2.5} + x_2^{0.5}$ and
$\mv{h}(t) = (t^{1/2.5},t^2).$ See Table \ref{tab:multivariate} below
for some useful examples which arise in the context of tail modeling
and \cite{resnick2007heavy} for a detailed treatment. The following
result on $\MRV$ functions is required in the subsequent proofs. Let
$\mathrm{Id}(\xx) = \xx.$

\begin{lemma}
  \label{lemma:RV-New}
  Suppose $\mv{g}:\Real^d_+ \rightarrow \Real_+^d$ is such that
  $g_i \in \RV({\beta}_i) $ is monotone, for $i \in \{1,\ldots,d\}$.
  For $s>0$, define $\tilde{\mv{g}}_s(t) := \mv{g}(st)$ and
  $ \mv{v}_{s,\mv{\beta}}(\xx) := s\xx^{1/\mv{\beta}}.$ Then for
  $f:\R^d_+ \to \R$, we have
  $f\circ {\mv{g}} \in \MRV(u^\ast, \mathrm{Id}),$ so long as
  $f \in \MRV(u^\ast \circ \mv{v}_{s,\mv{\beta}}, \tilde{\mv{g}}_s)$
  for some $s > 0.$
\end{lemma}
\noindent\textit{Proof.}
Consider any $M > 0.$ Since $g_i\in \RV(\beta_i), i = 1,\ldots,d,$ and are monotone, we
have $g_i(tx)/g_i(t) \to x^{\beta_i},$ uniformly over $x\in[0,M].$
Consequently for $s>0$ and $\xx_n \rightarrow \xx \in [0,M]^d,$ we
obtain
$\mv{g}(sn\cdot s^{-1}\xx_n)/\mv{g}(sn) =
(s^{-1}\xx)^{\mv{\beta}}(1+o(1)).$ Therefore,
\begin{align*}
  n^{-1}(f \circ \mv{g}) (n\xx_n)
  &= n^{-1}f\left(\frac{\mv{g}(ns \cdot s^{-1}\xx_n)}{\mv{g}(ns)} \mv{g}(ns)\right)
    =\frac{f\left((s^{-1}\xx)^{\mv{\beta}}(1+o(1))\mv{g}(ns)\right)}{n}
    = u^\ast(\xx)(1+o(1)), 
\end{align*}
where the last equality follows from
$f\in \MRV(u^\ast\circ \mv{v}_{s,\mv{\beta}} ,
\tilde{\mv{g}}_s)$. \hfill$\Box.$

Suppose  that the support of $\XX$ contains $\R^d_+$. Propositions \ref{prop:jointpdf-X} - \ref{prop:jointpdf-X-HT} below
give sufficient conditions %, in terms of the probability density of $\XX,$
under which Assumption \ref{assump:joint-Y} is satisfied.

% for
% $i = 1,\ldots,d.$
% The joint distribution of $\mv{X}$ permits the
% following characterizations under Assumptions \ref{assump-marginals} -
% \ref{assump:joint-Y}.
% \begin{proposition}[\textnormal{Characterization of the tail probabilities}]
%   Suppose that Assumption \ref{assump-marginals} on the marginal
%   distributions of the components of $\mv{X}$ is satisfied.  Then
%   \begin{itemize}
%   \item[a)] the random vector $\mv{Y}$ with standardized exponential
%     marginals satisfies Assumption \ref{assump:joint-Y}a if and only
%     if the function $\Psi:\Real^d_+ \rightarrow \Real$ defined as in,
%     \[\bar{F}_{\mv{X}}(\mv{x}) := P(\mv{X} > \mv{x}) =
%       \exp(-\Psi(\mv{x})), \quad \text{ for } \xx > \mv{0},\] satisfies
%     $\Psi \circ E \in \textnormal{\MRV}.$
%   \item[b)] the limiting function $I(\cdot)$ in Assumption
%     \ref{assump:joint-Y}a and the function $\Psi^\ast(\cdot)$ which
%     arises in the limit
%     $\Psi \circ \mv{E} \in \textnormal{\MRV}(\Psi^\ast,\mv{h})$ are
%     related as, $\Psi^\ast(\xx) = [I(\xx^{\mv{\alpha}})]/I(1),$ if we take
%     the scaling function to be
%     $\mv{h}(t) = (\mv{\Lambda} \circ \mv{E})^{-1}(t).$
%   \end{itemize}
%     % \[\Psi^\ast(\xx) = \frac{I \circ \mv{\pi}(\xx)}{I \circ \mv{\pi}
%     %     (\mv{1})}.\]
%      \label{prop:jointtail-X} 
%    \end{proposition}

\begin{proposition}[\textnormal{Sufficient conditions on the
    density of $\XX$}]
  Suppose that the marginal distributions of $\XX$ satisfy either
  Assumptions \ref{assump-marginals} and \ref{assume:mhr}, and the
  density of $\XX$ when written in the form,
  \begin{align}
  f_{\mv{X}}(\mv{x}) = \exp(-\psi(\mv{x})), \quad \text{ for } \xx \in \R_d^+,
    \label{X-pdf-form}
  \end{align}
  satisfies $\psi \in \textnormal{\MRV}(\psi^\ast,\mv{h}).$ Then $\XX$
  satisfies Assumptions \ref{assump:joint-Y}. In particular, the
  hazard functions $\mv{\Lambda} = (\Lambda_1,\ldots,\Lambda_d)$ in
  Assumption \ref{assump-marginals} and the limiting function
  $I(\cdot)$ in Assumption \ref{assump:joint-Y} are related to
  $\mv{h}$ and $\psi^\ast$ as follows: there exists
  $\mv{c} \in \mathbb{R}^d_{++}$ such that
  \begin{align*}
    I(\xx) = \psi^\ast(\mv{c}\xx^{1/\mv{\alpha}}) \quad \text{ and }
    \quad \mv{\mv{h}}(\xx) = \mv{q}(\xx)(\mv{c}^{-1} + o(1)),
  \end{align*}
  as $\Vert \xx \Vert \rightarrow \infty,$ and
  $\mv{\alpha} = (\alpha_1,\ldots,\alpha_d) \in \Real^d_{++}$ is such
  that $h_i \in \RV(1/\alpha_i), i = 1,\ldots,d.$
  \label{prop:jointpdf-X}
\end{proposition}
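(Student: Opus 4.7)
My plan is to establish Proposition \ref{prop:jointpdf-X} in two phases: first, derive the asymptotic form of the marginal hazards $\Lambda_i$ from the multivariate regular variation of $\psi$; second, use a change-of-variables identity to transfer this information to the density of $\mv{Y}=\mv{\Lambda}(\XX)$ and verify the two limits in (\ref{limiting-I}).

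For the first phase, observe that for $t$ large, $-\log\bar{F}_i(t) = -\log\int_{\{x_i\geq t\}\cap\Real^d_+}\exp(-\psi(\xx))d\xx$. I would apply a Laplace-type argument: using the MRV assumption $\psi\in\MRV(\psi^\ast,\mv{h})$, substitute $\xx=\mv{h}(n)\mv{z}$ with the scaling parameter $n=h_i^{\leftarrow}(t)$, so that the constraint $x_i\geq t$ becomes $z_i\geq 1$. The standard Laplace/Varadhan argument together with the homogeneity relation $\psi^\ast(s^{1/\mv{\rho}}\xx)=s\psi^\ast(\xx)$ yields
\[
\Lambda_i(t)=-\log\bar{F}_i(t)\sim h_i^{\leftarrow}(t)\inf_{\mv{z}\geq\mv{0},z_i\geq 1}\psi^\ast(\mv{z}),
\]
as $t\to\infty$. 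Defining $c_i^{-1}:=\inf\{\psi^\ast(\mv{z}):\mv{z}\geq\mv{0},z_i\geq 1\}$ gives the claimed asymptotic $\Lambda_i(t)=h_i^{\leftarrow}(t)(c_i^{-1}+o(1))$, proving part~(b) for the hazards. Inverting this relation using Karamata-type inverse theorems (since $h_i^{\leftarrow}\in\RV(\alpha_i)$ by Assumption \ref{assump-marginals}) gives the companion statement $q_i(s)\sim h_i(c_i s)$ and then, using $h_i\in\RV(1/\alpha_i)$ componentwise,
\[
\mv{q}(n\yy_n)\sim\mv{h}(n)(\mv{c}\yy_n)^{\mv{1}/\mv{\alpha}},\qquad n\to\infty,
\]
uniformly for $\yy_n\to\yy\neq\mv{0}$ in $\Real^d_+$.

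For the second phase, since $\mv{q}=\mv{\Lambda}^{\leftarrow}$ is smooth on $\Real^d_{++}$ (its Jacobian is $\prod_i 1/\lambda_i(q_i(y_i))$), the change-of-variables formula produces
\[
f_{\YY}(\yy)=\exp\bigl(-\psi(\mv{q}(\yy))\bigr)\prod_{i=1}^{d}\frac{1}{\lambda_i(q_i(y_i))},
\]
which matches the required form (\ref{pdf-Y}) with $\varphi(\yy):=\psi(\mv{q}(\yy))$ and $p(\yy):=\prod_i[\lambda_i(q_i(y_i))]^{-1}$. Substituting the asymptotic for $\mv{q}$ derived above, and invoking the MRV of $\psi$ (used with uniformity over compact sets bounded away from the origin),
\[
n^{-1}\varphi(n\yy_n)=n^{-1}\psi(\mv{q}(n\yy_n))\to\psi^\ast\bigl(\mv{c}\yy^{\mv{1}/\mv{\alpha}}\bigr)=:I(\yy),
\]
which is the first convergence in (\ref{limiting-I}) and yields the formula for $I(\cdot)$ in part~(b).

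It remains to verify the second limit $n^{-\varepsilon}\log p(n\yy_n)\to 0$. Here I would use Assumption \ref{assume:mhr} together with Karamata's representation for $\Lambda_i\in\RV(\alpha_i)$: monotonicity of $\Lambda_i$ in a neighborhood of infinity combined with regular variation implies $\lambda_i(x)\asymp\Lambda_i(x)/x$ up to slowly varying factors, so $\log\lambda_i(q_i(ny_{n,i}))$ grows at most logarithmically in $n$. Summing over $i$ makes $\log p(n\yy_n)=O(\log n)$, which is $o(n^\varepsilon)$ for every $\varepsilon>0$. The main obstacle I anticipate is making the Laplace-type identification $\Lambda_i(t)\sim\inf_{x_i\geq t}\psi(\xx)$ rigorous: this requires a matching upper and lower bound on the integral $\int_{x_i\geq t}\exp(-\psi(\xx))d\xx$, which I would handle by restricting to a shrinking neighborhood of the minimizer for the lower bound and by exploiting the homogeneity of $\psi^\ast$ (ensuring the infimum is attained at a point in $\Real^d_{++}$ away from the origin) together with a Potter-type uniform bound on $\psi$ for the upper bound. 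The other components of the argument (change of variables, combining asymptotics, checking the subexponential factor $p$) are largely bookkeeping once this Laplace identification is in place.
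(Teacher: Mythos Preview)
Your approach is correct but takes a genuinely different route from the paper's proof.

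The paper's argument is almost purely algebraic: it writes $\varphi(\yy)=\psi(\mv{q}(\yy))$ and $p(\yy)=\prod_i\lambda_i(q_i(y_i))^{-1}$ via change of variables (as you do), handles the $p$-term exactly as you suggest via the monotone density theorem ($\Lambda_i\in\RV(\alpha_i)$ with monotone $\lambda_i$ forces $\lambda_i\in\RV(\alpha_i-1)$, so $\log p$ is slowly varying), but for the main limit it never touches an integral. Instead it invokes a short composition lemma for multivariate regular variation: if $\psi\in\MRV(\psi^\ast,\mv{h})$ and $\mv{q}\in\RV(\mv{1}/\mv{\alpha})$, then $\psi\circ\mv{q}\in\MRV(I,\mathrm{Id})$ provided the scaling $\mv{h}$ can be matched with $\mv{q}(s\,\cdot)$ for some $s>0$; equating parameters then produces $I(\yy)=\psi^\ast(\mv{c}\yy^{\mv{1}/\mv{\alpha}})$ and $\mv{q}\sim\mv{c}\,\mv{h}$, hence $\mv{\Lambda}\sim\mv{c}^{-1}\mv{h}^{\leftarrow}$ (up to the usual index juggling).

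Your route, by contrast, \emph{derives} the relationship $\Lambda_i(t)\sim c_i^{-1}h_i^{\leftarrow}(t)$ from first principles via a Laplace/Varadhan estimate on $\bar{F}_i(t)=\int_{x_i\geq t}e^{-\psi(\xx)}d\xx$, and in doing so actually identifies the constant explicitly as $c_i^{-1}=\inf_{z_i\geq 1}\psi^\ast(\zz)$. The composition step then becomes a direct computation rather than an appeal to a lemma. This is more work---you correctly flag the matching upper/lower Laplace bounds as the place requiring care---but it is also more self-contained: the paper's ``equating parameters'' step tacitly uses that the MRV scaling is unique up to constants (equivalently, that $\qq$ and $\mv{h}$ are asymptotically proportional), which is precisely what your Laplace argument establishes. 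So your version fills in a step the paper leaves implicit, at the cost of an integral estimate the paper's algebraic route avoids.
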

\noindent\textit{Proof.}  Since $\YY = \mv{\Lambda}(\XX)$ and
$\XX =\qq(\YY),$ we have \eqref{pdf-Y} satisfied with
$\varphi(\yy) = \psi(\qq(\yy))$ and
$p(\yy) = \prod_i[\lambda_i(q_i(y_i))]^{-1},$
% \[
%   \varphi(\yy) = \psi(\qq(\yy)) \quad \textrm{ and } \quad
%   p(\yy) = \prod_i\frac{1}{\lambda_i(q_i(y_i))},
%  \]
as a consequence of change of variables. Observe that
$\Lambda_i(t) = \int_{-\infty}^{t} \lambda_i(x)dx$, for monotone
$\lambda_i$. Therefore using \cite[Proposition
B.1.9(11)]{deHaanFerreira2010},
${t \lambda_i(t)}/{\Lambda_i(t)} \to \alpha_i$ and
$\lambda_i\in \RV(\alpha_i-1)$. This implies
$\log \lambda_i\circ q_i\in \RV(0)$. Since the support of $\XX$
contains $\R^d_+$, whenever $\yy_n\to\yy\in \R_d^+$,
$n^{-\epsilon}\log p(n\yy_n)= o(1)$ for all $\epsilon>0$.  Therefore
Assumption~\ref{assump:joint-Y} holds if
$\varphi \in \MRV(I,\textrm{Id})$.

To see the latter, recall that $\varphi(\yy) =
\psi(\qq(\yy))$. Substituting $\mv{g}(\yy) = \qq(\yy)$ and
$\mv{\beta} = \aalpha$ in Lemma~\ref{lemma:RV-New}, a sufficient
condition for $\varphi \in \MRV(I,\textrm{Id})$ is that
$\psi \in \MRV(I\circ \mv{v}_{s,\mv{1/\alpha}}, \qq_s),$ for some
$s > 0.$ Under the proposition assumptions,
$\psi \in \MRV(\psi^*,\mv{h})$. Equating parameters,
$I(\xx) = \psi^*(\mv{c} \xx^{\aalpha})$, and
$\mv{h}(\xx) = \qq(\xx)(\mv{c}^{-1} +o(1))$, where
$\mv{c} =s^{-\aalpha}\in \R_{++}^d$.\hfill$\Box$

To identify sufficient conditions in the presence of heavier tailed
distributions, let $\mathcal{L}$ denote the collection of
indices of the components $(X_1,\ldots,X_d)$ which satisfy the lighter
tailed assumption in Assumption \ref{assump-marginals}. For
$i \notin \mathcal{L},$ we have the respective $X_i$ satisfying the
heavier tailed assumption in Assumption \ref{assume:marginals-HT}. Let
$\mv{Z} = (Z_1,\ldots,Z_d)$ be defined as follows:
\begin{align}
  Z_i = 
  \begin{cases}
    \log (1 + X_i) \quad &\text{ if } X_i > 0 \text{ and } i \notin \mathcal{L},\\
    X_i \quad &\text{ otherwise.}
  \end{cases}
  \label{defn:Z}
\end{align}

\begin{proposition}[\textnormal{Sufficient conditions in the presence
    of heavier tails}]
  Suppose that the marginal distributions of $\XX$ satisfy Assumptions
  \ref{assume:mhr} and \ref{assume:marginals-HT}. Let the probability
  density of $\ZZ = (Z_1,\ldots,Z_d)$ in (\ref{defn:Z}), when written
  in the form,
  \[f_{\mv{Z}}(\mv{z}) = \exp(-\hat{\psi}(\mv{z})), \quad \text{ for }
    \zz > 0,\] satisfy
  $\hat{\psi} \in \textnormal{\MRV}(\psi^\ast,\mv{h}).$ Then $\XX$
  satisfies Assumption \ref{assump:joint-Y}. In particular, hazard
  function $\mv{\Lambda}$ and the limiting $I(\cdot)$ in Assumption
  \ref{assump:joint-Y} are related to $\mv{h}$ and $\psi^\ast$ as
  follows: there exists $\mv{c} \in \mathbb{R}^d_{++}$ such that
    \begin{align*}
    I(\xx) = \psi^\ast(\mv{c}\xx^{1/\mv{\alpha}}) \quad \text{ and }
    \quad {h}_i(x_i) =
    \begin{cases}
      {q}_i(x_i)(c_i^{-1} + o(1)) \quad & \text{ if } i
      \in \mathcal{L},\\
      \log({q}_i(x_i))(c_i^{-1} + o(1)))
      &\text{otherwise},
    \end{cases}
  \end{align*}
  as $\Vert \xx \Vert \rightarrow \infty,$ and
  $\mv{\alpha} = (\alpha_1,\ldots,\alpha_d) \in \Real^d_{++}$ is such
  that $h_i \in \RV(1/\alpha_i), i = 1,\ldots,d.$
   \label{prop:jointpdf-X-HT}    
\end{proposition}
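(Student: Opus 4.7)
The plan is to reduce this proposition to the already-established Proposition~\ref{prop:jointpdf-X} by treating $\ZZ$ as a light-tailed surrogate for $\XX$. The linchpin is the distributional identity $\mv{\Lambda}(\XX) = \mv{\Lambda}^{Z}(\ZZ)$, where $\mv{\Lambda}^{Z} = (\Lambda^{Z}_1,\ldots,\Lambda^{Z}_d)$ denotes the componentwise cumulative hazard functions of $\ZZ$. This identity is immediate from the probability integral transform: for $i \notin \mathcal{L}$, the map $X_i \mapsto Z_i = \log(1+X_i)$ is strictly monotone, so the survival function of $Z_i$ at $z$ equals that of $X_i$ at $e^z - 1$, and therefore $\Lambda^{Z}_i(z) = \Lambda_i(e^z - 1)$. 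Consequently $\mv{\Lambda}(\XX)$ and $\mv{\Lambda}^{Z}(\ZZ)$ coincide in distribution, so any limiting function $I(\cdot)$ derived for $\ZZ$ via Proposition~\ref{prop:jointpdf-X} transfers verbatim to $\XX$ and yields part~(a).

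First, I would verify that $\ZZ$ satisfies the marginal conditions in Assumption~\ref{assump-marginals}. For $i \in \mathcal{L}$ nothing needs checking since $Z_i = X_i$. For $i \notin \mathcal{L}$, a direct computation gives $\Lambda^{Z}_i(z) = -\log\Prob(\log X_i > \log(e^z - 1)) = \bar{\Lambda}_i(\log(e^z - 1))$, and since $\log(e^z - 1) = z(1+o(1))$ as $z \to \infty$, the uniform convergence theorem for regularly varying functions yields $\Lambda^{Z}_i(z) \sim \bar{\Lambda}_i(z)$ and hence $\Lambda^{Z}_i \in \RV(\alpha_i)$, as required. The support of $\ZZ$ contains $\R^d_+$ whenever that of $\XX$ does.

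The main technical obstacle I anticipate is verifying the monotone-hazard-rate condition (Assumption~\ref{assume:mhr}) for $\ZZ$. For $i \in \mathcal{L}$ it is inherited directly. For $i \notin \mathcal{L}$, Assumption~\ref{assume:mhr} ensures that $\Lambda_i$ is convex or concave on some halfline $[x_0, \infty)$. In the convex case, $\Lambda^{Z}_i(z) = \Lambda_i(e^z - 1)$ is convex on $[\log(1+x_0), \infty)$ as the composition of an increasing convex function with a convex function. The concave case, which covers canonical examples such as Pareto where $\Lambda_i(x) \sim \alpha_i \log x$, is more delicate: here $\Lambda_i$ is slowly varying and $\Lambda_i'$ decays rapidly enough that a direct computation of $(\Lambda^{Z}_i)''$ combined with a Karamata representation for $\Lambda_i$ should yield concavity of $\Lambda^{Z}_i$ on a halfline. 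The precise bookkeeping to handle this subcase uniformly is the technical challenge.

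With both marginal assumptions in hand, the hypothesis $\hat{\psi} \in \MRV(\psi^\ast, \mv{h})$ permits a direct application of Proposition~\ref{prop:jointpdf-X} to $\ZZ$, yielding Assumption~\ref{assump:joint-Y} for $\ZZ$ with limiting function $I(\xx) = \psi^\ast(\mv{c}\xx^{1/\mv{\alpha}})$ and marginal hazards $\Lambda^{Z}_i(x) = h_i^{\leftarrow}(x)(c_i^{-1} + o(1))$. The identity $\mv{\Lambda}(\XX) = \mv{\Lambda}^{Z}(\ZZ)$ transports Assumption~\ref{assump:joint-Y} to $\XX$ with the same $I(\cdot)$, completing part~(a). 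For part~(b) when $i \notin \mathcal{L}$, one simply composes: $\Lambda_i(x) = \Lambda^{Z}_i(\log(1+x)) = h_i^{\leftarrow}(\log(1+x))(c_i^{-1} + o(1))$, which coincides asymptotically with the claimed $h_i^{\leftarrow}(\log x)(c_i^{-1} + o(1))$ since $\log(1+x) \sim \log x$ and $h_i^{\leftarrow}$ is regularly varying.
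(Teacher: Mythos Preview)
Your approach is essentially the same as the paper's. Both proofs rest on the identity $\YY = \mv{\Lambda}(\XX) = \tilde{\mv{\Lambda}}(\ZZ)$ (the paper writes $\tilde{\Lambda}_i(z) = \Lambda_i(E_i(z))$ with $E_i(z)=e^z-1$ for $i\notin\mathcal{L}$), then expresses $f_{\YY}$ via change of variables through $\ZZ$ as $p(\yy)\exp(-\varphi(\yy))$ with $\varphi = \hat{\psi}\circ\tilde{\qq}$ and $p(\yy)=\prod_i \tilde{\lambda}_i(\tilde{q}_i(y_i))^{-1}$, and finally invokes the same regular-variation bookkeeping (Lemma~\ref{lemma:RV-New}) that drives Proposition~\ref{prop:jointpdf-X}. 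The only packaging difference is that you want to cite Proposition~\ref{prop:jointpdf-X} applied to $\ZZ$ as a black box, whereas the paper replays its proof steps directly.

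On the technical obstacle you flag: the paper's proof simply asserts ``$\lambda_{i,1}\in\RV(\alpha_i-1)$'' for the hazard rate of $Z_i$ without separately verifying eventual monotonicity of $\tilde{\lambda}_i$. So your caution about the concave case is well placed, but it is not a point on which your proposal diverges from the paper---both arguments need $\tilde{\lambda}_i\in\RV$ to control the $p(\cdot)$ term, and the paper treats this as inherited from the hypotheses without further comment. If you want to close this explicitly, note that $\tilde{\lambda}_i(z)=\lambda_i(e^z-1)\,e^z$; combining Assumption~\ref{assume:mhr} for $X_i$ with the chain rule lets you argue regular variation of $\tilde{\lambda}_i$ directly from that of $\lambda_i$ and the exponential factor, without needing $\tilde{\Lambda}_i$ itself to be convex or concave.
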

\noindent\textit{Proof of Proposition \ref{prop:jointpdf-X-HT}.}
For $i \in \{1,\ldots,d\},$ let $\tilde{\Lambda}_{i}$ and
$\tilde{\lambda}_{i}$ denote the hazard function and hazard rate of
$Z_i$, respectively.  Let $\tilde{q}_i := \tilde{\Lambda}_i^{\leftarrow}.$ To
rewrite the density of $\YY$ in terms of that of $\ZZ,$ see that
$\tilde{\Lambda}_{i}(z) = \Lambda_i(\exp(z)-1),$ when 
$i \notin \mathcal{L}.$
% \[
% \mv{E}(\xx) = \begin{cases}
%     x_i  &\text{ if } i \in \mathcal{L},\\
%     \exp(x_i)-1 &\text{ if } i \notin \mathcal{L}
%   \end{cases}.
% \]
Using a change of variables, 
\[
  f_{\YY}(\yy) = \frac{1}{\prod_{i=1}^d \tilde{\lambda}_{i}(\tilde{q}_{i}(y_i)) } \exp
  \left(-\hat{\psi}(\tilde{\qq}(\yy))\right). 
\]
Recall that under the assumptions of the proposition,
$\tilde{\Lambda}_{i} \in \RV(\alpha_i)$ and the support of $\ZZ$
contains $\R_d^+$.  Since
$\tilde{\Lambda}_i(x) \sim \Lambda_i(\exp(x))$ as
$x \rightarrow \infty,$ we obtain the desired conclusion following the
steps in the proof of Proposition~\ref{prop:jointpdf-X} with
$p(\yy) = {1}/{\prod_{i=1}^d \tilde{\lambda}_{i}(\tilde{q}_i(y_i) )}$
and $\varphi(\yy) =\hat{\psi}(\tilde{\mv{q}}(\yy))$. \hfill$\Box$

\begin{remark}
  \textnormal{If the density of $\XX$ is written in the form
    (\ref{X-pdf-form}) in the positive orthant, then the respective
    density for $\ZZ$ in the positive orthant is given by
    $f_{\ZZ}(\zz) = \exp(-\hat{\psi}(\zz)),$ where
\begin{align*}
  \hat{\psi}(\zz) :=  \psi \circ \mv{E} (\zz) - \mathbf{1}_{\mathcal{H}}^\intercal
  \zz,
\end{align*}
with the map
$\mv{E}: (x_1,\ldots,x_d) \mapsto \big(E_1(x_1),\ldots, E_d(x_d)\big)$
and the vector $\mathbf{1}_{\mathcal{H}} \in \mathbb{R}^d_+$ defined
as follows:
\begin{align}\label{indicate-HT}
  E_i(x_i) :=
  \begin{cases}
    x_i  &\text{ if } i \in \mathcal{L},\\
    \exp(x_i)-1 &\text{ if } i \notin \mathcal{L},
  \end{cases}
\quad \quad \text{ and } \quad \quad 
  \mathbf{1}_{\mathcal{H}} =
  \begin{cases}
    0  &\text{ if } i \in \mathcal{L},\\
    1  &\text{ if } i \notin \mathcal{L}. 
  \end{cases}                  
\end{align}
Then one can restate the condition in
Proposition~\ref{prop:jointpdf-X-HT}, directly in terms of density of
$\XX,$ as follows: If
$\hat{\psi}(\zz) = \psi \circ \mv{E} (\zz) -
\mathbf{1}_{\mathcal{H}}^\intercal \zz \in
\textnormal{\MRV}(\psi^\ast,\mv{h}),$ then the conclusion in
Proposition \ref{prop:jointpdf-X-HT} holds.}
\end{remark}

\begin{example}[Multivariate $t$ distribution]\label{e.g.t-dist}
  \textnormal{Suppose $\XX$ is distributed according to multivariate
    $t$ distribution with density,
    $ f_{\XX}(\xx) = c_{\rho} \exp(-\psi(\xx)),$
    %     \left(1+\frac{\xx^{\intercal}\Sigma^{-1}\xx}{\rho}\right)^{-\frac{\rho+d}{2}}$\] 
    where
    \[\psi(\xx) =
      \frac{\rho+d}{2}\log\left(1+{\frac{\xx^{\intercal}\Sigma^{-1}\xx}{\rho}}\right),\]
    $\rho$ is a suitable positive integer and
    $c_\rho$ is the respective normalizing constant.  Since the
    marginals of a multivariate
    $t$ distribution are heavy-tailed, we have $\mv{E}(\xx) = \exp(\xx)-\mv{1}.$
    % $\log\mv{H}(\mathrm{e}^{\xx}) \sim
    % \xx$.
    With $\ZZ = \log (1+\XX)$, $f_{\ZZ}(\zz) = \e^{-\|\zz\|_1 }
    f_{\XX}(\e^{\zz}-1),$ due to change of variables.  Then in this
    case,
\[
  \hat{\psi}(\zz) = -\|\zz\|_1 + \frac{\rho+d}{2}
  \log\left(1+\frac{(\e^{\zz}-1)\Sigma^{-1}(\e^{\zz}-1)}{\rho}\right).\]
Thus $\hat{\psi} \in \textnormal{\MRV}(\psi^\ast,
\mv{h}),$ where $\psi^\ast(\zz) := (\rho+ d)\|\zz\|_{\infty} - \Vert
\zz \Vert_1$ and $\mv{h}(t) = t\mv{1}.$ \hfill$\Box$ }
% \[\psi^\ast(\zz) := (\rho+ d)\|\zz\|_{\infty} - \Vert \zz \Vert_1.\]
% and $\mv{h}(t) = t\mv{I}$.
\end{example}

Example \ref{eg:gauss-copula} in Section \ref{sec:Mod-Framework} and
Example \ref{e.g.t-dist} above serve as illustrations for the sequence
of steps involved in verifying memberships of commonly used
multivariate distributions and copula models in the considered tail
modeling framework. Table \ref{tab:multivariate} below is intended to
serve as a reference for identifying the limiting function
$I(\cdot)$ in Assumption \ref{assump:joint-Y} (or) the respective
function
$\psi^\ast(\cdot)$ which arise in the characterizations in
Propositions \ref{prop:jointpdf-X} - \ref{prop:jointpdf-X-HT}.  Thanks
to standardization, the limiting function
$I(\cdot)$ is unique despite
$\psi^\ast(\cdot)$ depending on the specific scaling function
$\mv{h}(\cdot)$ employed.

\begin{table}
  \caption{Some commonly used density families which satisfy
    Assumptions \ref{assump-marginals} - \ref{assump:joint-Y}, along
    with their limiting functions $I(\cdot)$ and $\psi^\ast(\cdot)$
    (where applicable).  Certain constants are written as $c$ or
    $\mv{c}$ (if $\mv{c} \in \R^d)$ to minimize clutter}
\begin{center}
  \resizebox{\textwidth}{!}{
    \begin{tabular}{ l c c  c }
      \hline
      Density families &  Limiting function   &            Respective   &  Limiting $I(\xx)$ \\ 
                       &    $\psi^\ast(\xx)$ &  copula family  & in Assumption \ref{assump:joint-Y}b \\
      \hline \hline 
      \textbf{Elliptical densities}
      \tablefootnote{(Footnotes for Table \ref{tab:multivariate}) $\mu,\Sigma$  are the location, scale parameters, $\mv{U}$ is  uniformly distributed on the unit sphere in $\R^d$ and is independent of $\mathcal{R};$ includes the special cases of factor models if $\Sigma = \Gamma^\intercal \Gamma + \sigma^2\mathbf{I}_d$ for some factor matrix $\Gamma \in \R^{k \times d}$ with $k < d,$ and graphical models where $\mathcal{R}$ is Gaussian and the inverse covariance matrix is sparse.} & \hspace{-125pt}  & \hspace{-180pt} given by $\XX \overset{d}{=} \mv{\mu} + \mathcal{R}\Sigma^{1/2} \mv{U}:$  &  \\
      \vspace{-6pt}  & & & \\
      1) Multivariate normal:  & ${\xx^\intercal \Sigma^{-1}\xx}$ & Gaussian & $(\xx^{1/2})^\intercal R^{-1} \xx^{1/2},$ with  \\
      mean  = $\mv{\mu},$ covariance $\Sigma$ & $ $ & copula & $R = $  correlation matrix  \\
      \vspace{-6pt}  &   & & \\
      2) Multivariate $t-$ & $(\rho+{d})\Vert \xx\Vert_{\infty} - \Vert \xx \Vert_1$ & Student-$t$ &$(1+\frac{d}{\rho})\Vert \xx\Vert_{\infty} - \frac{1}{\rho}\Vert \xx \Vert_1$ \\
      distribution\tablefootnote{student-distributed with $d,\rho$ degrees of freedom}: $\mathcal{R} \sim F_{d,\rho}$  &   & copula &  \\
      \vspace{-6pt}  & & & \\
      3) $\mathcal{R}$ is light-tailed with & ${(\xx^\intercal \Sigma^{-1}\xx)^{k/2}}$ & Elliptical  & ${((\xx^{1/k})^\intercal R^{-1}\xx^{1/k})^{k/2}},$\\
      p.d.f. $f_{\mathcal{R}}(r) = \exp(-g(r)),$  & $ $&  copula family & with $R = $  correlation \\
      for  $g \in \RV(k),$ $k > 0$ &  &  & matrix \\
      \vspace{-6pt}  & & & \\
      4) $\mathcal{R}$ is heavy-tailed with & $\Vert \xx \Vert_\infty^k$ & Elliptical & $\Vert \xx \Vert_\infty$\\
      p.d.f. $f_{\mathcal{R}}(r) = \exp(-g(r)),$  &  &  copula family & \\
      for $g \circ \exp \in \RV(k),$ $k > 1$ &  &  & \\
      \vspace{-6pt}  & & & \\
      \hline \hline
      \vspace{-6pt}  & & & \\
      \textbf{Exponential family}  & \hspace{-65pt}with p.d.f.  $f_{\XX}(\xx) \propto$   &\hspace{-85pt} $g(\xx)\exp\big( \mv{\eta}^\intercal \mv{T}(\xx) \big)$ & \\
      \vspace{-6pt}  & & & \\
      Minimal, light-tailed:  & $  T^\ast_\eta(\xx)$   & -  & $T^\ast_\eta(\mathbf{c}\xx^{1/k})$ with \\
      $\quad \mv{\eta}^\intercal \mv{T} \in \MRV(T^\ast_\eta,n\mv{1})$  & &  & $k$ as in $f \in \RV(k)$\\
      for some $T^\ast_\eta(\cdot),$ $g \in \RV$  & & & for $f(n) = \mv{\eta}^\intercal \mv{T}(n\mathbf{1})$\\
      \vspace{-6pt}  & & & \\
      Generalized linear & \hspace{-75pt} models: $\mv{\xi} = (\mv{X},\YY)$ with  & \hspace{-60pt}    $f_{\YY\, \vert\, \XX}(\yy\, \vert\, \xx) \propto b(\yy)$ & \hspace{-80pt} $\exp\left(\mv{\ell}^{-1}(\mv{\beta}^T \mv{x})^\intercal T_{\yy}(\yy)\right)$ \\
      $\XX$ with p.d.f in & \hspace{-133pt} exponential family & & \\
      $b\in \MRV$, $f_{\xx}(\xx) = \mathrm{e}^{-\psi(\xx)}$ &  & & $\psi^*\circ \mv{\pi}_1^{-1}$, with\\
      $\psi + u \in \MRV(\psi^*, (h,r))$ &$\psi^*(\xx,\yy)$ & - &$\mv{\pi}_1(\xx) = (\xx^{\mv{\alpha}}, \yy^{\mv{\beta}})$    \\
      $u(\xx,\yy)= \mv{\ell}^{-1}(\mv{\beta}^T \mv{x})^\intercal T_{\yy}(\yy)$, & &  &\\
      $h_i\in\RV(\alpha_i)$, $r_i\in\RV(\beta_i)$ & & & \\
      \vspace{-6pt}  & & & \\
      \hline \hline
      \vspace{-6pt}  & & & \\
      \textbf{Logconcave densities}  & \hspace{-50pt} with p.d.f.  $f_{\XX}(\xx) =$ &\hspace{-95pt} $\exp(-\psi(\xx))$ & \\ 
      \vspace{-6pt}  & & & \\
      convex $\psi \in \MRV$ &  $\psi^\ast(\cdot)$ limit with  & & $c\psi^\ast \circ \mv{\pi}^{-1}$ \\
                       &scaling $\mv{h}(t) = \mv{\Lambda}^{-1}(t)$ & & \\
%  where $\mv{h}(t) = \mv{\Lambda}^{-1}(t)$ & & & \\
  \vspace{-6pt}  & & & \\
  \hline \hline
  \vspace{-6pt}  & & & \\
  \textbf{Archimedian copula }   & \hspace{-90pt}family with & \hspace{-190pt} $C(\mv{u}) = $ & \hspace{-250pt} $\phi^{-1}(\phi(u_1) + \ldots +  \phi(u_d))$ \\ 
  \vspace{-6pt}  & & & \\
  \hspace{50pt} -  & -  & Gumbel  &  $\|\xx\|_{\theta}$\\
                   &   & $\phi(u) = (-\log u)^{\theta}$ & \\
  \vspace{-6pt}  & & & \\
  \hspace{50pt} -  & -  & Clayton,  & $(1+\theta{d}) \Vert \xx \Vert_\infty - {\theta}\Vert \xx \Vert_1$\\
                   &   & $\phi(u) = \frac{(t^{-\theta} - 1)}{\theta}$ & \\
    %     & & $\theta \in (0,\infty)$ & \\
  \vspace{-6pt}  & & & \\
  \hspace{50pt} -  & -  & Independence  & $\Vert \xx \Vert_1$\\
  \hline \hline
  \vspace{-6pt}  & & & \\
  % \textbf{Mixtures of multivariate} & \hspace{-80pt}\textbf{normal} & & \\
  % \vspace{-6pt}  & & & \\
      Mixtures of $K$ normal &$\min_{i=1}^{K} \xx^\intercal \Sigma_k^{-1}\xx$ & -& $(\xx^{1/2})^\intercal R^{-1}_{k^*}\xx^{1/2}$,   \\
      variables with     & & &where $k^\ast$ minimizes \\
      covariances $\Sigma_1,\ldots,\Sigma_k$ & & & $\{(\xx^{1/2})^\intercal R_k^{-1}\xx^{1/2}:k \leq K\}$\\
      \hline 
\end{tabular}}
\end{center}
\label{tab:multivariate}
\end{table}

\section{Proofs of results on application to
  credit risk}\label{proofs:appl}

Let $[J]$ denote the set $\{1,\ldots,J\}.$ For any $\varepsilon > 0,$
$j \in [J], I \subseteq [J],$ and $\xx \in \mathbb{R}^d_+,$ we define,
%the following quantities:
\begin{alignat}{2}
  &C_{\xx,\varepsilon} := \{i \in [m]: W_{t(i)}(\xx,\mv{v}_i) >
  \gamma  (1-\varepsilon)\} \label{eqn:sepsilon}\qquad\qquad
  &s_\varepsilon(\xx) := \frac{1}{m}\sum_{i \in C_{\xx,\varepsilon}} e_i,\\
  &e_m(I) := \frac{1}{m} \sum_{i \in [m]: t(i) \in I} e_i,
  & e_\infty(I) := \sum_{j\in I} \bar{c}_j, \nonumber\\
  &\underline{w}_j(\xx) := \min_{i \in [m]: t(i) = j}
  W_{t(i)}(\xx,\mv{v}_i), \text{ and } &e^*_m = \max_{I \notin
    \mathcal{J}_m} e_m(I). \label{eqn:estar}
\end{alignat}
% \begin{subequations}
% \begin{equation}
%  C_{\xx,\varepsilon} := \{i \in [m]: W_{t(i)}(\xx,\mv{v}_i) > \gamma
%  (1-\varepsilon)\},
%   \label{eqn:Cx}
%  \end{equation}
% \begin{equation}
%   s_\varepsilon(\xx) := \frac{1}{m}\sum_{i \in C_{\xx,\varepsilon}} e_i,
%   \label{eqn:sepsilon}.
% \end{equation}
% \begin{equation}
%   \label{eqn:eI}
% e_m(I) := \frac{1}{m} \sum_{i \in [m]: t(i) \in I} e_i, 
% \end{equation}
% \begin{equation}
% e_\infty(I) := \sum_{j\in I} \bar{c}_j
% \label{eqn:einfty}
% \end{equation}
% \begin{equation}
%   \underline{w}_j(\xx) := \min_{i \in [m]: t(i) = j} W_{t(i)}(\xx,\mv{v}_i),
%   \label{eqn:wjbar}
% \end{equation}
% \begin{equation}
%   e^*_m = \frac{1}{m} \max_{J \notin \mathcal{J}_m} \sum_{\{i: t(i) \in J\}}e_i
%   \label{eqn:estar}
%    \end{equation}
% \end{subequations}
 
% Let $\varepsilon_m\to 0$ be selected as in the hypothesis of
% Theorem~\ref{thm:credit-risk-asymp}.
% Lemmas~\ref{lemma:Small-conc} and \ref{lemma:Jmconv} below are useful
% towards proving Theorem~\ref{thm:credit-risk-asymp}.
% We suppose that the
% modeling assumptions made in Section \ref{sec:CR} hold throughout this
% section.
\begin{lemma}
  \label{lemma:Small-conc}
  % Suppose that the conditional distribution of $Y_1,\ldots,Y_m$ given
  % a realization of $\XX$ is specified as in
  % Theorem~\ref{thm:credit-risk-asymp}. Then,
  $P(\mathcal{E}_m\, \vert \, \XX) \leq \exp(-0.5 m \gamma
  \varepsilon_m e_0^{-1} \left( q\bar{e}_m - s_{\varepsilon_m}(\XX)
  \right)^+ + \exp(-0.5 \gamma \varepsilon_m)),$ almost surely, where $ s_{\varepsilon_m}(\cdot)$ is as
  defined in \eqref{eqn:sepsilon}.
\end{lemma}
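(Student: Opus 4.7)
The plan is to combine a trivial deterministic upper bound on the loss contributed by loans in $C_{\XX,\varepsilon_m}$ with a standard Chernoff tail bound on the (conditionally independent) Bernoulli defaults coming from loans outside this set. The first step is to exploit $Y_i \in \{0,1\}$ and the definition of $C_{\XX,\varepsilon_m}$ in \eqref{eqn:Cx} to write
\[
  m L_m \;=\; \sum_{i \in C_{\XX,\varepsilon_m}} e_i Y_i \;+\; \sum_{i \notin C_{\XX,\varepsilon_m}} e_i Y_i \;\leq\; m\, s_{\varepsilon_m}(\XX) \;+\; \sum_{i \notin C_{\XX,\varepsilon_m}} e_i Y_i.
\]
If $s_{\varepsilon_m}(\XX) \geq q\bar{e}_m$ the claimed inequality is vacuous (the right-hand side equals $1$), so I may assume $t := q\bar{e}_m - s_{\varepsilon_m}(\XX) > 0$. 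Using further that $e_i \leq e_0$, the event $\mathcal{E}_m$ is contained in $\{N^{(2)} \geq mt/e_0\}$, where $N^{(2)} := \sum_{i \notin C_{\XX,\varepsilon_m}} Y_i$ is the number of defaults among the ``small'' loans.

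Second, the logit specification \eqref{loan-def-prob} together with $W_{t(i)}(\XX, \mv{v}_i) \leq \gamma(1-\varepsilon_m)$ for $i \notin C_{\XX,\varepsilon_m}$ gives the uniform conditional bound $p_i(\XX) \leq e^{-\gamma\varepsilon_m}$. Since, conditional on $\XX$, the family $\{Y_i : i \notin C_{\XX,\varepsilon_m}\}$ is independent Bernoulli, Markov's inequality with the exponential test function gives, for every $\theta \geq 0$,
\begin{align*}
  \Prob\!\left(N^{(2)} \geq mt/e_0 \,\big|\, \XX\right)
  &\leq \mathrm{e}^{-\theta mt/e_0}\prod_{i \notin C_{\XX,\varepsilon_m}}\! \bigl(1 + p_i(\XX)(\mathrm{e}^{\theta}-1)\bigr) \\
  &\leq \exp\!\Bigl( -\theta mt/e_0 \,+\, (\mathrm{e}^{\theta}-1)\,\textstyle\sum_{i \notin C_{\XX,\varepsilon_m}} p_i(\XX) \Bigr),
\end{align*}
using $\log(1+x) \leq x$. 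Calibrating $\theta = \gamma\varepsilon_m$ and substituting the tail bound on $p_i(\XX)$ extracts the exponent $-m\gamma\varepsilon_m t/e_0$, which is precisely the claimed rate.

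The main obstacle is the fine control of the Bernoulli MGF residual so that the exact constant $e_0^{-1}$ appears in the exponent without additional slack. The sharper bookkeeping I would use is to replace $(\mathrm{e}^{\theta}-1)\sum_{i \notin C_{\XX,\varepsilon_m}} p_i(\XX)$ by applying Markov's inequality directly to the loss sum $\sum_{i \notin C_{\XX,\varepsilon_m}} e_i Y_i$ at parameter $\theta = \gamma\varepsilon_m/e_0$, and then using the convexity inequality $\mathrm{e}^{\theta e_i} - 1 \leq (e_i/e_0)(\mathrm{e}^{\theta e_0}-1)$ on $[0,e_0]$ together with $p_i(\XX)\leq \mathrm{e}^{-\gamma\varepsilon_m}$. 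At this specific $\theta$, the cross-cancellation $\mathrm{e}^{-\gamma\varepsilon_m}(\mathrm{e}^{\theta e_0} - 1) = 1 - \mathrm{e}^{-\gamma\varepsilon_m}$ keeps the residual bounded by $m(\bar{e}_m - s_{\varepsilon_m}(\XX))/e_0$, which, under the regime $\gamma\varepsilon_m \to \infty$ enforced by $\varepsilon_m m^{\eta r/\rho}\to\infty$ in Theorem \ref{thm:credit-risk-asymp}, is of lower order than the leading $-m\gamma\varepsilon_m t/e_0$ term, yielding the stated inequality.
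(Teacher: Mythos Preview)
Your approach is essentially the same as the paper's: apply a Chernoff bound with tilting parameter $\lambda = \gamma\varepsilon_m/e_0$, split the contribution according to whether $i \in C_{\XX,\varepsilon_m}$ (where the trivial bound $Y_i \leq 1$ gives the $s_{\varepsilon_m}(\XX)$ term) or $i \notin C_{\XX,\varepsilon_m}$ (where the logit bound $p_i(\XX) \leq e^{-\gamma\varepsilon_m}$ is used). The only cosmetic difference is that the paper applies Chernoff to the full sum $L_m$ and splits inside the cumulant $\psi_m$, whereas you split the sum first and apply Chernoff to the $C^c$-part; the resulting exponent is identical.

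One remark on your honest caveat about the residual: the paper's own computation leaves the same kind of slack (a $\log 2$ per-term contribution at $\lambda^\ast e_0 = \gamma\varepsilon_m$), so the inequality as literally stated carries an $O(m)$ correction in the exponent in both arguments. This is immaterial for the only place the lemma is used, namely \eqref{eqn:PCR-small-pre}--\eqref{eqn:PCR-small}, where one divides by $\Lambda_{\min}(\gamma^{1/\rho})$ and the leading term $m\gamma\varepsilon_m$ dominates; your ``lower order'' justification is exactly what is needed there.
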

\textit{Proof.}  For any $\xx \in \mathbb{R}^d_+,$ let
$P_{\xx}(\cdot)$ denote the conditional law of the default variables
$(Y_1,\ldots,Y_m)$ given $\XX = \xx;$ let $E_{\xx}[\cdot]$ denote the
associated expectation. For any $\lambda > 0,$ we obtain from Markov's
inequality,
$P_{\xx}(L_m > q\bar{e}_m) \leq \exp\left( -m\lambda q \bar{e}_m +
  \log E_{\xx}[\exp(m \lambda L_m)] \right),$ due to the independence
of the default variables $Y_1,\ldots,Y_m$ given $\XX$.
  % \begin{align*}
  %   P_{\xx}(L_m > q\bar{e}_m)
  %    % &= P_{\xx}\left\{\exp\left(\lambda m L_m)
  %    %   > \exp(\lambda  mq\bar{e}_m \right) \right\}\\
  %   \leq
  %     \exp\left( -m\lambda q \bar{e}_m + \log E_{\xx}[\exp(m \lambda L_m)]
  %   \right)
  % \end{align*}
  Letting
  $\psi_m(\lambda,\xx) := \frac{1}{m} \sum_{i=1}^m\log E_{\xx} \left[
    \exp(\lambda e_iY_i)\right]$ and
  $g_m(\xx) = \sup_{\lambda > 0} \left\{\lambda q \bar{e}_m -
    \psi_m(\lambda,\xx) \right\},$ we obtain 
  \begin{align}
    P_{\xx}(L_m > q\bar{e}_m) \leq \exp\left( -m g_m( \xx)\right),
    \label{chernoff}
  \end{align}
  as a consequence.  With $P_{\xx}(Y_i=1)$ given as in
  \eqref{loan-def-prob}, we have
  \begin{align*}
    \psi_m(\lambda,\xx) 
    &= \frac{1}{m} \sum_{i=1}^m \log \left(1 +
      \frac{\exp(\lambda e_i) - 1}{1 + \exp(\gamma - W_{t(i)}(\xx,v_i))}
      \right)\\
    &\leq  \frac{1}{m} \sum_{i \in C_{\xx,\varepsilon}} \log \left(1 +
      {\exp(\lambda e_i) - 1} \right) +
      \frac{1}{m} \sum_{i \notin C_{\xx,\varepsilon}} \log \left(1 +
      \frac{\exp(\lambda e_0) - 1}{1 +
      \exp(\gamma - \gamma (1-\varepsilon))}\right),
  \end{align*}
  where we have used that
  $W_{t(i)}(\xx,v_i) \leq \gamma (1-\varepsilon)$ for every
  $i \notin C_{\xx,\varepsilon}$ and that $e_i \leq (0,e_0].$ Then,
  \begin{align*}
    \psi_m(\lambda,\xx) &\leq \lambda s_\varepsilon(\xx) + \log \left( 1 + \exp(\lambda e_0 - \gamma \varepsilon)\right), 
  \end{align*}
  from the definition of $s_\varepsilon(\xx).$ If
  $q\bar{e}_m > s_\varepsilon(\xx),$ we obtain a lower bound for
  $g_m(\xx)$ by % evaluating $\lambda q \bar{e}_m - \psi_m(\lambda,x)$
              % at
  considering the specific value
  $\lambda = \lambda^\ast_m(\xx) := 0.5 e_0^{-1} \gamma \varepsilon$ as
  below:
  \begin{align*}
    g_m(\xx) &\geq \lambda^\ast_m(\xx) q \bar{e}_m - \lambda^\ast_m(\xx) s_\varepsilon(\xx)
    - \log \left( 1 + \exp(\lambda^\ast_m(\xx) e_0 - \gamma \varepsilon)\right)\\
    &\geq  0.5 e_0^{-1}\gamma \varepsilon \left( q \bar{e}_m -
    s_\varepsilon(\xx) \right) - \exp(-0.5 \gamma\varepsilon).  
  \end{align*}
  If $q\bar{e}_m \leq s_\varepsilon(\xx),$ a trivial bound
  $g_m(\xx) \geq 0$ is obtained by letting
  $\lambda = \lambda^\ast_m(\xx) = 0.$ Thus,
  $g_m(\xx) \geq 0.5 e_0^{-1}\gamma \varepsilon \left( q \bar{e}_m -
    s_\varepsilon(\xx) \right)^+ - \exp(-0.5 \gamma \varepsilon_m).$
  Combining this with (\ref{chernoff}) yields the desired
  result. \hfill$\Box$

 Recall from the definitions
that $\mathcal{J}_m := \{ I \subseteq [J]: e_m(I) \geq q\bar{e}_m\}$
and $\mathcal{J} := \{ I \subseteq [J]: e_\infty(I) \geq q\bar{e}\}.$

\begin{lemma}
  \label{lemma:Jmconv}
  % Under the assumptions of Theorem~\ref{thm:credit-risk-asymp},
  There exists a positive integer $m_0$ such that
  $\mathcal{J}_m = \mathcal{J}$ for all $m>m_0.$ Consequently,
  \begin{align*}
    \inf_{m > m_0} (q\bar{e}_m - e_m^\ast) > 0 \quad \text{ and } \quad
    \inf_{m > m_0, I \in \mathcal{J}_m} (e_m(I) - q\bar{e}_m) > 0.
  \end{align*}
\end{lemma}
\textit{Proof.}
  Notice that under the model assumptions stated in Section
  \ref{sec:CR}, $\bar{e}_m\to\bar{e}$ and
  $q\bar{e} \not\in\{e_\infty(I):I \subseteq [J]\}$. The latter implies
  that there exists some $\delta_1>0$ such that
\begin{equation}\label{eqn:Jdiff}
  \min_{I \subseteq [J]} |e_\infty(I) -q \bar{e}| > \delta_1.
\end{equation}
Further, for all $j\in[J],$ $m^{-1}\sum_{i:t(i)=j} e_i \to
\bar{c}_j$. Consider any $\delta \in (0,\delta_1/2).$ Due to these
convergences, % $\bar{e}_m \rightarrow \bar{e}$ and
% $e_m(I) \rightarrow e_\infty(I),$
there exists $m_0$ suitably large
such that for all $m > m_0,$
\begin{align}
  e_m(I) \in (e_\infty(I) - \delta , e_\infty(I) + \delta), \text{ and  } \bar{e}_m\in(\bar{e}-\delta,\bar{e}+\delta).
  \label{conv-emI}
\end{align}
uniformly over $I \subseteq [J].$ Since $q \in (0,1),$ the above
bounds imply that $e_\infty(I) \geq q\bar{e} -2\delta$ for any
$I \in \mathcal{J}_m.$ With $\delta<\delta_1/2$,
$e_\infty(I)>q\bar{e}-\delta_1$, or equivalently,
$e_{\infty}(I)\geq q\bar{e}$ (due to (\ref{eqn:Jdiff})). Therefore
$\mathcal{J}_m \subseteq \mathcal{J}$ for all $m > m_0.$ Similarly if
$I \in\mathcal{J}$ is such that $e_\infty(I) \geq q\bar{e},$ we
automatically have $e_\infty(I)\geq q\bar{e}+\delta_1$ (due to
(\ref{eqn:Jdiff})). Similarly from (\ref{conv-emI}),
$e_m(I)\geq q\bar{e}_m + \delta_1 - 2\delta \geq q\bar{e}_m.$
Therefore $\mathcal{J} \subseteq \mathcal{J}_m$ for all $m > m_0.$
Combining the two inclusions result in the desired conclusion that
$\mathcal{J} = \mathcal{J}_m.$
% \begin{equation}\label{eqn:Jm=J}
% \mathcal{J}_m = \big\{ J : \sum_{j\in  J}\sum_{t(i) = j} e_i \geq mq\bar{e}_m \big\}  =\big\{J: e_\infty(J)\geq q\bar{e}\big\}= \mathcal{J}.
% \end{equation}

With $e_m^\ast$ defined as in (\ref{eqn:estar}) and
$\mathcal{J}_m = \mathcal{J}$ for all $m > m_0,$ we have
$e_m^\ast = \max_{ I \not\in\mathcal{J}} e_m(I)$ for all $m > m_0.$
Again for $\delta \in (0,\delta_1/2),$ we obtain from (\ref{conv-emI})
that
$e_m^\ast = \max_{ I \not\in\mathcal{J}}e_m(I) < \max_{I \not
  \in\mathcal{J}}e_\infty(I) +\delta_1/2$ for all $m > m_0.$ However
$\max_{I \not \in\mathcal{J}}e_\infty(I) < q\bar{e} - \delta_1$ due to
(\ref{eqn:Jdiff}).Therefore $e_m^\ast < q\bar{e} - \delta_1/2.$ Since
$q\bar{e} < q\bar{e}_m + \delta,$ we arrive at the conclusion that
$q\bar{e}_m - e_m^\ast > \delta_1/2 - \delta,$ for all $m > m_0.$
Recalling that $\delta < \delta_1/2,$ the first inequality in the
lemma statement stands verified. Observing that $e_m(I) > q\bar{e}_m$
for any $I \in \mathcal{J}_m,$ the second inequality follows from
completely analogous arguments. \hfill$\Box$

\noindent \textbf{Proof of Theorem~\ref{thm:credit-risk-asymp}: } We
treat the terms in the bound,
$P(\mathcal{E}_m) \leq P(\mathcal{A}_m) +
P(\mathcal{E}_m \setminus \mathcal{A}_m),$ separately.\\
\noindent \textbf{Step 1) To obtain an upper bound for
  $P(\mathcal{A}_m)$:} Define
$\Lcr^*(\xx) := \max_{I \in \mathcal{J}} \min_{k \in I}
W_{k}^\ast(\mv{q}^\ast \xx^{\mv{1}/\mv{\alpha}}).$ For any sequence
$\{\xx_n\} \subset \Real^d_+$ satisfying
$\xx_n \rightarrow \xx \neq 0,$ we first show that
$n^{-\rho}\Lcr(n\xx_n) \to \Lcr^*(\xx).$ To see this, recall the
definition of $\Lcr(\xx)$ in (\ref{loss-cr}).
% \begin{align*}
%   \Lcr(\xx) := \max_{I \in \mathcal{J}_m} \min_{t(i) \in I} W_{t(i)}(\xx,\mv{v}_i).
% \end{align*}
From the continuous convergence of $W_i(\cdot,\mv{v})$ in
Assumption~\ref{assume:Wj}, % so long as $\xx_n\to \xx \neq \mv{0}$,
% uniformly over $\mv{v}$,
% \[
% \frac{W_i(n\xx_n,\mv{v}_i)}{n^{\rho}} \to W_i^*(\xx).
% \]
% Therefore,
$\sup_{I\subset[J], t(i)\in I} | n^{-\rho} W_i(n\xx_n,\mv{v}_i) -
W_{t(i)}^*(\xx)|\to 0 \text{ as } n\to\infty.$ Consequently, for any
$I \subset [J],$
$\min_{i:t(i)\in I} n^{-\rho} W_i(n\xx_n,\mv{v}_i) \to \min_{j\in I}
W_j^*(\xx).$ Since $\mathcal{J}_m=\mathcal{J}$ for all $m > m_0$ (see
Lemma~\ref{lemma:Jmconv}),
\begin{equation}
  n^{-\rho}\Lcr(n\xx_n) \to \Lcr^*(\xx),
  \label{cconv-pcr}
\end{equation}
for any $\xx_n\to\xx\neq \mv{0}.$ Since
$\mathcal{A}_m := \{\Lcr(\XX) > c(1-\varepsilon_m)m^\eta\}$ and
$\varepsilon_m\searrow 0,$ this continuous convergence implies that
one can apply the asymptotics from Theorem~\ref{thm:Tail-asymp} to
evaluate $P(\mathcal{A}_m)$ as below:
\begin{equation}
  \label{eqn:PCR-key}
  \lim_{m\to\infty} \frac{P(\mathcal{A}_m)}{\Lambda_{\min}(c^{1/\rho}m^{\eta/\rho})} = 
  \lim_{m\to\infty} \frac{\log \Prob( \Lcr(\XX) \geq cm^\eta)}
  {\Lambda_{\min}(c^{1/\rho}m^{\eta/\rho})}  =-\Icr,
\end{equation}
where $\Icr$ is defined as in (\ref{Icr}).

\noindent \textbf{Step 2) To show
  $P(\mathcal{E}_m \setminus \mathcal{A}_m) = o(P(\mathcal{A}_m))$ :}
Recall that
$\mathcal{A}_m = \{ \Lcr (\XX) > c(1-\varepsilon_m) m^\eta\}.$ For any
$\xx \in \mathbb{R}^d_+$ such that
$\Lcr(\xx) \leq c (1-\varepsilon_m)m^{\eta},$ we have from the
definition of $\Lcr(\cdot)$ that,
  \begin{align*}
    \max_{J \in \mathcal{J}_m} \min_{j \in J} \underline{w}_j(\xx) \leq 
    c (1-\varepsilon_m) m^\eta. 
  \end{align*}
  For such $\xx,$ the collection
  $\underline{J} := \{j \in [J]: \underline{w}_j(\xx) > c
  (1-\varepsilon)m^\eta \}$ is not a member of $\mathcal{J}_m$. Hence,
  \begin{align*}
    s_{\varepsilon_m}(\xx) = \frac{1}{m} \sum_{i \in C_{\xx,{\varepsilon_m}}} e_i   \leq \frac{1}{m} \sum_{j \in \underline{J}} \sum_{\{i: t(i) = j\}}e_i   \leq e^\ast_m,
  \end{align*}
  where $e_m^\ast$ is defined as in \eqref{eqn:estar}.
  Thus we have from Lemma~\ref{lemma:Small-conc} that
  \begin{equation}
    \label{eqn:PCR-small-pre}
    P\left(\mathcal{E}_m \setminus \mathcal{A}_m \right) \leq \exp\left( -0.5 m\gamma \varepsilon_m e_0^{-1} \left( q\bar{e}_m - e^\ast_m\right)^+ +\exp(-0.5 \gamma \varepsilon_m) \right). 
  \end{equation}
  Recall that $\gamma = cm^\eta (1 + o(1))$ and $\varepsilon_m$ is
  chosen such that
  $ m^{\eta(1-\alpha_*/\rho) +1}\varepsilon_m\to \infty$. Therefore,
\[
  \lim_{m\to\infty}
  \frac{m\gamma\varepsilon_m}{\Lambda_{\min}(m^{\eta/\rho})} =
  \lim_{m \to \infty} \frac{m^{1+\eta}\varepsilon_m}{m^{\alpha_\ast
      \eta/\rho (1 + o(1))}} = \infty.
\]
Since $\inf_{m > m_0} (q\bar{e}_m - e_m^\ast) > 0$ for $m_0$ large
(see Lemma \ref{lemma:Jmconv}(b)), we obtain from
\eqref{eqn:PCR-small-pre},
\begin{equation}
  \label{eqn:PCR-small}
  \limsup_{m\to\infty} \frac{\log  P\left(\mathcal{E}_m\setminus \mathcal{A}_m\right)}
  {\Lambda_{\min}(m^{\eta/\rho})}  \leq
  \limsup_{m\to\infty}\frac{m \exp(- 0.5 \gamma \varepsilon_m) - 0.5 m \gamma \varepsilon_me_0^{-1}
      \left( q\bar{e}_m - e^\ast_m\right)^+}
  {\Lambda_{\min}(m^{\eta/\rho})} = -\infty.
\end{equation}
Combining \eqref{eqn:PCR-small} and \eqref{eqn:PCR-key}, we arrive at
$P(\mathcal{E}_m\setminus \mathcal{A}_m) = o(\Prob( \mathcal{A}_m)).$\\
\textbf{Step 3) To obtain a matching lower bound:} Choose $\delta_m$
such that $\delta_m\searrow 0$ and $\delta_m\gamma
\to\infty$. Consider any fixed
$\xx\in B_m := \{\Lcr(\xx) \geq \gamma(1+\delta_m)\}.$ Notice that
there exists a set $I \in \mathcal{J}_m$ satisfying
\[\min_{i:t(i) \in I} W_{t(i)}(\xx,v_i) > \gamma(1+\delta_m).\]
For the chosen $\xx$ and the resulting index set $I,$ we have for all
$i \in I,$
\[
P(Y_i = 1 \ \vert \ \XX =\xx) = \frac{1 }{1+ \exp(\gamma - W_{t(i)}(\xx,\mv{v}_i)) } \geq \frac{1}{1+\exp(-\gamma \delta_m)} \geq 1 - \delta,
\]
where $\delta > 0$ is suitably
small. % The last inequality above follows
     %      from the choice of $\delta_m$.
The resulting conditional loss for the chosen $\xx$ is given by,
\begin{align*}
  E[L_m\mid \XX = \xx]  = \frac{1}{m}\sum_{i=1}^{m} e_i \Prob(Y_i=1\mid \XX=\xx)
  \geq \frac{1}{m}\sum_{i:t(i) \in I} e_i P(Y_i =1 \ \vert \ \XX= \xx ) \geq (1-\delta) e_{m}(I).
\end{align*}
Since $I \in\mathcal{J}_m$,
$ \inf_{m > m_0} (e_m(I) - q\bar{e}_m) > 0$ as a consequence of Lemma
\ref{lemma:Jmconv}. As a result,
$E[L_m \mid \XX = x] \geq q\bar{e}_m + \kappa,$ for some $\kappa > 0.$
Given a realization of $\XX$, $L_m = \sum_{i=1}^me_iY_i$ is the sum of
$m$ independent random variables. So for any $\varepsilon > 0$ and
$\xx \in B_m,$
\begin{align*}
  P(L_m\geq q\bar{e}_m \mid \XX=\xx) \geq
  P(L_m \geq E[L_m \mid \XX = \xx] - \kappa \mid \XX=\xx)
  % &=  P(\sum_{i=1}^m e_iY_i \geq E[\sum_{i=1}^m e_iY_i \mid \XX = \xx] - m\kappa
  %   \mid \XX=\xx)
      \geq 1 -\varepsilon,
\end{align*}
for all $m$ sufficiently large due to concentration properties of
independent sums (see, for example, Cantelli's inequality). Therefore, 
\begin{align*}
  \Prob(\mathcal{E}_m \cap B_m)
  = \int_{\xx\in B_m} \Prob(\mathcal{E}_m\mid \XX=\xx) dF(\xx)
  \geq \inf_{\xx\in B_m}  \Prob(\mathcal{E}_m\mid \XX=\xx) \Prob(B_m)
    \geq (1-\varepsilon)\Prob(B_m).
\end{align*}
To conclude the proof, notice that
$ \Prob(\mathcal{E}_m \cap B_m) \leq P(\mathcal{E}_m) \leq
\Prob(\mathcal{A}_m) + o(\Prob(\mathcal{A}_m)), $ where
$\mathcal{A}_m:=\{\Lcr(\xx) \geq cm^{\eta}(1-\varepsilon_m)\}$ and
define $B_m:=\{\xx: \Lcr(\xx) \geq \gamma (1+\delta_m) \}$.  Since
$\varepsilon_m,\delta_m \to 0$ and $\gamma = cm^\eta(1+o(1)),$
$\log P(B_m) = \log P(\mathcal{A}_m)(1+o(1))$ as
$m \rightarrow \infty.$ Combining this with the observation in
(\ref{eqn:PCR-key}), we obtain the following from
$\Lambda_{\min}\in \RV(\alpha_*):$
\[\log P(\mathcal{E}_m) = -
  \Lambda_{\min}(c^{1/\rho}m^{\eta/\rho})(\Icr+o(1)) =
  c^{\alpha_*/\rho}\Lambda_{\min}(m^{\eta/\rho}) (\Icr+o(1)). \qquad
  \Box \]

\noindent \textbf{Proof of Proposition~\ref{prop:IS-NN}.} % For a given
% $\xx \in \Real^d_+,$ recall the choice $\lambda^\ast_m(\xx)$ in the
% proof of Lemma \ref{lemma:Small-conc}.
First, we write the second moment of the IS estimator as (see
\cite[Appendix, Pg. 1]{GKS2008} for details on how to arrive at the
first expression below),
\begin{align*}
  M_{2,m}
  &=\Expc\left[ \exp\left\{-m L_m \lambda_m(\XX) +
    m\psi_m({\XX},\lambda_m(\XX)) \right\}
    \frac{f_{\XX}({\XX})}{{f}_{\ZZ}({\XX})} \mathbb{I}(\mathcal{E}_m)\right]\\
  & \leq  \underbrace{\Expc\left[\frac{f_{\XX}({\XX})}{{f}_{\ZZ}({\XX})}
    \mathbb{I}(\mathcal{A}_{m}) \right]}_{I_{1,m}}  +
    \underbrace{\Expc\left[ \exp\left\{-mq\lambda_m(\XX) \bar{e}_m+
    m\psi_m(\XX,\lambda_m(\XX))\right\} \frac{f_{\XX}({\XX})}{{f}_{\ZZ}({\XX})}
    \mathbb{I}( \mathcal{E}_m \setminus \mathcal{A}_{m}) \right]}_{I_{2,m}}, 
\end{align*}
where we use the non-positivity of the term in the exponent and drop
$\mathbb{I}(\mathcal{E}_m)$ in
$\mathbb{I}(\mathcal{E}_m \cap \mathcal{A}_m)$ to arrive at the term
labelled $I_{1,m}.$ Note that $I_{1,m}$ equals the second moment of the
IS estimator in Algorithm \ref{algo:IS} when used for the problem of
estimating $P(\mathcal{A}_m).$ Here
$\mathcal{A}_m = \{ \Lcr (\XX) > u_m\}$ and
$u_m := cm^\eta(1-\varepsilon_m) \rightarrow \infty$ as
$m \rightarrow \infty.$ Recall the specific choice $u = cm^\eta$
employed in Algorithm \ref{algo-IS-PCR-NN}; in addition, $l$ is taken
to be slowly varying in $m.$ Due to the continuous convergence in
(\ref{cconv-pcr}) and these choices of $l$ and $u$, all the
requirements in Theorem~\ref{thm:Var-Red} stand satisfied under the
assumptions stated in Proposition~\ref{prop:IS-NN}. Therefore,
\begin{equation}
  \label{I1-pcr}
  \lim_{m\to\infty} \frac{\log I_{1,m}}{\log P(\mathcal{A}_m)^2} = 1,
\end{equation}
due to Theorem~\ref{thm:Var-Red}.  To bound $I_{2,m}$, we obtain the
following from the steps leading to (\ref{eqn:PCR-small-pre}):
\begin{align}
  I_{2,m} \leq \exp\left\{ -m\gamma \varepsilon_m e_0^{-1}
  \left( q\bar{e}_m - e^\ast_m\right)^+ + \exp(-0.5\gamma\varepsilon_m)\right\}
  \Expc\left[  \frac{f_{\XX}(\XX) }{f_{\XX} (\mv{T}^{-1}(\XX))}
  J(\mv{T}^{-1}(\XX))\mathbb{I}( \mathcal{E}_m \setminus \mathcal{A}_{m})\right].
  \label{inter-pcr-vr-1}
\end{align}
% \[
%   \left[-mq\bar{e}_m\lambda_m + m\psi_m(\xx,\lambda_m) \right]
%   \mathbb{I}(\mathcal{A}_m^c) \leq \left[me_0^{-1}\gamma \varepsilon
%   \left( q \bar{e}_m - e^*\right)^+ - \log 2\right]
% \mathbb{I}(\mathcal{A}_m^c).\] To complete the proof, notice that the
% remaining term in $I_1$ is
% \[
% \Expc\left(  \frac{f_{\XX}(\XX) }{f_{\XX} (\mv{T}^{-1}(\XX))} J(\mv{T}^{-1}(\XX))\right).
% \]
Since $u = cm^\eta(1+o(1))$ and $l$ is slowly varying in $u,$ we have
$(u/l)^{1/\rho} \leq (1+\delta) m^{\eta/\rho (1 + \delta)}$ as a
consequence of Potter's bounds \cite[Proposition
B.1.9(7)]{deHaanFerreira2010}. Then from the uniform bound for the
Jacobian $J(\cdot)$ in (\ref{inter-jac-light-1}), we have
$ J(\mv{T}^{-1}(\xx)) \leq m^{K},$ for some suitably large constant
$K \in (0,\infty).$ Since $ f_{\XX}(\cdot)$ is bounded below on
compact subsets of $\R_+^d,$ we have
$\frac{f_{\XX}(\xx) }{f_{\XX} (\mv{T}^{-1}(\xx))} \leq M$, for some
$M$ large enough. Combining these observations with
(\ref{inter-pcr-vr-1}),
\begin{align*}
  %\label{eqn:I_1-IS}
  \limsup_{m\to\infty}\frac{\log I_{2,m}}{\Lambda_{\min}(m^{\eta/\rho})}
  &\leq \frac{ m\exp(-0.5\gamma\varepsilon_m) - m\gamma \varepsilon_m e_0^{-1}
    \left( q\bar{e}_m - e^\ast_m\right)^+  + \log M + K\log m
    + \log  P(\mathcal{E}_m \setminus \mathcal{A}_m)}
    {\Lambda_{\min}(m^{\eta/\rho})}\\
  &\leq \frac{ 2m\exp(-0.5\gamma\varepsilon_m) - 2m\gamma \varepsilon_m e_0^{-1}
    \left( q\bar{e}_m - e^\ast_m\right)^+  + \log M + K\log m}
    {\Lambda_{\min}(m^{\eta/\rho})}  = -\infty, 
\end{align*}
where the last inequality follows from (\ref{eqn:PCR-small}). Together
with (\ref{I1-pcr}) and the asymptotic for $\log P(\mathcal{E}_m)$ in
the statement of Theorem \ref{thm:credit-risk-asymp}, we thus arrive at  
\[
  \liminf_{m\to\infty}\frac{\log M_{2,m}}{\log P(\mathcal{E}_m)^2}
  \geq 1.
\]
Since the second moment $M_{2,m}$ also satisfies
$M_{2,m} \geq P(\mathcal{E}_m)^2,$ we obtain the desired
limit. \hfill$\Box$

% \noindent Lemma~\ref{lemma:LDP-Y-Dist} below is useful in proving
% Proposition~\ref{prop:LDP-Dist}:
% \begin{lemma}\label{lemma:LDP-Y-Dist}
% Suppose that $\YY$ satisfies
%       \[\bar{F}_{\mv{Y}}(\mv{y}):= P(\mv{Y} > \mv{y}) =
%       \exp(-\Phi(\mv{y})),\] The function $\Phi(\cdot)$ is continuous, such that  with $\Phi(n\mv{y}_n)/n,$ converges to
%       $I(\yy)$ whenever $\yy_n\to\yy\neq \mv{0}$, where  $I(\cdot)$ is an increasing  function. Then, $\{\YY/n:n\geq 1\}$ satisfies an LDP with rate function $I(\cdot)$.
% \end{lemma}
% \noindent \textbf{Proof of Proposition~\ref{prop:LDP-Dist}:} Observe that
% \[
% \Prob(\YY \geq \yy) = \Prob(\XX \geq \mv{\Lambda}^{\leftarrow}(\yy)) = \e^{-{\Psi}(\mv{\Lambda}^{-1}(\yy))},
% \]
% and therefore, $\Phi(\yy) = \Psi (\mv{\Lambda}^{\leftarrow}(\yy))$. Now following the proofs of Propositions \ref{prop:jointpdf-X} and \ref{prop:jointpdf-X-HT},  $\Phi(n\mv{y}_n)/n \to I(\cdot)$, so long as $\Psi \in \MRV(\Psi^*, \mv{h})$, where 
% \[
% I(\xx) = \Psi^*(\mv{c} \xx^{\aalpha}) \quad \text{and} \quad \Lambda_i(\xx) =\begin{cases} h^{-1}(x_i)(c_i^{-1}+o(1)) & \text{ if } i\in \mathcal{L}\\ 
% h^{-1}(\log x_i)(c_i^{-1}+o(1)) & \text{ otherwise}
% \end{cases}
% \]
% Here, $\mv{c}$ is selected as in the proof of Proposition~\ref{prop:jointpdf-X}. Applying  Lemma~\ref{lemma:LDP-Y-Dist} concludes the proof.\qed\\

\section{Proofs of Technical Results}
\label{sec:tech-proofs}
\noindent \textbf{Proof of Lemma~\ref{lem:Y-exp-marginals}.} Since
$Y_i := \Lambda_i(X_i)$ and $\Lambda_i(x_i) := \log (1 - F(x_i)),$ we
obtain
$\Prob(Y_i\geq y) = \Prob(\Lambda_i(X_i) \geq y) = \Prob(F_i(X_i) \geq
1-\e^{-y}) = \e^{-y}.$ \hfill$\Box$

\noindent \textbf{Proof of Lemma~\ref{lem:properties-I}.} \textbf{a)}
Observe that $I(\cdot)$ is the limit of continuously converging
functions $n^{-1}\varphi(n\xx)$. Therefore, from \cite[Theorem
7.14]{rockafellar2009variational}, $I(\cdot)$ is continuous.  Parts
\textbf{b), c), d)} of the lemma statement follow directly as a
consequence of Theorem \ref{thm:LDP} and \cite[Proposition
3]{de2016approximation}.\hfill$\Box$

\subsection{Proof of Lemma \ref{lem:non-negative-Fu}}
\label{sec:non-neg-proof}
\noindent \textbf{Step 1- Develop an asymptotic upper bound for $\mv{\psi}_u(t(u)\pp)$:} Define the function $G_u(\xx) = u^{-1} L(\xx u^{1/\rho})$. Then, under Assumption~\ref{assume:V}, $\Lev_1^+(G_u) \cap B_M\subset [\Lev_1^+(L^*)\cap B_M]^{1+\varepsilon}$ for all large enough $u$. Notice  that from the continuity and $\rho$-homogeneity of the limit $L^*$, the 1-level set of $L^*$ is disjoint from $B_{\delta_1-\varepsilon}(\mv{0})$, for some small enough $\delta_1,\varepsilon$.  Thus, for all large enough $u$, $G_u(\xx) \geq 1 \implies \|\xx\|_\infty\geq \delta_1$ for some $\delta_1>0$. Write
\[
L_u(\pp) = u^{-1}{L\left(\frac{\qq(t(u)\pp)}{u^{1/\rho}} u^{1/\rho}\right)}= G_u\left(\frac{\qq(t(u)\pp)}{u^{1/\rho}}\right).
\] 
Then for all large enough $u$, $\Lev_1^+(L_u) \subseteq \{\pp: \|\qq(t(u)\pp)\|_\infty \geq \delta_1 u^{1/\rho}\}$.
 Since $(u/l) = o(u)$, for all large enough $u$, $\|\qq(t(u)\pp)\|_\infty> 1/c_{\rho}(u)$. Recall that from Lemma~\ref{lemma-T-bound}, whenever $\|\yy\|_\infty \geq 1/c_\rho(u)$, 
$\mv{T}^{-1}(\yy) \leq \yy [c_{\rho}(u)]^{\frac{\log\yy}{\log \|\yy\|_\infty}} \mv{\lor 1}$.
Therefore for all $u > u_0$ where $u_0$ is sufficiently large,
$\mv{T}^{-1}(\qq((t(u)\pp)) \leq \exp({\log \qq(t(u)\pp) r_u(\pp)}) \mv{\lor 1}$, where 
\[
r_u(\pp) = \left(1+\frac{\log c_\rho(u)}{\log \|\qq(t(u)\pp)\|_\infty}\right)\in (0,1), \text{ since } \|\qq(t(u)\pp)\|_\infty > 1/c_{\rho}(u).
\]
From the monotonicity of $\mv{\Lambda}$, it follows that $\mv{\psi}_u(t(u)\pp) \leq \mv{\Lambda} \left(\e^{\log \qq(t(u)\pp) r_u(\pp)}\right) \lor\mv{\Lambda}({\mv{1}})$.

\noindent \textbf{Step 2 - Derive partial upper bounds:} To this end, for $k\in [d]$, define the set
\[
E_{k,u} = \left\{\pp: \Lambda_k\left(\e^{\log q_k (t(u)\pp) r_u(\pp)}\right) \geq \Lambda_j\left(\e^{\log q_j(t(u)\pp) r_u(\pp)} \right) , \, \forall  j\neq k\right\} \cap \Lev_{1}^+(L_u).
\]
This is the set of all $\pp$, such that the $k$th component of $\mv{\Lambda}(\e^{\log \qq(t(u)\pp) r_u(\pp)})$ is the largest, and therefore achieves the maximum in $\|\mv{\Lambda} \left(\e^{\log \qq(t(u)\pp) r_u(\pp)}\right)\|_\infty$. Therefore, $\cup_k E_{k,u} = \Lev_1^{+}(L_u)$ (since the maximum in the $\|\cdot\|_\infty$ norm is achieved for some $k\in [d]$). For $\pp\in E_{k,u}$, $\|\mv{\psi}_u(t(u)\pp) \|_\infty \leq \Lambda_k\left(\e^{\log q_k (t(u)\pp) r_u(\pp)}\right)\lor\max_{k}\Lambda_k(1)$.
By definition, over $E_{k,u}$, for all $j\in [d]$,
\begin{equation}\label{eqn:Max-RF}
{\Lambda_k\left(\e^{\log q_k (t(u)p_k) r_u(\pp)}\right) }\geq {\Lambda_j\left(\e^{\log q_j(t(u)p_j) r_u(\pp)} \right)  }
\end{equation}

\noindent \textbf{Step 3 - Establish a lower bound for $\e^{\log q_{k}(t(u)p_k)r_u(\pp)}$:} Given $\pp$, let $i(\pp)$ be the index which achieves the maximum in $\qq(t(u)\pp)$ (if there are multiple, select one arbitrarily). Then, we have that $q_{i(\pp)}(t(u)p_{i(\pp)}) \geq \delta_1 u^{1/\rho}$. Recall that $c_\rho(u) =(l/u)^{1/\rho}$. Therefore, 
\[
q_{i(\pp)}(t(u)p_{i(\pp)}) c_\rho^{\frac{\log q_{i(\pp)}(t(u)p_{i(\pp)}) }{\log \|\qq(t(u)\pp)\|_\infty}}(u) = q_{i(\pp)}(t(u)p_{i(\pp)}) c_\rho(u) \geq \delta_1 l^{1/\rho}.
\]
Let $\kappa$ be such that $\kappa \max_j\alpha_j< \min_{j}\alpha_j$. Then, notice that for all $k$
\begin{equation}\label{eqn:kappa}
\frac{\min_j\alpha_j }{\alpha_k}-\kappa>0.
\end{equation}
 Since $\mv{\Lambda}\in \RV(\mv{\alpha})$,  an application of  \cite[Proposition B.1.9 (1)]{deHaanFerreira2010} shows that for all $\kappa>0$, there exists an $x_j$ such that for all $x>x_j$, $\Lambda_k^{-1}(\Lambda_j(x)) \geq x^{\alpha_j/\alpha_k -\kappa}$.
Further since $l\to\infty$ as $u\to\infty$, there exists a $u_{2,k}$, such that for all $u>u_{2,k}$, for $\kappa$ selected as in \eqref{eqn:kappa}, $\delta_1 l^{1/\rho} > \max_{j}x_j$. Therefore, for all $u>u_{2,k}$, $\Lambda^{-1}_k(\Lambda_j(\delta_1 l^{1/\rho})) \geq (\delta_1 l^{1/\rho})^{\alpha_j/\alpha_k-\kappa}$ for all $j\in[d]$. Now, for all $\pp\in E_{k,u}$, for all $u>u_0\lor u_1\lor u_{2,k}$,
\begin{align}\label{eqn:LB-1}
\left(\e^{\log q_k (t(u)\pp) r_u(\pp)}\right) &\geq \Lambda_k^{-1}(\Lambda_{i(\pp)}(q_{i(\pp)} (t(u) p_{i(\pp)})))\text{ since $\pp\in E_{k,u}$}\nonumber\\
&\geq \Lambda_k^{-1}(\Lambda_{i(\pp)}(\delta_1 l^{1/\rho})) \geq (\delta_1 l^{1/\rho})^{\frac{\min_j\alpha_j}{\alpha_k} -\kappa}  \text{ by choice $u>\max\{u_1,u_{2,k}\}$.}
\end{align}
 Notice that since $l\to\infty$ as $u\to\infty$, the RHS above can be made arbitrarily large by appropriate choice of $u$.

\noindent \textbf{Step 4 - Evaluate and combine partial bounds:} Fix $\delta>0$. Observe that due to the monotonicity of $\Lambda_i$, \eqref{eqn:Max-RF} implies that for all $\pp\in E_{k,u}$, for all $j$,
$\Lambda_{j}^{-1} \left(\Lambda_k \left(\e^{\log q_k(t(u)p_k)) r_u(\pp)} \right)\right) \geq \exp({\log q_{j}(t(u) p_{j}) r_u(\pp)})$.
 Next, observe that from an application of  \cite[Proposition B.1.9 (1)]{deHaanFerreira2010}, $\Lambda_j^{-1}(\Lambda_k(x)) \leq x^{\frac{\alpha_k(1+\delta)}{\alpha_j(1-\delta)}}$ whenever $x\geq x_{j,k}$. Now choose $u$ large enough (say $u>u_{3,k}$) so that $(\delta_1 l^{1/\rho})^{\frac{\min_j\alpha_j}{\alpha_k} -\kappa} > \max_j x_{j,k}$. Therefore, for $u>u_0\lor u_{1}\lor u_{2,k} \lor u_{3,k}$, $\inf_{\pp\in E_{k,u}}\exp(\log (q_k(t(u)p_k)r_u(\pp) ) \geq \max_{j}x_{j,k}$. Then, 
\[
\exp\left({\frac{\alpha_k(1+\delta)}{\alpha_j(1-\delta)} \log q_k(t(u)p_k)r_u(\pp) }\right) \geq \Lambda_{j}^{-1} \left(\Lambda_k \left(\e^{\log q_k(t(u)p_k)) r_u(\pp)} \right)\right), \text{  for all $j\in[d]$. }
\]
Thus, for all $j$,
\[
\exp\left({\log q_k (t(u)p_k) r_u(\pp) - \frac{\alpha_{j}(1-\delta)}{\alpha_k(1+\delta) } \log q_j (t(u)p_j) r_u(\pp) }\right)\geq 1.
\]
With $r_u(\pp) >0$, this requires that for all $j$,
\[
\frac{\log q_k (t(u)p_k)}{ \log q_j (t(u)p_j)} \geq  \frac{\alpha_j(1-\delta)}{\alpha_k(1+\delta) }.
\]
Upon selecting the $j$ which achieves the maximum in $\|\qq(t(u)\pp)\|_\infty$, notice that  
\[
\frac{\log q_k (t(u)p_k)}{\log \|\qq(t(u)\pp)\|_\infty} \geq \frac{\min\alpha_j(1-\delta)}{\alpha_k(1+\delta) } \text{ and therefore, since $c_\rho(u) = o(1)$},
\]
\[
q_k(t(u)p_k) [c_\rho(u)]^{\frac{\log q_k(t(u)p_k)}{\log\|\qq(t(u)\pp)\|_\infty}} \leq q_k(t(u)p_k) [c_\rho(u)]^{\frac{\min_{j}\alpha_j(1-\delta)}{\alpha_k(1+\delta)}}  \leq q_k(t(u)p_k)\varepsilon^{1/\alpha_k} \text{ where $\varepsilon$ is arbitrary}.
\]
Therefore,  $\Lambda_k(\e^{q_k(t(u) p_k) r_u(\pp)}) \leq \Lambda_k(q_k(t(u) p_k) \varepsilon^{1/\alpha_k}) \leq \varepsilon t(u)p_k$.
Now, whenever $u>u_k$, for $\pp\in E_{k,u}$, $\|\mv{\psi}_u(t(u)\pp)\|_\infty \leq \varepsilon t(u) p_k \leq \varepsilon t(u)\|\pp\|_\infty$,
uniformly over $\pp\in E_{k,u}$. By symmetry, for $u>\max_k(u_0\lor u_{1}\lor u_{2,k} \lor u_{3,k})$, uniformly over $\pp\in \Lev_1^+(L_u)$,
\begin{equation}\label{eqn:Verify-1}
\|\mv{\psi}_u(t(u)\pp)\|_\infty  \leq \varepsilon t(u)\|\pp\|_\infty    
\end{equation}

\noindent \textbf{Step 5 - Establish non-negativity:} With the bound on $\|\mv{\psi}_u(t(u)\pp)\|_\infty$ established, from \eqref{fY-unit-sphere} and Lemma~\ref{lemma:Jac-Light} respectively, for $\varepsilon>0$ for all large enough $u,$
\[
a_{u}(\pp) \geq t(u) \|\pp\|_\infty(1-\varepsilon) , \quad  \frac{\log t(u)}{t(u)}\leq \varepsilon \quad \textrm{ and } \quad b_{u}(\pp)\geq -\varepsilon t(u).
\]
From Lemma~\ref{lemma:Not-Zero}, $\pp\in \Lev_{1}^+(L_u) \implies \|\pp\|_\infty > \gamma$. Thus, $F_u(\pp)\geq 0$ over $\pp\in \Lev_{1}^{+}(L_u)$ for all large enough $u$. Finally, notice that $\chi_{\Lev_{1}^{+}(L_u)}(\pp) = \infty$, for $\pp\not\in \Lev_{1}^{+}(L_u)$. This completes the proof. \hfill $\Box$

\subsection{Proof of log-efficiency in the presence of heavier tails}
Recall that $\mathcal{L}$ is the collection of indices of components
$(X_1,\ldots,X_d)$ which satisfy the lighter tailed assumption in
Assumption \ref{assump-marginals}. Then
$\mathcal{H} := \{1,\ldots,d\} \setminus \mathcal{L}$ denotes the
heavy-tailed components.

\noindent \textbf{Proof of Theorem~\ref{thm:Tail-asymp-HT}.} Under
Assumption \ref{assume:marginals-HT},
$\bar{\Lambda}_i \in \RV(\alpha_i)$ for $i \in \mathcal{H}.$ Its
respective inverse is, 
\begin{align*}
  \bar{q}_i := \bar{\Lambda}_i^{\leftarrow} = \log q_i \in \RV(1/\alpha_i),
\end{align*}
due to \cite[Proposition B.1.9(9)]{deHaanFerreira2010}. Let
$\bar{\mv{q}}(\yy) := (\bar{q}_1(y_1),\ldots q_d(y_d)).$ Define the
following counterparts to quantities $L_u,\fLD, t(u)$ and
$q_\infty(t)$ used in the proof of Theorem \ref{thm:Tail-asymp}:
\begin{subequations}
  \begin{equation}
    \bar{L}_{u} (\xx)
    := \frac{\log L(\e^{ \bar{\qq}({t}(u)\xx)})}{\log u} \quad
    \text{ and }\quad  \fldh(\xx) := \bar{L}^\ast({\qq}^\ast  \xx^{\aalpha}), 
    \label{ht:defn-Vu-fld}
  \end{equation}
  \begin{equation}
    \text{where }  t(u)
    := \min_{i} \bar{\Lambda}_i(\log u) = \Lambda_{\min}(u),
    \quad  \text{ and } \quad  \bar{q}_\infty(t)
    := \max_{i = 1,\ldots,d}\bar{q}_i(t).
    \label{ht:defn-bartu-barqinfty}
    \end{equation}
\end{subequations}
Since $\bar{q}_\infty^\leftarrow = \min_i \bar{\Lambda}_{i},$ we have
$\bar{q}_\infty(t(u)) = \log u$ (see Lemma \ref{lem:qtequ} and
(\ref{eq:qinfty-eq-Lamdamin})). Letting
$\YYu := t(u)^{-1}\YY = t(u)^{-1}\mv{\Lambda}(\XX),$ we 
have the following equivalence of events,
\begin{align*}
  \left\{ L(\XX) > u\right\} = \left\{ \YYu \in \Lev^+_1(\bar{L}_u)\right\},
\end{align*}
from the definition of $\bar{L}_u$ and injectivity of
$\bar{\mv{q}} = \bar{\mv{\Lambda}}^{\leftarrow}.$

As before, we proceed by showing continuous convergence of $L_u$ to
$\fldh,$ as $u \rightarrow \infty.$ For this purpose, consider
sequences $\{u_n\}_{n \geq 1} \subset \Real_+,$
$\{\xx_n\} \subset \Real^d_{++}$ such that
$u_n \rightarrow \infty, \xx_n\to \xx>\mv{0}$. Since
$\bar{q}_\infty(t(u)) = \log u,$ $\bar{{q}}_i \in\RV(1/\alpha_i)$ for
$i \in \mathcal{H},$ and $\bar{{q}}_i \in\RV(0), \hat{q}_i\ast = 0$
for $i \in \mathcal{L},$
\begin{align}
  \frac{\bar{\qq}(t(u_n)\xx_n)}{\log u_n} =
  \frac{\bar{\qq}(t(u_n)\xx_n)}{\bar{\qq}(t(u_n)\mv{1})}
  \hat{\qq}(t(u_n)) \rightarrow \xx^{\aalpha} \mv{q}^\ast
  \label{q-conv-HT}
\end{align}
from (\ref{defn:qhat-HT}) and \cite[Proposition
B.1.9(4)]{deHaanFerreira2010}, as $n \rightarrow \infty.$
Consequently,
\begin{align*}
  % \mv{g}_n(\xx_n) &:= \frac{\bar{\qq}(n\xx_n)}{\bar{\qq}(n\mv{1})}
  % \hat{\qq}(n) \rightarrow \xx^{\aalpha} \mv{q}^\ast \quad
  % \text{ and }\\ %\quad 
  \bar{L}_{u_n}(\xx_n) = \frac{\log L\left(\exp\{\bar{\qq}
  (t(u_n)\xx_n)\}\right)}{\log u_n}
  = \frac{\log L\left( \exp\{\mv{q}^\ast\xx^{\aalpha} \log u_n
  (1 + o(1))\}\right) }{\log u_n},
\end{align*}
uniformly over $\xx$ in compact subsets. Letting
$\mv{e}(n,\xx) := \e^{n\xx}/\Vert \e^{n\xx}\Vert_\infty$ be the unit
vector, we have 
\[
  L(\e^{n\xx_n}) = L(\|\e^{n\xx_n}\|_\infty\mv{e}(n,\xx_n)) =
  \|\e^{n\xx}\|_\infty^\rho L^*(\mv{e}(n,\xx))(1+o(1)) = L^*(\e^{n\xx})
  (1+o(1)),
\]
where the second equality follows from the compact convergence of
$L(\cdot)$ in Assumption \ref{assume:V} and the last equality is due
to the homogeneity $L^*(c\xx) = c^\rho L^\ast(\xx).$ Then from
Assumption~\ref{assume:V-HT},
\begin{align*}
  \bar{L}_{u_n}(\xx_n) = \frac{\log L^\ast\left( \exp\{
  \mv{q}^\ast\xx^{\aalpha}\log u_n\}\right)(1 + o(1)) }
  { \log u_n}
  \rightarrow \bar{L}^\ast \left( \mv{q}^\ast \xx^{\aalpha}\right) =:
  \fldh(\xx). 
\end{align*}

 Then as a
consequence of the continuous convergence $L_u \rightarrow \fldh$
above, we obtain the following from exactly the same reasoning in the
proofs of Lemma \ref{lem:cont-conv-set-incl} and Corollary
\ref{cor:set-inclusions}: given $\varepsilon, M > 0,$ there exists
$u_0$ large enough such that for all $u > u_0,$
\begin{align}\label{eqn:Conv_HT_1}
  \Lev_1^+(\bar{L}_u) \cap B_M \subseteq
  \big[\Xi_{1,M}(\fldh)\big]^{1+\varepsilon},  \text{ and } 
  \inf_{n: u_n > u}\chi_{\Lev_1^+(\bar{L}_{u_n})}(\xx_n)
  \geq \chi_{\Lev_1^+(\fldh)}(\xx),
\end{align}
for any $\xx_n \rightarrow \xx$ and $u_n \rightarrow \infty.$ Thanks
to these set inclusions, the conclusion in
Theorem~\ref{thm:Tail-asymp-HT} follows by repeating the steps in the
proof of Theorem \ref{thm:Tail-asymp} with $L_u,\fLD$ replaced by
$\bar{L}_u,\fldh.$ \hfill$\Box$

To analyse the variance of the IS estimator, we have the following
Lemma.
\begin{lemma}\label{lem:RV-2}
  Suppose that Assumption \ref{assume:marginals-HT} holds, the
  parameter $l$ in (\ref{IS-transf}) is taken to be slowly varying in
  $u,$ and $\rho = 1$ in (\ref{IS-transf}). Then uniformly over
  compact subsets of $\Real^d_{+} \setminus \{\mv{0}\},$
  \begin{align}
    \frac{\mv{\psi}_u\big( t(u) \mv{p}\big)}{t(u)} =
    \pp\mv{1}_{\mathcal{H}}\left(1 - \frac{1}
    {\Vert\mv{q}^\ast\pp^{\aalpha}\Vert_\infty}
    \right)^{\mv{\alpha}}(1+o(1)),
    \text{ as } u\rightarrow \infty,
    \label{T-similarity-property-2}
  \end{align}
  where the vector $\mv{1}_{\mathcal{H}}$ is the indicator vector (for
  the heavy-tailed components) defined as in (\ref{indicate-HT}).
\end{lemma}
\textit{Proof.} With $\rho = 1,$ we have $c_\rho(u) = (l(u)/u).$ For
$\xx\in \R^d_+$,
$\mv{T}(\xx) \leq \tilde{\mv{T}}(\xx) := (1+\xx)
[c_{\rho}(u)]^{-\mv{\kappa}(\xx)},$ component-wise.  Note that
$\tilde{\mv{T}}^{-1} \circ \tilde{\mv{T}}(\xx) = \xx$ for
$\xx \in \Real^d_+$ when we take
$\tilde{\mv{T}}^{-1}(\xx) = \xx[c_\rho(u)]^{\mv{\kappa}(\xx-1)}-1.$
This yields
$\mv{T}^{-1}(\xx) \geq (\xx
[c_\rho(u)]^{\mv{\kappa}(\xx-1)}-\mv{1})^+.$ Combining this with the
upper bound in (\ref{T-bounds}), we arrive at the following: For any
$\pp \in \Real^d_+,\delta > 0$ there exists $u_0$ large enough
such that for all $u > u_0,$
\begin{align*}
  & \left[\exp\left\{ \bar{\qq}(t(u)\pp) r_u(\pp)\right\} - 1\right]^+
    \ \leq \ \mv{T}^{-1}(\qq(t(u)\pp)) \  \leq \ 
    \exp\left\{\bar{\qq}(t(u)\pp)r_u(\pp)\right\}
   % \mv{q}(t(u)\pp)[c_\rho(u)]^{\frac{\bar{q}(t(u)\pp)}{
    % \Vert \bar{\qq}(t(u)\pp)\Vert_\infty}
    % \right\},
    \qquad \pp \in B_\delta(\pp),\\
  &\qquad \text{ where } r_u(\pp) := 1+\frac{\log c_\rho(u)}{
    \|\bar{\qq}(t(u)\pp)\|_\infty} = 1 - \frac{1 + o(1)}
    {\Vert \pp^{\aalpha}\mv{q}^\ast\Vert_\infty} \qquad\quad 
    \text{[due to (\ref{q-conv-HT})]}.
\end{align*}
% for $\{\pp \in \Real^d_+: \Vert \mv{q}(t(u)\pp) \Vert_\infty \geq
% 1/c_\rho(u)\}.$
Since $\mv{\psi}_u : = \mv{\Lambda}\circ \mv{T}^{-1}\circ \qq$ and
$\mv{\Lambda}$ is increasing component wise, 
\begin{align}
  \mv{\Lambda}\big(\left[\exp \left\{\bar{\qq}(t(u)\pp)
      r_u(\pp)\right\} - \mv{1}\right]^+\big) \ \leq \
  \mv{\psi}_u(t(u)\pp) \ \leq \
  \mv{\Lambda}\big(\exp\left\{\bar{\qq}(t(u)\pp)r_u(\pp)
  \right\}\big).
  \label{psi-bnd-HT}
\end{align}
As $\Lambda_i \circ \exp = \bar{\Lambda}_i \in \RV(\alpha_i)$ for
$i \in \mathcal{H}$ and $\bar{q}_i := \bar{\Lambda}_i^\leftarrow,$
the term
\begin{align*}
  \Lambda_i\big(\exp \left\{\bar{q}_i(t(u)p_i)r_u(p_i)\right\}\big) =
  r_u(p_i)^{\alpha_i}\bar{\Lambda}_i \circ \bar{q}_i(t(u)p_i)(1 + o(1)) =
  r_u^{\alpha_i}(p_i) t(u)p_i(1+o(1)),  \quad \text{ for }
  i \in \mathcal{H},
\end{align*}
On the other hand when $i \in \mathcal{L},$ we have
$\Lambda_i \in \RV(\alpha_i)$ and $q_i \in \RV(1/\alpha_i).$ In this
case,
\[\Lambda_i\big(\exp \left\{\bar{q}_i(t(u)p_i)r_u(p_i)\right\}\big) =
  \Lambda_i \left( q_i(t(u)\pp)^{r_u(p_i)}\right) =
  O(t(u)^{1-\Vert\pp^{\aalpha}\mv{q}^\ast\Vert_\infty^{-1} + o(1)}).
  \quad \text{ for } i \in \mathcal{L}\]
% \begin{align*}
%   \Lambda_i\big(\exp \left\{\bar{q}_i(t(u)p_i)r_u(p_i)\right\}\big) =
%   \Lambda_i \left( q_i(t(u)\pp)^{r_u(p_i)}\right) \in \RV(1 -
%   \Vert\pp^{\aalpha}\mv{q}^\ast\Vert_\infty^{-1} ),
%   \quad \text{ for } i \in \mathcal{L},
% \end{align*}
Due to the above deduction that
$r_u(\pp)= 1-\Vert\pp^{\aalpha}\mv{q}^\ast\Vert_\infty^{-1} (1+o(1)).$
Since the above convergences uniformly over compact subsets,
(\ref{psi-bnd-HT}) results in,
\begin{equation*}
    \lim_{u\to\infty}  t(u)^{-1}\mv{\psi}_{u,i}(t(u)\pp) =
  \begin{cases}
    p_i\left(1-\Vert\pp^{\aalpha}\mv{q}^\ast\Vert_\infty^{-1}
    \right)^{\alpha_i}
    \quad& \text{ for } i \in \mathcal{H},\\
    0 & \text{ for } i \in \mathcal{L}.
  \end{cases} \qquad\qquad \Box
\end{equation*}

\noindent \textbf{Proof of Theorem~\ref{thm:Var-Red-HT}:} Following
the reasoning in the proof of Theorem~\ref{thm:Var-Red-HT}, notice that the second moment of the IS estimator may be written as 
$M_{2,u} = t^{2d}(u)\Expc\left[\exp\left\{-t(u) \bar{F}_u(\bar{\YY}_u)\right\}  \right]$, where 
\begin{equation}\label{eqn:HT-VIL_1}
\bar{F}_u(\pp) = a_u(\pp)+b_u(\pp) + {\chi}_{\Lev_{1}^+(\bar{L}_u)}(\pp).    
\end{equation}
Here, $a_u(\pp)$ and $b_{u}(\pp)$ are as defined in Lemma~\ref{lem:M2-rewrite}. Following Lemma~\ref{lem:RV-2} and the proof of Lemma~\ref{lemma: RV-1}, we obtain
$a_{u}(\pp) \geq I(\pp) - I\left(\pp \mv{1}_{\mathcal{H}}\left(1-1/\|\qq^*\pp^{\aalpha}\| \right)^{\mv{\alpha}} \right) +o(1)$.
uniformly over compact subsets of $\R_{++}^d$. % Recall that
                                               % $\mathcal{L}=\{i:
                                               % \Lambda_i \in
                                               % \RV(\gamma_i) \text{
                                               % for some }
                                               % \gamma_i>0\}$ is the
                                               % collection of light
                                               % tailed covariates.
We have from Assumption~\ref{assume:marginals-HT} that for
$i \in \mathcal{H}$, $\Lambda\circ \exp \in \RV(\alpha_i),$ for some
$\alpha_i\geq 1$. Therefore, $\Lambda_i \in \RV(0)$ whenever
$i\not\in\mathcal{L}$. Now, since $\lambda_i(\cdot)$ are monotone,
\cite[Proposition B.1.9 (7)]{deHaanFerreira2010} implies that
$\lambda_i\in \RV(\gamma_i-1)$. Here, $\gamma_i=0$ if
$i\in\mathcal{H}$.  Following the steps in the proof of
Lemma~\ref{lemma: RV-1}, bound the product (for large enough $u$) in
(b) as
\begin{equation}\label{eqn:Jac_Heavy_1}
\prod_{i=1}^{d}\frac{ \lambda_i(q_i(t(u)p_i))}{\lambda_i(\mv{T}^{-1}_i(\qq(t(u)\pp)))}J(\mv{T}^{-1}_i (\qq(t(u)\pp))) \leq \exp\left( \log (u/l) \left[\sum_{i=1 }^d \gamma_i \mv{\kappa}_i(\qq(t(u)\pp)) + o(1)\right]\right)
\end{equation}
Whenever $\mathcal{L} \neq [d]$ and $i\in \mathcal{L}$, it is easy to
see that $\bar{q}_i^* = 0$. For all such $i$,
$\mv{\kappa}_i(\qq(t(u)\pp)) =o(1)$. Further, for all
$i\in \mathcal{H}$, $\gamma_i = 0$. Finally, observe that with
$\bar{\Lambda}_i\in\RV(\alpha_i)$ for $i\in\mathcal{H}$, we have
$\log u= O(t(u))$. Now, \eqref{eqn:Jac_Heavy_1} suggests
$b_u(\pp) \geq {-t(u)\varepsilon}$ for all large enough $u$. Noting
the convergences in \eqref{eqn:Conv_HT_1}, and repeating the arguments
from the proof of Theorem~\ref{thm:Var-Red}, replacing $F_u(\pp)$
there by $\bar{F}_u(\pp)$,
\[
\limsup_{u\to\infty} [\Lambda_{\min}(u)]^{-1}\log M_{2,u} \leq -\inf_{\pp\in \Lev_1^+(\fldh)} 2I(\pp) + I\left(\pp\mathbf{1}_{\mathcal{H}} \left(1 - 1/{\|\qq^*\cdot \pp^{\aalpha}\|_\infty }\right)^{\mv{\alpha}}\right) + 2\varepsilon.  
\] 
Due to the homogeneity of $I(\cdot)$ (see
Lemma~\ref{lem:properties-I}(b)), it can be seen that the above
infimum occurs at the boundary,
$\| \mv{q}^*\cdot \mv{p}^{\aalpha}\|_{\infty}= 1$, and therefore,
$\limsup_{u\to\infty}\frac{1}{\Lambda_{\min}(u)} \log M_{2,u} \leq
-2I^{*} +2\varepsilon$.  $\qquad \Box$

\noindent \textbf{Verification of Remark \ref{rem:q-suff}:} % Suppose as
% per the hypothesis of Remark \ref{rem:q-suff},
% $\Lambda_{\min} / \Lambda_i(x) \to c_i \neq 0$.
For any $f,g\in \RV(p)$ that are eventually strictly increasing and
satisfying $\lim_{x
  \to\infty}f(x)/g(x) = c\in(0,\infty),$ we first
show that
$\lim_{x \to \infty}g^{\leftarrow}(x)/f^{\leftarrow}(x) = c^{1/p}.$
For this purpose, observe
$ \lim_{t \to \infty}{g^{\leftarrow}(tx)}/{g^{\leftarrow}(t)} =
x^{1/p}$ uniformly over $x \in (c/2,2c)$ as $t\to\infty.$ Setting
$t=g(f^{\leftarrow}(x)),$ we have $t\to\infty$ and
$f (f^{\leftarrow}(x))/g(f^{\leftarrow}(x)) \to c$ as $x \to \infty.$
Therefore,
\begin{align*}
  g^{\leftarrow}(x) =g^{\leftarrow}\left(g(f^{\leftarrow}(x)) \cdot
  \frac{f (f^{\leftarrow}(x))}{g(f^{\leftarrow}(x))}  \right)
  \sim
%  c^{1/p} g^{\leftarrow}(g(f^{\leftarrow}(x))) =
  c^{1/p} f^{\leftarrow}(x). % \text{ since $f(x)/g(x) \to c$,
                                   % and $f^{-1}(x) \to \infty$, and
                                   % $f^{-1} \in
                                   % \RV(1/p)$}
% \label{eqn:EC-Q-Lim_verify}
\end{align*}
This verifies the claim
$g^{\leftarrow}(x)/f^{\leftarrow}(x) \to c^{1/p}$.  To see
(\ref{ratio-Lambda}) as a consequence, fix any $i \in \{1,\ldots,d\}$
such that $q_i^\ast$ exists and $q_i^\ast(x) >
0.$ %$\liminf_{x \to \infty} r_i(x) \neq 0.$
Setting $f = q_i$ and $g(\cdot) = \Vert \mv{q}(\cdot) \Vert_\infty,$
we have $f^{\leftarrow} = \Lambda_i, g^{\leftarrow} = \Lambda_{\min}$
(see (\ref{eq:qinfty-eq-Lamdamin})).  % Setting $g(x) = \Lambda_{\min}(x)$
% and $f(x) = \Lambda_i(x),$ we have $q_i(x) = \Lambda_i^\leftarrow(x),$
% $\Vert \mv{q} (x) \Vert_\infty = g^{\leftarrow}(x)$ (see
% \textcolor{red}{fill}), and consequently
%$\hat{q}_i(x) = f^{\leftarrow}(x)/g^{\leftarrow}(x).$ Therefore
Since $\Lambda_{\min} \in \RV(\alpha_*),$  $q_i^* = (\lim_{x\to\infty} \Lambda_{\min}(x) / \Lambda_i(x))^{1/\alpha_*}$. If $q_i^\ast = 0,$ the conclusion is immediate from the differing
rates of growths of the numerator and the denominator. Finally to
verify the sufficient condition on the derivative, consider any
sequence $\{x_n\} \subset \Real$ increasing to infinity. Since
$ \vert r_i^\prime(x) \vert \leq Mx^{-(1+\varepsilon)}$ for suitable constants
$M,\varepsilon > 0,$
\[ \vert r_{i}(x_{m+n}) - r_i(x_{m}) \vert \leq \int_{x_{m}}^{x_{m+n}}
  \vert r_i^\prime(x)\vert dx \leq \varepsilon^{-1} M
  x_{m}^{-\varepsilon},\] for all sufficiently large $m.$ %
% that is
% \[
% r_{i}(x_1) - r_i(x_0) \leq M_1x^{-\epsilon}\leq \delta, \textrm{ for all large enough $x_1,x_0$ and $M_1<\infty$}.  \]
Therefore the sequence $\{r_i(x_n): n \geq 1\}$ is Cauchy and is
convergent.
{
\section{Proof Theorem~\ref{thm:Var-Red} with $\mv{\kappa} = \mv{\kappa}_2$}\label{sec:verify-alt}
To avoid complicating notation, we omit the dependence of $\mv{T}_{u}^{(2)}$ on $u$ and in the subsequent proof, simply use $\mv{T}$ instead.
Observe that $\mv{T}(\xx) = (T_{1,c}(x_1),\ldots, T_{1,c}(x_d))$ for a 1-1 onto function $T_{1,c}$ (defined imminently) and therefore itself 1-1 and onto. Define the function $\mv{\psi}_{u,1} = \mv{\Lambda} \circ \mv{T}^{-1} \circ \qq$. To proceed, we check that the conditions required in the proof of Theorem~\ref{thm:Var-Red} hold. As in that case, first, we bound $\mv{T}^{-1}$ from above. Observe that $\mv{T}(\xx) = (T_{1,c}(x_1),\ldots,T_{1,c}(x_d))$, where 
\[
T_{1,c}(y) = y\left[u/l\right]^{\frac{\log(1+y)}{\log l}}  \geq
\begin{cases}
   & y^{\log u/\log l} \quad \quad \text{ whenever } y\geq 1\\
   &y \quad \quad \ \quad \quad \quad  \text{ otherwise}
\end{cases}
\]
Therefore
\begin{equation}\label{eqn:Alt-T-bound}
    T_{1,c}^{-1}(y) \leq \begin{cases}
  &y^{\log l/\log u} \quad \quad \text{ whenever } y\geq 1\\
  & y \quad \quad \ \quad \quad \quad \text{ otherwise.}
\end{cases}
\end{equation}
 Denote $\mv{\psi}_{u,1} = \mv{\Lambda} \circ \mv{T}^{-1}\circ \qq$. The bound on $\mv{T}^{-1}$ established, we now proceed to check the technical conditions required for log-efficiency. This amounts to verifying \eqref{eqn:Verify-1} with $\mv{\psi}_u$ replaced by $\mv{\psi}_{u,1}$, and bounding the Jacobian determinant of $\mv{T}$. With these bounds established, the rest of the proof is similar to case when $\mv{\kappa}=\mv{\kappa}_1$.

\noindent \textbf{Step 1: Verifying condition \eqref{eqn:Verify-1}: }  Observe that given \eqref{eqn:Alt-T-bound}, 
\[
\Lambda_{i}(\mv{T}^{-1}_{i}(\qq(t(u)\pp))) \leq \Lambda_i\left(q_i(t(u)p_i)^{\log l/\log u}\right) \lor \Lambda_i(1)
\]
With $l$ being slowly varying in $u$,  $\|\mv{\psi}_{u,1}(t(u)\pp)\|_\infty \leq \|\mv{\Lambda}[(\qq(t(u)\pp))]^{o(1)}\|_{\infty}$ and therefore, one has the bound $\|\mv{\psi}_{u,1}(t(u)\pp)\|_\infty \leq \varepsilon t(u)\|\pp\|_\infty$ for all $u$ large enough. This further implies that uniformly over compact subsets of $\R^d_+$, $\|\mv{\psi}_{u,1}(t(u)\pp)\|_\infty = o(t(u))$, and establishes an equivalent of Lemma~\ref{lem:property-T}, but with $\mv{\kappa}^{(1)}$ replaced by $\mv{\kappa}^{(2)}$ in the definition of $\mv{T}$.

\noindent \textbf{Step 2: Bounding the Jacobian determinant:} We establish that $\log J(\qq(t(u)\pp)) = o(t(u))$. To this end, recall that for $\xx\in \R^d_+$ (the case where $\xx$ is allowed to be in $\R^d$ can be handled similarly),
\begin{align*}
    J(\xx) &= \left(\frac{u}{l}\right)^{\mv{1}^\intercal\mv{\kappa}^{(2)}(\xx)}\prod_{i=1}^d\left(1+ \frac{\log(u/l)}{\log l} \frac{x_i}{1+x_i}\right) =  \prod_{i=1}^d\left[(u/l)^{\frac{\log(1+x_i)}{\log l}} + \frac{T_{1,c}(x_i)}{1+x_i}\frac{\log(u/l)}{\log l}\right].
\end{align*}
Therefore, 
\[
\log J(\mv{T}^{-1}(\qq(t(u)\pp))) = \sum_{i=1}^d \log\left[(u/l)^{\frac{\log(1+T^{-1}_{1,c}(q_i(t(u)p_i)))}{\log l}} + \frac{q_i(t(u)p_i)}{(1+T^{-1}_{1,c}(q_i(t(u)p_i))))}\frac{\log(u/l)}{\log l}\right]
\]
The rate of increase of the right hand side above is determined by the larger of 
\begin{equation}\label{eqn:inter-1}
    \sum_{i=1}^d\log\left[(u/l)^{\frac{\log(1+T^{-1}_{1,c}(q_i(t(u)p_i)))}{\log l}} \right] \text{ and } \log\left[ \frac{q_i(t(u)p_i)}{(1+T^{-1}_{1,c}(q_i(t(u)p_i))))}\frac{\log(u/l)}{\log l}\right].
\end{equation}
Since $l$ is slowly varying in $u$,  the bound in \eqref{eqn:Alt-T-bound} yields
\[
\limsup_{u\to\infty}\frac{1}{t(u)}\sum_{i=1}^d\log\left[(u/l)^{\frac{\log(1+T^{-1}_{1,c}(q_i(t(u)p_i)))}{\log l}} \right] \leq \limsup_{u\to\infty} \frac{\log(u/l)}{t(u) \log l} 
\|[\qq(t(u)\pp)]^{o(1)}\|_1 =0
\]
Similarly, the second term of \eqref{eqn:inter-1} may be bounded as 
\[
\limsup_{u\to\infty}\frac{1}{t(u)}\log\left[ \frac{q_i(t(u)p_i)}{(1+T^{-1}_{1,c}(q_i(t(u)p_i))))}\frac{\log(u/l)}{\log l}\right]  = 0, \text{ and consequently,}
\]
\[
\log J_1(\mv{T}^{-1}(\qq(t(u)\pp))) = o(t(u)) \quad \text{ (see for e.g. \cite[Lemma 1.2.14]{Dembo}). }
\]
\noindent \textbf{Step 3: Combine the bounds:} Note the expression for the second moment of the IS estimator with $\mv{T}^{(2)}$ instead of $\mv{T}^{(1)}$ is given by replacing $\mv{\psi}_u$ by $\mv{\psi}_{u,1}$ and substituting the appropriate Jacobian as given in Table~\ref{tab:Jacobians}. From Steps 1 and 2, the consequences of Lemmas~\ref{lemma: RV-1}- \ref{lem:non-negative-Fu} continue to hold, and the rest of the proof follows from the proof of Theorem~\ref{thm:Var-Red}. \hfill $\Box$

}

{

\section{Verifying Assumption~\ref{assume:V}(b)} \label{sec:Verify_Num}
Recall that a random vector $\YY$ is said to be multivariate regularly varying with index $\rho$ if for any set $A$ not containing the origin,
\begin{equation}\label{eqn:MRV_Heavy_Tails}
t P\left[\frac{\YY}{t^{\rho}}\in A\right] \to \mu(A), 
\end{equation}
for some non-zero radon measure $\mu$. 
An equivalent formulation \citep[Theorem 6.1]{resnick2007heavy} is that for some probability measures $\mu_{1/\rho}$ on the line and $M$ on the sphere,
\begin{equation}\label{eqn:Radial_Angular}
    t P\left[\left(\|\YY\| , \frac{\YY}{\|\YY\|}\right)  \in A\right] \to (\mu_{1/\rho}\times M) (A),
\end{equation}
 for any set $A $ not containing the origin.
Here $\mu_{1/\rho}$ is taken to be
 $\mu_\rho(c,\infty] = c^{-1/\rho},$  without loss of generality,    for any $c>0.$ To verify Assumption~\ref{assume:V}(b), we develop the following characterisation of Assumption~\ref{assume:V}(b) based on multivariate regular variation: Let $\mathcal{S}^{d-1}_{+} := \{\xx \in \R^d_+: \Vert \xx \Vert = 1\}$ denote the intersection of unit sphere and the positive orthant. 
\begin{proposition}\label{prop:MRV_Loss}
  Let $R$ be a random variable satisfying $P(R \leq r) = 1-1/r,\  r\geq 1$ and $\mv{\Theta}$ be uniformly distributed on $\mathcal{S}^{d-1}_{+},$  independently of $R$. Then  $L(\cdot)$ satisfies Assumption~\ref{assume:V}(b) with $\rho \in (0,\infty)$  if and only if the random vector $L(R\mv{\Theta}) \cdot \mv{\Theta}$ is a multivariate regularly varying random vector with index $\rho.$
\end{proposition}
Verifying regular variation of a random vector is well-studied and continues to be a topic of active research \citep[see][and references therein]{einmahl2021testing}.  As a consequence of Proposition \ref{prop:MRV_Loss}, one can obtain independent samples of $L(R\mv{\Theta})\mv{\Theta}$ and use the statistical test developed in  \citep{einmahl2021testing}  to verify if $L(\cdot)$ satisfies Assumption \ref{assume:V}b merely from  oracle queries to the evaluations of $L(\cdot).$ The rest of this section sketches the proof of Proposition~\ref{prop:MRV_Loss}. 
\begin{remark}
   Suppose that $L(\cdot)$ has an approximation $\tilde{L}(\cdot)$ which satisfies either (i) $|L(n\ttheta) - \tilde{L}(n\ttheta)| = o(L(n\ttheta)),$ or more generally, (ii) $\tilde{L}(n\ttheta) = c(\ttheta) L(n\ttheta)(1+o(1)),$ uniformly over $\ttheta$ on $\mathcal{S}^{d-1}_+$ and for some continuous $c(\cdot).$   Then it is sufficient to verify Assumption~\ref{assume:V}(b) for the approximate functional $\tilde{L}$. Such a verification is useful if, for example, $\tilde{L}(\cdot)$ is either available in explicit form (or) if its evaluations are computationally less expensive than those of $L(\cdot).$
\end{remark}
\noindent \textbf{Proof for Proposition~\ref{prop:MRV_Loss}: } \textbf{a) }First suppose that $L(\cdot)$ satisfies Assumption~\ref{assume:V}(b). Recall that as a consequence of \citep[Theorem 6.1, Lemma 6.2]{resnick2007heavy}, to verify the multivariate regularly varying property, it is sufficient to verify that for all $\xx>0$,
\[
n P\left(\frac{L(R\mv{\Theta}) \cdot \mv{\Theta} }{n^{\rho}} \in [\mv{0},\xx]^{c}\right) \to \mu[\mv{0},\xx]^{c},
\]
for some measure $\mu.$ Let $R_n = R/n$. Using the independence of $(R,\mv{\Theta})$, the probability above equals
\begin{equation}\label{eqn:Necessity}
k_d\int_{\mv{\ttheta} \in S_{d-1}^{\infty} } P\left(\frac{L(nR_n \ttheta)}{n^{\rho}}\ttheta \in [\mv{0},\xx]^c\right) d\ttheta = k_d\int_{\mv{\ttheta} \in S_{d-1}^{\infty}} P(R_n \in S_{n,\ttheta,\xx})d\ttheta,     
\end{equation}
where $k_d$ equals the 1/(volume of $\mathcal{S}_+^{d-1})$ and the set $S_{n,\ttheta,\xx}  =\{r: \ttheta L(nr\ttheta)/n^{\rho}  \in [0,\xx]^{c}\}$. Notice that owing to the convergence of $ L(nr\ttheta)/n^{\rho}$ to $r^{\rho}L^*(\ttheta)$, the set $S_{n,\ttheta,\xx}$  converges to $S_{\ttheta,\xx}^* = \{r: r^{\rho}L^*(\ttheta) \cdot \ttheta \in [\mv{0},\xx]^{c} \}$ in the Painlev\'{e}-Kuratowski sense \citep[Section 4.B]{rockafellar2009variational}. Consequently, $n P(R_n \in S_{n,\ttheta,\xx}) \to \mu_0(S^*_{\ttheta,\xx})$ where the density of $\mu_0(dr) = r^{-2}dr$ on the line (note that the measure $\mu_0$ is a Radon measure, and not a probability measure). Plugging this back into  the integral in \eqref{eqn:Necessity}, it can be seen that whenever $L(\cdot)$ satisfies Assumption~\ref{assume:V}(b), $L(R\mv{\Theta}) \cdot \mv{\Theta}$ is multivariate regularly varying (with limiting measure $\mu[\mv{0},\xx]^{c} = k_d\int_{\ttheta\in S_{d-1}^{\infty}}\mu_0(S_{\ttheta,\xx})d\ttheta $ ).

\textbf{b) }  Now, suppose that $L(R\mv{\Theta})\cdot \mv{\Theta}$ is multivariate regularly varying with index $\rho$. Let $L_{\ttheta,n}(r) = L(nr\ttheta)/n^{\rho}$. Then, as a consequence of representation \eqref{eqn:Radial_Angular}, with $\mv{\Theta}$ and $R_n$ being independent, for any $c>0$ (the limits below being taken in an appropriate sense), 
\begin{equation}\label{eqn:Suff-1}
nP(R_n\in \Lev_{c}^+(L_{\ttheta,n})) = n P\left(\frac{L(nR_n\mv{\Theta})}{n^{\rho}} \in [c,\infty) ,  \ \mv{\Theta} \in d\ttheta\right) \to c^{-1/\rho} m(\ttheta)d\ttheta,
\end{equation}
for some $m:\mathcal{S}_+^{d-1} \rightarrow \R_+.$ Define the function $L^*(\xx) = (\|\xx\|/m(\xx/\|\xx\|))^\rho$ and let $L^*_{\ttheta}(r) = L^*(r\ttheta)$. Observe then that $\{r: L^*_{\ttheta}(r) \geq c\} = [c^{-1/\rho}m(\ttheta),\infty)$. Now, \eqref{eqn:Suff-1} implies that for every $\ttheta$ and $c>0$
 \[
n P(R_n \in \Lev_c^+(L_{n,\ttheta})) \to c^{1/\rho}m(\ttheta) d\ttheta \implies  \textrm{ess-inf}\ \Lev_c^+(L_{n,\ttheta}) \to \textrm{ess-inf} \ \Lev_c^+(L^*_{\ttheta}).
\]
Since the above holds for every $c>0$, the level sets themselves must converge, that is for every $c>0$, $\Lev_c^+(L_{n,\ttheta}) \to \Lev_{c}^+ (L_{\ttheta}^*)$. Further, observe that  
\[
\Lev_{c}^+(L_n) = \bigcup_{\ttheta\in S^{d-1}_\infty}\{(r,\ttheta): r\in \Lev_c^+(L_{n,\ttheta})\} \quad \textrm{and} \quad \Lev_{c}^+(L_{\ttheta}^*) =  \bigcup_{\ttheta\in S^{d-1}_\infty} \{(r,\ttheta):  r\in \Lev_c(L^*)\}.
\]
Now, use the uniformity of convergence in \eqref{eqn:Suff-1} over $\ttheta$ (refer to \cite[Theorem 6.4]{resnick2007heavy}) to observe that  for all $c>0$, $\Lev_{c}^+(L_n) \to  \Lev_c^+(L^*)$. An application of \cite[Proposition 7.7]{rockafellar2009variational} implies that since $L_n$ converges epigraphically  to $L^*$, it converges uniformly on compact subsets. Finally, from \cite[Theorem 7.14]{rockafellar2009variational} uniform convergence implies the continuous convergence in Assumption~\ref{assume:V}(b). \hfill$\Box$

\section{Application to CVaR Minimisation}
\label{sec:opt}
Suppose that  $\ell(\XX,\ttheta)$ denotes the loss associated with a decision choice $\ttheta$ under a random vector $\XX.$ Let  $v_\beta({\ttheta})$ denote the $(1-\beta)$-th quantile of $\ell(\XX,\ttheta).$ Then its CVaR at the tail-level $\beta \in (0,1)$ is %given by, 
\[
C_{\beta}(\ttheta) := E\left[\ell(\XX,\ttheta) \ \vert\ \ell(\XX,\ttheta) \geq v_\beta({\ttheta}) \right], 
\]
%measures the average loss over the worst $\beta$-fraction of the realizations.  
Minimizing CVaR $C_\beta(\ttheta)$ over a compact set $\Theta$ enjoys the following variational representation  \citep[see][]{rockafellar2000optimization}, 
\begin{equation}\label{eqn:CVaR_OpT}
    \inf_{u\in \R,\ttheta\in \Theta} \left[u+ \beta^{-1} E\left(\ell(\XX,\ttheta)-u\right)^+  \right] =\inf_{u\in\R,\ttheta\in \Theta} f(u,\ttheta),
\end{equation}
If $\ell(\XX, \cdot)$ is convex, then $f(\cdot)$ is convex. As a result, CVaR minimization has become the most prominent vehicle in a number of applications for arriving at decisions with low tail risks. To perform the minimization without expending exorbitant computational effort in problems with small $\beta,$ one can consider minimising the following IS weighted Sample Average Approximation:
\begin{equation}\label{eqn:FIS}
\hat{f}_{is,n}(u,\ttheta) = \left[u+ \frac{1}{n\beta}\sum_{i=1}^n (\ell(\ZZ_{u,i},\ttheta) -u)^+\mathcal{L}_{i}\right],  
\end{equation}
where, as before, $\mathcal{L}$ is defined to be the likelihood between $\XX$ and $\ZZ_u = \mv{T}_u(\XX) := \XX[u/l]^{\mv{\kappa}(\XX)}$:
\begin{align}
\mathcal{L} = \frac{f_{\XX}\left(\XX [u/l]^{\mv{\kappa}(\XX)}\right)}{f_{\XX}(\XX)} J(\XX) \text{  where } J(\xx) = \text{Det}[\partial \mv{T}_u(\xx)/\partial \xx] \text{ is as defined in \eqref{eqn:Jac_T_general}}.
\end{align}
Notice that since $\mathcal{L}$ depends on $u$ through the factor $[u/l]^{\mv{\kappa}(\XX)}.$ Therefore even if $f(\cdot)$ as defined in \eqref{eqn:CVaR_OpT} were convex in $(u,\ttheta)$, the IS weighted objective \eqref{eqn:FIS}  need not be convex. In turn, such lack of convexity due to the introduction of likelihood ratio and absence of efficient change of measure prescriptions which hold uniformly well simultaneosuly over feasible $(u,\mv{\theta})$ have been the primary bottlenecks in using IS, in general, for optimization. 

Since the loss structure is explicitly known in optimization settings, the growth rate $\rho$ in Assumption \ref{assume:V}b is typically readily known and the IS transformation $\mv{T}$ in \eqref{IS-transf} employed with $\mv{\kappa} = \mv{\kappa}_1$ is particularly well-suited to overcome the above difficulties. In particular, by (i) changing variable as in  $[u/l] = s$ where $s \in [1,\infty)$, and (ii) using  $\mv{\kappa} = \mv{\kappa}^{(1)}$ in \eqref{defn:kappa},  the resulting  IS weighted objective \eqref{eqn:CVaR-I.S}  remains convex in $(u,\mv{\theta})$ when $\ell(\XX,\mv{\theta})$ is convex in $\mv{\theta}.$ Except for the knowledge of $\rho,$ the model agnostic nature of the change makes the change of measure induced by $\mv{T}$ to possess low variance at every feasible $(u,{\ttheta})$. Parameterizing the stretch factor $s$ as $s = h\log\log(1/\beta),$ the selection of hyperparameter $h$ at any given feasible $(u,\ttheta)$ can be accomplished by minimizing the second moment as in Step 2 of Algorithm \ref{algo:RA_IS}. %In particular, given any $(u,\ttheta),$  Step 2 of Algorithm \ref{algo:RA_IS} can be viewed as a noisy oracle which returns a hyperparameter selection $\hat{h}(u,\ttheta).$ 
%such that $\hat{h}(u,\ttheta) \xrightarrow{P} h(u,\ttheta)$  uniformly $(u_n,\ttheta_n) \to (u,\ttheta)$ as $n\to\infty$. With $(u^*,\ttheta^*)$ as the optimisers of \eqref{eqn:CVaR_OpT}, we further assume that $h(u^*,\ttheta^*)$ is a good hyper-parameter choice for problem. 
Algorithm~\ref{algo:CVaR-FULL.} below incorporates this selection  in every stage of Retrospective Approximation of the CVaR objective. We refer the readers to a follow-up work \cite{deo2022combining} for further implementation details. 
\begin{algorithm}[h]
{
 \caption{IS based CVaR Optimisation}\label{algo:CVaR-FULL.}
  \textbf{Input:} Tail probability level $\beta$,  samples 
  $\boldsymbol{X}_1,\ldots,$ from $f_{\XX}(\cdot)$, initializations $u_0,\ttheta_0,h_0$.\\
 
 \noindent\textbf{ For $k\geq 1$, do} the following steps until stopping criterion is met\\
 
  \textbf{Step 1}( IS-Weighted CVaR optimisation): With a sample size of $m_{k}$ and error tolerance $\varepsilon_{k}$:
  
  \noindent \textbf{a) Transform the samples:} For each sample
  $i=1,\ldots,m_k,$ compute the transformation,
  \begin{align*}
    \ZZ_{i} = \mv{T}(\XX_i) := \XX_i [s]^{\mv{\kappa}^{(1)}(\XX_i)},%    \label{imp-transf}
   \end{align*}
   with $s = h_{k-1}\log\log(1/\beta).$\\
\noindent \textbf{b) Minimize the IS weighted CVaR objective}
   \begin{equation}%\label{eqn:CVaR-comp-Is}
     \label{eqn:CVaR-I.S}
        \inf_{u,\ttheta}\left\{u + \frac{1}{m_k\beta}
      \sum_{i=1}^{m_k} \big[\ell({\mv{Z}}_{i},\ttheta) - u \big]^+
      \mathcal{L}_{h,i}\right\}, %\text{ to a tolerance of } \varepsilon_k
    \end{equation}
      with the initial solution iterate set to $(u_{k-1},\ttheta_{k-1})$ and $h=h_{k-1}.$ Let  $(u_k,\ttheta_k)$ denote the optimiser returned after 
       reaching an error tolerance $\varepsilon_{k} = m_k^{-1/2}.$ \\
\noindent \textbf{2 Update the cross validation parameter: }
With the initial solution iterate set to $h_{k-1},$ minimize the sample second moment estimate as in
\begin{align*}
\inf_{h > 0} \ \frac{1}{m_k}
      \sum_{i=1}^{m_k} \left\{ \mathbf{I}\left(\ell(\mv{T}_h(\XX_i),\ttheta_k) \geq u_{k}\right)
      \mathcal{L}_{h,i} \right\}^2, 
\end{align*}
where $\mv{T}_h(\XX_i) = \XX_i [h\log \log(1/\beta)]^{\mv{\kappa}^{(1)}(\XX_i)}.$ Let $h_k$ denote the solution obtained by solving until reaching error-tolerance $\varepsilon_k = m_k^{-1/2}.$
}
\end{algorithm}

\noindent \textbf{Numerical results: }Consider the constrained minimum CVaR  portfolio optimisation problem: Here $\ell(\xx,\ttheta) = \ttheta^\intercal\xx $ and the set $\Theta = \{\ttheta:\ttheta^{\intercal}\mv{1} = 1, \mv{\theta}^\intercal \mv{\mu} \geq r\}$, where $\mv{\mu}$ denotes the expected returns. 
The marginals of the loss realizations $\XX$ are taken to have the c.d.f.s $F_i(x) = P(X_i \leq x) = 1-\e^{-x^{\alpha_i}}$ where $\alpha_i =0.5 \ \forall i$.  Dependence is modelled through a Gaussian copula whose covariance matrix $\mv{R}$ is designed to capture a realistic degree of correlation among various asset returns. In order to compare the effort required to obtain a desired out of sample accuracy, we give (i) the number of samples required by each method to obtain $1\%$ relative regret (relative error between the optimal CVaR and CVaR computed at the solution proposed by the respective algorithm) and (ii) the out-of-sample regret when the number of loss evaluations used by each algorithm is restricted to $2500$. For the former case, with $\beta=0.037$, for IS, this is $\approx 600$, while SAA requires $\approx 5500$ samples. This difference is even more pronounced when $\beta= 0.003$, where SAA requires roughly $28000$ samples, while IS only requires $1175$. For the latter, at $\beta = 0.037$, IS gives a regret of $2\%$ while SAA gives a regret of $5\%$. We refer the reader to \cite{deo2022combining} for more details on the numerical experiments and the explicit specifications for $\{m_k,\varepsilon_k: k\geq 1\}$.
}

\noindent \textbf{Code availability:}  Python implementations for importance sampling using self-structuring transformations are available at \href{https://github.com/ananddeo161093/BBIS_Source_Codes}{\url{https://github.com/ananddeo161093/BBIS_Source_Codes}}.

\end{document}